\documentclass[10pt]{article} % For LaTeX2e
\usepackage[accepted]{tmlr}
\usepackage{amsmath,amsfonts,bm}

\def\eqref#1{equation~\ref{#1}}
\def\1{\bm{1}}

\DeclareMathAlphabet{\mathsfit}{\encodingdefault}{\sfdefault}{m}{sl}
\SetMathAlphabet{\mathsfit}{bold}{\encodingdefault}{\sfdefault}{bx}{n}

\usepackage{etoc}
\usepackage{hyperref}
\usepackage{url}
\usepackage{colortbl}
\usepackage{xcolor}
\usepackage{amsmath}
\usepackage{amssymb}
\usepackage{mathtools}
\usepackage{nccmath}  % this defines \medmath
\usepackage{wrapfig}
\usepackage{longtable}
\usepackage{pifont}
\DeclareUnicodeCharacter{2713}{\ding{51}} % ✓
\DeclareUnicodeCharacter{2717}{\ding{55}} % ✗
\DeclareUnicodeCharacter{25B3}{$\triangle$} % △
\usepackage{bibentry}  
\usepackage{xcolor,soul,framed} %,caption
\colorlet{shadecolor}{yellow}
\usepackage[pdftex]{graphicx}
\graphicspath{{../pdf/}{../jpeg/}}
\DeclareGraphicsExtensions{.pdf,.jpeg,.png}
 \usepackage{booktabs}
\usepackage{tikz}
\usetikzlibrary{arrows.meta, positioning, calc, shapes, fit}

\usepackage{pgfplots}
\pgfplotsset{compat=1.18}
\usepgfplotslibrary{groupplots}
\usepgfplotslibrary{statistics}
\usepgfplotslibrary{dateplot}

\usepackage{array}
\usepackage{mdwmath}
\usepackage{mdwtab}
\usepackage{eqparbox}
\usepackage{url}
\usepackage{booktabs}
\usepackage{adjustbox}
\usepackage{tikz}
\usetikzlibrary{arrows.meta,positioning,shapes.geometric,fit}
\usepackage{siunitx}
\usetikzlibrary{positioning}
\usepackage{amssymb}
\usepackage{mathtools}
\usepackage{amsthm}
\providecommand{\medmath}[1]{\ensuremath{#1}}
\usepackage[utf8]{inputenc}
\usepackage[T1]{fontenc}
\usepackage{microtype}
\usepackage{nicefrac}
\usepackage{amsfonts}
\usepackage{algorithm}
\usepackage{algpseudocode} % from algorithmicx

\usepackage{pifont}
\usepackage[T1]{fontenc}
\usepackage[utf8]{inputenc}
\usepackage{microtype}
\usepackage{adjustbox}
\usetikzlibrary{shadows.blur,shadows}

\usepackage{array}
\usepackage{siunitx}
\usepackage{booktabs}
\usepackage{tabularx}
\usepackage{array}
\usepackage{multirow}
\usepackage{adjustbox}

\usepackage{xcolor}

\usepackage{caption}
\usepackage{subcaption}
\usepackage{wrapfig}

\makeatletter
\@ifundefined{labelindent}{}{}
\makeatother
\usepackage{enumitem}
\usepgfplotslibrary{polar,fillbetween,statistics}
\usetikzlibrary{patterns,shapes,arrows.meta}
\usepackage{url}

\usepackage{hyperref}
\usepackage[capitalize,noabbrev]{cleveref}

\usepackage[textsize=tiny]{todonotes}
\usepackage{tikz}
\usetikzlibrary{arrows.meta,positioning,shapes.geometric}
\definecolor{oi-blue}{RGB}{0,114,178}
\definecolor{oi-orange}{RGB}{230,159,0}
\definecolor{oi-sky}{RGB}{86,180,233}
\definecolor{oi-green}{RGB}{0,158,115}
\definecolor{oi-yellow}{RGB}{240,228,66}
\definecolor{oi-verm}{RGB}{213,94,0}
\definecolor{oi-purple}{RGB}{204,121,167}
\usepackage{amsmath,amssymb,mathtools}
\usepackage{amsthm}
\usepackage{bm}               % <-- defines \bm
\providecommand{\medmath}[1]{\ensuremath{#1}} % safe no-op if not defined
\theoremstyle{plain}
\newtheorem{theorem}{Theorem}       % Theorem 1, 2, 3...
\newtheorem{definition}{Definition} % Definition 1, 2, 3...
\newtheorem{assumption}{Assumption}  % Assumption 1, 2, 3...
\newtheorem{proposition}{Proposition} % Proposition 1, 2, 3...

\newtheorem{lemma}{Lemma}

\newtheorem{remark}{Remark}
\usepackage[normalem]{ulem}

\title{MCIR: A Feature Dependence-Aware  Explainability Method with Reliability Guarantees}

\author{Poushali Sengupta\\
Department of Informatics, University of Oslo\\
Oslo, Norway\\
\texttt{poushali.sengupta@ifi.uio.no}
\AND
Sabita Maharjan\\
Department of Informatics, University of Oslo\\
Oslo, Norway\\
\texttt{sabita.maharjan@ifi.uio.no}
\AND
Frank Eliassen\\
Department of Informatics, University of Oslo\\
Oslo, Norway\\
\texttt{frank.eliassen@ifi.uio.no}
\AND
Shashi Raj Pandey\\
Department of Electronic Systems, Aalborg University\\
Aalborg, Denmark\\
\texttt{srp@es.aau.dk}
\AND
Yan Zhang\\
Department of Informatics, University of Oslo\\
Oslo, Norway\\
\texttt{yanzhang@ieee.org}
}

\def\month{10}  % Insert correct month for camera-ready version
\def\year{2026} % Insert correct year for camera-ready version
\def\openreview{\url{https://openreview.net/TBD}} % Insert correct link to OpenReview for camera-ready version

\begin{document}
\maketitle
\begin{abstract}

As modern machine-learning models are deployed in high-stakes,
data-rich environments, interactions among features have become more
intricate and less amenable to traditional interpretation. Strong
feature dependence can make attribution sensitive to how shared
predictive information is allocated among correlated predictors. In the
presence of multicollinearity or near-duplicate predictors, attribution
tools such as SHAP, LIME, HSIC, MI/CMI, and SAGE may distribute
importance across redundant features, obscuring which variables contain
unique predictive information. This can produce unstable rankings and
increase computational cost. Correlation-aware approaches such as CIR
and BlockCIR address related aspects of dependence but do not directly
measure the unique conditional information contributed by an individual
feature. To address this, we propose the Mutual Correlation Impact Ratio
Method (MCIR-M), a dependence-aware global feature-importance procedure
that quantifies the unique predictive information contributed by each
feature beyond a selected dependence neighbourhood. MCIR-M introduces
the Mutual Correlation Impact Ratio (MCIR), which conditions each feature
on a small set of strongly dependent neighbours and computes a
normalized ratio of conditional to block-level information. The
population score lies in $[0,1]$ and equals zero under exact conditional
redundancy. This common numerical range supports comparison of reported
scores but does not by itself imply identical calibration across
datasets or tasks. In addition, we introduce a lightweight estimation
procedure that computes MCIR scores using a fraction of the available
data and evaluates agreement with full-data explanations through
ranking and faithfulness diagnostics. Across controlled synthetic
redundancy experiments and the UCI HAR benchmark, MCIR exhibits
dependence-aware ranking behaviour, with its clearest advantage appearing
under injected near-duplicate predictors; comparisons with SHAP
(independent and conditional), SAGE, HSIC, MI-based scores, and
CIR-family baselines are mixed across real-data evaluation criteria.
Using reduced explanation samples lowers the computational burden in
the evaluated configurations, while full-versus-lightweight agreement
is assessed separately through ranking, head-set, and faithfulness
diagnostics. These results support MCIR-M
as a practical dependence-aware diagnostic for global explanation under
strong feature dependence.

\end{abstract}

\section{Introduction}
Modern predictive models often use correlated or redundant features,
for which attribution depends on the target functional, treatment of
feature dependence, and perturbation or background distribution
~\cite{covert2021explaining}. Interventional
SHAP~\cite{lundberg2017shap}, conditional
SHAP~\cite{aas2021explaining}, SAGE~\cite{covert2020sage}, marginal
MI~\cite{cover2006elements,kraskov2004estimating},
HSIC~\cite{gretton2005measuring,hsic2007}, CMI, and CIR-family
scores~\cite{sengupta2025correlation,sengupta2025correlation2} may therefore produce different
allocations because they quantify distinct functionals. In particular,
perturbation- and coalition-based methods allocate shared information
according to their value function and background construction, whereas
marginal dependence measures do not isolate incremental information.
Although CMI provides such conditional isolation, its finite-sample
estimation is sensitive to sample size and conditioning dimension.
MCIR-M addresses this setting by measuring each feature's unique
predictive information relative to an explicit, screened dependence
neighbourhood and normalizing it by enlarged-block information. The
population score is bounded and vanishes under exact conditional
redundancy. For scalability, MCIR-M uses small local neighbourhoods and
supports reduced-sample evaluation whose fidelity is assessed through
separate representation, ranking, top-$K$, and behavioural diagnostics.

\par We propose MCIR-M (Mutual Correlation Impact Ratio Method), a
dependence-aware global attribution framework for explaining a fixed
predictive model. For each input feature, MCIR identifies a small
neighbourhood of related predictors and measures the predictive
information contributed by that feature beyond what is already
represented by its neighbours. This unique contribution is normalized
by the information carried jointly by the feature and its
neighbourhood, yielding a bounded score that vanishes under exact
conditional redundancy.
MCIR is built on population conditional mutual information
(CMI)~\cite{cover2006elements}. When regular conditional distributions exist, the same population CMI
functional accommodates continuous, discrete, and mixed-type variables.
Its application to jointly occurring variables of different types is
referred to later in the paper as \emph{mixed-variable CMI}. Its empirical reliability, however, depends on using an
estimator compatible with the observed variable types, sample size,
regularity conditions, and conditioning dimension
~\cite{gao2018demystifying}. MCIR-M does not remove these established
estimation challenges. Instead, it limits the conditioning dimension
through screened local neighbourhoods, states estimator-specific
conditions, and evaluates finite-sample sensitivity explicitly. Its
contribution is therefore the resulting redundancy-aware attribution
construction and its accompanying population, finite-sample, and
lightweight-computation analyses, rather than a new general-purpose CMI
estimator.
The population score lies in $[0,1]$ and vanishes under exact
conditional redundancy. Boundedness supplies a common numerical range,
not universal cross-task calibration. We establish population
properties and conditional finite-sample ranking bounds under explicit
component-error, denominator-separation, and ranking-margin
assumptions. Estimator switching and Auto-$\Phi$ are data-driven
selection heuristics rather than oracle-optimal procedures. A
lightweight implementation computes the same attribution functional on
a reduced explanation sample; its adequacy is evaluated empirically
through separate representation, ranking, head-set, and behavioural
diagnostics. Throughout, MCIR denotes the score and MCIR-M the complete
procedure. 

\par To improve practical scalability, we develop a lightweight computation strategy for estimating MCIR from a reduced environment and introduce an estimator-selection procedure for handling different data types and dependence structures. The main contributions of this work are summarized below. 
\begin{enumerate}[leftmargin=*, itemsep=1pt, topsep=0pt]
    \item \textbf{Population MCIR formulation.}
We define a bounded, dependence-aware global attribution functional
that measures the unique predictive information contributed by a
feature beyond a selected neighbourhood. The definition accommodates
continuous, discrete, and mixed-type random elements at the population
level.

\item \textbf{Qualified theoretical and computational analysis.}
We establish population boundedness, endpoint behaviour, exact
conditional-redundancy collapse, invariance, and a weak-dependence
reduction under explicit conditions. We also give estimator-specific
finite-sample perturbation and ranking results. Auto-$\Phi$ and
bootstrap estimator switching are presented as practical selection
heuristics, with their costs and limitations reported explicitly.

\item \textbf{Empirical evaluation under feature dependence.}
We evaluate controlled redundancy, UCI HAR, HouseEnergy-Sim,
CIFAR-10 representations, and Norwegian load forecasting using
ranking, group-level, deletion, estimator-sensitivity, and runtime
diagnostics. The clearest gains occur in the controlled redundancy
experiments; real-data comparisons are mixed across criteria. The lightweight experiments evaluate the computational and
attributional effects of reducing the explanation sample, with runtime
and full-versus-lightweight agreement reported as separate diagnostics.
\end{enumerate}

\par \textbf{Paper Overview:} 
The remainder of the paper is organized as follows. Section~\ref{sec:related} reviews related work and motivates the need for dependence-aware feature attribution. Section~\ref{sec:notation} introduces the statistical method and notation used throughout the paper. Section~\ref{prob} presents the proposed \textsc{MCIR-M} method together with its theoretical foundations and implementation considerations. Sections~\ref{sec:expts} and~\ref{res} describe the experimental setup and empirical evaluation across synthetic and real-world datasets. Finally, Sections~\ref{rep}, \ref{bro}, and~\ref{sec:conclusion} discuss reproducibility, broader impact, and concluding remarks. Additional proofs, algorithms, ablation studies, and extended experimental results are provided in the appendices.

% \par The appendices provide detailed supporting material for the proposed MCIR framework. Appendix~\ref{sec:theory1.1}--Appendix~\ref{sec:theory1.8} contains the proofs of the main theoretical results, including boundedness, redundancy collapse, weak-dependence behaviour, finite-sample stability, estimator consistency, and Auto-$\Phi$ guarantees. Appendix~\ref{sec:algo} presents supplementary implementation algorithms and estimator-switching procedures. Appendix~\ref{sec:appendix_experiments} provides additional experimental analyses, including synthetic redundancy studies, estimator ablations, runtime comparisons, deletion-faithfulness evaluations, lightweight-environment robustness, CIFAR-10 embedding experiments, and supplementary Norwegian load-zone results. Additional implementation details, visualizations, and reproducibility material are also included throughout the appendices.

\section{Background and Related Work}\label{sec:related}

In real-world datasets, correlated or redundant covariates complicate
global attribution because the resulting importance depends on the
target functional and on how feature dependence is treated.
Unconditional permutation importance and impurity-based importance can
be biased or difficult to interpret in the presence of correlated
predictors~\cite{breiman2001random,strobl2008conditional}.
Shapley-based explainers, including SHAP and SAGE, allocate contributions
through a coalition value function, but their treatment of dependence
depends on the selected value function, background distribution, and
conditional or interventional implementation
~\cite{lundberg2017shap,sundararajan2020many,
covert2020sage}. Standard KernelSHAP commonly uses a feature-independence
approximation, whereas conditional variants estimate the relevant
conditional distributions to preserve dependence
~\cite{aas2021explaining}. Consequently, correlated predictors may
receive different allocations under different valid Shapley
formulations; this reflects a difference in the attribution estimand
rather than a universal failure of Shapley values.

Perturbation-based procedures such as ROAR and deletion tests evaluate
whether predictive performance changes when selected features are
removed or replaced~\cite{hooker2019roar,samek2017evaluating}. Their
results depend on the perturbation, replacement, retraining, and
evaluation protocol and should therefore be distinguished from the
attribution functional itself. Information-theoretic and kernel
measures provide another perspective: marginal MI and HSIC quantify
unconditional dependence, whereas CMI explicitly quantifies dependence
remaining after conditioning
~\cite{cover2006elements,gretton2005measuring}. These quantities do not
share a universal unit-interval scale, and finite-sample estimation,
particularly for nonparametric CMI in larger conditioning dimensions,
requires estimator-specific regularity and sample-size conditions
~\cite{gao2018demystifying}. Conditional SHAP similarly depends on the
quality of the estimated conditional distribution
~\cite{aas2021explaining}.

Causal feature-attribution formulations target a different estimand and
require causal assumptions or structural information beyond the
observational joint law~\cite{janzing2020feature}. Finally, robustness
studies show that post-hoc explanations can change under small
prediction-preserving perturbations or deliberate model manipulation in
the settings examined by those studies
~\cite{ghorbani2019fragile,slack2020fooling}. These findings motivate
stating the attribution target, dependence treatment, estimator, and
perturbation protocol explicitly rather than treating different
importance measures as interchangeable.
\par More recent interaction-attribution methods, such as Archipelago,
model and attribute higher-order feature interactions
~\cite{tsang2020does}. This objective differs from MCIR's normalization
of unique conditional information relative to a selected dependence
neighbourhood.
Complementary statistical approaches define variable importance
through predictive-risk or sensitivity functionals.
Leave-one-covariate-out (LOCO) inference measures the change in
predictive performance when a covariate is omitted
~\cite{lei2018distributionfree}, whereas algorithm-agnostic
variable-importance functionals support flexible machine-learning
estimators and statistical inference
~\cite{williamson2021nonparametric}. Shapley effects allocate explained
variation among potentially dependent inputs using a cooperative-game
functional~\cite{owen2017shapley}, and tree-specific procedures improve
the estimation of conditional Shapley values for tree-based models
~\cite{amoukou2022accurate}. These approaches target different
estimands: predictive-risk change, allocation of explained variation,
and unique conditional information need not induce the same feature
ordering. Accordingly, the present study does not treat these functionals as
interchangeable. To examine their empirical differences under a common
controlled dependence structure, Section~\ref{subsec:loco-comparison}
compares MCIR with LOCO, a region-weighted R-LOCO-style procedure
\cite{amoukou2026regional}, and a dependent-input Shapley-effects
plug-in estimator. The comparison evaluates predictive-risk change,
regional loss variation, explained-variance allocation, and unique
conditional information as distinct attribution targets.
On the other hand, the Correlation Impact Ratio formulation
\cite{sengupta2025correlation} provides a bounded measure of
feature--output co-movement. We refer to its per-feature implementation
as PCIR in the experimental tables. The same formulation includes
BlockCIR for groupwise attribution and class-conditioned ExCIR for
multi-class outputs. These methods capture shared structure and
cross-covariance geometry but do not explicitly estimate the unique
conditional information contributed by a feature relative to
$X_{\Phi(i)}$.  
Existing methods target different attribution or dependence
functionals and therefore need not produce identical rankings.
Table~\ref{tab:rw_compare} summarizes their principal distinctions,
including the CIR-family baselines evaluated in this study. The entries
describe the cited formulations; implementation-dependent properties
are not inferred.

\begin{table*}[t]
\centering
\scriptsize
\setlength{\tabcolsep}{2.8pt}
\renewcommand{\arraystretch}{0.96}
\caption{Qualified comparison of attribution and dependence-measure
families. Method descriptions follow the cited formulations
~\cite{lundberg2017shap,ribeiro2016lime,
lei2018distributionfree,williamson2021nonparametric,
owen2017shapley,
sengupta2025correlation2}, and \cite{amoukou2022accurate}. Absence of a property means that it is not
established or evaluated here.}
\label{tab:rw_compare}
\begin{adjustbox}{width=\linewidth}
\begin{tabular}{
p{2.1cm}
p{3.2cm}
p{3.1cm}
p{3.2cm}
p{2.6cm}
p{3.7cm}
}
\toprule
\textbf{Method family} &
\textbf{Attribution target} &
\textbf{Dependence treatment} &
\textbf{Redundancy interpretation} &
\textbf{Numerical scale} &
\textbf{Finite-sample and practical scope} \\
\midrule

\textbf{SHAP / LIME} &
Local contribution or local surrogate attribution, optionally
aggregated &
Depends on the value function, background distribution, perturbation
scheme, and implementation &
Shared-signal allocation depends on the coalition or perturbation
definition &
Method- and model-output-dependent &
Depends on the explainer and sampling scheme; no full--lightweight
evaluation is reported here \\

\textbf{MI / HSIC / CMI} &
Statistical dependence or incremental conditional dependence &
MI and HSIC are typically marginal; CMI conditions on specified
variables &
Population CMI vanishes under the corresponding conditional-
independence relation; MI and HSIC do not isolate conditional effects &
Raw measures are not generally normalized to $[0,1]$ &
Depends on the estimator, sample size, regularity conditions, and
conditioning dimension \\
\textbf{LOCO} &
Global predictive-risk change after excluding one feature &
Accounts for dependence indirectly through refitting or reevaluating
the predictive procedure without the selected feature
~\cite{lei2018distributionfree,williamson2021nonparametric} &
Correlated predictors may compensate for the omitted feature, so a
small LOCO value need not imply absence of predictive association &
Expressed on the scale of the selected predictive-loss difference &
Requires a specified loss, training or refitting protocol, and
evaluation sample; it targets predictive-risk change rather than
unique conditional information \\

\textbf{Region-weighted LOCO} &
Regional predictive-loss change, aggregated over data-dependent
subpopulations &
Evaluates LOCO effects within identified regions of the feature space
~\cite{amoukou2025regional} &
Can reveal heterogeneous omission effects, but does not itself
establish feature-level conditional-redundancy collapse &
Expressed on the selected loss scale and depends on regional
aggregation &
Depends on region construction, regional sample sizes, and
aggregation weights. The implementation evaluated here is an
R-LOCO-style region-weighted procedure \\

\textbf{Shapley effects} &
Cooperative allocation of explained output variance among dependent
inputs &
Uses conditional coalition values to accommodate dependent inputs
~\cite{owen2017shapley} &
Allocates shared explained variance across features rather than
isolating unique information relative to a selected neighbourhood &
Typically normalized as a share of explained variance, subject to the
chosen formulation and estimator &
Depends on conditional-distribution or conditional-expectation
estimation. The experiment uses a finite-sample dependent-input
plug-in approximation \\

\textbf{CCA / PCIR} &
Global linear or canonical feature--output association &
Uses canonical dependence under the stated preprocessing
~\cite{hotelling1936relations,sengupta2025correlation} &
Does not explicitly isolate information conditional on a
feature-specific neighbourhood &
Uses the bounded normalization specified by the formulation &
Depends on preprocessing and estimation of covariance or association
quantities \\

\textbf{BlockCIR} &
Block-level aggregation of CIR-family quantities &
Uses prespecified feature blocks &
Summarizes correlated groups but does not establish feature-level
conditional-redundancy collapse &
Uses the corresponding CIR-family normalization &
Depends additionally on block construction; no separate end-to-end
complexity claim is made here \\

\textbf{CC-CIR} &
Class-conditional CIR-family attribution &
Evaluates association within the specified class partitions &
Class conditioning does not itself establish feature-level conditional-
redundancy collapse &
Uses the corresponding class-conditional formulation &
Depends on the class partition and available class-specific
observations \\

\textbf{MCIR-M} &
Global unique predictive association relative to a selected
neighbourhood &
Conditions explicitly on the screened neighbourhood
$X_{\Phi(i)}$ &
Population score is zero when
$Q\perp\!\!\!\perp X_i\mid X_{\Phi(i)}$ &
Bounded in $[0,1]$; not universally calibrated across tasks &
Conditional rank bounds require component-error, denominator, and
margin conditions. Full--lightweight fidelity is assessed through an
operational diagnostic profile, without an automatic preservation
guarantee \\

\bottomrule
\end{tabular}
\end{adjustbox}
\end{table*}

MCIR-M differs from grouping and class-conditioning approaches by
conditioning on a feature-specific dependence neighbourhood. Its
contribution is therefore a particular redundancy-aware attribution
functional and its associated analysis, rather than a claim that
different attribution targets should yield identical rankings.

\section{Preliminaries}
\label{sec:notation}

This section introduces the population-level quantities underlying MCIR,
their empirical approximations, and the full and lightweight explanation
environments. We distinguish throughout between
(i) the number of observations used to estimate an explanation and
(ii) the ambient dimension of the model-output representation.
This distinction is essential because the projection and embedding
distances of Cai and Lim~\cite{cai2022distances} concern probability
measures defined on spaces of different ambient dimensions, rather than
empirical measures containing different numbers of observations.

\paragraph{Statistical Setup and Model-Output Representations.}
Let $(\Omega,\mathcal A,\mathbb P)$ be a probability space. For
$i\in\{1,\ldots,k\}$, let
$X_i:\Omega\rightarrow\mathcal X_i$ be a random element taking values
in a standard Borel space $\mathcal X_i$, and define
\[
\mathcal X=\prod_{i=1}^{k}\mathcal X_i,
\qquad
X=(X_1,\ldots,X_k):\Omega\rightarrow\mathcal X.
\]
The continuous experiments specialize to
$\mathcal X_i\subseteq\mathbb R$. Let $P_X$ denote the law of $X$,
and let $Y^\star$ denote the ground-truth target. A fixed measurable
predictor
$M:\mathcal X\rightarrow\mathbb R^d$
produces the full model-output representation
$Z=M(X)\in\mathbb R^d$.

Throughout the paper, MCIR explains statistical dependence between the
input features and the model-output representation $Z$, rather than
causal dependence on the ground-truth target $Y^\star$. Accordingly,
MCIR should be interpreted as a model-explanation method and not as a
causal variable-importance procedure.

For a target feature $i\in\{1,\ldots,k\}$, let
$\medmath{\Phi(i)\subseteq\{1,\ldots,k\}\setminus\{i\}}$ denote its
conditioning neighbourhood, and let
$\medmath{X_{\Phi(i)}=(X_j)_{j\in\Phi(i)}}$ denote the corresponding
random feature block. Thus, $I(Z;X_i)$ is the marginal mutual
information between the model-output representation and $X_i$, whereas
$I(Z;X_i\mid X_{\Phi(i)})$ is the conditional mutual information
remaining after accounting for the selected neighbourhood. Unless
explicitly marked with a hat, all MI and CMI quantities denote
population functionals of the joint law $P_{X,Z}$. Empirical estimators
are introduced separately in Section~\ref{sec:computational}.

\paragraph{Full Population Environment.}
We define the full population explanation environment as
\begingroup
\setlength{\abovedisplayskip}{4pt}
\setlength{\belowdisplayskip}{4pt}
\begin{equation}
\medmath{
\mathcal{U}=(P_{X,Z},M,d),
}
\label{eq:full-population-environment}
\end{equation}
\endgroup
where $P_{X,Z}$ is the joint law induced by $X\sim P_X$ and
$Z=M(X)$, and $d$ is the ambient dimension of the model-output
representation. This definition makes explicit both the probability law
with respect to which the population attribution is defined and the
fixed predictor being explained.

\paragraph{Full Empirical Environment.}
Let
\(\medmath{\mathcal{D}_{N}=\{x^{(r)}\}_{r=1}^{N}}\)
denote the full explanation dataset containing \(N\) observations.
Here, the parenthesized superscript \(r\) is an observation index:
\(x^{(r)}\) denotes the \(r\)-th sample and not exponentiation. For each
observation, let
$\medmath{z^{(r)}=M(x^{(r)})\in\mathbb{R}^{d}}$. The empirical joint
law and corresponding full empirical environment are
\begingroup
\setlength{\abovedisplayskip}{4pt}
\setlength{\belowdisplayskip}{4pt}
\begin{subequations}
\begin{align}
\widehat P_{X,Z}^{\,N}
&=
\medmath{
\frac{1}{N}
\sum_{r=1}^{N}
\delta_{(x^{(r)},z^{(r)})}},
\label{eq:full-empirical-law}\\[-1mm]
\widehat{\mathcal{U}}_{N}
&=
\medmath{
(\widehat P_{X,Z}^{\,N},M,d)}.
\label{eq:full-empirical-environment}
\end{align}
\end{subequations}
\endgroup

\paragraph{Lightweight Environment.}
Let $\mathcal{I}_{n}\subseteq\{1,\ldots,N\}$ be a selected index set
containing $n<N$ observations. In addition to observation subsampling,
the lightweight procedure may use a measurable representation map
$\medmath{G:\mathbb{R}^{d}\rightarrow\mathbb{R}^{d'}}$, with
$d'\leq d$, which produces the lightweight representation
$\medmath{Z'=G(Z)\in\mathbb{R}^{d'}}$. For the selected observations,
write
$\medmath{z'^{(r)}=G(M(x^{(r)}))}$ for
$r\in\mathcal{I}_{n}$. The lightweight empirical joint law and
corresponding lightweight environment are
\begingroup
\setlength{\abovedisplayskip}{4pt}
\setlength{\belowdisplayskip}{4pt}
\begin{subequations}
\begin{align}
\widehat P_{X,Z'}^{\,n}
&=
\medmath{
\frac{1}{n}
\sum_{r\in\mathcal{I}_{n}}
\delta_{(x^{(r)},z'^{(r)})}},
\label{eq:lightweight-empirical-law}\\[-1mm]
\widehat{\mathcal{U}}_{n}'
&=
\medmath{
(\widehat P_{X,Z'}^{\,n},G\circ M,d')}.
\label{eq:lightweight-empirical-environment}
\end{align}
\end{subequations}
\endgroup

The symbols $N$ and $n$ denote numbers of observations, whereas
$d$ and $d'$ denote the ambient dimensions of the per-observation
model-output representations. Observation subsampling changes
$N$ to $n$. Representation reduction changes $d$ to $d'$.
These operations are conceptually distinct.
If the lightweight procedure only subsamples observations, then
$G$ is the identity map and $d'=d$. In that case, the full and
lightweight empirical laws are defined on the same ambient space, and
no different-dimensional projection is required. The Cai--Lim
construction \cite{cai2022distances} becomes relevant when the lightweight procedure also
changes the representation dimension, so that $d'<d$.

\paragraph{Output Laws.}
Let $\medmath{\mu=P_Z\in\mathcal{M}_p(\mathbb{R}^{d})}$ and
$\medmath{\nu=P_{Z'}\in\mathcal{M}_p(\mathbb{R}^{d'})}$ denote the full
and lightweight population output laws, respectively, where
$\mathcal{M}_p(\mathbb{R}^{r})$ is the set of Borel probability
measures on $\mathbb{R}^{r}$ with finite $p$th moment. Their empirical
counterparts are
\begingroup
\setlength{\abovedisplayskip}{4pt}
\setlength{\belowdisplayskip}{4pt}
\begin{subequations}
\begin{align}
\widehat\mu_N
&=
\medmath{
\frac{1}{N}\sum_{r=1}^{N}\delta_{z^{(r)}}
\in\mathcal{M}_p(\mathbb{R}^{d})},
\label{eq:full-output-law}\\[-1mm]
\widehat\nu_n
&=
\medmath{
\frac{1}{n}\sum_{r\in\mathcal{I}_{n}}\delta_{z'^{(r)}}
\in\mathcal{M}_p(\mathbb{R}^{d'})}.
\label{eq:lightweight-output-law}
\end{align}
\end{subequations}
\endgroup
The numbers of atoms, $N$ and $n$, do not determine the ambient
dimensions of these empirical measures, which are $d$ and $d'$,
respectively.

\paragraph{Projection and Embedding Distances.}
Assume $d'\leq d$. Following Cai and
Lim~\cite{cai2022distances}, define the Stiefel manifold
$\medmath{\mathsf{O}(d',d)
=\{V\in\mathbb{R}^{d'\times d}:VV^\top=I_{d'}\}}$.
For $V\in\mathsf{O}(d',d)$ and $b\in\mathbb{R}^{d'}$, define the
affine projection
\begingroup
\setlength{\abovedisplayskip}{4pt}
\setlength{\belowdisplayskip}{4pt}
\begin{equation}
\medmath{
\varphi_{V,b}:\mathbb{R}^{d}\rightarrow\mathbb{R}^{d'},
\qquad
\varphi_{V,b}(z)=Vz+b.
}
\label{eq:affine-projection}
\end{equation}
\endgroup
For any probability measure $\rho$ on $\mathbb{R}^{d}$,
$\varphi_{V,b\#}\rho$ denotes its pushforward through
$\varphi_{V,b}$. The admissible projection and embedding families and
their corresponding distances are
\begingroup
\setlength{\abovedisplayskip}{4pt}
\setlength{\belowdisplayskip}{4pt}
\begin{subequations}
\begin{align}
\Phi^{-}(\mu,d')
&=
\medmath{
\left\{
\varphi_{V,b\#}\mu:
V\in\mathsf{O}(d',d),\
b\in\mathbb{R}^{d'}
\right\}},
\label{eq:projection-family}\\[-1mm]
W_p^{-}(\nu,\mu)
&=
\medmath{
\inf_{\beta\in\Phi^{-}(\mu,d')}
W_p(\nu,\beta)},
\label{eq:cai-lim-projection-distance}\\[-1mm]
\Phi^{+}(\nu,d)
&=
\medmath{
\left\{
\alpha\in\mathcal{M}_p(\mathbb{R}^{d}):
\varphi_{V,b\#}\alpha=\nu
\text{ for some }
V\in\mathsf{O}(d',d),\
b\in\mathbb{R}^{d'}
\right\}},
\label{eq:embedding-family}\\[-1mm]
W_p^{+}(\nu,\mu)
&=
\medmath{
\inf_{\alpha\in\Phi^{+}(\nu,d)}
W_p(\alpha,\mu)}.
\label{eq:cai-lim-embedding-distance}
\end{align}
\end{subequations}
\endgroup
For the $p$-Wasserstein family, Cai and Lim~\cite{cai2022distances} establish that
$\medmath{W_p^{-}(\nu,\mu)=W_p^{+}(\nu,\mu)}$. We denote their common
value by
\begingroup
\setlength{\abovedisplayskip}{4pt}
\setlength{\belowdisplayskip}{4pt}
\begin{equation}
\medmath{
\widetilde W_p(\nu,\mu)
:=
W_p^{-}(\nu,\mu)
=
W_p^{+}(\nu,\mu),
}
\label{eq:augmented-wasserstein}
\end{equation}
\endgroup
and refer to it as the augmented Wasserstein distance between the
different-dimensional representation laws.

\paragraph{Similarity Between Environments.}
For a tolerance $\varepsilon_{\mathrm{emb}}\geq0$, we define
representation-level environment similarity by
\begingroup
\setlength{\abovedisplayskip}{4pt}
\setlength{\belowdisplayskip}{4pt}
\begin{equation}
\medmath{
\mathcal{U}'
\approx_{\varepsilon_{\mathrm{emb}}}
\mathcal{U}
\quad\Longleftrightarrow\quad
\widetilde W_p(P_{Z'},P_Z)
\leq
\varepsilon_{\mathrm{emb}}.
}
\label{eq:environment-similarity}
\end{equation}
\endgroup
At the empirical level, the corresponding diagnostic is
\begingroup
\setlength{\abovedisplayskip}{4pt}
\setlength{\belowdisplayskip}{4pt}
\begin{equation}
\medmath{
\widetilde W_p
\left(
\widehat\nu_n,
\widehat\mu_N
\right).
}
\label{eq:empirical-environment-similarity}
\end{equation}
\endgroup
This definition specifies the compared probability measures, the
projection class, the base distance, and the precise meaning of
approximate equality.
\begin{wrapfigure}{r}{0.48\linewidth}
\centering
\begin{adjustbox}{width=0.95\linewidth}
\begin{tikzpicture}[
    >=latex,
    node distance=5mm,
    scale=1.05,
    transform shape
]

% Colors
\definecolor{pcirblue}{RGB}{0,100,200}
\definecolor{pcirred}{RGB}{200,60,60}
\definecolor{blockgray}{RGB}{240,240,240}
\definecolor{lightblue}{RGB}{220,240,255}
\definecolor{lightred}{RGB}{255,240,240}

% Background panel
\node[
    rectangle,
    rounded corners=6pt,
    fill=blockgray,
    draw=gray,
    thick,
    drop shadow={opacity=0.25},
    minimum width=5.7cm,
    minimum height=4.6cm,
    label=above:{\textbf{Global (unconditioned) baseline}}
] (panel) {};

% Target feature
\node[
    rectangle,
    rounded corners=4pt,
    fill=lightblue,
    draw=pcirblue!80!black,
    thick,
    minimum width=1.6cm,
    minimum height=0.9cm,
    font=\large,
    text=pcirblue!80!black
] (xi) at ([xshift=-1.45cm,yshift=0.85cm]panel.center)
{$X_i$};

% Model-output representation
\node[
    circle,
    fill=lightblue,
    draw=pcirblue!80!black,
    thick,
    minimum size=13mm,
    font=\large,
    text=pcirblue!80!black
] (z) at ([xshift=1.55cm,yshift=0.85cm]panel.center)
{$Z$};

% Global association
\draw[
    <->,
    very thick,
    pcirblue!80!black,
    line width=1.4pt
] (xi) -- node[
    above,
    font=\scriptsize,
    text=pcirblue!90!black,
    fill=white,
    inner sep=2pt
] {global association} (z);

% Correlated neighbourhood
\node[
    rectangle,
    rounded corners=4pt,
    fill=lightred,
    draw=pcirred!80!black,
    thick,
    minimum width=1.8cm,
    minimum height=0.8cm,
    font=\large,
    text=pcirred!80!black
] (xphi) at ([xshift=-1.45cm,yshift=-0.65cm]panel.center)
{$X_{\Phi(i)}$};

% Dependence between target and neighbourhood
\draw[
    <->,
    thick,
    pcirred!80!black,
    dashed
] (xi) -- node[
    right,
    font=\scriptsize,
    text=pcirred!90!black
] {dependence} (xphi);

% Show that the neighbourhood is not conditioned on
\draw[
    -,
    thick,
    gray!65,
    dashed
] (xphi.east) -- ([xshift=-2mm]z.south west);

\node[
    font=\scriptsize,
    text=pcirred!90!black,
    fill=lightred,
    rounded corners=2pt,
    inner sep=3pt,
    align=center
] at ([yshift=-1.45cm]panel.center)
{The dependence neighbourhood is not\\
explicitly conditioned on by the global baseline.};

% Interpretation box
\node[
    rectangle,
    rounded corners=4pt,
    fill=white,
    draw=pcirblue!70!black,
    thick,
    drop shadow={opacity=0.15},
    minimum width=5.3cm,
    minimum height=0.9cm,
    font=\footnotesize,
    text=pcirblue!90!black,
    align=center
] (interpretation) at ([yshift=-2.65cm]panel.center)
{\textbf{PCIR:} global feature--output association\\
without conditional redundancy adjustment};

\end{tikzpicture}
\end{adjustbox}
\caption{Conceptual view of the PCIR baseline. PCIR evaluates global
feature--output association without explicitly conditioning on the
dependence neighbourhood $X_{\Phi(i)}$. The connecting lines represent
statistical association, not causal direction.}
\vspace{2mm}
\label{fig:pcir-intuition}
\end{wrapfigure}

\begin{remark}[Scope of the augmented distance]
The augmented distance in
Equation~\ref{eq:augmented-wasserstein} compares probability laws
whose per-observation representations have dimensions $d'$ and $d$.
It does not treat the sample sizes $n$ and $N$ as ambient dimensions.
Moreover, $\widetilde W_p$ is not generally a metric on the original
space of probability measures: it may vanish for distinct measures
that are related through an admissible projection, rotation, or
translation. It should therefore be interpreted as a distance to an
equivalence class of representations.
\end{remark}

\begin{remark}[Effect of projection on information]
When $d'=d$, the transformation $\varphi_{V,b}$ is bijective for
orthogonal $V$. In this special case, it preserves Euclidean geometry
and the mutual information between the complete transformed output
representation and any feature block.

When $d'<d$, however, the projection is not bijective and may discard
predictive information. The conditional data-processing inequality
implies
$I(Z';X_i\mid X_{\Phi(i)})
    \leq
    I(Z;X_i\mid X_{\Phi(i)})$
when $Z'$ is a measurable function of $Z$ under the required Markov
structure. Therefore, dimensionality reduction is not assumed to
preserve MCIR automatically. The resulting information loss and
attribution discrepancy must be analyzed explicitly.
\end{remark}
\paragraph{Environment Similarity and Attribution Diagnostics.}
The representation-level similarity of the full and lightweight
environments is measured using the probability laws defined above:
\begingroup
\setlength{\abovedisplayskip}{4pt}
\setlength{\belowdisplayskip}{4pt}
\begin{subequations}
\begin{align}
d_{\mathrm{repr}}(\mathcal{U},\mathcal{U}')
&=
\medmath{
\begin{cases}
W_p(P_Z,P_{Z'}), &
d'=d \text{ with common coordinates},\\
\widetilde W_p(P_{Z'},P_Z), &
d'<d,
\end{cases}}
\label{eq:representation-discrepancy}\\[-1mm]
\widehat d_{\mathrm{repr}}
(\widehat{\mathcal{U}}_{N},\widehat{\mathcal{U}}_{n}')
&=
\medmath{
\begin{cases}
W_p(\widehat\mu_N,\widehat\nu_n), & d'=d,\\
\widetilde W_p(\widehat\nu_n,\widehat\mu_N), & d'<d.
\end{cases}}
\label{eq:empirical-representation-discrepancy}
\end{align}
\end{subequations}
\endgroup
Here, $W_p$ is the ordinary $p$-Wasserstein distance and
$\widetilde W_p$ is the augmented projection/embedding distance in
Equation~\ref{eq:augmented-wasserstein}.

Representation-level similarity alone does not guarantee attribution
similarity. MCIR depends on the joint feature--output law, whereas
Equation~\ref{eq:representation-discrepancy} compares only marginal
output laws. Explanation agreement is therefore evaluated separately.

Let
$\medmath{\widehat{\bm C}_{N}
=(\widehat C_{1,N},\ldots,\widehat C_{k,N})}$ and
$\medmath{\widehat{\bm C}_{n}'
=(\widehat C_{1,n}',\ldots,\widehat C_{k,n}')}$ denote the full and
lightweight MCIR score vectors. Their normalized Kendall discrepancy
and top-$K$ disagreement are
\begingroup
\setlength{\abovedisplayskip}{4pt}
\setlength{\belowdisplayskip}{4pt}
\begin{subequations}
\begin{align}
d_{\mathrm{rank}}
(\widehat{\bm C}_{N},\widehat{\bm C}_{n}')
&=
\medmath{
\frac{
1-\tau_b(\widehat{\bm C}_{N},\widehat{\bm C}_{n}')
}{2}},
\label{eq:rank-discrepancy}\\[-1mm]
d_{\mathrm{J}@K}
&=
\medmath{
1-
\frac{
\left|
\operatorname{TopK}(\widehat{\bm C}_{N})
\cap
\operatorname{TopK}(\widehat{\bm C}_{n}')
\right|
}{
\left|
\operatorname{TopK}(\widehat{\bm C}_{N})
\cup
\operatorname{TopK}(\widehat{\bm C}_{n}')
\right|
}}.
\label{eq:jaccard-discrepancy}
\end{align}
\end{subequations}
\endgroup
Here, $\tau_b$ is Kendall's coefficient with tie correction. The
quantities $d_{\mathrm{repr}}$, $d_{\mathrm{rank}}$, and
$d_{\mathrm{J}@K}$ assess representation-distribution agreement,
complete-ranking agreement, and head-ranking agreement, respectively.
They are reported jointly as diagnostics, but ranking agreement is not
included in the definition of environment similarity or assumed in
order to establish itself as a conclusion.

\paragraph{Population versus Empirical Quantities.}
Population MI and CMI quantities are defined with respect to the
appropriate joint laws. In the full and lightweight environments,
respectively, the relevant quantities are
\begingroup
\setlength{\abovedisplayskip}{4pt}
\setlength{\belowdisplayskip}{4pt}
\begin{equation}
\medmath{
\begin{aligned}
U_i
&=
I_{P_{X,Z}}(Z;X_i\mid X_{\Phi(i)}),
&
J_i
&=
I_{P_{X,Z}}(Z;X_{\Phi(i)\cup\{i\}}),\\[-1mm]
U_i'
&=
I_{P_{X,Z'}}(Z';X_i\mid X_{\Phi(i)}),
&
J_i'
&=
I_{P_{X,Z'}}(Z';X_{\Phi(i)\cup\{i\}}).
\end{aligned}}
\label{eq:population-information-quantities}
\end{equation}
\endgroup
Empirical estimators computed from $\widehat P_{X,Z}^{\,N}$ and
$\widehat P_{X,Z'}^{\,n}$ are denoted by $\widehat I_N$ and
$\widehat I_n'$, respectively. Population properties of MCIR are stated
separately from estimator assumptions. Estimator consistency,
concentration, and dimensional limitations are discussed in
Section~\ref{sec:computational} and
Appendix~\ref{appendix:statistical}.

\paragraph{PCIR as a Global Baseline.}
We include PCIR~\cite{sengupta2025correlation2} only as a historical
global baseline. PCIR evaluates feature--output association without
explicitly conditioning on the dependence neighbourhood
$X_{\Phi(i)}$ and therefore cannot separate information uniquely
contributed by $X_i$ from information shared with correlated features.

The original PCIR dispersion formulation combines feature and output
quantities within a common pooled construction. Its numerical value may
therefore depend on their relative parameterizations and preprocessing.
For transparency, its exact definition, preprocessing procedure, and
experimental implementation are provided in
Appendix~\ref{app:pcir}. PCIR is not used to establish the
measure-theoretic or information-theoretic properties of MCIR.

Figure~\ref{fig:pcir-intuition} illustrates this conceptual limitation:
PCIR evaluates the association between $X_i$ and the model-output
representation $Z$ without conditioning on the correlated block
$X_{\Phi(i)}$. MCIR instead measures the conditional information
contributed by $X_i$ beyond its selected dependence neighbourhood.
\section{MCIR: Formal Definition and Analysis}
\label{prob}

This section defines MCIR as a population feature-attribution
functional and establishes its structural properties. Finite-sample
estimation, ranking stability, neighbourhood selection, and
lightweight-environment fidelity are considered separately in
Section~\ref{sec:computational}. This distinction separates properties
of the population functional from assumptions imposed on particular
MI and CMI estimators.

\subsection{Population MCIR Formulation}
\label{sec:mcir_formulation}

We first specify the environment-dependent model-output representation
and its joint law with the input features.

\subsubsection{Environment-Specific Output Representation}

Let $Q$ denote the model-output representation associated with the
environment under consideration: $\medmath{Q=Z\in\mathbb{R}^{d}}$ in
the full environment and
$\medmath{Q=Z'\in\mathbb{R}^{d'}}$ in the lightweight environment.
The corresponding joint law is denoted generically by $P_{X,Q}$, with
$\medmath{P_{X,Q}=P_{X,Z}}$ or
$\medmath{P_{X,Q}=P_{X,Z'}}$, respectively. The MCIR definition
therefore applies to either environment through the appropriate joint
law.

Fix a feature index $i\in\{1,\ldots,k\}$. Let
$\medmath{\Phi(i)\subseteq\{1,\ldots,k\}\setminus\{i\}}$ denote its
conditioning neighbourhood and define
$\medmath{X_{\Phi(i)}=(X_j)_{j\in\Phi(i)}}$. The enlarged feature block
containing both the target feature and its neighbourhood is denoted by
$\medmath{X_{\Phi(i)\cup\{i\}}}$.

Here, $X_i$, $X_{\Phi(i)}$, and $Q$ are population random variables or
random vectors. Their observed realizations enter only through the
empirical estimators introduced later. Thus, columns of an observed
feature matrix represent samples from the corresponding random
variables rather than population quantities themselves.

\subsubsection{Standard-Borel Well-Posedness}

\begin{assumption}[Population well-posedness]
\label{asm:wellposed}
For each $Q\in\{Z,Z'\}$ and $i\in\{1,\ldots,k\}$:

\begin{enumerate}[label=(\roman*),leftmargin=*,nosep]
    \item the random elements $Q$, $X_i$, and $X_{\Phi(i)}$ take values
    in standard Borel spaces;

    \item the information carried by the enlarged feature block is
    finite:
    \begin{equation}
        \medmath{
        I_{P_{X,Q}}
        \left(
            Q;X_{\Phi(i)\cup\{i\}}
        \right)<\infty.
        }
        \label{eq:finite-block-information}
    \end{equation}
\end{enumerate}
\end{assumption}

The standard Borel-space formulation ensures the existence of regular
conditional probability kernels and accommodates continuous, discrete,
and mixed-type variables within a common measure-theoretic framework.
When densities or probability-mass functions are used, they are
understood as Radon--Nikodym derivatives with respect to the relevant
dominating measures.
By the mutual-information chain rule,
\begin{equation}
    \medmath{
    I_{P_{X,Q}}
    \left(
        Q;X_{\Phi(i)\cup\{i\}}
    \right)
    =
    I_{P_{X,Q}}
    \left(
        Q;X_{\Phi(i)}
    \right)
    +
    I_{P_{X,Q}}
    \left(
        Q;X_i\mid X_{\Phi(i)}
    \right).
    }
    \label{eq:block-mi-chain-rule}
\end{equation}
Since mutual information and conditional mutual information are
non-negative, Assumption~\ref{asm:wellposed} guarantees the finiteness
of every population information quantity entering MCIR.

Assumption~\ref{asm:wellposed} concerns only the population functional.
Conditions governing empirical consistency, concentration, convergence
rates, and dimensional dependence are stated separately in
Section~\ref{sec:computational}.

\subsubsection{Mixed-Variable Conditional Mutual Information}

MCIR combines two complementary information quantities. Conditional
mutual information measures the information about $Q$ contributed by
$X_i$ beyond the selected neighbourhood $X_{\Phi(i)}$, whereas joint
mutual information measures the total information about $Q$ carried by
the enlarged block $X_{\Phi(i)\cup\{i\}}$.
\begin{definition}[\textbf{Mixed-Variable Conditional Mutual Information}]
\label{def:cmi}
Let $X_i$, $X_{\Phi(i)}$, and $Q$ be random elements that may be
continuous, discrete, or mixed-type. Their conditional mutual
information under $P_{X,Q}$ is defined as
\begin{equation}
\medmath{
\begin{aligned}
I_{P_{X,Q}}
\left(
    Q;X_i\mid X_{\Phi(i)}
\right)
:=
\int
D_{\mathrm{KL}}
\Big(
&P_{Q,X_i\mid X_{\Phi(i)}=x_{\Phi}}
\\[-1mm]
&\big\|
P_{Q\mid X_{\Phi(i)}=x_{\Phi}}
\otimes
P_{X_i\mid X_{\Phi(i)}=x_{\Phi}}
\Big)
\,P_{X_{\Phi(i)}}(dx_{\Phi}).
\end{aligned}
}
\label{eq:mixed-cmi-kernel-definition}
\end{equation}
Equivalently, using conditional Radon--Nikodym derivatives,
\begin{equation}
\medmath{
I_{P_{X,Q}}
\left(
    Q;X_i\mid X_{\Phi(i)}
\right)
=
\mathbb{E}_{P_{Q,X_i,X_{\Phi(i)}}}
\left[
\log
\frac{
dP_{Q\mid X_i,X_{\Phi(i)}}
}{
dP_{Q\mid X_{\Phi(i)}}
}(Q)
\right].
}
\label{eq:mixed-cmi-rn-representation}
\end{equation}
When compatible conditional densities or probability-mass functions
exist, Equation~\ref{eq:mixed-cmi-rn-representation} becomes
\begin{equation}
\medmath{
I_{P_{X,Q}}
\left(
    Q;X_i\mid X_{\Phi(i)}
\right)
=
\mathbb{E}_{P_{Q,X_i,X_{\Phi(i)}}}
\left[
\log
\frac{
p(Q\mid X_i,X_{\Phi(i)})
}{
p(Q\mid X_{\Phi(i)})
}
\right].
}
\label{eq:mixed-cmi-density-representation}
\end{equation}
\end{definition}

The expectation in
Equation~\ref{eq:mixed-cmi-rn-representation} is taken with respect
to the joint law of $\medmath{(Q,X_i,X_{\Phi(i)})}$. Equivalently,
Equation~\ref{eq:mixed-cmi-kernel-definition} averages the
conditional KL divergence over the marginal law of $X_{\Phi(i)}$.
Consequently,
\begin{equation}
\medmath{
I_{P_{X,Q}}
\left(
    Q;X_i\mid X_{\Phi(i)}
\right)\geq 0,
}
\label{eq:mixed-cmi-nonnegative}
\end{equation}
with equality if and only if
$\medmath{Q\perp\!\!\!\perp X_i\mid X_{\Phi(i)}}$, up to
$P_{X_{\Phi(i)}}$-null sets.

Definition~\ref{def:cmi} provides a common measure-theoretic
formulation for continuous, discrete, and mixed-type variables. When
all variables are discrete, it reduces to the standard sum-based CMI
expression; when all variables are continuous, it reduces to the
corresponding integral-based expression. Mixed feature spaces are
handled through the appropriate combination of probability kernels,
densities, and probability-mass functions.
\begin{wrapfigure}{r}{0.7\linewidth}
\centering
\begin{adjustbox}{width=0.95\linewidth}
\begin{tikzpicture}[>=latex,scale=1.08,transform shape]

% Colors
\definecolor{mcirblue}{RGB}{0,100,200}
\definecolor{mcirred}{RGB}{200,60,60}
\definecolor{blockgray}{RGB}{240,240,240}
\definecolor{lightblue}{RGB}{220,240,255}
\definecolor{lightred}{RGB}{255,240,240}

% Selected dependence neighbourhood
\node[
    rectangle,
    rounded corners=6pt,
    fill=blockgray,
    draw=gray,
    thick,
    drop shadow={opacity=0.3},
    minimum width=5.8cm,
    minimum height=2.8cm,
    label=above:{\textbf{Selected dependence neighbourhood}}
] (block) at (0,0) {};

% Target feature
\node[
    rectangle,
    rounded corners=4pt,
    fill=lightblue,
    draw=mcirblue!80!black,
    thick,
    font=\large,
    minimum width=1.65cm,
    minimum height=0.9cm,
    text=mcirblue!80!black
] (xi) at (-1.95,0) {$X_i$};

% Neighbourhood features
\node[
    rectangle,
    rounded corners=4pt,
    fill=lightblue,
    draw=mcirblue!80!black,
    thick,
    font=\large,
    minimum width=1.65cm,
    minimum height=0.9cm,
    text=mcirblue!80!black
] (xphi1) at (0,0) {$X_{\phi_1}$};

\node[
    rectangle,
    rounded corners=4pt,
    fill=lightblue,
    draw=mcirblue!80!black,
    thick,
    font=\large,
    minimum width=1.65cm,
    minimum height=0.9cm,
    text=mcirblue!80!black
] (xphi2) at (1.95,0) {$X_{\phi_2}$};

% Shared information
\draw[
    thick,
    mcirred,
    decorate,
    decoration={zigzag,segment length=3pt,amplitude=2pt}
]
(xi.north east)
.. controls +(1.3,0.9) and +(-1.3,0.9) ..
(xphi2.north west);

\node[
    font=\scriptsize,
    text=mcirred!85!black,
    fill=lightred,
    rounded corners=2pt,
    inner sep=2pt
] at (0,0.95) {shared predictive information};

% Unique information
\node[
    font=\scriptsize,
    text=mcirblue!90!black,
    fill=lightblue,
    rounded corners=2pt,
    inner sep=2pt,
    align=center
] (unique) at (-1.95,-0.85)
{unique information\\given $X_{\Phi(i)}$};

% Model-output representation
\node[
    circle,
    draw=mcirblue!70!black,
    thick,
    fill=lightblue,
    minimum size=14mm,
    font=\large,
    text=mcirblue!80!black
] (q) at (0,-2.25) {$Q$};

% Statistical-dependence paths
\draw[->,thick,mcirblue!70,line width=1.2pt]
    (xi.south) to[out=-90,in=150] (q.north west);

\draw[->,thick,mcirblue!70,line width=1.2pt]
    (xphi1.south) to[out=-90,in=90] (q.north);

\draw[->,thick,mcirblue!70,line width=1.2pt]
    (xphi2.south) to[out=-90,in=30] (q.north east);

\node[
    font=\scriptsize,
    text=gray!80!black,
    fill=white,
    inner sep=1.5pt
] at (0,-1.35) {predictive dependence};

% MCIR ratio
\node[
    rectangle,
    rounded corners=4pt,
    fill=white,
    draw=mcirred!70!black,
    thick,
    drop shadow={opacity=0.2},
    minimum width=6.4cm,
    minimum height=1.45cm,
    font=\footnotesize,
    text=mcirred!90!black,
    align=center
] (ratio) at (0,-3.75)
{$\displaystyle
\operatorname{MCIR}_i(P_{X,Q})
=
\frac{2U_i(P_{X,Q})}
     {U_i(P_{X,Q})+J_i(P_{X,Q})}
\in[0,1]
$\\ 
\\
$\scriptstyle
U_i=I_{P_{X,Q}}(Q;X_i\mid X_{\Phi(i)}),\qquad
J_i=I_{P_{X,Q}}(Q;X_{\Phi(i)\cup\{i\}})
$};

\node[
    below=2mm of ratio,
    text=gray!75!black,
    font={\itshape\footnotesize},
    align=center
]
{Unique contribution is defined relative to
the selected neighbourhood $X_{\Phi(i)}$.};

\end{tikzpicture}
\end{adjustbox}
\caption{MCIR intuition. The target feature $X_i$ and its selected
dependence neighbourhood
$X_{\Phi(i)}=\{X_{\phi_1},X_{\phi_2}\}$ may carry shared predictive
information about the model-output representation $Q$. MCIR conditions
on $X_{\Phi(i)}$ to isolate the \emph{unique predictive information}
contributed by $X_i$ relative to that neighbourhood and normalizes it
to $[0,1]$. The arrows represent predictive dependence, not causal
effects.}
\label{fig:mcir-intuition}
\end{wrapfigure}
Mixed-variable CMI measures the \emph{unique predictive information}
about $Q$ contributed by $X_i$ after accounting for the selected
dependence neighbourhood $X_{\Phi(i)}$. Here, uniqueness is defined
relative to $X_{\Phi(i)}$: it is the predictive information that
remains available from $X_i$ after the information carried by the
selected neighbourhood has been taken into account.
If $X_i$ is conditionally redundant given $X_{\Phi(i)}$, then
\begin{equation}
\medmath{
P_{Q\mid X_i,X_{\Phi(i)}}
=
P_{Q\mid X_{\Phi(i)}}
\quad
P_{X_i,X_{\Phi(i)}}\text{-a.s.}
}
\label{eq:conditional-redundancy}
\end{equation}
and its unique predictive information is zero. Conversely, a positive
CMI indicates that including $X_i$ changes the conditional law of $Q$
beyond the information already contained in $X_{\Phi(i)}$. Therefore,
a positive value identifies predictive information uniquely
contributed by $X_i$ relative to its selected dependence
neighbourhood. This notion of uniqueness concerns statistical
predictive information and does not imply a causal effect.
The term \emph{mixed-variable CMI} refers to the evaluation of
the same population CMI functional when continuous, discrete, and
mixed-type variables occur within a common feature space. It is not
introduced as a new information measure. The methodological
contribution of MCIR is to use mixed-variable CMI to isolate the
\emph{unique predictive information} of a feature relative to its
selected dependence neighbourhood and to normalize this quantity into
a bounded, dependence-aware feature-attribution score.

\subsubsection{Joint Mutual Information}

\begin{definition}[\textbf{Joint Mutual Information}]
\label{def:jmi}
For an index set $S\subseteq\{1,\ldots,k\}$, let
$\medmath{X_S=(X_j)_{j\in S}}$. The joint mutual information between
$Q$ and $X_S$ is
\begin{equation}
\medmath{
\begin{aligned}
I_{P_{X,Q}}(Q;X_S)
&:=
D_{\mathrm{KL}}
\left(
    P_{Q,X_S}
    \,\middle\|\,
    P_Q\otimes P_{X_S}
\right)\\
&=
\mathbb{E}_{P_{Q,X_S}}
\left[
    \log
    \frac{
        dP_{Q,X_S}
    }{
        d(P_Q\otimes P_{X_S})
    }(Q,X_S)
\right]
\geq 0.
\end{aligned}
}
\label{eq:jmi-definition}
\end{equation}
When compatible densities or probability-mass functions exist, the
second line can be written as
\begin{equation}
\medmath{
I_{P_{X,Q}}(Q;X_S)
=
\mathbb{E}_{P_{Q,X_S}}
\left[
\log
\frac{p(Q,X_S)}
     {p(Q)\,p(X_S)}
\right].
}
\label{eq:jmi-density-representation}
\end{equation}
Equality holds if and only if
$\medmath{Q\perp\!\!\!\perp X_S}$.
\end{definition}

Joint mutual information measures the total predictive information
about $Q$ carried collectively by a feature block. In particular, for
$S=\Phi(i)\cup\{i\}$, it includes information uniquely contributed by
$X_i$ as well as information carried by or shared with its dependence
neighbourhood.

\subsubsection{Information Components and MCIR Score}

For each feature $i$, define,
\begin{equation}
\medmath{
\begin{aligned}
U_i(P_{X,Q})
&:=
I_{P_{X,Q}}
\left(
    Q;X_i\mid X_{\Phi(i)}
\right),\qquad
J_i(P_{X,Q})
&:=
I_{P_{X,Q}}
\left(
    Q;X_{\Phi(i)\cup\{i\}}
\right).
\end{aligned}
}
\label{eq:mcir-information-components}
\end{equation}
The quantity $U_i(P_{X,Q})$ measures the \emph{unique predictive
information} contributed by $X_i$ beyond its selected dependence
neighbourhood, whereas $J_i(P_{X,Q})$ measures the total predictive
information carried by the enlarged feature block
$X_{\Phi(i)\cup\{i\}}$. MCIR combines these complementary population
quantities through a dimensionless normalization that distinguishes
the unique predictive contribution of $X_i$ from information shared
with its correlated neighbourhood.
\begin{definition}[\textbf{Mutual Correlation Impact Ratio (MCIR)}]
\label{def:mcir}
For feature $X_i$ and its selected dependence neighbourhood
$X_{\Phi(i)}$, define
\begin{equation}
\medmath{
\begin{aligned}
C_i(P_{X,Q};\Phi(i))
&:=
\frac{
2U_i(P_{X,Q})
}{
U_i(P_{X,Q})+J_i(P_{X,Q})
}=
\frac{
2I_{P_{X,Q}}(Q;X_i\mid X_{\Phi(i)})
}{
I_{P_{X,Q}}(Q;X_i\mid X_{\Phi(i)})
+
I_{P_{X,Q}}(Q;X_{\Phi(i)\cup\{i\}})
}
\in[0,1],
\end{aligned}
}
\label{eq:mcir_raw}
\end{equation}
whenever $\medmath{U_i(P_{X,Q})+J_i(P_{X,Q})>0}$. If both
information quantities vanish, we set
$\medmath{C_i(P_{X,Q};\Phi(i))=0}$.
\end{definition}

By the chain rule,
\begin{equation}
\medmath{
J_i(P_{X,Q})
=
I_{P_{X,Q}}(Q;X_{\Phi(i)})
+
U_i(P_{X,Q})
\geq U_i(P_{X,Q}),
}
\label{eq:mcir-chain-relation}
\end{equation}
which yields the unit-interval range in
Equation~\ref{eq:mcir_raw}. The numerator represents the
\emph{unique predictive information} contributed by $X_i$ beyond its
selected neighbourhood, whereas the denominator normalizes this
contribution relative to the information carried by the enlarged
feature block.

All MI and CMI quantities below are population functionals of the joint
law $P_{X,Q}$ and are finite under
Assumption~\ref{asm:wellposed}. For brevity, write
\begin{equation}
\medmath{
U_i=I_{P_{X,Q}}(Q;X_i\mid X_{\Phi(i)}),\qquad
H_i=I_{P_{X,Q}}(Q;X_{\Phi(i)}),\qquad
J_i=H_i+U_i.
}
\label{eq:mcir-information-decomposition}
\end{equation}
The MCIR score can therefore be written as
\begin{equation}
\medmath{
C_i
=
\frac{2U_i}{U_i+J_i}
=
\frac{2U_i}{2U_i+H_i},
}
\label{eq:mcir-decomposed}
\end{equation}
with $C_i=0$ when $U_i+J_i=0$.

A value near zero indicates that $X_i$ contributes little unique
predictive information after conditioning on $X_{\Phi(i)}$. A value
near one indicates that the unique contribution of $X_i$ dominates the
predictive information carried by the enlarged block. Uniqueness is
always defined relative to the selected neighbourhood and does not
imply causal importance.

Throughout this paper, MCIR refers exclusively to
Definition~\ref{def:mcir} and Equation~\ref{eq:mcir_raw}. All
population results, estimators, algorithms, and experiments are linked
to this definition. Figure~\ref{fig:mcir-intuition} illustrates how
MCIR separates the unique predictive information contributed by
$X_i$ from information shared with $X_{\Phi(i)}$.

In summary, MCIR (i) conditions on a selected dependence neighbourhood
to isolate the \emph{unique predictive information} of a feature,
(ii) normalizes this information to a unit-interval score, and
(iii) assigns zero importance under exact conditional redundancy.
Population properties and estimator-dependent guarantees are
established separately in the following sections.

\paragraph{Relationship to Conditional Mutual Information.}
Conditional mutual information
$\medmath{I_{P_{X,Q}}(Q;X_i\mid X_{\Phi(i)})}$ measures the unique
predictive information contributed by $X_i$ beyond
$X_{\Phi(i)}$, but its magnitude is not bounded. MCIR converts this
quantity into a bounded attribution score by normalizing it jointly
with
$\medmath{I_{P_{X,Q}}(Q;X_{\Phi(i)\cup\{i\}})}$, the total predictive
information carried by the enlarged feature block. The normalization
therefore expresses the unique contribution of $X_i$ relative to the
information associated with its local dependence structure.

\paragraph{Role of Joint Normalization.}
Raw CMI quantifies the absolute conditional information contribution
but does not indicate its magnitude relative to the information carried
by the complete feature block. The joint-information term supplies this
reference quantity. The factor of two maps the effective ratio to
$[0,1]$, since
$\medmath{0\leq U_i(P_{X,Q})\leq J_i(P_{X,Q})}$. Thus, MCIR represents
relative unique predictive information rather than an unbounded
information magnitude.

\paragraph{Relationship to Partial Information Decomposition.}
Partial Information Decomposition (PID) seeks a complete decomposition
of information into unique, redundant, and synergistic components.
MCIR has a narrower attribution objective: it measures the information
uniquely contributed by one feature relative to a selected conditioning
neighbourhood and normalizes that quantity using block-level mutual
information. MCIR therefore does not claim to provide a full PID or to
identify separate synergistic components.

MCIR is constructed from mixed-variable CMI but differs from
raw CMI in purpose. Mixed-variable CMI provides a common
measure-theoretic treatment of continuous, discrete, and mixed-type
variables, while MCIR uses it to isolate the \emph{unique predictive
information} of $X_i$ relative to $X_{\Phi(i)}$ and converts that
quantity into a bounded dependence-aware attribution score. The
contribution is therefore the MCIR attribution functional, rather than
a new definition or estimator of conditional mutual information.

% \begin{remark}
% $I(Y'; f_i\mid f_\Phi)$ isolates the \emph{unique} contribution of $f_i$
% beyond its correlated partners $f_\Phi$. The joint term
% $I(Y'; f_{\Phi\cup\{i\}})$ stabilizes scale across tasks and dependence strengths.
% Thus $C$ reports the fraction of \emph{explainable association mass} uniquely attributable
% to $f_i$ after accounting for partners. 
% \end{remark}
Equation~\ref{eq:mcir-zero-characterization} establishes exact
redundancy collapse at the population level through conditional
independence. Approximate redundancy in finite samples is considered
separately through estimator-error and ranking-sensitivity diagnostics.

The preceding definition treats MCIR as a population functional under
a fixed joint law $P_{X,Q}$ and a fixed dependence neighbourhood
$\Phi(i)$. Its endpoint behaviour, exact redundancy collapse, and
weak-dependence reduction are developed below. Approximate redundancy
and ranking behaviour in finite samples are considered separately
through estimator-error and sensitivity analyses.

Population properties of the information functional are
distinguished from properties of particular finite-sample estimators.
In particular, rank-based invariance of the Gaussian--copula estimator
is treated as an estimator-specific property in
Section~\ref{sec:computational}.

\subsection{Population Properties}
\label{sec:population-properties}

We next summarize the structural population properties of MCIR.
Boundedness places all scores on the common interval $[0,1]$, although
it does not by itself imply identical calibration across different data
distributions or tasks. Exact conditional redundancy yields a score of
zero, whereas a score of one is attained when $X_i$ carries positive
predictive information and its selected neighbourhood carries no
predictive information about $Q$ without $X_i$. 
For exact duplicates, MCIR collapses the contribution of a feature
when that duplicate is contained in its conditioning neighbourhood.
Approximate duplicates require separate finite-sample analysis and are
therefore evaluated through estimator-error and ranking-sensitivity
diagnostics rather than an assumed KL-continuity argument. Moreover,
without an external ordering or an asymmetric neighbourhood rule,
statistical dependence alone cannot determine which of two exact
duplicates is the ``original'' feature. The following proposition
formalizes the population properties, while estimator-dependent
behaviour is considered separately in
Section~\ref{sec:computational}.
To help the reader navigate the technical results of Section~\ref{prob}, 
Table~\ref{tab:theory_glance} summarizes the core structural and statistical 
guarantees satisfied by MCIR. Detailed statements and proofs follow in the 
subsequent subsections and appendices.

\begin{table}[t]
\centering
\scriptsize
\caption{Theoretical properties of MCIR and their scope.}
\label{tab:theory_glance}
\begin{adjustbox}{width=\linewidth}
\begin{tabular}{p{2.0cm} p{6.8cm} p{5cm}}
\toprule
\textbf{Property} &
\textbf{Formal Statement} &
\textbf{Scope and Interpretation} \\
\midrule

\textbf{Boundedness} &
$\medmath{0\leq C_i\leq1}$ &
Provides a common numerical range, but not identical calibration across
different data-generating distributions. \\

\textbf{Unique-Information Collapse} &
$\medmath{
C_i=0
\Longleftrightarrow
I_{P_{X,Q}}(Q;X_i\mid X_{\Phi(i)})=0
\Longleftrightarrow
Q\perp\!\!\!\perp X_i\mid X_{\Phi(i)}
\ \mathrm{a.s.}}$ &
A feature receives zero attribution when it contributes no unique
predictive information beyond its fixed neighbourhood. \\

\textbf{Population Invariance} &
$\medmath{\widetilde C_i=C_i}$ under bimeasurable transformations of
$Q$, $X_i$, and $X_{\Phi(i)}$ &
Population MCIR is unchanged by equivalent reparameterizations,
including strictly monotone changes of units. \\

\textbf{Weak-Dependence Reduction} &
If $\medmath{U_i^{(m)}=u_i^{(m)}+o(1)}$, then
$\medmath{
C_i^{(m)}
=
2u_i^{(m)}/
\{2u_i^{(m)}+H_i^{(m)}\}
+o(1)}$ under denominator separation &
MCIR becomes a normalized marginal-information score. This does not
imply ordering equivalence with PCIR or another attribution method. \\

\textbf{Finite-Sample Rank Stability} &
$\medmath{
\mathbb P\!\left(
d_{\mathrm{inv}}
\leq16M\delta/c_0
\right)
\geq1-m_{\mathcal I}\alpha}$ &
Requires explicit estimator-error, denominator-separation, and
population-margin conditions from
Theorem~\ref{thm:rank_stability}. \\

\textbf{Empirical-Score Consistency} &
If $\medmath{\widehat U_{i,s}\xrightarrow{p}U_i}$,
$\medmath{\widehat J_{i,s}\xrightarrow{p}J_i}$, and
$\medmath{U_i+J_i>0}$, then
$\medmath{\widehat C_{i,s}\xrightarrow{p}C_i}$ &
Consistency is inherited from the selected MI and CMI estimators; no
common convergence rate is assumed. \\

\bottomrule
\end{tabular}
\end{adjustbox}
\end{table}

Estimator switching and lightweight fidelity are not listed as
population guarantees in Table~\ref{tab:theory_glance}. Estimator
switching is an empirically evaluated implementation rule, while the
lightweight fidelity profile is a multidimensional diagnostic.

\subsubsection{Boundedness and Endpoints}

Having decomposed the block information into the neighbourhood contribution
$H_i$ and the incremental contribution $U_i$, we first characterize the
range and endpoints of MCIR. This clarifies how the normalization represents
the relative information contributed by $X_i$ beyond $X_{\Phi(i)}$.
Since $U_i,H_i\geq0$, Equation~\ref{eq:mcir-decomposed} immediately
gives $\medmath{0\leq C_i\leq1}$. More precisely, the lower endpoint
occurs when $U_i=0$, meaning that $X_i$ contributes no additional
information beyond its neighbourhood. The upper endpoint occurs when
$U_i>0$ and $H_i=0$, meaning that the enlarged block's predictive
information is contributed entirely by $X_i$. When $U_i=H_i=0$, the
zero-denominator convention gives $C_i=0$. These endpoint properties
provide the basis for the conditional-redundancy result developed next.
\begin{equation}
\medmath{
C_i=0
\quad\Longleftrightarrow\quad
U_i=0
\quad\Longleftrightarrow\quad
Q\perp\!\!\!\perp X_i\mid X_{\Phi(i)}
\ \text{a.s.}
}
\label{eq:mcir-zero-characterization}
\end{equation}
Thus, MCIR assigns zero attribution when $X_i$ contributes no unique
predictive information beyond its selected dependence neighbourhood.

If $U_i>0$, then
\begin{equation}
\medmath{
C_i=1
\quad\Longleftrightarrow\quad
H_i=I_{P_{X,Q}}(Q;X_{\Phi(i)})=0.
}
\label{eq:mcir-one-characterization}
\end{equation}
Hence, MCIR reaches one when $X_i$ carries positive predictive
information while its selected neighbourhood alone carries no
predictive information about $Q$. Boundedness places MCIR scores on a
common numerical scale, but does not by itself imply identical
calibration across different data-generating distributions.

The endpoint properties in
Equations~\eqref{eq:mcir-zero-characterization} and
\eqref{eq:mcir-one-characterization} follow directly from the
population MCIR functional. Supporting derivations are provided in
Appendix~\ref{sec:theory1.1}--\ref{sec:theory1.3}.

\subsubsection{Exact Conditional-Redundancy Collapse}

The lower endpoint has a direct redundancy interpretation. If $X_i$ is
measurable with respect to $\sigma(X_{\Phi(i)})$, then its value is
already determined by the selected neighbourhood. Conditioning on
$X_{\Phi(i)}$ therefore leaves no additional information attributable
to $X_i$, and
\begin{equation}
\medmath{
I_{P_{X,Q}}(Q;X_i\mid X_{\Phi(i)})=0
\quad\Longrightarrow\quad
C_i=0.
}
\label{eq:exact-duplicate-collapse}
\end{equation}
This includes exact duplicates and, more generally, any measurable
transformation of variables contained in the neighbourhood. Thus, MCIR
assigns no additional population-level credit to a feature whose
information is already represented by $X_{\Phi(i)}$. Having established
this exact collapse property, we next examine whether the population
score is preserved under equivalent reparameterizations.

For another feature $X_j$, evaluated relative to its own neighbourhood
$X_{\Phi(j)}$, the implication
\begin{equation}
\medmath{
I_{P_{X,Q}}(Q;X_j\mid X_{\Phi(j)})>0
\quad\Longrightarrow\quad
C_j>0
}
\label{eq:positive-unique-information}
\end{equation}
follows directly from Definition~\ref{def:mcir}. Which feature retains
positive attribution therefore depends on the specified conditioning
neighbourhoods. Approximate duplicates are evaluated separately through
finite-sample estimator and neighbourhood-sensitivity analysis.

The term \emph{unique predictive information} refers to
information contributed by $X_i$ relative to its selected dependence
neighbourhood $X_{\Phi(i)}$. It does not identify a causal driver or
distinguish an ``original'' feature from an exact duplicate without
additional structural information.

\subsubsection{Population Invariance}

The population functional depends on the information carried by the
random elements rather than on a particular equivalent
parameterization of their state spaces.

\begin{proposition}[Population reparameterization invariance]
\label{prop:population-invariance}
Let $T_Q$, $T_i$, and $T_\Phi$ be bimeasurable bijections on the state
spaces of $Q$, $X_i$, and $X_{\Phi(i)}$, respectively. Define
\[
\widetilde Q=T_Q(Q),\qquad
\widetilde X_i=T_i(X_i),\qquad
\widetilde X_{\Phi(i)}=T_\Phi(X_{\Phi(i)}).
\]
Then
\[
I(\widetilde Q;\widetilde X_i\mid\widetilde X_{\Phi(i)})=U_i,
\qquad
I(\widetilde Q;
  \widetilde X_{\Phi(i)},\widetilde X_i)=J_i,
\]
and consequently the transformed population MCIR score satisfies
$\widetilde C_i=C_i$.
\end{proposition}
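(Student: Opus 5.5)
The plan is to reduce everything to invariance of KL divergence under bimeasurable bijections and then apply the chain rule. The basic fact I would first record is this. If $T:\mathcal S\to\mathcal S'$ is a bimeasurable bijection between standard Borel spaces, and $P,R$ are probability measures on $\mathcal S$, then
\[
D_{\mathrm{KL}}(T_\#P\,\|\,T_\#R)=D_{\mathrm{KL}}(P\,\|\,R).
\]
To see this, note that $P\ll R$ if and only if $T_\#P\ll T_\#R$, because $T$ maps measurable sets onto measurable sets bijectively. In that case $\frac{dT_\#P}{dT_\#R}=\frac{dP}{dR}\circ T^{-1}$, and the change-of-variables formula gives equality of the integrals. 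Alternatively, apply the data-processing inequality to $T$ and to $T^{-1}$ and obtain both inequalities. I would use the DPI route, since it avoids any density manipulations and also covers the infinite case.

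Step one handles $J_i$. Set $T=T_Q\times(T_\Phi\times T_i)$, a bimeasurable bijection of the product space. We have $T_\#P_{Q,X_{\Phi(i)\cup\{i\}}}=P_{\widetilde Q,\widetilde X_{\Phi(i)},\widetilde X_i}$. Because $T$ acts coordinatewise, $T_\#(P_Q\otimes P_{X_{\Phi(i)\cup\{i\}}})=P_{\widetilde Q}\otimes P_{(\widetilde X_{\Phi(i)},\widetilde X_i)}$. Definition~\ref{def:jmi} together with the KL fact then gives $I(\widetilde Q;\widetilde X_{\Phi(i)},\widetilde X_i)=J_i$. The same argument with $T_Q\times T_\Phi$ gives $I(\widetilde Q;\widetilde X_{\Phi(i)})=H_i$.

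Step two handles $U_i$ through the chain rule in Equation~\ref{eq:block-mi-chain-rule}, rather than by manipulating conditional kernels directly. Since $\widetilde X_{\Phi(i)}$ and $\widetilde X_i$ again take values in standard Borel spaces, the chain rule applies to the transformed variables:
\[
I(\widetilde Q;\widetilde X_{\Phi(i)},\widetilde X_i)=I(\widetilde Q;\widetilde X_{\Phi(i)})+I(\widetilde Q;\widetilde X_i\mid\widetilde X_{\Phi(i)}).
\]
Assumption~\ref{asm:wellposed}(ii) makes $J_i$ finite, and by step one $J_i$ equals the left side. Hence all terms are finite, and we can subtract to get $I(\widetilde Q;\widetilde X_i\mid\widetilde X_{\Phi(i)})=J_i-H_i=U_i$. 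As a cross-check, one can verify this directly from Equation~\ref{eq:mixed-cmi-kernel-definition}. The regular conditional kernels of the transformed variables given $\widetilde X_{\Phi(i)}=T_\Phi(x_\Phi)$ are the pushforwards under $T_Q\times T_i$ of the original kernels given $x_\Phi$. This holds because $\sigma(\widetilde X_{\Phi(i)})=\sigma(X_{\Phi(i)})$, and it gives pointwise equality of the integrated KL terms, up to $P_{X_{\Phi(i)}}$-null sets.

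Finally, $\widetilde C_i=2U_i/(U_i+J_i)=C_i$ because both components are preserved, and the zero-denominator convention is preserved as well. The main obstacle is the measure-theoretic bookkeeping in step two. If one argues via kernels, it requires identifying the conditional distributions of the transformed variables almost surely, which is why I prefer the chain-rule subtraction. There the only delicate point is the finiteness needed to subtract, and Assumption~\ref{asm:wellposed} supplies it.
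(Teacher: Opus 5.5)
Your argument is correct, but it runs the chain rule in the opposite direction from the paper. The paper applies KL invariance directly to $H_i$ (joint versus product law). It also applies KL invariance directly to $U_i$: inside the kernel integral of Equation~\ref{eq:mixed-cmi-kernel-definition}, followed by the change of variables $x_\Phi\mapsto T_\Phi(x_\Phi)$ in the outer integral. It then obtains $J_i=H_i+U_i$ by \emph{adding}. You instead treat $H_i$ and $J_i$ as unconditional KL divergences of joint versus product laws and recover $U_i$ by \emph{subtracting}.

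Your route avoids identifying the regular conditional kernels of the transformed variables. That is the one genuinely measure-theoretic step, and the paper compresses it into the single phrase ``followed by the change of variables induced by $T_\Phi$.'' The price is that the subtraction needs $J_i<\infty$, which Assumption~\ref{asm:wellposed}(ii) supplies. You also invoke the general kernel-based chain rule twice as a black box, once for the original and once for the transformed variables.

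The paper's direction works entirely in $[0,\infty]$. Its invariance of $U_i$ and $J_i$ would therefore hold even without the finiteness hypothesis, which is needed only for the MCIR ratio itself.

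Your cross-check supplies exactly the justification the paper's kernel step leaves implicit. There you note that the pushforward under $T_Q\times T_i$ of the original kernel at $T_\Phi^{-1}(y)$ is a version of the transformed kernel at $y$, because $\sigma(\widetilde X_{\Phi(i)})=\sigma(X_{\Phi(i)})$. So your write-up in effect covers both routes.
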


The result follows from invariance of relative entropy under
bimeasurable pushforwards. Its proof is given in
Appendix~\ref{sec:theory1.3}. This population property is distinct from
the estimator-specific monotone invariance of the rank--Gaussianized
Gaussian--copula estimator.

\subsubsection{Weak-Dependence Reduction}
The preceding properties address exact redundancy and reparameterization.
We now consider the complementary regime in which conditioning on the
selected neighbourhood has an asymptotically negligible effect on the
information attributed to $X_i$. The following result characterizes the
resulting MCIR expression without asserting equivalence with PCIR or any
other attribution method.
\begin{proposition}[\textbf{Weak-Dependence Reduction}]
\label{prop:weak_dep_correct}
Consider a sequence of joint laws $\{P_{X,Q}^{(m)}\}_{m\geq1}$ and
define
\begin{equation}
\medmath{
u_i^{(m)}
=
I_{P_{X,Q}^{(m)}}(Q;X_i),\qquad
H_i^{(m)}
=
I_{P_{X,Q}^{(m)}}(Q;X_{\Phi(i)}).
}
\end{equation}
Suppose
\begin{equation}
\medmath{
U_i^{(m)}
=
u_i^{(m)}+r_i^{(m)},\qquad
r_i^{(m)}\longrightarrow0,
}
\label{eq:weak-dependence-condition}
\end{equation}
and that, for some $c>0$,
\begin{equation}
\medmath{
2u_i^{(m)}+H_i^{(m)}\geq c
}
\label{eq:weak-denominator-condition}
\end{equation}
for all sufficiently large $m$. Then
\begin{equation}
\medmath{
C_i^{(m)}
=
\frac{2u_i^{(m)}}
     {2u_i^{(m)}+H_i^{(m)}}
+o(1).
}
\label{eq:mcir-weak-dependence}
\end{equation}
\end{proposition}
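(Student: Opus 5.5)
We start from the decomposed form of the score in Equation~\ref{eq:mcir-decomposed}, $C_i^{(m)}=2U_i^{(m)}/(2U_i^{(m)}+H_i^{(m)})$, which holds for each $m$ because the chain rule of Equation~\ref{eq:mcir-chain-relation} gives $J_i^{(m)}=H_i^{(m)}+U_i^{(m)}$. All quantities are finite by Assumption~\ref{asm:wellposed}. The zero-denominator convention is irrelevant for large $m$; we check this after the main estimate. Write $a_m=2u_i^{(m)}$, $b_m=H_i^{(m)}$ and $\epsilon_m=2r_i^{(m)}$. Substituting Equation~\ref{eq:weak-dependence-condition} turns the claim into
\[
\frac{a_m+\epsilon_m}{a_m+\epsilon_m+b_m}-\frac{a_m}{a_m+b_m}\longrightarrow 0 .
\]
This is an elementary perturbation statement about the map $f(a,b)=a/(a+b)$.

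The key step is to put the difference over a common denominator:
\[
\frac{a_m+\epsilon_m}{a_m+b_m+\epsilon_m}-\frac{a_m}{a_m+b_m}
=\frac{\epsilon_m\, b_m}{(a_m+b_m+\epsilon_m)(a_m+b_m)}.
\]
This identity lets us avoid any boundedness assumption on $H_i^{(m)}$. We have $0\le b_m\le a_m+b_m$, and $a_m+b_m+\epsilon_m=2U_i^{(m)}+H_i^{(m)}\ge 0$ since CMI and MI are non-negative. Together these give
\[
\left|\frac{\epsilon_m b_m}{(a_m+b_m+\epsilon_m)(a_m+b_m)}\right|\le \frac{|\epsilon_m|}{a_m+b_m+\epsilon_m}.
\]

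The main obstacle is the denominator: we need it bounded away from zero uniformly. This is exactly where the separation condition in Equation~\ref{eq:weak-denominator-condition} is used. For all sufficiently large $m$ we have $a_m+b_m\ge c$ and $|\epsilon_m|\le c/2$, because $r_i^{(m)}\to 0$. Hence $a_m+b_m+\epsilon_m\ge c/2>0$. This also shows that $U_i^{(m)}+J_i^{(m)}=2U_i^{(m)}+H_i^{(m)}>0$, so the defining ratio in Definition~\ref{def:mcir} applies and the zero convention is never invoked.

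Combining the two bounds, the difference is at most $4|r_i^{(m)}|/c\to 0$, which is Equation~\ref{eq:mcir-weak-dependence}. The argument also gives an explicit rate of $O(|r_i^{(m)}|)$.

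Finally, the conclusion is only a limiting identity between $C_i^{(m)}$ and the normalized marginal score. It asserts no ordering equivalence with PCIR or any other method, and nothing beyond the stated hypotheses is needed.
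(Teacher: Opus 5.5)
Your proof is correct and matches the paper's argument essentially line for line. You use the same common-denominator identity with numerator $2r_i^{(m)}H_i^{(m)}$, the bound $0\le H_i^{(m)}\le 2u_i^{(m)}+H_i^{(m)}$, and the eventual lower bound $c/2$ on the perturbed denominator, which gives the same $4|r_i^{(m)}|/c$ estimate; your explicit check that the zero-denominator convention is never triggered for large $m$ is a small addition the paper leaves implicit.
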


\begin{proof}
By Equation~\ref{eq:mcir-decomposed},
\begin{equation}
\medmath{
C_i^{(m)}
=
\frac{2(u_i^{(m)}+r_i^{(m)})}
     {2(u_i^{(m)}+r_i^{(m)})+H_i^{(m)}}.
}
\end{equation}
The denominator condition keeps the ratio away from its singular
zero-denominator case. Since $r_i^{(m)}\to0$, continuity of this scalar
ratio yields Equation~\ref{eq:mcir-weak-dependence}.
\end{proof}

Additional details are provided in
Appendix~\ref{sec:theory1.5}. The proposition characterizes MCIR as a
normalized marginal-information score when conditioning changes the
feature information only by a vanishing term. It does not imply an
ordering equivalence with PCIR or another marginal attribution method,
because $H_i^{(m)}$ may vary across features.

% \begin{proposition}[\textbf{Continuity to Independence}]
% \label{prop:pcir_bridge}
% In weak-dependence regimes where
% $I(Y'; f_i\mid f_\Phi)\approx I(Y'; f_i)$ and
% $I(Y'; f_{\Phi\cup\{i\}})\approx I(Y'; f_i)+I(Y'; f_\Phi)$,
% the ranking induced by $C(\cdot)$ coincides with that induced by unconditioned
% global measures (e.g., PCIR). Therefore, the ordering of features produced by MCIR is identical to that produced by global measures such as PCIR, demonstrating that PCIR is recovered as the limiting case of MCIR under independence.
% \end{proposition}

% If feature $f_i$ is essentially independent of its neighbourhood (or only weakly dependent), then conditioning on the neighbours does not change how informative it $f_i$ is about the output $Y'$. In that case, MCIR behaves like a standard global importance score that uses unconditioned information, so it produces the same ordering as PCIR. This shows MCIR does not ``over-correct'' when dependence is absent.

\subsection{Finite-Sample Estimation and Ranking Stability}
\label{sec:computational}

We now distinguish the empirical MCIR estimator from its population
counterpart and state the conditions governing score and ranking
stability.

\subsubsection{Empirical Clipping}

Let $\widehat U_{i,n}$ and $\widehat J_{i,n}$ denote estimators of
$U_i$ and $J_i$ computed from $n$ observations. Since finite-sample MI
and CMI estimators may return small negative values, define
\begin{equation}
\medmath{
\widehat U_{i,n}^{+}
=
\max\{\widehat U_{i,n},0\},
\qquad
\widehat J_{i,n}^{+}
=
\max\{\widehat J_{i,n},0\}.
}
\label{eq:nonnegative-information-estimates}
\end{equation}
The empirical MCIR score is
\begin{equation}
\medmath{
\widehat C_{i,n}
=
\Pi_{[0,1]}
\left(
\frac{
2\widehat U_{i,n}^{+}
}{
\widehat U_{i,n}^{+}+\widehat J_{i,n}^{+}
}
\right),
}
\label{eq:clipped-empirical-mcir}
\end{equation}
where $\Pi_{[0,1]}$ denotes projection onto $[0,1]$, and
$\widehat C_{i,n}=0$ when the denominator vanishes. Because
$\widehat U_{i,n}^{+}$ and $\widehat J_{i,n}^{+}$ are clipped
separately, they need not satisfy the population chain inequality
$\widehat J_{i,n}^{+}\geq\widehat U_{i,n}^{+}$. The final projection
is therefore an implementation safeguard that enforces the numerical
range $[0,1]$; it is not a finite-sample information identity. The
population score is already bounded by Definition~\ref{def:mcir}.

\subsubsection{Estimator-Specific Component Assumptions}
The population properties above do not automatically transfer to
finite-sample estimates. Empirical MCIR depends on the estimation errors
of its MI and CMI components, which vary with the estimator family,
variable type, sample size, and conditioning dimension. We therefore
state the required assumptions directly in terms of componentwise error
bounds, rather than imposing a common convergence rate on all
estimators.
\paragraph{Finite-Sample Ranking.}
Let $\bm C=(C_1,\ldots,C_k)$ denote the population MCIR vector and
$\widehat{\bm C}_n=(\widehat C_{1,n},\ldots,\widehat C_{k,n})$ its
empirical estimate. Finite-sample ranking behaviour depends on the
errors of the selected MI and CMI estimators and on the separation
between population scores. Accordingly, estimator concentration,
pairwise ranking perturbation, and Kendall discrepancy are analyzed
under explicit estimator-specific conditions in
Section~\ref{sec:computational} and
Appendix~\ref{appendix:statistical}.

No KL-continuity assumption is used in these population results.
Approximate-redundancy behaviour is treated through empirical
sensitivity analysis rather than as a consequence of exact conditional
redundancy.

\paragraph{Estimator Error.}
Differences between $\widehat{\bm C}_N$ and
$\widehat{\bm C}'_n$ may arise from both environment discrepancy and
finite-sample estimator error. No common $O_P(n^{-1/2})$ rate is
assumed for all MI and CMI estimators. Estimator-specific conditions
are stated in Assumption~\ref{asm:estimators}; ranking perturbation is
analyzed in Theorem~\ref{thm:rank_stability}, with supplementary
details in Appendix~\ref{appendix:statistical}.

\subsubsection{Ratio Perturbation}
To connect componentwise MI/CMI estimation errors with MCIR error, we
first control the sensitivity of the normalized ratio. Because the
ratio can become unstable when its denominator is close to zero, the
result requires explicit denominator separation. Under this condition,
small perturbations of the estimated information components produce a
correspondingly controlled perturbation of the MCIR score.
\paragraph{Consistency of the Empirical Score.}
The positive-part map and the projection
$\Pi_{[0,1]}$ are continuous. Moreover,
$g(u,j)=2u/(u+j)$ is continuous at every point satisfying
$u+j>0$. Define the empirical score to be zero when
$\widehat U_{i,s}^{+}+\widehat J_{i,s}^{+}=0$. Since
$U_i,J_i\geq0$, $U_i+J_i>0$, and the component estimators are
consistent, the empirical denominator is positive with probability
tending to one. The continuous mapping theorem, applied at
$(U_i,J_i)$, therefore gives
\begin{equation}
\medmath{
\widehat C_{i,s}
=
\begin{cases}
\displaystyle
\Pi_{[0,1]}
\left(
\frac{2\widehat U_{i,s}^{+}}
     {\widehat U_{i,s}^{+}+\widehat J_{i,s}^{+}}
\right),
&
\widehat U_{i,s}^{+}+\widehat J_{i,s}^{+}>0,\\[3mm]
0, & \text{otherwise},
\end{cases}
\ \xrightarrow{p}\ C_i.
}
\label{eq:mcir-consistency}
\end{equation}
This conclusion is conditional on the consistency of the selected MI
and CMI estimators in the relevant variable-type and dimensional
regime.

Let $\medmath{D_i=U_i+J_i>0}$. If
$\medmath{|\widehat U_{i,s}-U_i|\leq\delta}$ and
$\medmath{|\widehat J_{i,s}-J_i|\leq\delta}$ with
$\delta\leq D_i/4$, then the projected empirical score satisfies,
\begin{equation}
\medmath{
|\widehat C_{i,s}-C_i|
\leq
\frac{8\delta}{D_i}.
}
\label{eq:mcir-perturbation}
\end{equation}
Hence, score estimation is better conditioned when the population
denominator $D_i$ is separated from zero. The corresponding
rank-perturbation result is given in
Theorem~\ref{thm:rank_stability}.

No common root-$s$ convergence rate or asymptotic normality is
asserted for all estimator families. If a particular estimator satisfies
a joint central limit theorem,
\begin{equation}
\medmath{
\sqrt{s}
\begin{pmatrix}
\widehat U_{i,s}-U_i\\
\widehat J_{i,s}-J_i
\end{pmatrix}
\Rightarrow
\mathcal N(0,\Sigma_i),
}
\end{equation}
then the delta method yields
\begin{equation}
\medmath{
\sqrt{s}(\widehat C_{i,s}-C_i)
\Rightarrow
\mathcal N
\left(
0,\nabla g(U_i,J_i)^\top
\Sigma_i\nabla g(U_i,J_i)
\right),
}
\label{eq:conditional-delta-method}
\end{equation}
where
\begin{equation}
\medmath{
\nabla g(U_i,J_i)
=
\frac{2}{(U_i+J_i)^2}
\begin{pmatrix}
J_i\\-U_i
\end{pmatrix}.
}
\end{equation}
The required regularity conditions are estimator-specific and are
stated in Appendix~\ref{appendix:statistical}.

The perturbation derivation and estimator comparisons are provided in
Appendix~\ref{appendix:statistical} and
Appendix~\ref{appendix:complexity}.

\subsubsection{Conditional Rank-Stability Theorem}
The score perturbation bound yields ranking stability only when the
relevant population scores are sufficiently separated. We therefore
exclude features with nearly vanishing MCIR denominators and impose an
explicit pairwise ranking margin. To avoid ambiguity caused by equal
scores, all rankings below use a deterministic tie-breaking rule fixed
before observing the estimated scores.
\begin{theorem}[\textbf{Finite-Sample Rank Stability}]
\label{thm:rank_stability}
Fix $c_0>0$ and define the informative feature set
\begin{equation}
\medmath{
\mathcal I
=
\left\{
i\in\{1,\ldots,k\}:U_i+J_i\geq c_0
\right\},
\qquad
k_{\mathcal I}=|\mathcal I|.
}
\label{eq:informative-feature-set}
\end{equation}
Suppose that, for some $\delta\leq c_0/4$ and $\alpha\in(0,1)$,
every MI or CMI component used to construct
$\{\widehat U_{i,n},\widehat J_{i,n}:i\in\mathcal I\}$ satisfies
\begin{equation}
\medmath{
\mathbb P
\left(
|\widehat I-I|>\delta
\right)
\leq\alpha.
}
\label{eq:component-tail-bound}
\end{equation}
Let $m_{\mathcal I}$ denote the number of estimated information
components involved.

Assume further that the population scores satisfy the margin condition
\begin{equation}
\medmath{
\frac{
\#\{(i,j):i<j,\ i,j\in\mathcal I,\
|C_i-C_j|\leq t\}
}{
\binom{k_{\mathcal I}}{2}
}
\leq M t
}
\label{eq:rank-margin-condition}
\end{equation}
for all $t\geq0$ and some $M>0$. Apply the same deterministic
tie-breaking rule to the population and empirical score vectors, and
let $\pi_{\mathcal I}$ and $\widehat\pi_{n,\mathcal I}$ denote the
resulting strict rankings. Then
\begin{equation}
\medmath{
\mathbb P
\left[
d_{\mathrm{inv}}
\left(
\widehat\pi_{n,\mathcal I},
\pi_{\mathcal I}
\right)
\leq
\frac{16M\delta}{c_0}
\right]
\geq
1-m_{\mathcal I}\alpha,
}
\label{eq:finite-sample-rank-bound}
\end{equation}
where $d_{\mathrm{inv}}$ is the number of discordant pairs divided by
$\binom{k_{\mathcal I}}{2}$ after deterministic tie breaking. For
strict rankings, $d_{\mathrm{inv}}=(1-\tau_a)/2$; it is not identified
with $(1-\tau_b)/2$ when ties are present.
\end{theorem}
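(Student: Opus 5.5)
The argument combines a union bound, the ratio-perturbation bound in Equation~\ref{eq:mcir-perturbation}, and the margin condition in Equation~\ref{eq:rank-margin-condition}. All estimates are placed on a single good event.

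\textbf{Step 1 (good event).} Let $\mathcal E$ be the event that every one of the $m_{\mathcal I}$ estimated components satisfies $|\widehat I-I|\leq\delta$. By Equation~\ref{eq:component-tail-bound} and a union bound, $\mathbb P(\mathcal E)\geq 1-m_{\mathcal I}\alpha$. It therefore suffices to prove the inversion bound deterministically on $\mathcal E$.

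\textbf{Step 2 (score perturbation).} On $\mathcal E$, every $i\in\mathcal I$ has $D_i=U_i+J_i\geq c_0$ and $\delta\leq c_0/4\leq D_i/4$. Equation~\ref{eq:mcir-perturbation} then gives
\[
|\widehat C_{i,n}-C_i|\leq \frac{8\delta}{D_i}\leq \frac{8\delta}{c_0}=:\eta .
\]
I will check that this bound survives the separate clipping and the projection $\Pi_{[0,1]}$. Clipping to the positive part is $1$-Lipschitz, and so is projecting onto $[0,1]$. Since $C_i\in[0,1]$, both operations can only reduce the distance to $C_i$.

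\textbf{Step 3 (discordance implies closeness).} Take $i,j\in\mathcal I$ and suppose the pair is discordant between $\pi_{\mathcal I}$ and $\widehat\pi_{n,\mathcal I}$. I claim that $|C_i-C_j|\leq 2\eta$. If instead $C_i-C_j>2\eta$, then
\[
\widehat C_{i,n}-\widehat C_{j,n}\geq (C_i-C_j)-|\widehat C_{i,n}-C_i|-|\widehat C_{j,n}-C_j|>0 .
\]
So the empirical scores are strictly ordered the same way, and no tie-breaking is invoked. The population order is also strict, so the pair is concordant, a contradiction. Hence
\[
\#\{\text{discordant pairs}\}\leq \#\{(i,j):i<j,\ i,j\in\mathcal I,\ |C_i-C_j|\leq 2\eta\}.
\]

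\textbf{Step 4 (margin condition).} Divide by $\binom{k_{\mathcal I}}{2}$ and apply the margin condition with $t=2\eta$:
\[
d_{\mathrm{inv}}\leq M\cdot 2\eta=\frac{16M\delta}{c_0}
\quad\text{on }\mathcal E .
\]
The identity $d_{\mathrm{inv}}=(1-\tau_a)/2$ for strict rankings follows from the definition of $\tau_a$ as (concordant minus discordant pairs) divided by $\binom{k_{\mathcal I}}{2}$.

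\textbf{Main obstacle.} The delicate step is Step 2, namely confirming that the perturbation bound holds for the clipped and projected score. I would verify it directly. On $\mathcal E$, $\widehat U^+_{i,n}+\widehat J^+_{i,n}\geq D_i-2\delta\geq D_i/2>0$. The map $g(u,j)=2u/(u+j)$ has partial derivatives $2j/(u+j)^2$ and $-2u/(u+j)^2$, so on this region $|\partial_u g|+|\partial_j g|$ is at most $2/(u+j)\leq 4/D_i$. This gives the Lipschitz bound $|g(\widehat U^+,\widehat J^+)-g(U,J)|\leq (4/D_i)\cdot 2\delta=8\delta/D_i$. The clipping and projection are then handled by the contraction argument in Step 2.
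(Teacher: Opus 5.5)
Your proof is correct and follows the paper's own argument. It uses a union bound onto the good event, the ratio-perturbation bound with $\eta=8\delta/c_0$ (relying on the fact that positive-part clipping and $\Pi_{[0,1]}$ are non-expansive and fix the population values), the observation that a discordant pair must have population gap at most $2\eta$, and the margin condition at $t=2\eta$. The only difference is cosmetic: the paper proves the perturbation lemma through the exact identity $\frac{2\widehat U_i^{+}}{\widehat D_i}-\frac{2U_i}{D_i}=\frac{2[J_i(\widehat U_i^{+}-U_i)-U_i(\widehat J_i^{+}-J_i)]}{\widehat D_iD_i}$ and obtains the sharper $4\delta/c_0$, whereas your mean-value bound gives $8\delta/D_i$ directly, which is all the theorem needs; it is valid because the segment stays in the convex region $\{u,j\geq0,\ u+j\geq D_i/2\}$.
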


\begin{proof}
By the union bound, with probability at least
$1-m_{\mathcal I}\alpha$, all estimated information components differ
from their population values by at most $\delta$. On this event,
non-negative clipping is non-expansive and
\begin{equation}
\medmath{
\left|
\widehat U_{i,n}^{+}-U_i
\right|
\leq\delta,
\qquad
\left|
\widehat J_{i,n}^{+}-J_i
\right|
\leq\delta.
}
\end{equation}
Since $U_i+J_i\geq c_0$ and $\delta\leq c_0/4$, a ratio-perturbation
bound gives
\begin{equation}
\medmath{
\max_{i\in\mathcal I}
|\widehat C_{i,n}-C_i|
\leq
\eta,
\qquad
\eta=\frac{8\delta}{c_0}.
}
\label{eq:score-perturbation-bound}
\end{equation}
A pair can be reversed only if its population gap is at most $2\eta$.
Equation~\ref{eq:rank-margin-condition} therefore bounds the fraction
of discordant pairs by
$\medmath{2M\eta=16M\delta/c_0}$, proving
Equation~\ref{eq:finite-sample-rank-bound}.
\end{proof}

A supplementary derivation is provided in
Appendix~\ref{sec:theory1.6}. If estimator-specific regularity
conditions yield sequences $\delta_n\to0$ and
$m_{\mathcal I}\alpha_n\to0$, then
\begin{equation}
\medmath{
d_{\mathrm{inv}}
\left(
\widehat\pi_{n,\mathcal I},
\pi_{\mathcal I}
\right)
\overset{p}{\longrightarrow}0.
}
\label{eq:ranking-convergence}
\end{equation}

Theorem~\ref{thm:rank_stability} is conditional on the explicit
component-wise error bound in
Equation~\ref{eq:component-tail-bound}. The conditions under which
this bound holds are estimator-specific and are stated in
Assumption~\ref{asm:estimators} and
Appendix~\ref{appendix:statistical}. No common concentration rate is
asserted for copula, $k$NN, and plug-in estimators.

\subsection{Estimator Selection}
\label{sec:estimator-selection}

The MI and CMI estimators used to construct empirical MCIR scores are
selected according to the variable types and conditioning dimension of
each local feature block.

\subsubsection{Variable-Type Compatibility}
\label{sec:mixed-estimation}
Definitions~\ref{def:cmi}--\ref{def:mcir} are population definitions
and remain valid for continuous, discrete, and mixed-type variables.
Let $\mathcal{C}$ and $\mathcal{D}$ denote the continuous- and
discrete-feature index sets, respectively, with
\begin{equation}
\medmath{
\mathcal{C}\cup\mathcal{D}=\{1,\ldots,k\},
\qquad
\mathcal{C}\cap\mathcal{D}=\varnothing.
}
\label{eq:feature-type-partition}
\end{equation}
Observed samples are used to estimate the information components
$\widehat U_{i,n}$ and $\widehat J_{i,n}$. These estimates are combined
using the safeguarded empirical definition in
Equation~\ref{eq:clipped-empirical-mcir}, including its positive-part
operation, zero-denominator convention, and final projection onto
$[0,1]$. The projection is an implementation safeguard: independently
estimated and clipped components need not satisfy the population chain
inequality $\widehat J_{i,n}\geq\widehat U_{i,n}$.
No single MI or CMI estimator is uniformly suitable across all variable
types, sample sizes, and conditioning dimensions. The estimator used
for each data regime is specified in Table~\ref{est}. Consistency,
concentration, and convergence properties are stated separately under
the estimator-specific conditions in
Assumption~\ref{asm:estimators}.
The term \emph{mixed-variable CMI} describes the population
functional evaluated for mixed-type variables. It does not imply that a
single empirical estimator is valid for every mixed-data regime. The
reported experiments contain no empirical claim based on a general
mixed continuous--discrete MI/CMI estimator.

Because MI and CMI estimation becomes more difficult as the dimension
of $X_{\Phi(i)}$ increases, the computational analysis treats
$|\Phi(i)|$ as an explicit statistical and computational parameter.
Estimator-specific invariance, concentration, ranking perturbation,
and dimensional limitations are considered in
Section~\ref{sec:computational}.
\subsubsection{Bootstrap Switching as a Heuristic}
Because no single MI/CMI estimator is uniformly preferable across all
data types, sample sizes, and conditioning dimensions, we permit
selection among estimators that are compatible with the observed data
regime. The procedure uses bootstrap variability as a practical
stability criterion; it is not an oracle rule, since a small bootstrap
standard error does not imply small estimator bias.
Let $\mathcal A_i$ denote the finite set of estimator families
compatible with the variable types and conditioning dimension for
feature $i$. Candidate families may include Gaussian--copula, $k$NN,
and plug-in estimators, as specified in Table~\ref{est}. For
$a\in\mathcal A_i$, let $\widehat C_{i,n}^{(a)}$ denote the associated
MCIR estimate and let $\widehat s_{i,a}$ denote its bootstrap standard
error. The switching rule is
\begin{equation}
\medmath{
\widehat a_i
\in
\arg\min_{a\in\mathcal A_i}
\widehat s_{i,a},
\qquad
\widehat C_{i,n}^{(\mathrm{sw})}
=
\widehat C_{i,n}^{(\widehat a_i)}.
}
\label{eq:estimator-switching-rule}
\end{equation}

This rule selects the least variable estimator among the compatible
candidates. Bootstrap variability does not directly estimate estimator
bias and therefore does not imply an oracle risk inequality. Candidate
estimators are consequently reported both individually and under the
switching rule, allowing the selected result to be compared with each
fixed estimator.
\begin{wraptable}{l}{0.65\linewidth}
\centering
\scriptsize
\setlength{\tabcolsep}{3pt}
\renewcommand{\arraystretch}{1.08}
\caption{Estimator selection by variable type and conditioning regime.}
\label{est}
\begin{tabular}{
    p{1.8cm}
    p{3.0cm}
    p{4.4cm}
}
\toprule
\textbf{Data regime} &
\textbf{Estimator} &
\textbf{Scope and conditions} \\
\midrule

Continuous &
\textbf{Gaussian--copula} after rank Gaussianization,
$\widehat F\mapsto\Phi_{\mathcal N}^{-1}(\widehat F)$ &
Primary estimator for continuous variables under a Gaussian-copula
approximation. Rank-based invariance holds for strictly monotone
transformations, with a fixed rule for ties. \\

Continuous, nonparametric &
\textbf{$k$NN/KSG} MI and conditional extensions &
Uses local neighbour distances without a Gaussian-copula assumption.
Applied in low- or moderate-dimensional conditioning blocks; accuracy
may deteriorate as $|X_{\Phi(i)}|$ increases. \\

Discrete &
\textbf{Plug-in} frequency estimator &
Computes MI and CMI from empirical probability masses. Requires
adequate cell counts; sparse contingency tables may introduce
finite-sample bias. \\

Mixed continuous--discrete &
\textbf{Not evaluated empirically} &
The population definition remains valid, but this study does not
implement or report a general mixed-type MI/CMI estimator. Application
to such blocks requires a fully specified mixed-type estimator and its
corresponding regularity assumptions. \\

High-dimensional conditioning &
\textbf{Dimension diagnostic} &
No uniform estimator guarantee is assumed. Results are evaluated as a
function of $|\Phi(i)|$, and large conditioning blocks are reported as
an estimator-limitation regime. \\

\bottomrule
\end{tabular}
\vspace{-3mm}
\end{wraptable}
Estimator switching is treated as a data-driven implementation
rule rather than as a population property of MCIR. Its empirical
behaviour is evaluated through repeated subsampling, estimator
disagreement, and ranking accuracy. The complete switching procedure
is provided in Appendix~\ref{sec:theory1.7}.

For temporally or spatially dependent observations, the resampling
scheme must preserve the corresponding dependence structure, for
example through block rather than i.i.d.\ bootstrap sampling.

\subsection{Dependence-Neighbourhood Selection}
\label{sec:auto-phi}

MCIR constructs a small conditioning neighbourhood for each target
feature through dependence screening followed by selection over a
nested sequence of candidate sets.

\subsubsection{Screening and Nested Candidate Neighbourhoods}

For each target feature $i$, let $\mathcal S_i$ denote a screened and
ordered candidate pool with
$\medmath{|\mathcal S_i|\leq M_{\max}}$. The candidate neighbourhoods
form the nested sequence
\begin{equation}
\medmath{
\varnothing
=
\Phi_{i,0}
\subset
\Phi_{i,1}
\subset\cdots\subset
\Phi_{i,M_{\max}}
\subseteq\mathcal S_i.
}
\label{eq:nested-neighbourhoods}
\end{equation}
Thus, Auto-$\Phi$ searches over $M_{\max}+1$ nested neighbourhoods,
rather than all $2^{|\mathcal S_i|}$ subsets.

\subsubsection{Auto-$\Phi$ Objective}

For each shared candidate size $r\in\mathcal R$, let
$\widehat{\bm C}^{(b)}_r$ denote the complete MCIR score vector obtained
from bootstrap resample $b$, using
$\Phi_{i,r}$ for every feature $i$. Define the bootstrap ranking
dispersion by
\begin{equation}
\medmath{
\widehat V(r)
=
\frac{2}{B(B-1)}
\sum_{1\leq b<b'\leq B}
\frac{
1-\tau_b
\left(
\widehat{\bm C}^{(b)}_r,
\widehat{\bm C}^{(b')}_r
\right)
}{2},
}
\label{eq:auto-phi-ranking-dispersion}
\end{equation}
where $\tau_b$ is Kendall's coefficient with tie correction, applied
after the stated deterministic tie-breaking rule. Smaller
$\widehat V(r)$ indicates greater agreement among bootstrap rankings.

To measure dependence left outside the candidate neighbourhood, define
\begin{equation}
\medmath{
\widehat B(r)
=
\frac{1}{k}
\sum_{i=1}^{k}
\max_{j\in\mathcal S_i\setminus\Phi_{i,r}}
\widehat I
\left(
X_i;X_j\mid X_{\Phi_{i,r}}
\right),
\qquad
\widetilde B(r)
=
\frac{\widehat B(r)}
     {\widehat B(0)+10^{-12}}.
}
\label{eq:residual-dependence-diagnostic}
\end{equation}
The maximum is defined as zero when
$\mathcal S_i\setminus\Phi_{i,r}=\varnothing$.
The normalization places residual dependence on a scale relative to
the empty-neighbourhood diagnostic.

Auto-$\Phi$ selects one shared neighbourhood size
\begin{equation}
\medmath{
r^\star
\in
\arg\min_{r\in\mathcal R}
\left\{
\widehat V(r)
+
\lambda_B\widetilde B(r)
+
\lambda_D\frac{r}{M_{\max}}
\right\},
\qquad
\widehat\Phi(i)=\Phi_{i,r^\star},
}
\label{eq:auto-phi-objective}
\end{equation}
where $\mathcal R\subseteq\{1,\ldots,M_{\max}\}$ is the prespecified
candidate-size set and $\lambda_B,\lambda_D\geq0$. Exact objective ties
are resolved deterministically in favour of the smallest $r$.
The reported default is
$(\lambda_B,\lambda_D)=(1,0.10)$; sensitivity is evaluated over
$\lambda_B\in\{0,0.25,0.5,1\}$ and
$\lambda_D\in\{0,0.05,0.10,0.25\}$.

The first term favors ranking stability, the second penalizes
neighbourhoods that leave substantial conditional dependence
unaccounted for, and the third discourages unnecessarily large
conditioning blocks. Equation~\ref{eq:auto-phi-objective} is an
algorithmic selection criterion rather than a population guarantee.
Its sensitivity to $\lambda_B$ and $\lambda_D$ is evaluated in
Table~\ref{tab:auto-phi-penalty-sensitivity}; neighbourhood
misspecification is examined separately through the higher-order
failure case in Appendix~\ref{app:failure_case_higher_order}.

\subsubsection{Misspecification and Computational Cost}
Let $B$ denote the number of bootstrap replicates,
$A=\max_i|\mathcal A_i|$, and
$T_{\mathrm{CMI}}(n,r)$ the cost of one MI/CMI fit with conditioning
dimension $r$. After candidate screening, the worst-case cost of
evaluating the nested neighbourhoods is
\begin{equation}
\medmath{
\mathcal O
\left(
k\,B\,A\,(M_{\max}+1)\,
T_{\mathrm{CMI}}(n,M_{\max})
\right),
}
\label{eq:auto-phi-complexity}
\end{equation}
in addition to the cost of constructing the candidate pools. The
method is therefore linear in the number of evaluated nested sizes, not
in the number of all possible feature subsets. Further algorithmic
details are provided in Appendix~\ref{sec:theory1.8}.

Large values of $M_{\max}$ increase both computational cost and the
statistical difficulty of CMI estimation. The analysis therefore
reports estimator variability and ranking sensitivity as functions of
$|\Phi(i)|$. Additional discussion of neighbourhood misspecification,
higher-order dependence, and latent structure is provided in
Appendix~\ref{sec:phi_role}, Appendix~\ref{c.5}, and
Appendix~\ref{latent2}.

\paragraph{Role of the Neighbourhood.}
The interpretation of unique predictive information is relative to the
specified neighbourhood $\Phi(i)$. If $\Phi(i)$ omits relevant
dependent features, shared predictive information may remain in $U_i$.
Conversely, including unnecessary variables increases the conditioning
dimension and may make empirical CMI estimation more difficult.
Neighbourhood selection, misspecification sensitivity, and
computational cost are analyzed in
Section~\ref{sec:computational}.

\subsection{\textbf{Lightweight Fidelity Analysis}}
\label{subsec:lw_fidelity}

The lightweight procedure reduces the number of observations from $N$
to $n<N$ and may additionally reduce the model-output representation
dimension from $d$ to $d'\leq d$. These operations are distinct:
$N$ and $n$ determine the numbers of atoms in the empirical measures,
whereas $d$ and $d'$ determine their ambient representation spaces.
Recall that the full and lightweight population output laws are
\begin{equation}
\medmath{
\mu=P_Z\in\mathcal M_p(\mathbb R^d),
\qquad
\nu=P_{Z'}\in\mathcal M_p(\mathbb R^{d'}),
}
\label{eq:fidelity-output-laws}
\end{equation}
with empirical counterparts
\begin{equation}
\medmath{
\widehat\mu_N
=
\frac{1}{N}\sum_{r=1}^{N}\delta_{z^{(r)}},
\qquad
\widehat\nu_n
=
\frac{1}{n}\sum_{r\in\mathcal I_n}\delta_{z'^{(r)}}.
}
\label{eq:fidelity-empirical-laws}
\end{equation}

Observation subsampling and representation-dimensional
reduction play different roles in the lightweight analysis.
Subsampling changes the empirical approximation of the output law,
whereas the map $G:\mathbb R^d\rightarrow\mathbb R^{d'}$ changes the
ambient representation dimension. The Cai--Lim construction is used
for the latter comparison and is not justified merely by having
$n<N$.
Algorithm~\ref{alg:mcir} applies to either environment by
setting $s=N$, $Q=Z$, and $d_Q=d$ for the full environment, or
$s=n$, $Q=Z'$, and $d_Q=d'$ for the lightweight environment. Thus,
sample size and representation dimension remain distinct throughout
the computation.

\subsubsection{Representation Discrepancy}
We first assess whether the full and lightweight environments induce
similar marginal output distributions. This diagnostic captures
changes in the model-output representation caused by sample reduction
or dimensional modification. It does not, by itself, guarantee MCIR
score or ranking agreement, because MCIR depends on the joint
feature--output law rather than only on the marginal output law.
Let $\mu=P_Z\in\mathcal M_p(\mathbb R^d)$ and
$\nu=P_{Z'}\in\mathcal M_p(\mathbb R^{d'})$. Following
Cai and Lim~\cite{cai2022distances}, define
\begin{equation}
\medmath{
d_{\mathrm{repr}}(\mathcal U,\mathcal U')
=
\begin{cases}
W_p(P_Z,P_{Z'}),&
d'=d\text{ with common coordinates},\\[1mm]
\widetilde W_p(P_{Z'},P_Z),&
d'<d,
\end{cases}
}
\label{eq:fidelity-population-representation}
\end{equation}
where $\widetilde W_p$ is the augmented projection/embedding distance
defined in Equation~\ref{eq:augmented-wasserstein}. Its empirical
counterpart is
\begin{equation}
\medmath{
\widehat d_{\mathrm{repr}}
\left(
\widehat{\mathcal U}_N,
\widehat{\mathcal U}'_n
\right)
=
\begin{cases}
W_p(\widehat\mu_N,\widehat\nu_n),&d'=d,\\[1mm]
\widetilde W_p(\widehat\nu_n,\widehat\mu_N),&d'<d.
\end{cases}
}
\label{eq:fidelity-empirical-representation}
\end{equation}
When $d'<d$, the augmented distance compares the lightweight law with
admissible $d'$-dimensional projections of the full law, or
equivalently compares the full law with admissible embeddings of the
lightweight law. The comparison therefore concerns representation
dimension, whereas $N$ and $n$ determine only the numbers of empirical
observations.
Representation discrepancy compares the marginal output laws
$P_Z$ and $P_{Z'}$, whereas MCIR depends on the joint laws $P_{X,Z}$
and $P_{X,Z'}$. A small value of $d_{\mathrm{repr}}$ therefore does
not by itself establish attribution agreement. MCIR score and ranking
agreement are evaluated separately.

\subsubsection{Ranking, Head-Set, and Curve Diagnostics}
Because representation similarity does not ensure attribution fidelity, we separately evaluate full-ranking, top-$K$, and deletion/insertion agreement.
Let,
\begin{equation}
\medmath{
\widehat{\bm C}_N
=
(\widehat C_{1,N},\ldots,\widehat C_{k,N}),
\qquad
\widehat{\bm C}'_n
=
(\widehat C'_{1,n},\ldots,\widehat C'_{k,n})
}
\label{eq:fidelity-score-vectors}
\end{equation}
denote the MCIR vectors computed from
$\widehat P_{X,Z}^{\,N}$ and $\widehat P_{X,Z'}^{\,n}$.
Complete-ranking disagreement is
\begin{equation}
\medmath{
d_{\mathrm{rank}}
\left(
\widehat{\bm C}_N,
\widehat{\bm C}'_n
\right)
=
\frac{
1-\tau_b(\widehat{\bm C}_N,\widehat{\bm C}'_n)
}{2},
}
\label{eq:fidelity-rank-discrepancy}
\end{equation}
\begin{wrapfigure}{r}{0.50\linewidth}
\vspace{-8pt}
\begin{minipage}{\linewidth}
\begin{algorithm}[H]
\caption{MCIR-M: Dependence-Aware Global Attribution}
\label{alg:mcir}
\scriptsize
\begin{algorithmic}[1]
\Require Explanation sample
$\mathcal D_s=\{(x^{(r)},q^{(r)})\}_{r=1}^{s}$,
$x^{(r)}\in\mathbb R^k$,
$q^{(r)}\in\mathbb R^{d_Q}$;
head size $K$; estimator families $\mathcal A$;
screening budget $M_{\max}$; bootstrap count $B$
\Ensure Scores $\widehat{\bm C}_s$ and feature ranking $\widehat\pi_s$

\State Construct a dependence sketch and screened candidate pool
$\mathcal S_i$ for each $i$, with
$|\mathcal S_i|\leq M_{\max}$.

\State Order each $\mathcal S_i$ and construct
$\{\Phi_{i,r}:r\in\mathcal R\}$ using
Equation~\ref{eq:nested-neighbourhoods}.

\State Select the shared neighbourhood size
$r^\star$ using Equation~\ref{eq:auto-phi-objective}.

\For{$i\gets1$ \textbf{to} $k$}
    \State Set
    $\widehat\Phi(i)\gets\Phi_{i,r^\star}$.

    \State Restrict $\mathcal A$ to estimators compatible with the
    variable types and $|\widehat\Phi(i)|$.

    \State Select $\widehat a_i$ using the bootstrap rule in
    Equation~\ref{eq:estimator-switching-rule}.

    \State Estimate
    $\widehat U_{i,s}
    =\widehat I(Q;X_i\mid X_{\widehat\Phi(i)})$
    and
    $\widehat J_{i,s}
    =\widehat I(Q;X_{\widehat\Phi(i)\cup\{i\}})$.

    \State Compute
    $\widehat C_{i,s}$ using the projected ratio in
    Equation~\ref{eq:clipped-empirical-mcir}.
\EndFor

\State Rank features by decreasing $\widehat C_{i,s}$ to obtain
$\widehat\pi_s$.

\State Report bootstrap score intervals and head-ranking stability
using Kendall--$\tau_b$ and Jaccard@$K$.

\If{full and lightweight environments are available}
    \State Report the fidelity profile
    $\mathfrak F_{N,n}$ from
    Equation~\ref{eq:lightweight-fidelity-profile}.
\EndIf
\end{algorithmic}
\end{algorithm}
\end{minipage}
\vspace{-8pt}
\end{wrapfigure}
where $\tau_b$ is Kendall's coefficient with tie correction. Top-$K$
disagreement is
\begin{equation}
\medmath{
d_{\mathrm{J@K}}
=
1-
\frac{
\left|
\operatorname{TopK}(\widehat{\bm C}_N)
\cap
\operatorname{TopK}(\widehat{\bm C}'_n)
\right|
}{
\left|
\operatorname{TopK}(\widehat{\bm C}_N)
\cup
\operatorname{TopK}(\widehat{\bm C}'_n)
\right|
}.
}
\label{eq:fidelity-topk-discrepancy}
\end{equation}
The quantities $d_{\mathrm{rank}}$ and $d_{\mathrm{J@K}}$ measure
complete-ranking and head-ranking disagreement, respectively. They are
reported as empirical diagnostics and are not included in the
definition of representation similarity.
\paragraph{Deletion and Insertion Diagnostics.}
Let $\Gamma_N^{a}(t)$ and $\Gamma_n^{\prime a}(t)$ denote the full and
lightweight faithfulness curves at fraction $t\in[0,1]$, where
$a\in\{\mathrm{del},\mathrm{ins}\}$. Define
\begin{equation}
\medmath{
d_{\mathrm{curve}}
=
\max_{a\in\{\mathrm{del},\mathrm{ins}\}}
\sup_{t\in[0,1]}
\left|
\Gamma_N^{a}(t)
-
\Gamma_n^{\prime a}(t)
\right|.
}
\label{eq:fidelity-curve-discrepancy}
\end{equation}
This diagnostic measures whether the full and lightweight rankings
produce similar predictive-performance changes under the same deletion
or insertion protocol. Together,
$\widehat d_{\mathrm{repr}}$, $d_{\mathrm{rank}}$,
$d_{\mathrm{J@K}}$, and $d_{\mathrm{curve}}$ form the lightweight
fidelity profile in
Definition~\ref{def:lightweight-fidelity-profile}.

\subsubsection{Operational Acceptance Profile}
The preceding discrepancies are combined into an operational profile that evaluates, rather than assumes, lightweight fidelity.
\begin{definition}[\textbf{Lightweight Fidelity Profile}]
\label{def:lightweight-fidelity-profile}
\label{asm:contract}
The empirical lightweight fidelity profile is
\begin{equation}
\medmath{
\mathfrak F_{N,n}
=
\left(
\widehat d_{\mathrm{repr}},
d_{\mathrm{rank}},
d_{\mathrm{J@K}},
d_{\mathrm{curve}}
\right).
}
\label{eq:lightweight-fidelity-profile}
\end{equation}
For prespecified tolerances
$\bm\varepsilon=
(\varepsilon_{\mathrm{repr}},
\varepsilon_{\mathrm{rank}},
\varepsilon_{\mathrm{J@K}},
\varepsilon_{\mathrm{curve}})$,
the lightweight result passes the diagnostic criterion when
\begin{equation}
\medmath{
\widehat d_{\mathrm{repr}}
\leq\varepsilon_{\mathrm{repr}},
\quad
d_{\mathrm{rank}}
\leq\varepsilon_{\mathrm{rank}},
\quad
d_{\mathrm{J@K}}
\leq\varepsilon_{\mathrm{J@K}},
\quad
d_{\mathrm{curve}}
\leq\varepsilon_{\mathrm{curve}}.
}
\label{eq:lightweight-acceptance-criterion}
\end{equation}
\end{definition}

The components of $\mathfrak F_{N,n}$ measure
representation-distribution agreement, complete-ranking agreement,
head-ranking agreement, and behavioural faithfulness, respectively.
The tolerances must be fixed before comparing the full and lightweight
results.

The lightweight fidelity profile is an operational diagnostic.
Ranking and curve agreement are measured outcomes and are not assumed
as premises for establishing the same agreement. Accordingly, the
profile does not constitute an unconditional theorem that output-law
similarity alone preserves MCIR rankings.

For example, a lightweight electricity-consumption sample dominated by
weekdays may underrepresent weekend behaviour. Such a sample can alter
$\widehat\nu_n$, the MCIR ranking, and its deletion or insertion curves.
The corresponding discrepancies are recorded separately in
$\mathfrak F_{N,n}$.

The complete workflow constructs screened candidate neighbourhoods,
selects $\widehat\Phi(i)$, computes the MCIR scores, and reports
$\mathfrak F_{N,n}$ when both environments are available.
Algorithm~\ref{alg:mcir} summarizes the overall workflow, while
Algorithm~\ref{alg:auto_phi} specifies neighbourhood selection.

\begin{algorithm}[htbp]
\caption{Auto-$\Phi$ Shared-Neighbourhood-Size Selection}
\label{alg:auto_phi}
\scriptsize
\begin{algorithmic}[1]
\Require Explanation sample $\mathcal D_s$;
ordered candidate pools $\{\mathcal S_i\}_{i=1}^k$;
candidate-size set $\mathcal R$;
maximum size $M_{\max}$;
bootstrap count $B$;
penalties $\lambda_B,\lambda_D$
calibration estimator family $a_0$;
\Ensure Shared size $r^\star$ and neighbourhoods
$\{\widehat\Phi(i)\}_{i=1}^k$

\For{$i\gets1$ \textbf{to} $k$}
    \State Construct
    $\{\Phi_{i,r}:r\in\mathcal R\}$ from the first $r$
    members of the ordered pool $\mathcal S_i$.
\EndFor

\For{each $r\in\mathcal R$}
    \For{$b\gets1$ \textbf{to} $B$}
        \State Compute the complete bootstrap MCIR vector
        $\widehat{\bm C}^{(b)}_r$ using
        $\{\Phi_{i,r}\}_{i=1}^k$ and the fixed calibration estimator $a_0$.
    \EndFor

    \State Compute $\widehat V(r)$ using
    Equation~\ref{eq:auto-phi-ranking-dispersion}.

    \State Compute $\widehat B(r)$ and $\widetilde B(r)$ using
    Equation~\ref{eq:residual-dependence-diagnostic}.

    \State Set
    \[
    \widehat R(r)
    \gets
    \widehat V(r)
    +\lambda_B\widetilde B(r)
    +\lambda_D\frac{r}{M_{\max}}.
    \]
\EndFor

\State Select
\[
r^\star
\in
\arg\min_{r\in\mathcal R}\widehat R(r),
\]
breaking objective ties in favour of the smallest $r$.

\For{$i\gets1$ \textbf{to} $k$}
    \State Set $\widehat\Phi(i)\gets\Phi_{i,r^\star}$.
\EndFor

\State \Return
$r^\star$ and $\{\widehat\Phi(i)\}_{i=1}^k$
\end{algorithmic}
\end{algorithm}
The calibration estimator $a_0$ is fixed before neighbourhood
selection; the reported Auto-$\Phi$ experiments use the
Gaussian--copula estimator. Estimator switching is evaluated
subsequently and is not nested within the Auto-$\Phi$ objective.
Auto-$\Phi$ searches a nested candidate sequence and selects one shared
neighbourhood size. Its ranking-dispersion, normalized
residual-dependence, and conditioning-dimension terms are defined in
Equations~\ref{eq:auto-phi-ranking-dispersion}--%
\ref{eq:auto-phi-objective}. It is an empirical selection heuristic,
not a population-optimality result.
Auto-$\Phi$ searches over nested neighbourhoods rather than all subsets
of each screened pool. The global term $\widehat V(r)$ measures
disagreement among complete bootstrap MCIR rankings, while
$\widetilde B(r)$ measures normalized residual dependence remaining
outside the candidate neighbourhoods. The dimension penalty
$r/M_{\max}$ discourages unnecessarily large conditioning blocks.

Minimizing bootstrap variability alone controls stability but not
neighbourhood misspecification. Equation~\ref{eq:auto-phi-objective}
therefore combines ranking dispersion with a residual-dependence
diagnostic and a conditioning-dimension penalty. Sensitivity to these
terms is reported empirically.

\paragraph{Computational Profile.}
The total cost depends on feature count, candidate estimators, bootstrap
replicates, and neighbourhood search, rather than on MCIR scoring alone.
Let $A$ be the maximum number of compatible estimator candidates,
$B$ the number of bootstrap replicates, and
$T_{\mathrm{CMI}}(s,r)$ the cost of one MI/CMI computation with
conditioning dimension $r$. After candidate screening, evaluating all
nested neighbourhoods has worst-case cost
\begin{equation}
\medmath{
\mathcal O
\left(
k\,B\,A\,(M_{\max}+1)\,
T_{\mathrm{CMI}}(s,M_{\max})
+
k\log k
\right).
}
\label{eq:mcir-computational-profile}
\end{equation}
The first term covers neighbourhood evaluation and estimator fitting;
the second covers global ranking. The screening cost must be added
separately. For example, a dense pairwise screening matrix generally
costs $\mathcal O(sk^2)$, whereas an approximate dependence-graph
construction may reduce this cost.

Thus, end-to-end MCIR complexity is not claimed to be linear in $k$
without specifying the screening procedure, candidate-pool size,
bootstrap count, and MI/CMI estimator. A detailed operation count is
provided in Appendix~\ref{subsec:lw_fidelity2}, with neighbourhood
selection details in Appendix~\ref{sec:theory1.8}.

\paragraph{Scope of Lightweight Fidelity.}
A small value of $\widehat d_{\mathrm{repr}}$ indicates that the output
representations are close under the ordinary or Cai--Lim augmented
Wasserstein distance. Since MCIR depends on the joint feature--output
law, ranking and faithfulness diagnostics remain necessary. The
lightweight fidelity profile therefore reports these quantities jointly
without treating one as a proof of another.

MCIR is intended for settings in which feature dependence is
substantively relevant. Its empirical reliability depends on the
quality of neighbourhood selection, the dimension of the conditioning
sets, and the suitability of the selected MI and CMI estimators.
Additional discussion of neighbourhood misspecification,
higher-order dependence, and latent structure is provided in
Appendix~\ref{sec:phi_role}, Appendix~\ref{c.5}, and
Appendix~\ref{latent2}.

The next section compares the computational requirements of MCIR and
PCIR under their respective scoring and neighbourhood-construction
procedures.
\subsection{Computational and Statistical Analysis}
\label{sec:complexity-analysis}

The full and lightweight computations use sample sizes $N$ and $n$,
respectively. A lightweight sample is not assumed to be faithful merely
because $n<N$; its representativeness is assessed using the fidelity
profile in Definition~\ref{def:lightweight-fidelity-profile}. In this
subsection, $s\in\{N,n\}$ denotes the sample size of the environment
being analyzed.

\paragraph{Scoring Cost.}
Let $k$ be the number of features, $d_Q$ the dimension of the
model-output representation, and $m_i=|\widehat\Phi(i)|$. The dimension
of the local information-estimation problem for feature $i$ is
\begin{equation}
\medmath{
r_i=d_Q+m_i+1,
\qquad
r_{\max}=d_Q+M_{\max}+1.
}
\label{eq:local-estimation-dimension}
\end{equation}
Thus, the local estimation cost depends on both the conditioning-set
size and the output-representation dimension.

Let $T_{\mathrm{MI}}(s,r)$ denote the cost of one MI or CMI computation
with $s$ observations and joint dimension $r$. For fixed
neighbourhoods, MCIR scoring costs
\begin{equation}
\medmath{
\mathcal O
\left(
\sum_{i=1}^{k}T_{\mathrm{MI}}(s,r_i)
+
k\log k
\right),
}
\label{eq:fixed-neighbourhood-cost}
\end{equation}
where the final term is the cost of sorting the feature scores.

The estimator-dependent costs are summarized as follows:

\begin{itemize}[leftmargin=*,nosep]
\item The Gaussian--copula estimator requires rank transformation,
covariance estimation, and matrix factorization, with cost
\begin{equation}
\medmath{
\mathcal O
\left(
rs\log s+sr^2+r^3
\right).
}
\end{equation}

\item A tree-based $k$NN estimator typically costs
$\mathcal O(s\log s)$ for fixed low dimension, but its constants and
statistical accuracy deteriorate as $r$ increases; worst-case neighbour
search can approach $\mathcal O(s^2)$.

\item A plug-in estimator requires $\mathcal O(sr)$ time to construct
count tables, but its memory and sample requirements depend on the
number of occupied joint cells and can grow rapidly with variable
cardinality.
\end{itemize}

For the experimental PCIR implementation, rank preprocessing costs
$\mathcal O(ks\log s)$ and the subsequent dispersion calculations cost
$\mathcal O(ks)$. These costs concern score computation after the
explanation data have been formed.

\paragraph{End-to-End MCIR Cost.}
Let $B$ be the number of bootstrap replicates and
$A=\max_i|\mathcal A_i|$ the maximum number of compatible estimator
candidates. Including nested Auto-$\Phi$ evaluation, the end-to-end
cost is
\begin{equation}
\medmath{
\mathcal O
\left(
C_{\mathrm{screen}}
+
kBA(M_{\max}+1)
T_{\mathrm{MI}}(s,r_{\max})
+
k\log k
\right),
}
\label{eq:full-auto-phi-complexity}
\end{equation}
where $C_{\mathrm{screen}}$ is the cost of constructing the candidate
neighbourhoods. Dense pairwise screening generally incurs
$\mathcal O(sk^2)$ time, whereas approximate dependence-graph
construction may reduce this cost.

Equation~\ref{eq:auto-phi-complexity} is linear in the number of
evaluated nested neighbourhood sizes, but not in the number of all
possible feature subsets. It also makes explicit that end-to-end
complexity depends on $k$, $M_{\max}$, $B$, the estimator candidates,
and the conditioning dimension. Extended operation counts are provided
in Appendix~\ref{appendix:complexity}.

\subsection{Latent Confounding and Distribution Shift}
\label{subsec:latent_confounding}

MCIR measures the unique predictive information contributed by
$X_i$ relative to $X_{\Phi(i)}$ for explaining the model-output
representation $Q$. It therefore characterizes statistical dependence
within $P_{X,Q}$ and should not be interpreted as causal variable
importance.
An unobserved variable may influence several observed features and,
through them, the model output. Conditioning only on
$X_{\Phi(i)}$ may then leave residual association that does not
represent a direct causal effect of $X_i$. Likewise, distribution shift
from $P_{X,Q}$ to a different joint law can change the dependence
neighbourhood, the MI and CMI quantities, and the resulting MCIR
ranking.
Neighbourhood screening, estimator selection, and bootstrap diagnostics
quantify aspects of finite-sample variability but do not remove latent
confounding or guarantee robustness to arbitrary distribution shift.
MCIR results should therefore be interpreted together with domain
knowledge, estimator uncertainty, and the lightweight fidelity
diagnostics. Extensions based on causal structure, invariant
representations, and distributionally robust estimation remain
directions for future work.
\section{Experiments}
\label{sec:expts}

We evaluate MCIR across four settings with different dependence
structures: HouseEnergy-Sim, UCI Human Activity Recognition (HAR),
Norwegian load-zone forecasting (NO1--NO5), and CIFAR-10 deep
representations. Comparisons include PCIR, BlockCIR, marginal MI, HSIC,
SHAP variants, and SAGE.

PCIR is included as the closest CIR-family baseline, while the other
methods represent marginal, kernel-based, conditional, and
coalition-based attribution approaches. PCIR is a global
dispersion-based score and does not explicitly condition on a
dependence neighbourhood. MCIR instead uses mixed-variable CMI to
measure the unique predictive information contributed by each feature
relative to its selected neighbourhood.

The experiments address three research questions:
\begin{description}[leftmargin=*,itemsep=0pt,topsep=0pt]
\item[RQ1: Dependence-aware attribution.]
How does MCIR behave relative to the baselines under multicollinearity,
duplicated predictors, and structured feature dependence?

\item[RQ2: Stability and lightweight fidelity.]
How stable are MCIR rankings under resampling, and how closely do the
full and lightweight environments agree according to the fidelity
profile in Definition~\ref{def:lightweight-fidelity-profile}?

\item[RQ3: Faithfulness and efficiency.]
Do the highest-ranked features induce stronger deletion-based
performance degradation, and what computational cost is incurred by
neighbourhood selection and MI/CMI estimation?
\end{description}

\begin{table}[htbp]
\centering
\fontsize{7.3}{8.3}\selectfont
\setlength{\tabcolsep}{2.5pt}
\renewcommand{\arraystretch}{0.98}
\caption{Attribution methods and treatment of feature dependence.}
\label{tab:xxx}
\begin{adjustbox}{width=\linewidth}
\begin{tabular}{
@{}
p{3.2cm}
p{1.8cm}
p{3.4cm}
p{7.3cm}
@{}
}
\toprule
\textbf{Method and reference} &
\textbf{Scope} &
\textbf{Dependence treatment} &
\textbf{Role in the comparison} \\
\midrule
MCIR (this work) &
Global &
Local conditional information &
Conditions on $X_{\Phi(i)}$ to measure unique predictive information. \\

PCIR~\cite{sengupta2025correlation2} &
Global &
Unconditioned &
Dispersion-based CIR-family baseline; sensitive to relative
parameterization and preprocessing. \\

BlockCIR~\cite{sengupta2025correlation} &
Global &
Prespecified blocks &
Aggregates attribution within predefined feature blocks. \\

KernelSHAP (interventional)~\cite{lundberg2017shap} &
Local$\rightarrow$Global &
Background-dependent &
Uses interventional or background-based feature replacement. \\

Conditional KernelSHAP~\cite{aas2021explaining} &
Local$\rightarrow$Global &
Conditional sampling &
Approximates feature coalitions while preserving estimated dependence. \\

SAGE~\cite{covert2020sage} &
Global &
Value-function dependent &
Dependence treatment depends on the conditional or marginal value
function used in the implementation. \\

HSIC~\cite{gretton2005measuring} &
Global &
Marginal dependence &
Measures kernel dependence without neighbourhood conditioning. \\

Marginal MI~\cite{cover2006elements,kraskov2004estimating} &
Global &
Pairwise dependence &
Measures $I_{P_{X,Q}}(Q;X_i)$ without conditioning on
$X_{\Phi(i)}$. \\
\bottomrule
\end{tabular}
\end{adjustbox}
\end{table}

\subsection{Experimental Setup and Evaluation Protocol}
\label{setup}

\paragraph{Datasets.}
The experiments include two primary tabular benchmarks and two
additional scalability and application studies.

\textbf{UCI HAR (classification):}
The UCI HAR dataset contains 561 smartphone-sensor-derived features for
six activities: walking, walking upstairs, walking downstairs, sitting,
standing, and laying. Measurements were recorded at 50\,Hz and
processed into 2.56-second segments. A random-forest classifier is used
as the fixed predictive model whose outputs are explained.

\textbf{HouseEnergy-Sim (regression):}
HouseEnergy-Sim is a controlled synthetic benchmark representing hourly
residential electricity consumption. Its features include time,
appliance-load proxies, and weather variables with configurable
dependence and redundancy. A random-forest regressor is trained using
the designated training split, and its fixed outputs are explained in
the full and lightweight environments.

\textbf{Norwegian residential demand:}
The NO1--NO5 experiment uses aggregated hourly residential-demand
series for the five Norwegian bidding areas, together with correlated
weather, temporal, and load-history predictors
\cite{hofmann2023norwegian}. A separate XGBoost model is fitted for
each area.

\textbf{CIFAR-10 representations:}
The CIFAR-10 experiment evaluates MCIR on high-dimensional
representations extracted from a trained ResNet-50 model. This setting
is used to examine computational and estimator sensitivity as the
feature and conditioning dimensions increase.

All methods use the same predictive model outputs, preprocessing,
train/test partitions, and random seeds within each dataset.

\paragraph{Lightweight Protocol.}
The predictive model $M$ is trained once on the designated training
split and remains fixed during attribution. Let $N$ denote the size of
the full explanation dataset and let $n<N$ denote the lightweight
sample size. Observation-only lightweight experiments retain all
features and use the same output coordinates; hence $d'=d$, and
representation discrepancy is measured using ordinary Wasserstein
distance.

When a representation map
$G:\mathbb R^d\rightarrow\mathbb R^{d'}$ with $d'<d$ is explicitly
applied, the resulting output laws are compared using the Cai--Lim
augmented Wasserstein distance $\widetilde W_p$. Observation subsampling
alone does not invoke the different-dimensional Cai--Lim construction.

Lightweight fractions and sampling rules are fixed before attribution
agreement is evaluated. Ranking agreement, top-$K$ overlap, and
deletion or insertion agreement are reported as components of
$\mathfrak F_{N,n}$; they are not used to select the same lightweight
sample whose fidelity is subsequently assessed.

\begin{figure}[t]
\centering
\includegraphics[width=\linewidth]
{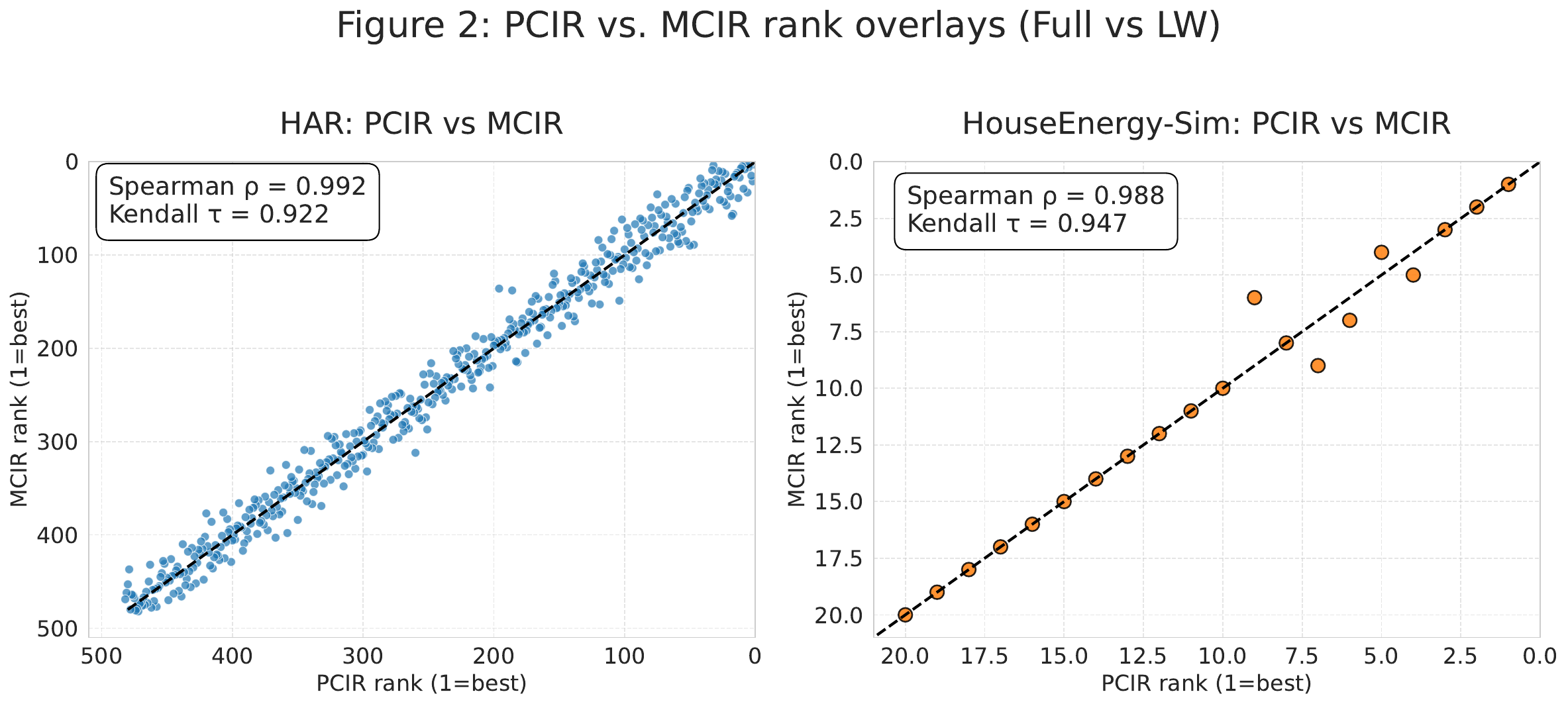}
\caption{PCIR--MCIR rank overlays for UCI HAR (left) and
HouseEnergy-Sim (right). Each point represents one feature, with its
PCIR rank on the horizontal axis and MCIR rank on the vertical axis.
Proximity to the diagonal therefore indicates agreement between the two
methods. Full and lightweight results are distinguished by marker type;
the displacement between corresponding markers indicates sensitivity
to lightweight subsampling. Full-versus-lightweight agreement is
quantified separately using Kendall--$\tau_b$ and Jaccard@$K$.}
\label{fig:expts-overlay}
\end{figure}

\paragraph{Auto-$\Phi$ and Neighbourhood Selection.}
We evaluate fixed neighbourhood sizes
$|\Phi(i)|\in\{0,1,3,5\}$ as a sensitivity analysis. The case
$|\Phi(i)|=0$ is retained only as the degenerate endpoint described in
Equation~\ref{eq:empty-neighbourhood-degeneracy}. When the screened pool
is nonempty, Auto-$\Phi$ searches over prespecified positive candidate
sizes using Equation~\ref{eq:auto-phi-objective}.

Results are reported for every candidate size in addition to the
selected neighbourhood. This reveals whether a stable choice remains
sensitive to omitted dependence and whether increasing $|\Phi(i)|$
degrades CMI estimation. Table~\ref{tab:estimator_ci} reports direct
agreement between fixed estimators,
Table~\ref{tab:switch_selection} reports estimator-selection
frequencies, and Appendix Table~\ref{tab:estimator_switching_all} reports the
performance of the switching rule. These tables therefore evaluate
different aspects of the estimation procedure.

\paragraph{Estimators and Stability Evaluation.}
MCIR estimates
$I_{P_{X,Q}}(Q;X_i\mid X_{\Phi(i)})$ and
$I_{P_{X,Q}}(Q;X_{\Phi(i)\cup\{i\}})$ within each selected local block.
Gaussian--copula estimation is used for continuous variables under the
stated copula approximation, with deterministic treatment of ties.
$k$NN estimators are evaluated for continuous, low- or
moderate-dimensional blocks, while plug-in estimators are used for
discrete variables with adequate cell counts.

For each lightweight sample size $n$, we use $B=50$ bootstrap
replicates to measure score and ranking variability. Estimator
sensitivity is evaluated using 200 additional resamples and reported
through Kendall--$\tau_b$, Spearman's $\rho$, Jaccard@$K$, and
confidence intervals. Because $k$NN-based CMI estimation may
deteriorate as $|\Phi(i)|$ increases, estimator variability is reported
as a function of conditioning dimension rather than assuming a common
concentration rate.

\paragraph{Faithfulness and Deletion Metrics.}
Deletion and insertion tests evaluate how predictive performance
changes when features are removed or added according to their
attribution ranking. For deletion, a faster performance decrease and a
lower deletion AUC indicate stronger faithfulness under the specified
perturbation protocol. We additionally report Spearman's $\rho$,
Kendall's $\tau_b$, Jaccard@$K$, Group-Jaccard@$K$, allocation drift,
runtime, and scalability.

\paragraph{Group-Level Stability.}
Exact Jaccard@$K$ measures agreement between feature identities.
Under strong dependence, different but highly correlated features may
be selected across subsamples while representing similar predictive
information. We therefore additionally report Group-Jaccard@$K$, which
measures agreement after grouping correlated features. Correlation
groups are constructed using an absolute Pearson-correlation threshold
of $0.85$, with sensitivity evaluated over thresholds from $0.60$ to
$0.85$.

Let $g(j)$ denote the correlation group containing feature $X_j$. For
top-$K$ index sets $S_K$ and $\widetilde S_K$, define
\begin{equation}
\medmath{
\operatorname{GJ@K}
=
\frac{
\left|
\{g(j):j\in S_K\}
\cap
\{g(j):j\in\widetilde S_K\}
\right|
}{
\left|
\{g(j):j\in S_K\}
\cup
\{g(j):j\in\widetilde S_K\}
\right|
}.
}
\label{eq:group-jaccard}
\end{equation}
Exact Jaccard@$K$ and Group-Jaccard@$K$ are reported together because
they measure feature-level and group-level agreement, respectively.

\paragraph{Runtime Protocol.}
Runtime is measured separately for candidate screening, neighbourhood
selection, MI/CMI estimation, bootstrap evaluation, and final ranking.
Within each reported timing comparison, all methods are evaluated in
the same recorded runtime session using the same explanation sample,
feature set, and repetition count. Wall-clock measurements remain
implementation- and hardware-specific.

MCIR runtime is not assumed to be uniformly linear in $n$ or $k$. Its
end-to-end cost depends on the screening procedure, bootstrap count
$B$, estimator candidates, maximum neighbourhood size $M_{\max}$, and
local estimation dimension. The corresponding operation count is given
in Equation~\ref{eq:auto-phi-complexity}. Runtime is therefore
reported empirically as a function of $n$, $k$, $|\Phi(i)|$, and
estimator family.

% ---- Baseline references (compact)

% ---- HAR lightweight utility summary

% ---- Aggregate agreement figure (requested)

% ============================================
% Results
% ============================================

\subsection{Implementation Details and Reproducibility}
\label{subsec:implementation}

Table~\ref{tab:reproducibility_summary} summarizes the common
implementation settings, while
Table~\ref{tab:experiment_protocol} reports dataset-specific models,
estimators, neighbourhood parameters, metrics, and repetition schemes.
All reported comparisons use fixed data partitions and recorded random
seeds.

 Repeated experiments are summarized using the stated sampling unit,
repetition count, descriptive effect estimates, and confidence
intervals or standard errors where available.
Experiments are executed through a common entry-point script, with
environment specifications and regeneration scripts available at
\url{https://anonymous.4open.science/r/MCIR-79B4/README.md}.

\begin{table*}[t]
\centering
\footnotesize
\setlength{\tabcolsep}{4pt}
\renewcommand{\arraystretch}{0.94}
\setlength{\aboverulesep}{0pt}
\setlength{\belowrulesep}{0pt}
\caption{Experimental configuration and reproducibility summary.}
\label{tab:reproducibility_summary}
\begin{tabular}{p{4.0cm}p{11.2cm}}
\toprule
\textbf{Category} & \textbf{Specification} \\
\midrule

\textbf{Software environment} &
Python 3.10; NumPy 1.26; SciPy 1.11; scikit-learn 1.3;
pandas 2.1; PyTorch; Matplotlib; and Seaborn. Exact package versions
are pinned in the repository environment file. \\

\textbf{Execution environment} &
Linux-based Google Colab Pro sessions. Deep-model training uses the
GPU assigned to the recorded session; attribution computations run on
CPU. The exact CPU, GPU, memory, and runtime configuration is recorded
with each experiment log. \\

\textbf{Randomness control} &
Seed 7 for train/test partitions and seed 42 for subsampling,
bootstrap, estimator, NumPy, scikit-learn, and PyTorch randomness.
Deterministic operations are used where supported. \\

\textbf{Gaussian--copula estimator} &
Rank transformation followed by empirical-CDF Gaussianization
$\widehat F\mapsto\Phi_{\mathcal N}^{-1}(\widehat F)$, deterministic
tie handling, and ridge-regularized covariance estimation. Multivariate
MI and CMI are computed through covariance determinant or Schur
complement formulas. Ridge values are reported per experiment in
Table~\ref{tab:experiment_protocol}. \\

\textbf{$k$NN estimator} &
KSG-type MI and conditional extensions for continuous, low- or
moderate-dimensional blocks; $k_{\mathrm{NN}}\in\{5,10,20\}$ with
Euclidean distance. Results are reported as a function of
$|\Phi(i)|$ because estimator behaviour may degrade with conditioning
dimension. \\

\textbf{Discrete estimator} &
Plug-in MI and CMI estimation from empirical probability masses.
Finite-sample corrections and minimum cell-count requirements are
reported with the corresponding experiment. \\

\textbf{Bootstrap and repetitions} &
$B_{\mathrm{rank}}=50$ replicates for ranking-stability analysis and
$B_{\mathrm{est}}=200$ replicates for estimator-sensitivity intervals.
Estimator-switching ablations use 30 independent repetitions.
Dataset-level results use five train/test splits unless otherwise
specified in Table~\ref{tab:experiment_protocol}. \\

\textbf{Candidate screening} &
Dependence screening is performed on the training data only. Pearson
screening uses $|\rho|\geq0.60$ and retains at most
$M_{\max}=6$ candidates per target feature. Alternative screening
statistics are reported for the corresponding experiments in
Table~\ref{tab:experiment_protocol}. \\

\textbf{Auto-$\Phi$ selection} &
Nested candidate neighbourhoods are evaluated using
Equation~\ref{eq:auto-phi-objective}, which combines bootstrap ranking
dispersion, normalized residual conditional dependence, and a
conditioning-size penalty. When the screened pool is nonempty,
Auto-$\Phi$ searches over the prespecified positive sizes
$|\Phi(i)|\in\{1,3,5\}$ permitted by that pool. The case
$|\Phi(i)|=0$ is included only as an unconditioned baseline in the
fixed-size sensitivity analysis. \\

\textbf{Dimensionality control} &
The selected conditioning set satisfies
$|\widehat\Phi(i)|\leq M_{\max}$. Estimator variability and ranking
sensitivity are reported by conditioning dimension rather than
assuming dimension-independent MI/CMI accuracy. \\

\textbf{Lightweight sampling} &
Uniform sampling without replacement for regression and stratified
sampling for classification. Prespecified fractions are
$n/N\in\{0.10,0.25,0.50\}$, with $0.25$ used as the default where
stated. The predictive model remains fixed during explanation. \\

\textbf{Representation comparison} &
Observation-only experiments have $d'=d$ and use ordinary Wasserstein
distance. Experiments applying an explicit map
$G:\mathbb R^d\rightarrow\mathbb R^{d'}$, $d'<d$, report $G$, $d'$,
and the Cai--Lim augmented Wasserstein distance
$\widetilde W_p$ separately. \\

\textbf{Ranking metrics} &
Kendall's $\tau_b$, Spearman's $\rho$, Jaccard@$K$ for
$K\in\{5,10\}$, and Group-Jaccard@$K$ under the stated correlation
threshold. \\

\textbf{Faithfulness metrics} &
Deletion and insertion curves, deletion AUC, classification accuracy
and macro-F1, and regression $R^2$ and RMSE. Predictive ROC-AUC is
reported only for applicable classification experiments and is
distinguished from deletion AUC. \\

\textbf{Statistical reporting} &
 Repeated experiments use the
stated sampling unit and repetition count and report descriptive effect
estimates, confidence intervals, or standard errors where available. \\

\textbf{Reported dimensions} &
Redundancy robustness, estimator sensitivity, rank stability,
lightweight fidelity, faithfulness, and computational cost. \\

\textbf{Code and artifacts} &
Implementations of MCIR and PCIR; MI/CMI estimators; screening and
Auto-$\Phi$ procedures; lightweight sampling; bootstrap protocols;
environment files; run logs; and scripts regenerating all tables and
figures. \\

\bottomrule
\end{tabular}
\end{table*}

The lightweight sample size $(N\rightarrow n)$ and output
representation dimension $(d\rightarrow d')$ are recorded separately.
The Cai--Lim different-dimensional comparison is used only when an
explicit representation map $G$ changes the output dimension; row
subsampling alone is evaluated in the common output space.
\begin{table*}[t]
\centering
\scriptsize
\setlength{\tabcolsep}{3pt}
\renewcommand{\arraystretch}{0.96}
\caption{Per-experiment protocol. All neighbourhoods are constructed
from training data, and predictive models remain fixed during
attribution.}
\label{tab:experiment_protocol}
\begin{adjustbox}{width=\linewidth}
\begin{tabular}{
p{0.6cm}
p{2.6cm}
p{2.3cm}
p{3.1cm}
p{2.3cm}
p{2.8cm}
p{4.2cm}
p{1.8cm}
}
\toprule
\textbf{ID} &
\textbf{Dataset} &
\textbf{Model} &
\textbf{Neighbourhood protocol} &
\textbf{Estimator} &
\textbf{Key settings} &
\textbf{Evaluation} &
\textbf{Repetitions} \\
\midrule

E1 &
Synthetic redundancy benchmark &
Linear model / RF &
Pearson candidate graph; top
$M_{\max}=3$ neighbours &
Gaussian--copula &
$B=200$ bootstrap replicates &
Duplicate-injection response, unique-information collapse, rank
stability, and comparison with marginal MI and SHAP &
20 runs \\

E2 &
UCI HAR &
Random forest &
Distance-correlation screening followed by Auto-$\Phi$ &
Gaussian--copula and $k$NN &
$k_{\mathrm{NN}}=5$; fixed tie rule &
Kendall-$\tau_b$, Spearman-$\rho$, Jaccard@$K$,
Group-Jaccard@$K$, and deletion faithfulness &
10 subsamples \\

E3 &
HouseEnergy-Sim &
Random forest &
Pearson screening; fixed-size and Auto-$\Phi$ sensitivity &
Gaussian--copula; $k$NN in the estimator ablation &
Neighbourhood sizes are study-specific; Auto-$\Phi$ penalty
sensitivity uses $\mathcal R=\{1,2,3,5\}$ &
Rank stability, redundancy robustness, estimator sensitivity,
deletion AUC, and runtime &
15 subsamples unless otherwise stated \\

E4 &
Norwegian residential demand &
XGBoost regressor &
Training-data correlation screening; fixed local neighbourhood &
Gaussian--copula &
Chronological $80/20$ split; $n=200$ explanation observations;
$|\Phi(i)|=5$; top-$K=8$; seed $42$ &
Full--lightweight rank and head-set agreement, Group-Jaccard@$8$,
cumulative $R^2$-drop AUC, and runtime &
10 explanation samples \\

E5 &
CIFAR-10 representations &
Fixed ResNet-50 representation and MLP probe &
Correlation screening with a fixed local neighbourhood &
Gaussian--copula &
$|\Phi(i)|=5$ &
Deletion AUC and runtime under the fixed Gaussian--copula configuration &
5 runs \\
E6 &
HouseEnergy-Sim &
Fixed nonlinear output $Q$; XGBoost surrogate refits &
Pearson screening; top three neighbours for MCIR &
Gaussian--copula MCIR; LOCO; region-weighted LOCO; Shapley-effects
plug-in &
$n_{\mathrm{train}}=1800$, $n_{\mathrm{cal}}=800$,
$n_{\mathrm{test}}=1400$; top-$4$; 24 permutations &
Core-feature recovery, positive attribution allocation,
cumulative $R^2$-drop AUC, cross-method agreement, and runtime &
10 runs \\

E7 &
High-dimensional HouseEnergy-Sim &
Fixed nonlinear output $Q$ &
Pearson screening from the estimator-specific reference sample &
Gaussian--copula and continuous $k$NN MI/CMI &
$k=24$ features; $n\in\{500,1000,2000\}$;
$|\Phi|\in\{1,2,3,5,8,12\}$; reference $n=5000$;
$k_{\mathrm{NN}}=5$ &
Agreement with the estimator-specific reference, component behaviour,
top-$4$ overlap, core-feature recovery, and runtime &
8 runs per setting \\
\bottomrule
\end{tabular}
\end{adjustbox}
\end{table*}

\section{Results}
\label{sec:results}

The results are organized by research question. For \textbf{RQ1}, the
controlled redundancy experiments show that MCIR assigns lower
conditional attribution to duplicated and near-duplicated features when
their information is already represented in $X_{\Phi(i)}$. This is an
empirical finite-sample result; exact population redundancy collapse is
characterized separately by
Equation~\ref{eq:mcir-zero-characterization}. Full-versus-lightweight attribution agreement is evaluated separately
in Section~\ref{subsubsec:rq2-lightweight} using complete-ranking,
top-$K$, and faithfulness diagnostics. Cross-method agreement between
PCIR and MCIR is reported separately and is not used as evidence of
lightweight fidelity.

For \textbf{RQ2}, the observation-only lightweight experiments reduce
the explanation sample from $N$ to $n$ while retaining the same
features and output coordinates. Reducing the explanation sample lowers runtime in the evaluated
configurations, with the observed computational effect depending on the
dataset, estimator, feature count, and conditioning neighbourhood.
Full-versus-lightweight agreement is evaluated separately through the
components of $\mathfrak F_{N,n}$. Empirically, agreement was stronger on the synthetic benchmark and more
moderate at the exact feature level on UCI HAR. Representation, ranking, and deletion-curve
agreement are reported as separate components of
$\mathfrak F_{N,n}$; output-law similarity is not treated as proof of
ranking preservation.

For \textbf{RQ3}, removing highly ranked MCIR features produces
systematic degradation in the fixed model's predictive performance.
Faithfulness is quantified using deletion AUC together with accuracy,
macro-F1, or $R^2$, as appropriate. We additionally compare MCIR with
predictive-risk and variance-allocation functionals on HouseEnergy-Sim.
LOCO measures the loss incurred after excluding a covariate, regional
LOCO examines whether such loss varies across regions of the input
space, and Shapley effects allocate explained output variance among
dependent inputs. These targets complement MCIR's measurement of unique
conditional information and need not induce the same feature ordering.
All perturbation results measure predictive association under the
stated protocol and do not establish causal importance.

\subsection{UCI HAR and HouseEnergy-Sim Results}
\label{res}

We first examine rank agreement, redundancy sensitivity, deletion
faithfulness, estimator sensitivity, and runtime on UCI HAR and
HouseEnergy-Sim. Experimental settings are specified in
Section~\ref{setup}.
\subsubsection{Dependence-Aware Attribution under Strong Dependence (RQ1)}
\label{subsubsec:rq1-dependence}

Block-level aggregation can obscure feature-level conditional
contributions when several predictors encode similar information. MCIR
instead evaluates the unique predictive information contributed by
$X_i$ relative to its selected neighbourhood $X_{\Phi(i)}$.
\begin{figure}[h]
\centering
\begin{minipage}{.48\linewidth}
\centering
\includegraphics[width=\linewidth]{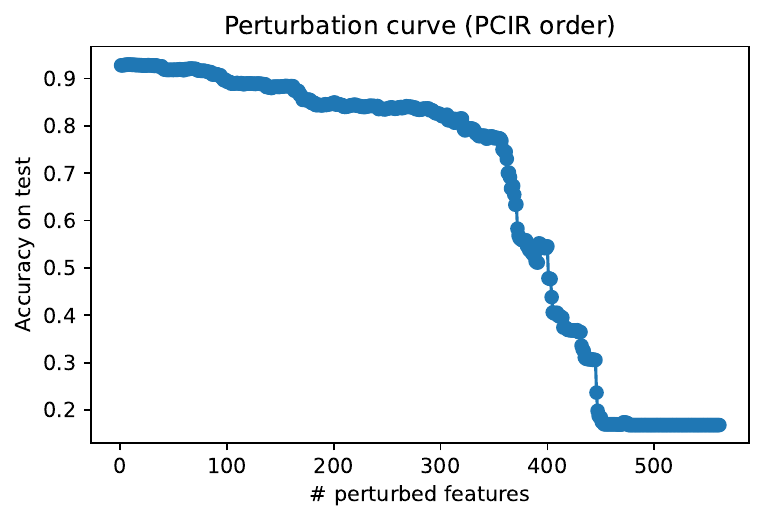}\\[2pt]
\includegraphics[width=\linewidth]{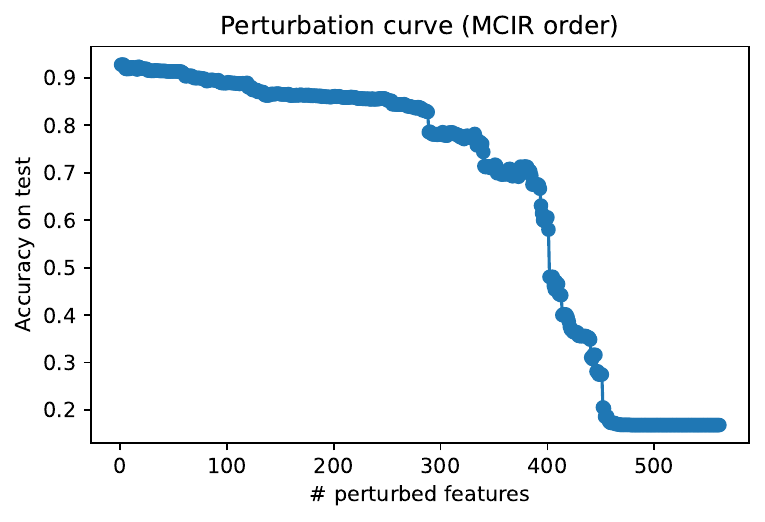}
\captionof{figure}{\textbf{UCI HAR perturbation analysis.} Test
accuracy as progressively higher-ranked PCIR and MCIR features are
perturbed. A faster decrease indicates stronger faithfulness under the
specified perturbation protocol; ranking stability is evaluated
separately.}
\label{fig:expts-har-faith}
\end{minipage}
\hfill
\begin{minipage}{.48\linewidth}
\centering
\includegraphics[width=\linewidth]
{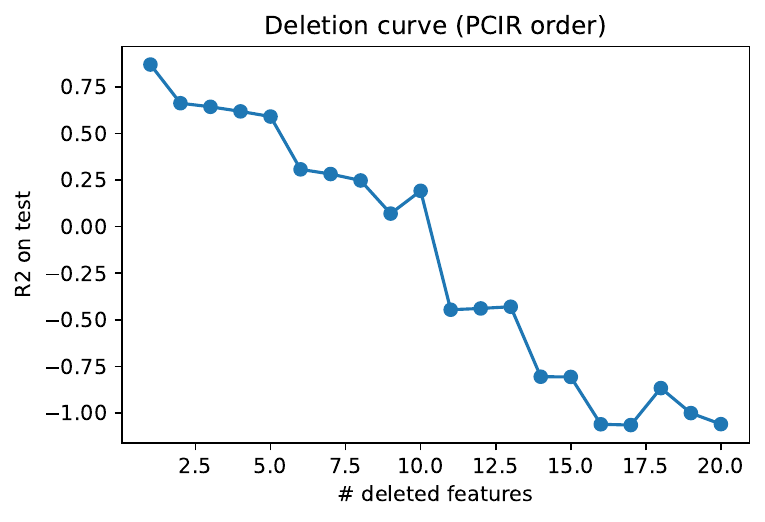}\\[2pt]
\includegraphics[width=\linewidth]
{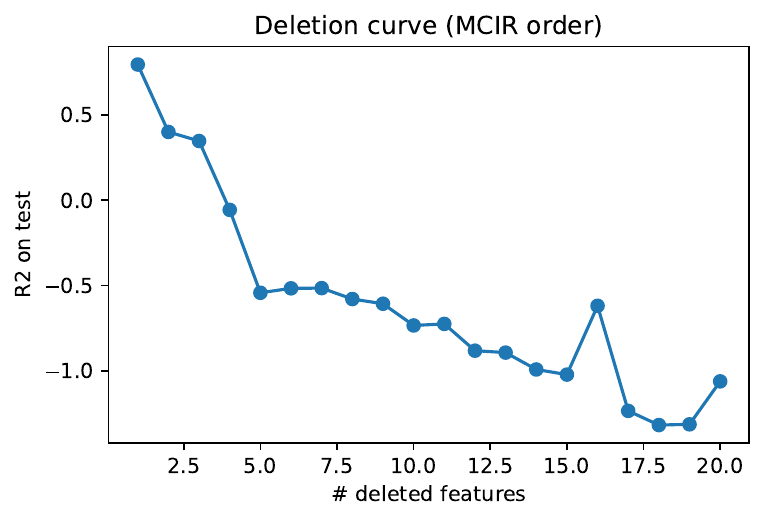}
\captionof{figure}{\textbf{HouseEnergy-Sim deletion analysis.} Test
$R^2$ as progressively higher-ranked PCIR and MCIR features are
removed. Lower deletion AUC indicates stronger faithfulness under the
specified deletion protocol.}
\label{fig:expts-house-faith}
\end{minipage}
\end{figure}

Figure~\ref{fig:expts-overlay} compares PCIR and MCIR ranks, while
full-versus-lightweight stability is measured separately through the
displacement between paired markers and the corresponding
Kendall-$\tau_b$ and Jaccard@$K$ statistics. Thus, proximity to the
diagonal measures agreement between PCIR and MCIR, not agreement
between the full and lightweight environments.
The deletion and perturbation results in
Figures~\ref{fig:expts-har-faith} and
\ref{fig:expts-house-faith} show that removing or perturbing highly
ranked MCIR features reduces predictive performance. Comparative curve
steepness varies across datasets and baselines; predictive faithfulness
is therefore assessed using the prespecified deletion protocol and AUC
rather than interpreted as a universal advantage of MCIR.

The deletion experiment measures predictive faithfulness rather
than causal influence. The early part of each curve reflects the effect
of removing the highest-ranked features. Later portions may be affected
by feature interactions, remaining redundancy, and increasingly
out-of-distribution perturbations and are therefore interpreted
separately from the initial deletion response and deletion AUC.
Table~\ref{tab:sota-merged} reports cross-method ranking agreement under
the common lightweight configuration. PCIR and MCIR obtain
$\rho=0.83$, $\tau_b=0.66$, and Jaccard@$20=0.95$; these values do not
measure full--lightweight MCIR fidelity. Separately,
Table~\ref{tab:har_lw} reports Jaccard@$30=0.622$ in a
reduced-training sensitivity experiment involving a differently fitted
model and is therefore not treated as fixed-model fidelity evidence.
Exact and Group-Jaccard are both reported because correlated features
may act as interchangeable representatives of similar predictive
information. The lower agreement of MCIR with KernelSHAP, SAGE, and
HSIC indicates different ranking behaviour, not the correctness or
superiority of any method.
\paragraph{Controlled Redundancy Experiment.}
Let
\begin{equation}
\medmath{
X_2
=
\alpha X_1+\sqrt{1-\alpha^2}\,\widetilde Z,
\qquad
Q
=
\beta_1X_1+\beta_3X_3+\varepsilon,
}
\label{eq:synthetic-redundancy-model}
\end{equation}
where $X_1$, $X_3$, $\widetilde Z$, and $\varepsilon$ are mutually
independent and $\alpha\in[0,1]$. Then
\begin{equation}
\medmath{
Q\perp\!\!\!\perp X_2\mid X_1,
\qquad
I(Q;X_2\mid X_1)=0,
}
\label{eq:synthetic-conditional-redundancy}
\end{equation}
for every $\alpha$. Increasing $\alpha$ strengthens the marginal
association between $X_2$ and $Q$, while its population unique
predictive information relative to $X_1$ remains zero. This experiment
therefore tests whether a finite-sample MCIR estimator remains close to
the conditional-redundancy value while marginal methods increasingly
assign attribution to $X_2$.
If neighbourhoods are constructed symmetrically so that
$X_1\in\Phi(2)$ and $X_2\in\Phi(1)$, both features become conditionally
redundant as $\alpha\to1$. Identifying one as the representative feature
requires an asymmetric neighbourhood rule, group attribution, or
external domain information.
The experiment shows that conditional and marginal attribution answer
different questions. MCIR measures information unique to $X_2$ beyond
$X_1$, whereas marginal MI, HSIC, permutation importance, and global
SHAP may assign nonzero attribution to $X_2$ through its association
with $X_1$. Figure~\ref{fig:collapse} visualizes this controlled population
construction. When $\Phi(2)=\{1\}$, the redundant proxy satisfies
$C_2=0$ for every $\alpha$, while its normalized marginal information
increases with $\alpha$. The anchor score relative to the proxy
decreases as the proxy becomes more accurate and vanishes at the
exact-duplicate limit.
\begin{figure}[t]
\centering
\includegraphics[width=\linewidth]{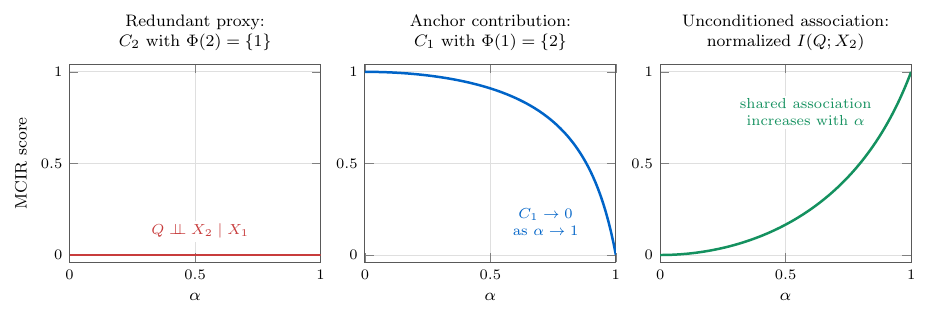}
\caption{Population behaviour under a Gaussian redundancy construction.
Let $X_1,X_3,E_2$ be independent standard Gaussian variables,
$X_2=\alpha X_1+\sqrt{1-\alpha^2}E_2$, and
$Q=X_1+0.5X_3+\varepsilon$, where
$\varepsilon\sim\mathcal N(0,0.25)$ is independent.
\textbf{Left:} because
$Q\perp\!\!\!\perp X_2\mid X_1$, the redundant proxy has
$C_2=0$ for every $\alpha$ when $\Phi(2)=\{1\}$.
\textbf{Middle:} the unique predictive information of $X_1$ relative
to $X_2$ decreases as the proxy becomes more accurate and vanishes at
the exact-duplicate limit $\alpha=1$.
\textbf{Right:} the normalized marginal information
$I(Q;X_2)/I(Q;X_1)$ increases with $\alpha$, illustrating how an
unconditioned association measure can credit a redundant proxy.
At $\alpha=1$, neither feature can be identified as the ``original''
from statistical dependence alone when each is conditioned on the
other.}
\label{fig:collapse}
\vspace{-3mm}
\end{figure}
\paragraph{Ablation on Neighbourhood Size.}
The case $|\Phi(i)|=0$ is not a marginal-ranking version of MCIR.
Indeed,
\begin{equation}
\medmath{
\Phi(i)=\varnothing,\quad I(Q;X_i)>0
\quad\Longrightarrow\quad
U_i=J_i=I(Q;X_i),\quad C_i=1.
}
\label{eq:empty-neighbourhood-degeneracy}
\end{equation}
It is therefore included only as an endpoint diagnostic and is excluded
from comparative MCIR ranking claims.
\begin{figure}[t]
\centering
\includegraphics[width=0.5\linewidth]{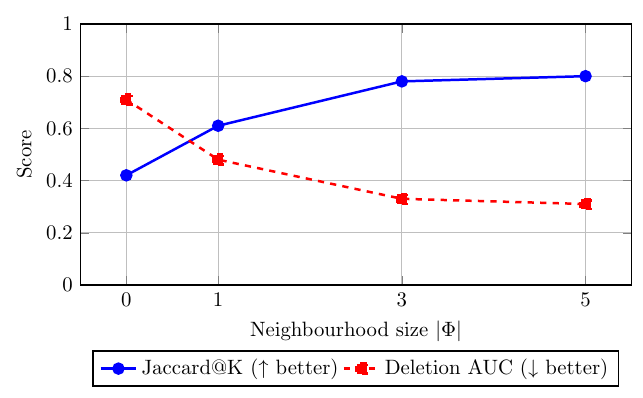}
\caption{Sensitivity to neighbourhood size. The case
$|\Phi(i)|=0$ is shown only as a degenerate endpoint: whenever
$I(Q;X_i)>0$, Definition~\ref{def:mcir} gives $C_i=1$. Nonempty
neighbourhoods enable MCIR to distinguish unique from shared
information. Moderate sizes provide the best empirical balance between
residual redundancy and CMI estimation variability in this experiment.}
\label{fig:phi_ablation}
\end{figure}
Among nonempty candidates, Figure~\ref{fig:phi_ablation} shows that
$|\Phi(i)|\in\{1,3\}$ improves redundancy suppression and deletion
behaviour relative to underspecified neighbourhoods, while larger
conditioning sets provide diminishing returns and greater estimator
variability. Accordingly, when the screened candidate pool is nonempty,
Auto-$\Phi$ searches over $|\Phi(i)|\geq1$; the empty neighbourhood is
used only when no candidate passes screening. Selection follows
Equation~\ref{eq:auto-phi-objective} and is evaluated together with
neighbourhood-size sensitivity.
\paragraph{Auto-$\Phi$ Penalty Sensitivity.}
We evaluated the Auto-$\Phi$ penalties over
$\lambda_B\in\{0,0.25,0.5,1\}$ and
$\lambda_D\in\{0,0.05,0.10,0.25\}$ using
$n\in\{500,1000,2000\}$, candidate sizes
$\mathcal R=\{1,2,3,5\}$, 30 bootstrap resamples, and 20 independent
repetitions per sample size. Across all 48 sample-size--penalty
configurations, the criterion selected $r^\star=1$ in every repetition.
At $r=1$, normalized residual dependence ranged from $0.0156$ to
$0.0196$, while bootstrap ranking dispersion was lower than for
$r\in\{2,3,5\}$. Increasing the neighbourhood size further reduced
residual dependence but increased ranking variability. Thus, selection
was insensitive to the evaluated penalty values in this controlled
experiment. This result does not show that the residual-dependence or
dimension penalty improves upon variance-only selection, nor does it
establish population-optimal neighbourhood recovery.

\begin{table}[htbp]
\centering
\scriptsize
\setlength{\tabcolsep}{3.5pt}
\renewcommand{\arraystretch}{0.96}
\caption{Auto-$\Phi$ penalty sensitivity on HouseEnergy-Sim. Values are
means over 20 repetitions. Every evaluated penalty pair selected
$r^\star=1$ in every repetition.}
\label{tab:auto-phi-penalty-sensitivity}
\begin{tabular}{ccccc}
\toprule
$n$ & $r^\star$ & Kendall $\tau_b$ & Jaccard@10 &
Driver Recall@10 \\
\midrule
500  & 1 & 0.391 & 0.514 & 0.538 \\
1000 & 1 & 0.445 & 0.583 & 0.638 \\
2000 & 1 & 0.514 & 0.656 & 0.650 \\
\bottomrule
\end{tabular}
\end{table}
\paragraph{Redundancy Stress Test: Ordinal Stability and Allocation Drift.}

We evaluate robustness to explicit predictor redundancy using the
nonlinear HouseEnergy-Sim benchmark with $N=4000$ observations. The
four original drivers are \texttt{Base\_load}, \texttt{HVAC},
\texttt{Fridge}, and \texttt{Solar}. The ground-truth target is
generated as
\begin{equation}
\medmath{
Y^\star
=
5.0\,X_{\mathrm{Base}}
+
2.5\,X_{\mathrm{HVAC}}
+
1.2\,X_{\mathrm{Fridge}}
+
1.5\,X_{\mathrm{Base}}X_{\mathrm{HVAC}}
-
0.8\,X_{\mathrm{Solar}}^2
+
\varepsilon,
\qquad
\varepsilon\sim\mathcal N(0,0.1^2).
}
\label{eq:houseenergy-generator}
\end{equation}
A fixed predictive model is trained to approximate $Y^\star$, and MCIR
explains its output representation $Q=M(X)$. The coefficient structure
designates \texttt{Base\_load} as the anchor feature while retaining a
nonlinear interaction with \texttt{HVAC}.
Redundancy is introduced by adding $r$ noisy copies of the anchor:
\begin{equation}
\medmath{
X_{\mathrm{dup},j}
=
X_{\mathrm{Base}}+\eta_j,
\qquad
\eta_j\overset{\mathrm{i.i.d.}}{\sim}\mathcal N(0,0.02^2),
\qquad
r\in\{0,5,10,20,40\}.
}
\label{eq:duplicate-injection}
\end{equation}
\begin{wraptable}{r}{0.50\linewidth}
\centering
\small
\setlength{\tabcolsep}{3pt}
\renewcommand{\arraystretch}{0.96}
\caption{HouseEnergy-Sim redundancy stress test. Values are means over
the stated repetitions; lower is better.}
\label{tab:redundancy_stress}
\begin{tabular}{@{}lcccccc@{}}
\toprule
$r$ &
\multicolumn{2}{c}{\textbf{MCIR}} &
\multicolumn{2}{c}{\textbf{MI}} &
\multicolumn{2}{c}{\textbf{PCIR}} \\
\cmidrule(lr){2-3}
\cmidrule(lr){4-5}
\cmidrule(lr){6-7}
&
$\Delta_{\rm rank}$ &
$\Delta_{\rm alloc}$ &
$\Delta_{\rm rank}$ &
$\Delta_{\rm alloc}$ &
$\Delta_{\rm rank}$ &
$\Delta_{\rm alloc}$ \\
\midrule
0  & 0.0000 & 0.0105 & 0.0000 & 0.0103 & 0.0000 & 0.0015 \\
5  & 0.4667 & 0.4473 & 0.0000 & 0.4495 & 1.0000 & 0.2178 \\
10 & 0.4667 & 0.5261 & 1.3333 & 0.5288 & 1.0000 & 0.2499 \\
20 & 0.4667 & 0.5785 & 1.2667 & 0.5789 & 1.0000 & 0.2747 \\
40 & 0.5333 & 0.6095 & 1.3333 & 0.6077 & 1.0000 & 0.2910 \\
\bottomrule
\end{tabular}
\end{wraptable}
The experiment measures stability on the four original features using
\begin{equation}
\medmath{
\Delta_{\mathrm{rank}}(r)
=
1-\tau_b(\bm s_0,\bm s_r),
\qquad
\Delta_{\mathrm{alloc}}(r)
=
\|\bm s_0-\bm s_r\|_2,
}
\label{eq:redundancy-stability-metrics}
\end{equation}
where $\bm s_0$ is the reference attribution vector and $\bm s_r$ is
the corresponding vector after injecting $r$ duplicates. The same
within-method normalization is applied before computing allocation
drift. Lower values indicate greater ordinal or allocation stability.
Unlike $d_{\mathrm{rank}}$ in
Equation~\ref{eq:rank-discrepancy},
$\Delta_{\mathrm{rank}}=1-\tau_b$ ranges from $0$ to $2$.

\begin{figure}[t]
\centering
\includegraphics[width=0.95\linewidth]{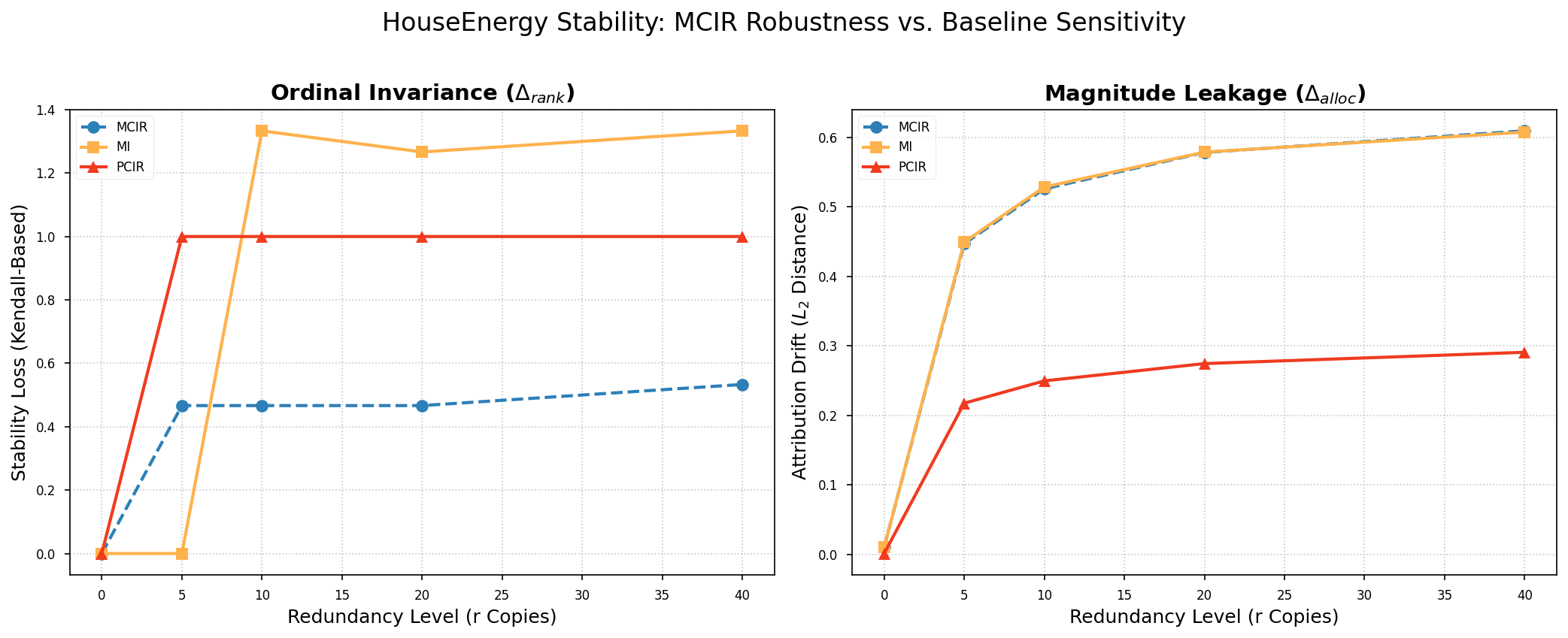}
\caption{Redundancy stress test on HouseEnergy-Sim.
\textbf{Left:} mean ordinal instability
$\Delta_{\mathrm{rank}}=1-\tau_b$ on the original features.
\textbf{Right:} mean allocation drift
$\Delta_{\mathrm{alloc}}$ on the same features. Lower values indicate
greater stability. MCIR exhibits lower ordinal instability than MI and
PCIR at the larger redundancy levels, while MCIR and MI show similar
allocation drift.}
\label{fig:redundancy_stress}
\end{figure}

\paragraph{Ordinal Stability.}
At $r=0$, ordinal instability is zero by construction because each
method is compared with its own no-duplicate reference ranking. After
duplicates are introduced, PCIR has
$\Delta_{\mathrm{rank}}=1.0$ for all evaluated $r\geq5$. MI remains
unchanged at $r=5$ but reaches
$\Delta_{\mathrm{rank}}\in\{1.2667,1.3333\}$ at the larger redundancy
levels. MCIR has
$\Delta_{\mathrm{rank}}=0.4667$ for $r\in\{5,10,20\}$ and $0.5333$ at
$r=40$. Thus, MCIR exhibits lower observed ordinal instability than
both baselines under the larger redundancy configurations.

\paragraph{Allocation Drift.}
MCIR and MI exhibit similar allocation drift as redundancy increases.
At $r=40$, their respective values are $0.6095$ and $0.6077$. PCIR has
a smaller allocation drift of $0.2910$ but a larger ordinal
instability. This difference shows that allocation drift and ranking
instability measure distinct behaviours: a relatively small change in
attribution magnitude does not necessarily preserve feature ordering.

The small nonzero allocation values at $r=0$ reflect variability across
the repeated estimation runs relative to the fixed reference vector.
If $\bm s_0$ is compared with itself without resampling, these values
must equal zero.

Figure~\ref{fig:redundancy_stress} and
Table~\ref{tab:redundancy_stress} therefore provide complementary
evidence. In this controlled benchmark, MCIR preserves the ordering of
the designated original drivers more effectively than MI and PCIR at
the larger tested redundancy levels, while not eliminating allocation
drift. Because the anchor is designated by construction, this
experiment evaluates robustness relative to a known reference feature;
it does not imply that MCIR can identify an ``original'' feature among
statistically indistinguishable duplicates without additional
structure.
\begin{table}[htbp]
\centering
\footnotesize
\setlength{\tabcolsep}{5pt}
\renewcommand{\arraystretch}{1.05}
\caption{\textbf{UCI HAR training-subsample sensitivity.} The reduced
configuration uses half of the training rows and therefore evaluates a
different fitted model. This table is reported separately from the
fixed-model lightweight attribution analysis.}
\label{tab:har_lw}
\begin{tabular}{lcccccc}
\toprule
\textbf{Setting} &
\textbf{Training rows} &
\textbf{Accuracy} &
\textbf{Macro-F1} &
$\boldsymbol{D_{\mathrm{KL}}}$ &
\textbf{Jaccard@30} &
\textbf{F1 ratio} \\
\midrule
Full training &
7{,}352 & 0.930 & 0.928 & 0.000 & 1.000 & 1.000 \\
Reduced training &
3{,}676 & 0.917 & 0.914 & 7.174 & 0.622 & 0.985 \\
\bottomrule
\end{tabular}
\end{table}
\subsubsection{Lightweight Agreement and Runtime Profile (RQ2)}
\label{subsubsec:rq2-lightweight}

The observation-only lightweight experiments use a fixed predictive
model and reduce the explanation sample from $N$ to $n<N$, while
retaining the same features and output coordinates. Thus, the
experiments compare full and lightweight explanation
\emph{environments}, not separately trained full and lightweight
models.

\begin{figure*}[t]
\centering
\includegraphics[width=\linewidth]{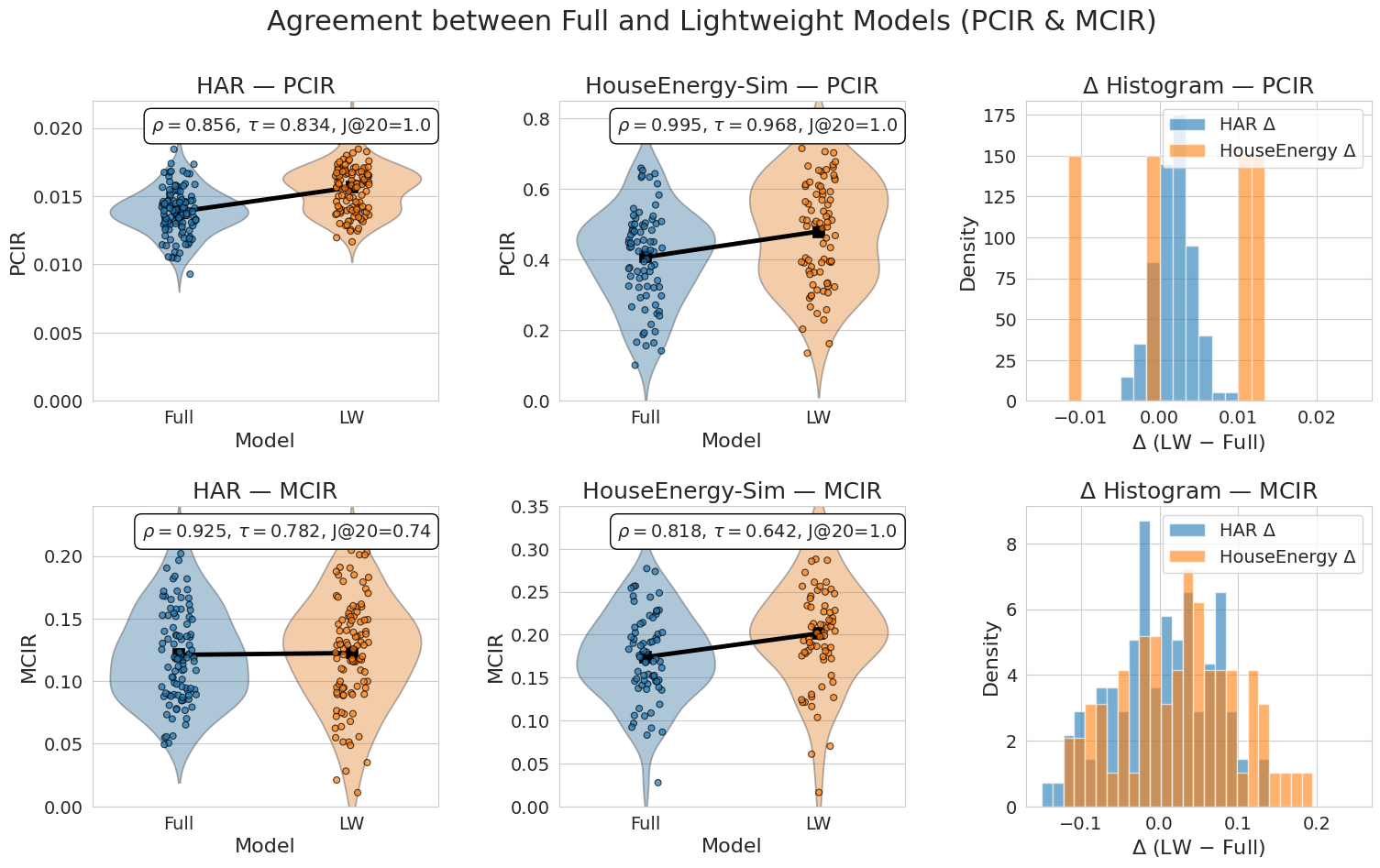}
\caption{\textbf{Full-versus-lightweight attribution comparison.}
The first row reports PCIR results for UCI HAR and HouseEnergy-Sim, and
the second row reports MCIR results. Each panel presents full and
lightweight score distributions, paired feature-level scores, and
mean confidence intervals. The $\Delta$ histograms show
$\widehat C_{i,n}-\widehat C_{i,N}$. Inset badges report Spearman's
$\rho$, Kendall's $\tau_b$, and Jaccard@$20$. These quantities measure
different aspects of score and ranking agreement.}
\label{fig:metric-first}
\end{figure*}

Across the evaluated lightweight configurations, MCIR achieves
Kendall-$\tau_b$ values of $0.72$--$0.89$ for head rankings and
$0.55$--$0.76$ for complete rankings. Runtime varies substantially
with feature count, conditioning size, and estimator: at $n=2000$,
Gaussian--copula MCIR requires $0.4487$\,s on HouseEnergy-Sim
($k=20$) and $451.412$\,s on UCI HAR ($k=561$), whereas $k$NN MCIR
requires $4.3769$\,s and $139.738$\,s, respectively. The faster
estimator therefore depends on the dataset and implementation, so these
measurements do not imply a universal runtime ordering or speedup.

Figure~\ref{fig:metric-first} reports score and ranking agreement,
whereas deletion-curve agreement is evaluated separately in
Figures~\ref{fig:expts-har-faith} and
\ref{fig:expts-house-faith}. Together with the representation
discrepancy, these quantities form the fidelity profile
$\mathfrak F_{N,n}$. None of its individual components is treated as
proof of the others.

The UCI HAR results in Table~\ref{tab:har_lw} involve reduced model
training and are therefore reported as a separate training-subsample
sensitivity analysis. They are not used as evidence for fixed-model
lightweight attribution fidelity.

\begin{table}[htbp]
\centering
\scriptsize
\setlength{\tabcolsep}{3pt}
\renewcommand{\arraystretch}{0.97}
\caption{Score-computation costs on a lightweight explanation sample
of size $n$. Here, $k$ is the number of features,
$m_\Phi$ is the conditioning size,
$d_Q$ is the output-representation dimension, and
$r=d_Q+m_\Phi+1$. Screening, Auto-$\Phi$, bootstrap, and estimator
selection costs are excluded and accounted for separately in
Equation~\ref{eq:auto-phi-complexity}.}
\label{tab:complexity}
\begin{adjustbox}{width=\linewidth}
\begin{tabular}{p{3.5cm}p{5.2cm}p{7.0cm}}
\toprule
\textbf{Method} &
\textbf{Score-computation cost} &
\textbf{Scope and qualifications} \\
\midrule

PCIR &
$\mathcal O(nk)$ after preprocessing &
Uses vectorized feature-wise dispersion calculations. Rank
preprocessing, when applied, adds up to $\mathcal O(nk\log n)$. \\

MCIR (Gaussian--copula) &
$\mathcal O\!\left(
k[rn\log n+nr^2+r^3]
\right)$ &
Includes rank Gaussianization, covariance estimation, and matrix
factorization for fixed neighbourhoods. \\

MCIR ($k$NN) &
Expected $\mathcal O(kn\log n)$ for fixed low $r$ &
Tree-based neighbour search; constants and estimator accuracy
deteriorate with dimension, and worst-case search may approach
$\mathcal O(kn^2)$. \\

MCIR (plug-in) &
$\mathcal O(knr)$ for fixed alphabets &
Count construction is linear in observations, but memory and sample
requirements grow with the number of occupied joint cells. \\

BlockCIR &
$\mathcal O(nk)$ after block construction &
Covers within-block statistics and aggregation; block discovery is
excluded. \\

KernelSHAP &
$\mathcal O(S(k)\,\mathsf E(n))$ &
Requires $S(k)$ model evaluations; conditional variants add dependence
estimation and conditional-sampling overhead. \\

SAGE &
$\mathcal O(S(k)\,\mathsf E(n))$ &
Depends on the coalition-sampling budget and the cost
$\mathsf E(n)$ of evaluating the predictive model. \\

HSIC &
$\mathcal O(kn^2)$ without approximation &
Quadratic Gram-matrix computation; low-rank, random-feature, or
Nystr{\"o}m approximations can reduce this cost. \\

Marginal KSG-MI &
Expected $\mathcal O(kn\log n)$ in low dimension &
Requires one neighbour-search problem per feature; performance depends
on sample and representation dimension. \\

\bottomrule
\end{tabular}
\end{adjustbox}
\end{table}

Reducing $N$ to $n$ lowers the number of observations processed by each
score computation, but the realized speed-up also depends on fixed
overheads, screening, bootstrap replication, and estimator behaviour.
Accordingly, Table~\ref{tab:complexity} reports score-only asymptotic
costs, while end-to-end runtime is measured directly for each
experimental configuration.
\paragraph{Lightweight Stability under Feature Dependence.}

This experiment evaluates ranking stability under reduced data
availability, whereas deletion AUC evaluates predictive faithfulness
under feature removal. Exact Jaccard@$K$ measures agreement between
feature identities, while Group-Jaccard@$K$ measures agreement after
mapping correlated features to common groups. These metrics therefore
capture complementary aspects of explanation quality and are not
interpreted interchangeably.

For example, two highly correlated features $X_1$ and $X_2$
may act as interchangeable representatives within a fitted model. Two
rankings may select different representatives and consequently have low
Exact Jaccard but high Group-Jaccard. Similarly, deleting only one
representative may produce a gradual performance decrease when another
correlated feature remains available. This behaviour does not invalidate
deletion analysis; rather, it distinguishes identity-level stability,
group-level stability, and deletion faithfulness.

We compare full and lightweight rankings under identical predictive
models, data partitions, feature spaces, and subsampling protocols.
Table~\ref{tab:jaccard_combined}(a) reports Exact and Group-Jaccard@$30$
at lightweight fractions $n/N\in\{0.20,0.40,0.60\}$. Correlation groups
are constructed using $|\rho|\geq0.85$ in the main experiment. Because
group membership is based on correlation, Group-Jaccard measures
agreement between correlated groups and does not establish that the
selected features are causally or functionally identical.

MCIR does not obtain the highest Exact Jaccard in every configuration.
At the group level, it obtains the highest value at the $20\%$ and
$40\%$ fractions, while HSIC obtains the highest value at $60\%$.
Accordingly, both feature-level and group-level results are reported
without treating either metric as a complete measure of explanation
quality.

\begin{table*}[t]
\centering
\fontsize{7}{7.6}\selectfont
\setlength{\tabcolsep}{2pt}
\renewcommand{\arraystretch}{0.86}
\caption{UCI HAR stability under subsampling and sensitivity to the
correlation-group threshold. In panel (a), entries are
Exact/Group-Jaccard@$30$.}
\label{tab:jaccard_combined}
\label{tab:jaccard_group}
\label{tab:group_threshold_sensitivity}
\vspace{-1.5mm}

\begin{minipage}[t]{0.49\linewidth}
\centering
\begin{adjustbox}{width=\linewidth}
\begin{tabular}{lccc}
\toprule
\multicolumn{4}{c}{\textbf{(a) Subsampling stability: Exact / Group}}\\
\midrule
\textbf{Method} & \textbf{20\%} & \textbf{40\%} & \textbf{60\%} \\
\midrule
MCIR &
0.585/\textbf{0.641} &
0.624/\textbf{0.671} &
0.627/0.655 \\

PCIR &
0.673/0.617 &
0.710/0.668 &
0.767/0.725 \\

SHAP &
0.644/0.574 &
0.698/0.636 &
0.747/0.694 \\

HSIC &
\textbf{0.674}/0.609 &
\textbf{0.711}/0.652 &
\textbf{0.787}/\textbf{0.742} \\
\bottomrule
\end{tabular}
\end{adjustbox}
\end{minipage}
\hfill
\begin{minipage}[t]{0.49\linewidth}
\centering
\begin{adjustbox}{width=\linewidth}
\begin{tabular}{ccccc}
\toprule
\multicolumn{5}{c}{
\textbf{(b) Group-Jaccard@30 threshold sensitivity}
}\\
\midrule
\textbf{Threshold} &
\textbf{MCIR} &
\textbf{PCIR} &
\textbf{SHAP} &
\textbf{HSIC} \\
\midrule
0.60 & 0.6405 & 0.5036 & \textbf{0.9200} & 0.4284 \\
0.70 & 0.6261 & 0.4648 & \textbf{0.9667} & 0.4334 \\
0.75 & 0.5792 & 0.4204 & \textbf{0.9741} & 0.4582 \\
0.80 & 0.5946 & 0.4720 & \textbf{0.9788} & 0.5201 \\
0.85 & 0.5821 & 0.4321 & \textbf{0.9662} & 0.4994 \\
\bottomrule
\end{tabular}
\end{adjustbox}
\end{minipage}
\vspace{-2mm}
\end{table*}

Table~\ref{tab:jaccard_combined}(b) evaluates thresholds from
$0.60$ to $0.85$. The extended analysis at thresholds $0.90$ and $0.95$ is reported in
Appendix~\ref{app:threshold-sensitivity}. Changing the threshold changes the number and
composition of the correlation groups and therefore changes the
absolute Group-Jaccard values. Lower thresholds generally form larger
groups, whereas higher thresholds form smaller and more selective
groups.
\begin{table}[htbp]
\centering
\scriptsize
\setlength{\tabcolsep}{3pt}
\renewcommand{\arraystretch}{0.96}
\caption{Pairwise agreement between attribution methods under the same
lightweight configuration. Jaccard@20 is omitted for HouseEnergy-Sim
because its 20 evaluated features make the top-20 set identical for
every method.}
\label{tab:sota-merged}
\begin{adjustbox}{width=0.82\linewidth}
\begin{tabular}{llccc}
\toprule
\textbf{Dataset} &
\textbf{Method pair} &
$\boldsymbol{\rho}$ &
$\boldsymbol{\tau_b}$ &
\textbf{Jaccard@20} \\
\midrule
\multirow{9}{*}{\textbf{UCI HAR}}
& PCIR vs.\ MCIR                     & 0.83 & 0.66 & 0.95 \\
& PCIR vs.\ KernelSHAP$_{\rm cond}$  & 0.14 & 0.09 & 0.05 \\
& PCIR vs.\ KernelSHAP$_{\rm indep}$ & 0.08 & 0.05 & 0.03 \\
& PCIR vs.\ SAGE                     & 0.09 & 0.06 & 0.04 \\
& PCIR vs.\ HSIC                     & 0.10 & 0.07 & 0.06 \\
& MCIR vs.\ KernelSHAP$_{\rm cond}$  & 0.13 & 0.08 & 0.05 \\
& MCIR vs.\ KernelSHAP$_{\rm indep}$ & 0.07 & 0.05 & 0.04 \\
& MCIR vs.\ SAGE                     & 0.10 & 0.07 & 0.05 \\
& MCIR vs.\ HSIC                     & 0.11 & 0.08 & 0.05 \\
\midrule
\multirow{9}{*}{\textbf{HouseEnergy-Sim}}
& PCIR vs.\ MCIR                     & 0.99 & 0.98 & -- \\
& PCIR vs.\ KernelSHAP$_{\rm cond}$  & 0.19 & 0.13 & -- \\
& PCIR vs.\ KernelSHAP$_{\rm indep}$ & 0.16 & 0.12 & -- \\
& PCIR vs.\ SAGE                     & 0.24 & 0.18 & -- \\
& PCIR vs.\ HSIC                     & 0.36 & 0.27 & -- \\
& MCIR vs.\ KernelSHAP$_{\rm cond}$  & 0.21 & 0.15 & -- \\
& MCIR vs.\ KernelSHAP$_{\rm indep}$ & 0.18 & 0.13 & -- \\
& MCIR vs.\ SAGE                     & 0.27 & 0.19 & -- \\
& MCIR vs.\ HSIC                     & 0.35 & 0.28 & -- \\
\bottomrule
\end{tabular}
\end{adjustbox}
\end{table}
Across the evaluated thresholds, MCIR ranges from $0.579$ to
$0.641$, a variation of approximately $0.06$. SHAP achieves the highest
absolute Group-Jaccard throughout this sensitivity analysis, while MCIR
shows a similar degree of variation across thresholds. Group-Jaccard is
reported as a complementary robustness diagnostic rather than an
optimization target or evidence that one attribution ranking is
correct. Taken together, panels (a) and (b) show how the conclusions
change under feature-level matching, group-level matching, and
alternative group definitions.
\begin{figure}[htbp]
\centering
\includegraphics[width=0.68\linewidth]{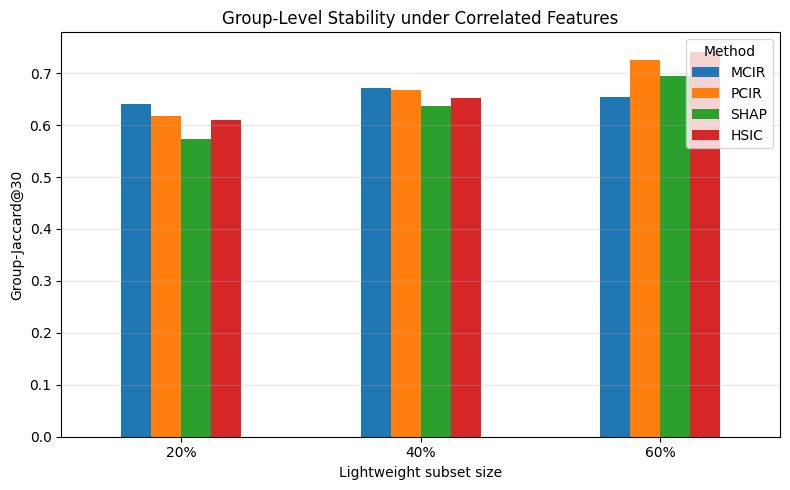}
\caption{Group-Jaccard@$30$ under feature dependence. MCIR obtains the
highest group-level agreement at the $20\%$ and $40\%$ lightweight
fractions, while HSIC obtains the highest value at $60\%$.}
\label{fig:group_stability}
\end{figure}

Figure~\ref{fig:group_stability} complements the exact feature-level
results by comparing rankings after correlated features are mapped to
common groups. MCIR's lower Exact Jaccard in some configurations can
therefore coexist with higher Group-Jaccard when different subsamples
select different members of the same correlation group. Group-Jaccard
measures agreement under this grouping rule; it does not establish that
group members are causally or functionally interchangeable.

Deletion results for MCIR, PCIR, SHAP, and HSIC are computed using the
same model, test set, perturbation rule, and deletion schedule; extended
curves are provided in Appendix~\ref{sec:appendix_deletion}. These
curves evaluate predictive faithfulness and are interpreted separately
from ranking stability. A sharp initial decrease indicates that the
first removed features are important under the deletion protocol,
whereas a gradual decrease may reflect remaining correlated predictors,
feature interactions, or perturbation effects. It is not, by itself,
evidence of better or worse group-level stability.

Exact Jaccard, Group-Jaccard, and deletion AUC are therefore reported as
complementary diagnostics of identity-level agreement, correlation-group
agreement, and predictive faithfulness, respectively. Rank overlays and
the complete deletion comparisons are provided in the appendix.
\subsubsection{Predictive Faithfulness and Estimator Sensitivity (RQ3)}
\label{subsubsec:rq3-faithfulness}

Deletion and perturbation experiments evaluate whether highly ranked
features are important to the fixed model's predictions under the
specified intervention protocol. On UCI HAR, perturbing highly ranked
MCIR features decreases test accuracy
(Fig.~\ref{fig:expts-har-faith}), while on HouseEnergy-Sim, deleting
highly ranked features decreases test $R^2$
(Fig.~\ref{fig:expts-house-faith}). The deletion curves and their AUC
summaries provide the primary predictive-faithfulness comparison. These
results concern predictive influence under the specified perturbations
and do not establish causal importance.
\begin{wraptable}{l}{0.50\linewidth}
\centering
\scriptsize
\setlength{\tabcolsep}{3pt}
\renewcommand{\arraystretch}{0.96}
\caption{Bootstrap agreement between Gaussian--copula and $k$NN MCIR
estimates on HouseEnergy-Sim.}
\label{tab:estimator_ci}
\begin{tabular}{lccc}
\toprule
\textbf{Metric} & \textbf{Mean} & \textbf{2.5\%} & \textbf{97.5\%} \\
\midrule
Kendall $\tau_b$ & 0.58 & 0.51 & 0.64 \\
Spearman $\rho$  & 0.73 & 0.67 & 0.78 \\
Jaccard@10       & 0.40 & 0.30 & 0.50 \\
\bottomrule
\end{tabular}
\end{wraptable}
Table~\ref{tab:sota-merged} reports pairwise agreement between
attribution methods computed under the same lightweight configuration.
It measures cross-method ranking similarity, not
full-versus-lightweight stability or attribution correctness. High
agreement indicates similar orderings, whereas low agreement indicates
that the methods rank features differently.
PCIR and MCIR produce similar rankings in both datasets, with
$\rho=0.83$ and $\tau_b=0.66$ on UCI HAR and $\rho=0.99$ and
$\tau_b=0.98$ on HouseEnergy-Sim. Their lower agreement with
KernelSHAP, SAGE, and HSIC reflects differences in attribution targets
and treatments of feature dependence; it does not establish that either
ranking is intrinsically superior. Predictive faithfulness is assessed
separately through the deletion and perturbation experiments.

\paragraph{Estimator Sensitivity: Gaussian--copula versus $k$NN.}
\label{subsec:estimator_sensitivity}
We evaluate whether the MCIR ranking changes when its MI and CMI
components are estimated using Gaussian--copula or $k$NN estimators.
Both estimators use the same HouseEnergy-Sim lightweight sample, fixed
model outputs, and conditioning neighbourhoods $\Phi(i)$. The
Gaussian--copula estimator applies rank Gaussianization followed by
covariance-determinant calculations, whereas the $k$NN estimator uses
local neighbour distances within the corresponding low-dimensional
feature-output blocks.
Agreement is evaluated using Spearman's $\rho$, Kendall's $\tau_b$, and
Jaccard@10, with bootstrap means and $95\%$ confidence intervals
reported in Table~\ref{tab:estimator_ci}. The two estimators obtain
$\rho=0.73$ and $\tau_b=0.58$, indicating moderate agreement across
the complete ranking. Jaccard@10 is $0.40$, showing that estimator
choice has a more substantial effect on the highest-ranked features.
Figure~\ref{fig:ablation-panel}(b) further shows that agreement depends
on $K$: overlap is higher among the first few features and changes as
\begin{wrapfigure}{l}{0.48\linewidth}
\centering
\includegraphics[width=\linewidth]{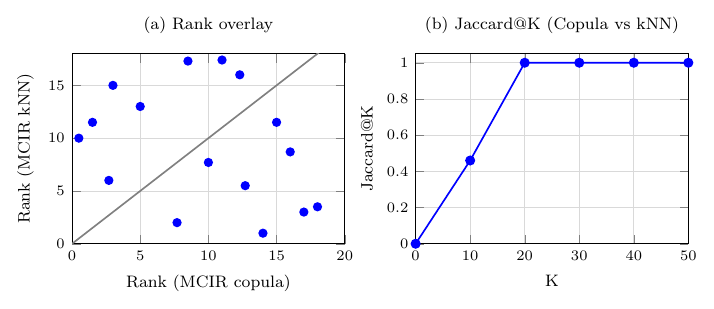}
\caption{HouseEnergy-Sim estimator sensitivity:
\textbf{(a)} Gaussian--copula versus $k$NN rank overlay and
\textbf{(b)} Jaccard@$K$ as the selection size varies.}
\label{fig:ablation-panel}
\vspace{-2mm}
\end{wrapfigure}
mid-ranked features are included. The Gaussian--copula implementation
is used as the computational default in the reported experiments
because of its lower cost and invariance to strictly monotone
componentwise transformations under rank Gaussianization, subject to
the stated tie-handling procedure. The $k$NN results are retained as an
estimator-sensitivity analysis. The observed agreement does not imply
that the two estimators are interchangeable or uniformly accurate.

Figure~\ref{fig:lw-bands} illustrates the resulting attribution
profiles. Under the fitted model and selected neighbourhoods, MCIR
assigns high scores to \textit{Space\_heater},
\textit{Water\_heater}, \textit{Washing\_machine}, and
\textit{HVAC\_load}, while assigning lower scores to several correlated
proxies. These scores quantify unique predictive association with the
model-output representation $Q$ relative to the selected neighbourhood;
they do not represent causal effects. PCIR produces a different profile
because it evaluates global unconditioned dispersion rather than unique
conditional information.
Overall, estimator choice has a measurable effect on MCIR, particularly
on top-$K$ selection. The results therefore support reporting
estimator-specific rankings and uncertainty intervals rather than
claiming estimator independence. They apply to the evaluated
low-dimensional HouseEnergy-Sim neighbourhoods and do not establish
robustness of $k$NN CMI estimation in high-dimensional conditioning
spaces.
\paragraph{Comparison with Predictive-Risk and Variance-Allocation
Functionals.}
\label{subsec:loco-comparison}

MCIR is designed to isolate unique conditional information, whereas
several established variable-importance functionals answer different
questions. LOCO quantifies the increase in held-out predictive loss
after excluding one covariate. A region-weighted R-LOCO-style procedure
extends this analysis by estimating featurewise loss changes within

regions having similar LOCO profiles. Shapley effects instead allocate
the explained variance of $Q$ among potentially dependent inputs.
Including these methods therefore separates redundancy suppression from
predictive-risk change and variance allocation under a common data-
generating process.
\begin{figure}[htbp]
\centering
\includegraphics[width=0.85\linewidth]{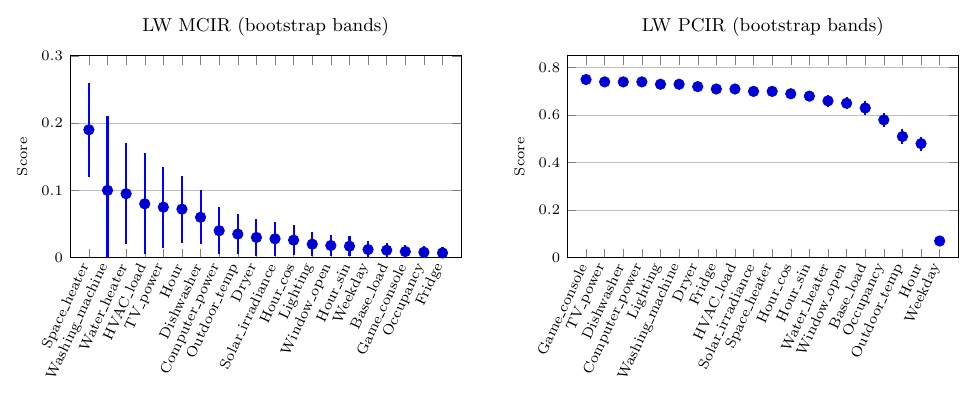}
\caption{HouseEnergy-Sim attribution profiles. MCIR reports unique
predictive information relative to each selected neighbourhood,
normalized to $[0,1]$. PCIR reports an unconditioned dispersion-based
score whose value depends on the stated preprocessing.}
\label{fig:lw-bands}
\end{figure}
The comparison uses the nonlinear HouseEnergy-Sim output $Q$ and four
prespecified functional drivers:
\textit{Base\_load}, \textit{HVAC}, \textit{Fridge}, and
\textit{Solar}. The remaining variables comprise correlated proxies and
independent noise features. MCIR is computed using Gaussian--copula
MI/CMI and $|\Phi(i)|=3$. LOCO uses the held-out squared-loss increase
after refitting XGBoost without each feature. For the regional
procedure, observation-level LOCO-loss vectors are clustered and the
regional scores are aggregated using their held-out frequencies.
Dependent-input Shapley effects are approximated using a cross-fitted
permutation plug-in estimator of
$\operatorname{Var}\{\mathbb E(Q\mid X_S)\}/\operatorname{Var}(Q)$.
All methods use identical training, calibration, and test partitions.

\begin{table*}[t]
\centering
\scriptsize
\setlength{\tabcolsep}{3.5pt}
\renewcommand{\arraystretch}{0.96}
\caption{HouseEnergy-Sim comparison of distinct importance
functionals. Values are mean $\pm$ standard deviation over ten
independent runs. Original and proxy mass are computed after applying
the positive-part normalization to each estimated attribution vector.
Higher Recall@$4$ and cumulative $R^2$-drop AUC indicate stronger
recovery and faster predictive degradation, respectively. Runtime
includes method-specific estimation after $Q$ is available.}
\label{tab:loco-shapley-houseenergy}
\begin{tabular}{lccccc}
\toprule
\textbf{Method} &
\textbf{Core Recall@4} &
\textbf{Original mass} &
\textbf{Proxy mass} &
\textbf{$R^2$-drop AUC} &
\textbf{Runtime (s)} \\
\midrule
MCIR &
$0.825\pm0.206$ &
$0.983\pm0.016$ &
$0.013\pm0.013$ &
$0.660\pm0.122$ &
$0.071\pm0.038$ \\

LOCO &
$0.775\pm0.079$ &
$\mathbf{0.984}\pm0.016$ &
$\mathbf{0.009}\pm0.011$ &
$0.481\pm0.118$ &
$1.801\pm0.892$ \\

Region-weighted LOCO &
$\mathbf{0.850}\pm0.129$ &
$0.977\pm0.020$ &
$0.012\pm0.011$ &
$0.535\pm0.147$ &
$2.029\pm0.917$ \\

Shapley-effects plug-in &
$0.375\pm0.132$ &
$0.331\pm0.061$ &
$0.661\pm0.057$ &
$\mathbf{0.680}\pm0.121$ &
$14.337\pm3.693$ \\
\bottomrule
\end{tabular}
\end{table*}

The results differ by evaluation target. Region-weighted LOCO obtains
the highest mean core-feature Recall@$4$, while MCIR and LOCO retain
approximately $98\%$ of their positive attribution mass on the four
original functional drivers. The Shapley-effects plug-in obtains the
largest cumulative $R^2$-drop AUC but allocates substantially more
importance to correlated proxies. Thus, no method is uniformly superior
across recovery, allocation, faithfulness, and runtime.

Cross-method agreement is correspondingly moderate. MCIR obtains mean
Kendall agreement of $0.436$ with LOCO and $0.449$ with the
region-weighted procedure, while its agreement with the Shapley-effects
plug-in is $-0.071$. These differences are consistent with the methods
allocating shared predictive information according to different
functionals; they are not evidence that one functional is intrinsically
correct.

The regional analysis selects two regions in eight runs and four
regions in two runs. However, the largest held-out region has mean
weight $0.999$, so the weighted result behaves predominantly as a
global LOCO summary in this data-generating process. The Shapley-effects
plug-in also produces some negative finite-sample marginal
contributions, including for functional drivers. Its allocation results
are therefore interpreted as estimator-sensitivity evidence rather than
as exact analytical Shapley effects.
\paragraph{Conditioning-Dimension Stress Test.}
\label{subsec:highdim-cmi}

We next examine how finite-sample MCIR estimation changes as the local
conditioning dimension increases. The experiment uses 24 continuous
features and varies the explanation-sample size over
$n\in\{500,1000,2000\}$ and the neighbourhood size over
$|\Phi|\in\{1,2,3,5,8,12\}$. Gaussian--copula and $k$NN MCIR rankings
are compared with estimator-specific reference rankings computed from
5000 observations. Consequently, agreement measures finite-sample
stability relative to the corresponding empirical reference; it does
not identify either estimator as population truth.

\begin{figure*}[t]
\centering
\includegraphics[width=\linewidth]{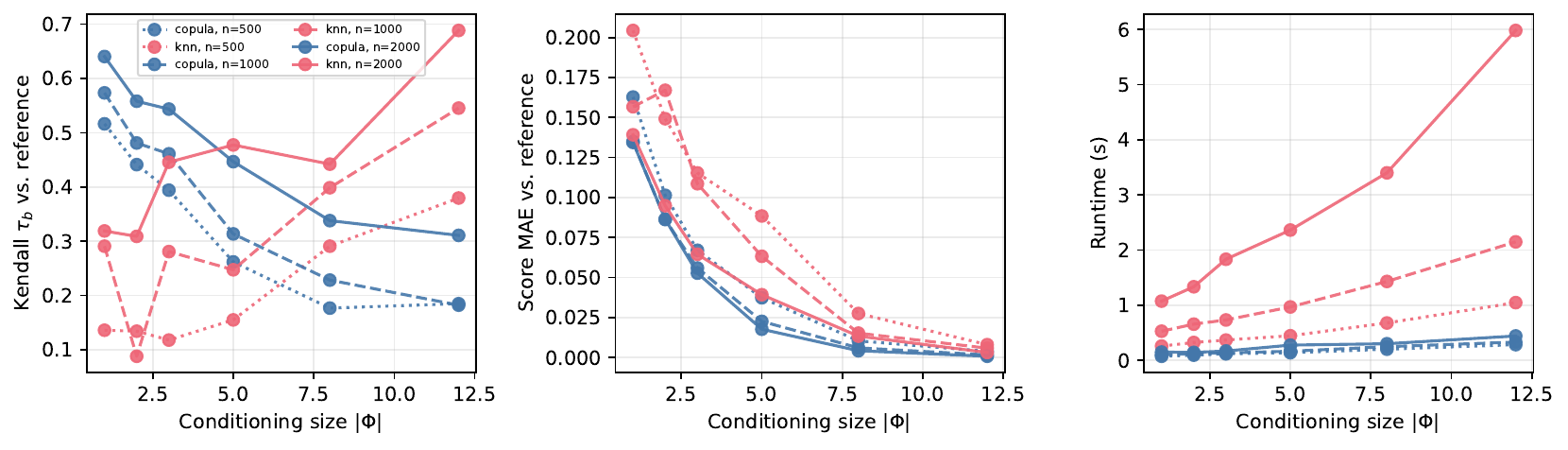}
\caption{High-dimensional CMI sensitivity on HouseEnergy-Sim.
\textbf{Left:} Kendall-$\tau_b$ agreement with the estimator-specific
5000-observation reference ranking.
\textbf{Middle:} mean absolute MCIR-score difference from that
reference.
\textbf{Right:} measured attribution runtime.
Line styles distinguish explanation-sample sizes. The decreasing score
error at large $|\Phi|$ is considered together with rank agreement,
because compression of the score vector can reduce absolute error
without preserving its ordering.}
\label{fig:highdim-cmi-ablation}
\end{figure*}

For Gaussian--copula MCIR at $n=2000$, Kendall agreement decreases from
$0.640$ at $|\Phi|=1$ to $0.311$ at $|\Phi|=12$. The $k$NN estimator
shows a different, non-monotone pattern and produces negative
pre-clipping MI/CMI component estimates in approximately $10\%$--$56\%$
of the evaluated feature components, depending on $n$ and $|\Phi|$.
At $n=2000$ and $|\Phi|=12$, $k$NN requires $5.982$ seconds compared
with $0.444$ seconds for the Gaussian--copula implementation. The
corresponding $k$NN rank agreement is $0.688$, but $21.6\%$ of its
pre-clipping component estimates are negative.

These results show that score error, ranking agreement, component
behaviour, and runtime need not change in parallel as the conditioning
dimension increases. In particular, the small score differences
observed at large $|\Phi|$ do not alone demonstrate improved
estimation, because complete-ranking agreement can remain modest.
High-dimensional MCIR results should therefore report the estimator,
conditioning dimension, component clipping frequency, rank agreement,
and runtime jointly.
\paragraph{Repeated-Subsampling Evaluation of Estimator Switching.}
\label{par:switching-evaluation}

In addition to comparing the two fixed estimators, we evaluate the
bootstrap-based estimator-switching rule over repeated lightweight
subsamples. The study considers sample sizes
$n\in\{200,500,1000,2000,3000\}$ and neighbourhood sizes
$|\Phi(i)|\in\{1,2,3,5\}$. For each setting, MCIR is computed using
(i) the Gaussian--copula estimator, (ii) the Kraskov-type $k$NN
estimator, and (iii) the switching rule. The switching rule selects
between the two candidates using its bootstrap stability criterion
computed only from the lightweight sample; the full-data ranking and
ground-truth labels are used exclusively for evaluation.

Each of the $5\times4=20$ design settings is repeated over 30
independent subsamples, yielding 600 repeated-subsample instances and
1,800 estimator-level evaluations. We report Kendall's $\tau_b$,
Spearman's $\rho$, Group-Jaccard@10, ground-truth Recall@10, runtime,
and estimator-selection frequency. Rank and group-overlap metrics compare
each lightweight result with the corresponding full-data reference
produced by the same estimator alternative.

\begin{table*}[t]
\centering
\scriptsize
\setlength{\tabcolsep}{3pt}
\renewcommand{\arraystretch}{0.95}
\caption{Repeated-subsampling evaluation of the estimator-switching
rule. Left: mean Kendall $\tau_b$ relative to the corresponding
full-data reference. Right: percentage of repetitions selecting the
Gaussian--copula/$k$NN estimator (C/K). Each cell summarizes 30
subsamples.}
\label{tab:switch_summary}
\label{tab:switch_selection}
\begin{minipage}[t]{0.47\linewidth}
\centering
\textbf{(a) Mean Kendall $\tau_b$}\\[2pt]
\begin{tabular}{ccccc}
\toprule
$n$ & $|\Phi|=1$ & $|\Phi|=2$ & $|\Phi|=3$ & $|\Phi|=5$ \\
\midrule
200  & 0.530 & 0.000 & 0.000 & 0.000 \\
500  & 0.141 & 0.431 & 0.000 & 0.000 \\
1000 & 0.102 & 0.718 & 0.000 & 0.000 \\
2000 & 0.100 & 0.737 & 0.667 & 0.000 \\
3000 & 0.153 & 0.443 & 1.000 & 0.000 \\
\bottomrule
\end{tabular}
\end{minipage}
\hfill
\begin{minipage}[t]{0.49\linewidth}
\centering
\textbf{(b) Selection frequency: C/K (\%)}\\[2pt]
\begin{tabular}{ccccc}
\toprule
$n$ & $|\Phi|=1$ & $|\Phi|=2$ & $|\Phi|=3$ & $|\Phi|=5$ \\
\midrule
200  & 13.3/86.7 & 0/100 & 0/100 & 0/100 \\
500  & 86.7/13.3 & 0/100 & 0/100 & 0/100 \\
1000 & 100/0     & 0/100 & 0/100 & 0/100 \\
2000 & 100/0     & 13.3/86.7 & 0/100 & 0/100 \\
3000 & 100/0     & 66.7/33.3 & 0/100 & 0/100 \\
\bottomrule
\end{tabular}
\end{minipage}
\end{table*}

The switching behaviour varies substantially with sample size and
conditioning dimension. For $|\Phi(i)|=1$, the rule increasingly selects
the Gaussian--copula estimator as $n$ grows. For larger neighbourhoods,
it predominantly selects $k$NN. This preference does not uniformly
translate into accurate rankings: agreement is consistently zero for
$|\Phi(i)|=5$ and remains zero for $|\Phi(i)|=3$ until $n=2000$.
The strongest agreement occurs for $|\Phi(i)|=2$ at $n=1000$ and
$n=2000$, and for $|\Phi(i)|=3$ at $n=3000$. Thus, the switching rule
is an empirical heuristic whose reliability depends on both sample size
and conditioning dimension; it does not provide an oracle guarantee or
uniformly dominate the fixed estimators.

Appendix Table~\ref{tab:estimator_switching_all} gives the complete
comparison of the fixed Gaussian--copula, fixed $k$NN, and switching
alternatives. Group-Jaccard@10 measures agreement between the
lightweight and corresponding full-data top-10 correlated feature
groups, using the grouping rule defined in the experimental protocol.
Ground-truth Recall@10 measures the proportion of generator-defined
relevant groups recovered among the top-10 groups. The ground-truth
metric is available because HouseEnergy-Sim is synthetic; it is not
used by the switching rule.

% \begin{center}
% \scriptsize
% \setlength{\LTleft}{0pt}
% \setlength{\LTright}{0pt}
% \setlength{\tabcolsep}{2.5pt}
% \renewcommand{\arraystretch}{0.94}

% \begin{longtable}{cclccccc}
% \caption{Complete estimator-switching evaluation on HouseEnergy-Sim.
% Entries are means over 30 independent lightweight subsamples. GT denotes
% generator-defined ground-truth recovery.}
% \label{tab:estimator_switching_all}\\
% \toprule
% $n$ & $|\Phi|$ & \textbf{Estimator} &
% $\boldsymbol{\tau_b}$ &
% $\boldsymbol{\rho}$ &
% \textbf{Group-J@10} &
% \textbf{GT Recall@10} &
% \textbf{Time (s)} \\
% \midrule
% \endfirsthead

% \multicolumn{8}{c}{
% \tablename\ \thetable\ -- continued
% }\\
% \toprule
% $n$ & $|\Phi|$ & \textbf{Estimator} &
% $\boldsymbol{\tau_b}$ &
% $\boldsymbol{\rho}$ &
% \textbf{Group-J@10} &
% \textbf{GT Recall@10} &
% \textbf{Time (s)} \\
% \midrule
% \endhead

% \midrule
% \multicolumn{8}{r}{Continued on next page}
% \endfoot

% \bottomrule
% \endlastfoot

% % Retain the existing numerical rows here.

% \end{longtable}
% \end{center}

The complete comparison should be interpreted as a sensitivity
analysis rather than evidence of estimator-independent behaviour.
In particular, poor agreement for larger $|\Phi(i)|$ is consistent
with the increasing difficulty of finite-sample CMI estimation as the
conditioning dimension grows. The results motivate using small
neighbourhoods and reporting the selected estimator, ranking
uncertainty, and runtime for each experimental setting.

The results provide mixed rather than uniformly favourable evidence
for automatic estimator switching. For $|\Phi(i)|=1$, the rule
increasingly selects the Gaussian--copula estimator as $n$ grows,
although fixed $k$NN produces substantially higher rank agreement from
$n=500$ onward. For $|\Phi(i)|=2$, switching matches fixed $k$NN at
$n\leq1000$ and remains relatively close at $n=2000$, but underperforms
it at $n=3000$ after selecting the copula estimator in $66.7\%$ of the
repetitions. Thus, the bootstrap criterion does not uniformly select
the estimator with the highest agreement against the full-data
reference.

For $|\Phi(i)|=3$, switching always selects $k$NN and therefore
matches its reported performance, including perfect rank agreement at
$n=3000$. For $|\Phi(i)|=5$, switching and $k$NN obtain
Group-Jaccard@10 of $1.000$ and ground-truth Recall@10 of $0.769$, while
their complete-ranking correlations remain zero. These results show
that recovery of the top feature-group set can coexist with
uninformative complete-ranking agreement, particularly when rankings
contain extensive ties or degeneracy. The zero correlations therefore
cannot be presented as evidence of full-ranking stability; the tie
structure and top-set metrics must be interpreted separately.

The runtime results reveal a substantial cost for estimator
switching. The fixed Gaussian--copula implementation is consistently
the fastest, while fixed $k$NN is more expensive but often achieves
higher rank agreement. Switching is the most expensive alternative
because its bootstrap criterion evaluates both candidates repeatedly
before selecting one. At $n=3000$, switching requires between
$82.000$ and $296.926$ seconds, compared with less than $0.06$ seconds
for the fixed copula estimator and between $9.048$ and $32.963$ seconds
for fixed $k$NN. Consequently, the switching procedure should be
viewed as a calibration heuristic rather than a computationally
efficient substitute for fixed-estimator selection.

Appendix~\ref{app:auto_phi_runtime} further decomposes the runtime
of neighbourhood screening, Auto-$\Phi$ calibration, and final MCIR
evaluation on UCI HAR. Screening and final score computation remain
comparatively inexpensive, whereas repeated bootstrap calibration is
the dominant cost. When the data distribution, model, and feature
representation remain fixed, the selected neighbourhoods and estimator
configuration can be reused for subsequent MCIR evaluations. Reuse
under distribution shift is not assumed and requires renewed
diagnostic evaluation.

The threshold-sensitivity and repeated-subsampling experiments
provide complementary finite-sample diagnostics. The former evaluates
whether the resulting rankings change under different neighbourhood
thresholds, while the latter shows that estimator preference and
ranking agreement depend strongly on both $n$ and $|\Phi(i)|$. These
results illustrate the practical limitations of the switching
heuristic; they do not establish an oracle guarantee, uniform
estimator robustness, or a universal improvement over fixed
estimators. Additional experimental details are provided in
Appendix~\ref{app:additional}.
\subsection*{Runtime in Lightweight Environments}
\label{sec:runtime-lw}

Runtime is evaluated by computing attribution on a lightweight
explanation sample of $n<N$ observations while keeping the trained
model and feature space fixed. Let
$\medmath{r_i=d_Q+|\Phi(i)|+1}$ denote the dimension of the local
feature--output block used for feature $i$, where $d_Q$ is the dimension
of the model-output representation. For fixed $d_Q$ and bounded
$|\Phi(i)|$, the local dimension remains small, but the total runtime
still depends on both $n$ and the number of evaluated features $k$.
\begin{wraptable}{r}{0.56\linewidth}
\centering
\scriptsize
\setlength{\tabcolsep}{3pt}
\renewcommand{\arraystretch}{0.96}
\caption{Leading attribution costs for $n$ lightweight observations and
$k$ features. Here $r=d_Q+m_\Phi+1$ is treated as bounded.}
\label{tab:complexity1}
\begin{adjustbox}{width=\linewidth}
\begin{tabular}{lcc}
\toprule
\textbf{Method} &
\textbf{Per feature} &
\textbf{All $k$ features} \\
\midrule
MCIR (copula) &
$\mathcal{O}(rn\log n+nr^2+r^3)$ &
$\mathcal{O}\!\left(k(rn\log n+nr^2+r^3)\right)$ \\
MCIR ($k$NN) &
$\mathcal{O}(n\log n)$ expected &
$\mathcal{O}(kn\log n)$ expected \\
PCIR &
$\mathcal{O}(n)$ &
$\mathcal{O}(kn)$ \\
HSIC (RBF) &
$\mathcal{O}(n^2)$ &
$\mathcal{O}(kn^2)$ \\
\bottomrule
\end{tabular}
\end{adjustbox}
\end{wraptable}
For the Gaussian--copula implementation, rank Gaussianization and local
covariance operations give a per-feature cost of
$\medmath{\mathcal{O}(r_i n\log n+n r_i^2+r_i^3)}$ before accounting
for reusable preprocessing. For fixed $r_i$, this is nearly linear in
$n$, up to rank-sorting costs. The $k$NN implementation has expected
cost $\medmath{\mathcal{O}(n\log n)}$ per feature in a fixed
low-dimensional block, although its constants and statistical
performance deteriorate as the conditioning dimension increases.
Table~\ref{tab:complexity1} summarizes these costs.

The wall-clock measurements in Table~\ref{tab:runtime-lw-merged}
include the implemented attribution pipeline rather than isolated
estimator kernels. HouseEnergy-Sim and UCI HAR use the same lightweight
sample size but differ substantially in feature count, neighbourhood
structure, and implementation overhead. The measurements therefore
illustrate practical runtime at the reported settings; they should not
be interpreted as an empirical proof of a universal scaling law.

\begin{table}[htbp]
\centering
\scriptsize
\setlength{\tabcolsep}{4pt}
\renewcommand{\arraystretch}{0.96}
\caption{Observed attribution runtime on HouseEnergy-Sim
($n=2000$, $k=20$) and UCI HAR ($n=2000$, $k=561$). Times include the
implemented scoring pipeline.}
\label{tab:runtime-lw-merged}
\begin{tabular}{lcccc}
\toprule
\multirow{2}{*}{\textbf{Method}} &
\multicolumn{2}{c}{\textbf{HouseEnergy-Sim}} &
\multicolumn{2}{c}{\textbf{UCI HAR}} \\
\cmidrule(lr){2-3}\cmidrule(lr){4-5}
& \textbf{Time (s)} & \textbf{Setting}
& \textbf{Time (s)} & \textbf{Setting} \\
\midrule
MCIR (copula) & 0.4487 & rank Gaussianization
              & 451.412 & rank Gaussianization \\
MCIR ($k$NN)  & 4.3769 & $k_{\rm NN}=5$
              & 139.738 & $k_{\rm NN}=5$ \\
HSIC (RBF)    & 7.0033 & median bandwidth
              & 207.823 & median bandwidth \\
\bottomrule
\end{tabular}
\end{table}

Reducing $N$ to $n$ lowers the number of observations processed by each
attribution estimator without changing the trained model or feature
space. The resulting speedup depends on the estimator, feature count,
local conditioning dimension, and implementation. Attribution fidelity
is evaluated separately using the representation, ranking, and
deletion-curve diagnostics defined in
Section~\ref{subsec:lw_fidelity}. The Macro-F1 ratio reported in
Table~\ref{tab:har_lw} concerns predictive sensitivity to reduced
training data and is not, by itself, evidence of explanation fidelity.

\subsection{Results Across Cross-Domain and Real-World Settings}
\label{accross}

We next evaluate MCIR in two additional settings: high-dimensional
image representations and electricity-load forecasting with correlated
temporal and meteorological covariates. These experiments assess
predictive faithfulness under the stated protocols; they do not extend
the population guarantees beyond their assumptions.

\subsubsection{CIFAR-10/ResNet-50 Representation Experiment}
\label{subsec:cifar}

A ResNet-50 model is fine-tuned on CIFAR-10, and its
2048-dimensional penultimate-layer representations are extracted. An
MLP probe trained on these representations obtains $95.9\%$ test
accuracy. MCIR explains the output of this fixed probe by ranking the
embedding coordinates using bounded local neighbourhoods.
Deleting the 128 coordinates ranked highest by MCIR reduces probe
accuracy from $0.96$ to $0.84$ and produces a deletion AUC of $0.887$.
This is the lowest AUC among the evaluated methods in
Table~\ref{tab:cifar_auc}, indicating the strongest deletion
faithfulness within this experiment. The result shows that the
MCIR-ranked coordinates are predictively influential to the fixed probe
under the specified deletion baseline. It does not establish causal
importance or general estimator stability in arbitrary
2048-dimensional problems, because MCIR conditions only on the selected
local neighbourhoods.
\begin{wraptable}{r}{0.58\linewidth}
\centering
\scriptsize
\setlength{\tabcolsep}{4pt}
\renewcommand{\arraystretch}{0.96}
\caption{CIFAR-10 deletion results. Lower AUC indicates stronger
deletion faithfulness under the stated protocol.}
\label{tab:cifar_auc}
\begin{tabular}{lcc}
\toprule
\textbf{Method} & \textbf{Deletion AUC} &
\textbf{Accuracy after top-128 deletion} \\
\midrule
MCIR & \textbf{0.887} & $0.96\rightarrow0.84$ \\
PCIR & 0.912 & $0.96\rightarrow0.87$ \\
HSIC & 0.938 & $0.96\rightarrow0.89$ \\
MI   & 0.951 & $0.96\rightarrow0.90$ \\
\bottomrule
\end{tabular}
\end{wraptable}
The different deletion profiles produced by PCIR, HSIC, and marginal
MI reflect differences in their scoring objectives and treatments of
feature dependence. No general claim that these methods fail because of
high-dimensional redundancy is inferred from this single experiment.
\subsubsection{Norwegian Residential-Demand Case Study}
\label{cases}
We evaluate MCIR using aggregated hourly residential electricity demand
for the five Norwegian bidding areas NO1--NO5
\cite{hofmann2023norwegian}. The demand series are obtained from
Zenodo record 8423312, and meteorological covariates are retrieved from
the Open-Meteo ERA5 archive. The experiment covers 2021 and contains
$8{,}736$ usable observations per area after constructing lagged
predictors. The response is aggregated residential demand in MWh, not
total system load.

For each area, the predictor set contains $k=28$ lagged-demand,
rolling-statistic, weather, degree-day, and cyclic-calendar variables.
A separate XGBoost regressor is fitted using a chronological $80/20$
split, with 500 trees, maximum depth 6, learning rate 0.05, row and
column subsampling rates of 0.8, and random seed 42. MCIR is computed
against the fixed model output $Q$, rather than the observed response,
using ten independent explanation samples of $n=200$ observations,
$|\Phi(i)|=5$, and the rank--Gaussianized Gaussian--copula estimator.
TreeSHAP, marginal MI, and HSIC are evaluated using the same fitted
model outputs, explanation samples, and feature space.

The experiment examines predictive association with each fitted model.
Accordingly, the resulting scores identify model-relevant variables
under the selected neighbourhoods and do not establish physical or
causal determinants of electricity demand.

\begin{table}[t]
\centering
\scriptsize
\setlength{\tabcolsep}{3.2pt}
\renewcommand{\arraystretch}{0.96}
\caption{Predictive performance and full--lightweight MCIR agreement
in the Norwegian residential-demand experiment. Values are means over
ten explanation samples. Correlation groups are connected components
of the training-data correlation graph at $|\rho|\geq0.85$.
$W_1$ is reported in MWh and is interpreted only within each area.}
\label{tab:norway-fidelity}
\begin{tabular}{lcccccc}
\toprule
\textbf{Area} &
\textbf{Test $R^2$} &
$\boldsymbol{\rho}$ &
$\boldsymbol{\tau_b}$ &
\textbf{J@8} &
\textbf{Group-J@8} &
$\boldsymbol{W_1}$ \\
\midrule
NO1 & 0.9924 & 0.666 & 0.508 & 0.639 & 0.623 & 64.57 \\
NO2 & 0.9919 & 0.709 & 0.558 & 0.799 & 0.814 & 37.04 \\
NO3 & 0.9924 & 0.659 & 0.507 & 0.671 & 0.757 & 22.30 \\
NO4 & 0.9937 & 0.640 & 0.478 & 0.560 & 0.617 & 14.40 \\
NO5 & 0.9903 & 0.618 & 0.471 & 0.600 & 0.713 & 14.13 \\
\bottomrule
\end{tabular}
\end{table}

Table~\ref{tab:norway-fidelity} shows moderate full--lightweight MCIR
agreement. Mean Spearman correlation ranges from $0.618$ to $0.709$,
mean Kendall correlation from $0.471$ to $0.558$, and exact
Jaccard@$8$ from $0.560$ to $0.799$. Group-Jaccard@$8$ exceeds exact
Jaccard@$8$ in four areas, indicating that part of the identity-level
disagreement arises from selecting different members of correlated
groups. This pattern is not universal: the group-level value is
slightly lower for NO1.

The Wasserstein discrepancy in Table~\ref{tab:norway-fidelity} is
expressed on the original output scale in MWh. It is therefore a
within-area representation diagnostic and should not be compared
numerically across areas with different demand scales.

\begin{figure*}[t]
\centering
\begin{minipage}[t]{0.47\linewidth}
\centering
\begin{tikzpicture}
\begin{axis}[
    width=\linewidth,
    height=5.2cm,
    ybar,
    bar width=5pt,
    ymin=0,
    ymax=1,
    ylabel={Top-$8$ agreement},
    symbolic x coords={NO1,NO2,NO3,NO4,NO5},
    xtick=data,
    enlarge x limits=0.12,
    ymajorgrids,
    grid style={gray!25},
    tick label style={font=\scriptsize},
    label style={font=\scriptsize},
    legend style={
        font=\scriptsize,
        at={(0.5,1.03)},
        anchor=south,
        legend columns=2,
        draw=none
    }
]
\addplot[
    fill=blue!55,
    draw=blue!75!black
] coordinates {
    (NO1,0.639)
    (NO2,0.799)
    (NO3,0.671)
    (NO4,0.560)
    (NO5,0.600)
};
\addplot[
    fill=orange!65,
    draw=orange!80!black
] coordinates {
    (NO1,0.623)
    (NO2,0.814)
    (NO3,0.757)
    (NO4,0.617)
    (NO5,0.713)
};
\legend{Exact J@8,Group-J@8}
\end{axis}
\end{tikzpicture}

\textbf{\scriptsize (a) Full--lightweight MCIR agreement}
\end{minipage}
\hfill
\begin{minipage}[t]{0.51\linewidth}
\centering
\begin{tikzpicture}
\begin{axis}[
    width=\linewidth,
    height=5.2cm,
    ybar,
    bar width=2.8pt,
    ymin=0,
    ymax=3.5,
    ytick={0,0.5,1,1.5,2,2.5,3,3.5},
    ylabel={Cumulative $R^2$-drop AUC},
    symbolic x coords={NO1,NO2,NO3,NO4,NO5},
    xtick=data,
    enlarge x limits=0.10,
    ymajorgrids,
    grid style={gray!25},
    tick label style={font=\scriptsize},
    label style={font=\scriptsize},
    legend style={
        font=\tiny,
        at={(0.5,1.03)},
        anchor=south,
        legend columns=4,
        draw=none,
        column sep=3pt
    }
]
\addplot[
    fill=blue!55,
    draw=blue!75!black
] coordinates {
    (NO1,1.695)
    (NO2,1.276)
    (NO3,1.881)
    (NO4,2.747)
    (NO5,1.418)
};
\addplot[
    fill=orange!65,
    draw=orange!80!black
] coordinates {
    (NO1,1.798)
    (NO2,1.388)
    (NO3,2.017)
    (NO4,3.188)
    (NO5,1.591)
};
\addplot[
    fill=green!55,
    draw=green!60!black
] coordinates {
    (NO1,1.787)
    (NO2,1.401)
    (NO3,2.030)
    (NO4,3.211)
    (NO5,1.582)
};
\addplot[
    fill=purple!45,
    draw=purple!70!black
] coordinates {
    (NO1,1.788)
    (NO2,1.407)
    (NO3,2.025)
    (NO4,3.127)
    (NO5,1.581)
};
\legend{MCIR,TreeSHAP,MI,HSIC}
\end{axis}
\end{tikzpicture}

\textbf{\scriptsize (b) Cumulative $R^2$-drop AUC}
\end{minipage}
\caption{Norwegian residential-demand results.
\textbf{(a)} Full--lightweight exact and group-level top-eight
agreement for MCIR, averaged over ten explanation samples.
\textbf{(b)} Cumulative $R^2$-drop AUC under the common training-mean
replacement protocol; higher values indicate faster cumulative
predictive degradation.}
\label{fig:norway-summary}
\end{figure*}
\begin{table*}[t]
\centering
\scriptsize
\setlength{\tabcolsep}{2.8pt}
\renewcommand{\arraystretch}{0.96}
\caption{Cross-method agreement and deletion performance in the
Norwegian residential-demand experiment. Agreement is measured between
MCIR and TreeSHAP. Cumulative $R^2$-drop AUC is defined as
$\int_0^1[R^2(0)-R^2(t)]\,dt$, where $t$ is the fraction of features
replaced by their training-set means. Higher values indicate faster
cumulative predictive degradation.}
\label{tab:norway-deletion}
\begin{tabular}{lcccc@{\hspace{5pt}}|@{\hspace{5pt}}cccc}
\toprule
& \multicolumn{4}{c}{\textbf{MCIR versus TreeSHAP}}
& \multicolumn{4}{c}{\textbf{Cumulative $R^2$-Drop AUC}} \\
\cmidrule(lr){2-5}\cmidrule(lr){6-9}
\textbf{Area} &
$\boldsymbol{\rho}$ &
$\boldsymbol{\tau_b}$ &
\textbf{J@8} &
\textbf{Group-J@8} &
\textbf{MCIR} &
\textbf{TreeSHAP} &
\textbf{MI} &
\textbf{HSIC} \\
\midrule
NO1 & 0.377 & 0.254 & 0.333 & 0.500
& 1.695 & 1.798 & 1.787 & 1.788 \\
NO2 & 0.552 & 0.397 & 0.600 & 0.667
& 1.276 & 1.388 & 1.401 & 1.407 \\
NO3 & 0.257 & 0.169 & 0.333 & 0.429
& 1.881 & 2.017 & 2.030 & 2.025 \\
NO4 & 0.273 & 0.228 & 0.455 & 0.429
& 2.747 & 3.188 & 3.211 & 3.127 \\
NO5 & 0.247 & 0.169 & 0.333 & 0.429
& 1.418 & 1.591 & 1.582 & 1.581 \\
\bottomrule
\end{tabular}
\end{table*}

Cumulative $R^2$-drop AUC is a predictive-performance diagnostic, not
an MCIR score. It measures the integrated decrease from the
unperturbed test $R^2$; hence, larger values indicate faster cumulative
degradation. MCIR produces substantial degradation in every area but
does not attain the largest drop AUC in this experiment.
Cross-method agreement is lower than full--lightweight MCIR stability.
MCIR--TreeSHAP Spearman correlation ranges from $0.247$ to $0.552$,
Kendall correlation from $0.169$ to $0.397$, and exact Jaccard@$8$
from $0.333$ to $0.600$. Marginal MI and HSIC each obtain exact
Jaccard@$8=0.231$ against MCIR in every area. These values show that
the methods produce different rankings because they target different
attribution functionals; they do not establish that one ranking is
intrinsically correct.

\begin{table*}[t]
\centering
\scriptsize
\setlength{\tabcolsep}{2.2pt}
\renewcommand{\arraystretch}{0.93}
\caption{Top-eight MCIR features from the full-data reference ranking.
Feature names follow the implementation. These rankings describe
predictive association with the fitted XGBoost outputs.}
\label{tab:norway-top8}
\begin{tabular}{clllll}
\toprule
\textbf{Rank} & \textbf{NO1} & \textbf{NO2} & \textbf{NO3} &
\textbf{NO4} & \textbf{NO5} \\
\midrule
1 &
\texttt{sin\_hour} &
\texttt{sin\_hour} &
\texttt{relative\_humidity\_2m} &
\texttt{load\_lag\_1h} &
\texttt{cos\_hour} \\

2 &
\texttt{cos\_hour} &
\texttt{cos\_hour} &
\texttt{load\_lag\_1h} &
\texttt{wind\_speed\_10m} &
\texttt{load\_lag\_1h} \\

3 &
\texttt{load\_lag\_1h} &
\texttt{load\_lag\_1h} &
\texttt{cos\_hour} &
\texttt{cos\_hour} &
\texttt{relative\_humidity\_2m} \\

4 &
\texttt{load\_roll\_mean\_24h} &
\texttt{relative\_humidity\_2m} &
\texttt{load\_roll\_mean\_24h} &
\texttt{load\_roll\_mean\_24h} &
\texttt{load\_roll\_std\_6h} \\

5 &
\texttt{load\_lag\_24h} &
\texttt{load\_lag\_24h} &
\texttt{precipitation} &
\texttt{load\_roll\_std\_6h} &
\texttt{load\_roll\_mean\_24h} \\

6 &
\texttt{wind\_speed\_10m} &
\texttt{load\_roll\_mean\_24h} &
\texttt{load\_roll\_std\_6h} &
\texttt{sin\_dow} &
\texttt{load\_lag\_24h} \\

7 &
\texttt{load\_roll\_std\_6h} &
\texttt{load\_roll\_std\_6h} &
\texttt{load\_lag\_24h} &
\texttt{relative\_humidity\_2m} &
\texttt{sin\_dow} \\

8 &
\texttt{load\_roll\_std\_12h} &
\texttt{cooling\_degree} &
\texttt{sin\_dow} &
\texttt{load\_lag\_24h} &
\texttt{cos\_dow} \\
\bottomrule
\end{tabular}
\end{table*}

Table~\ref{tab:norway-top8} shows that the rankings contain a mixture
of cyclic-calendar terms, short- and daily-lag demand, rolling demand
statistics, and meteorological covariates. The composition differs
across areas. These differences should not be interpreted as evidence
of distinct physical mechanisms without a separate causal or
domain-specific analysis.

\begin{table}[t]
\centering
\scriptsize
\setlength{\tabcolsep}{5pt}
\renewcommand{\arraystretch}{0.96}
\caption{Range of mean attribution runtimes across NO1--NO5 for
$n=200$ explanation observations. Times exclude model fitting and are
hardware- and implementation-specific.}
\label{tab:norway-runtime}
\begin{tabular}{lcc}
\toprule
\textbf{Method} & \textbf{Minimum (s)} & \textbf{Maximum (s)} \\
\midrule
Marginal MI & 0.0559 & 0.0675 \\
MCIR        & 0.0810 & 0.0983 \\
HSIC        & 0.1108 & 0.1801 \\
TreeSHAP    & 0.2280 & 0.2817 \\
\bottomrule
\end{tabular}
\end{table}

Marginal MI is the fastest implementation in this configuration, with
MCIR second. MCIR requires no additional predictive-model evaluations
after the output vector $Q$ has been cached; this excludes the model
calls required to construct $Q$. The runtime measurements are specific
to the reported hardware and implementation and do not establish a
universal ordering.

Overall, this case study shows that MCIR can be computed reproducibly
against fixed XGBoost outputs for five distinct residential-demand
series and that its lightweight rankings retain moderate exact and
group-level agreement. It does not demonstrate causal relevance,
agreement with TreeSHAP, redundancy collapse without a controlled
duplicate intervention, or universal runtime or deletion superiority.
No PCIR, insertion-curve, or inferential $p$-value result is reported
for this experiment.
To make the connection between theory and practice explicit, Table~\ref{tab:theory_validation} maps the formal results in
Section~\ref{prob} and Appendix~\ref{sec:theory1} to the corresponding
evidence in Section~\ref{sec:results}.
Table~\ref{tab:theory_validation} links each formal result to the corresponding final theorem, proposition, algorithm, figure, table, or section. The evidence is classified according to whether it provides direct empirical evaluation, indirect empirical consistency, algorithmic support, or a theoretical guarantee. This distinction is particularly important for the exact and group-level stability results, the observed sensitivity to Gaussian--copula versus $k$NN estimation, and properties for which the current experiments provide supportive rather than conclusive evidence.

% Required in the preamble:
% \usepackage{longtable}
% \usepackage{booktabs}
% \usepackage{cleveref}

\begingroup
\footnotesize
\setlength{\LTleft}{0pt}
\setlength{\LTright}{0pt}
\setlength{\LTpre}{2pt}
\setlength{\LTpost}{0pt}
\setlength{\tabcolsep}{2pt}
\renewcommand{\arraystretch}{0.88}

\begin{longtable}{
    @{}p{0.18\textwidth}
       p{0.21\textwidth}
       p{0.54\textwidth}@{}
}
\caption{Traceability between MCIR's formal results and supporting
evidence. Empirical consistency denotes compatibility, not proof.}
\label{tab:theory_validation}\\[-1mm]

\toprule
\textbf{Property} & \textbf{Formal result} & \textbf{Evidence and scope} \\
\midrule
\endfirsthead

\multicolumn{3}{c}{\itshape Table~\thetable\ continued}\\[-1mm]
\toprule
\textbf{Property} & \textbf{Formal result} & \textbf{Evidence and scope} \\
\midrule
\endhead

\midrule
\multicolumn{3}{r}{\itshape Continued}\\
\endfoot
\bottomrule
\endlastfoot

Boundedness and endpoints
&
\Cref{eq:mcir-decomposed,eq:mcir-zero-characterization};
Eq.~\eqref{eq:mcir-one-characterization}
&
\textbf{Population guarantee.}
$0\leq C_i\leq1$, with the endpoints characterized by the corresponding
information quantities. Empirical clipping in
\Cref{eq:clipped-empirical-mcir} enforces this range and is therefore
not independent empirical validation.
\\

Exact conditional-redundancy collapse
&
\Cref{eq:exact-duplicate-collapse}
&
\textbf{Theory with sensitivity evidence.}
Measurability of $X_i$ with respect to $\sigma(X_{\Phi(i)})$ implies
$C_i=0$. The noisy-duplicate results in
\Cref{fig:redundancy_stress,tab:redundancy_stress} and
\Cref{tab:house_combined,fig:mass_house,fig:mass_combined} assess
finite-sample sensitivity, not exact deterministic collapse.
\\

Weak-dependence reduction
&
\Cref{prop:weak_dep_correct}
&
\textbf{Theoretical result.}
MCIR approaches a normalized marginal-information score under vanishing
conditioning effects and denominator separation. No controlled
zero-dependence experiment or ordering equivalence with PCIR is claimed.
\\

Population reparameterization invariance
&
\Cref{prop:population-invariance}
&
\textbf{Population guarantee.}
MCIR is invariant under bimeasurable bijections of $Q$, $X_i$, and
$X_{\Phi(i)}$. The separate rank--Gaussianized estimator result is
given in \Cref{prop:monotone_invariance}; no transformation experiment
is reported.
\\
Consistency and rank stability
&
\Cref{eq:mcir-consistency,thm:rank_stability}
&
\textbf{Conditional guarantee; indirect evidence.}
The result requires estimator-error, denominator-separation, and
population-margin conditions. Subsampling results in
\Cref{tab:jaccard_group,fig:group_stability} use a full-data empirical
reference rather than the unavailable population ranking. The conditioning-dimension stress test in
\Cref{fig:highdim-cmi-ablation,tab:highdim-cmi-ablation} additionally
shows that empirical score error and ranking agreement can diverge as
$|\Phi|$ increases.
\\

Estimator switching
&
\Cref{eq:estimator-switching-rule}
&
\textbf{Empirical heuristic.}
Fixed-estimator sensitivity is reported in
\Cref{fig:ablation-panel,tab:estimator_ci}, with repeated-subsampling
results in Appendix \Cref{tab:estimator_switching_all}. Performance depends on
$n$ and $|\Phi(i)|$; no oracle inequality is claimed.
\\

Auto-$\Phi$ selection
&
\Cref{eq:auto-phi-objective,alg:auto_phi}
&
\textbf{Algorithmic and sensitivity evidence.}
\Cref{fig:phi_ablation} supports small neighbourhoods but does not prove
population optimality. Selection balances ranking dispersion, residual
dependence, and conditioning dimension.
\\

Lightweight fidelity
&
\Cref{def:lightweight-fidelity-profile};
Eq.~\eqref{eq:lightweight-acceptance-criterion}
&
\textbf{Operational diagnostic.}
Full--lightweight discrepancies are reported in
\Cref{fig:metric-first,tab:jaccard_group,tab:norway-fidelity}. Output-law similarity
alone does not guarantee ranking agreement; cross-method agreement in
\Cref{tab:sota-merged} is not treated as fidelity evidence.
\\

Computational profile
&
\Cref{eq:mcir-computational-profile,tab:complexity1}
&
\textbf{Analysis and runtime evidence.}
Measurements appear in
Table~\ref{tab:runtime-lw-merged}, Table~\ref{tab:runtime-redundancy},
Table~\ref{tab:loco-shapley-houseenergy}, and Table~\ref{tab:highdim-cmi-ablation}, with decomposition
in \Cref{app:auto_phi_runtime}. End-to-end cost also depends on
screening, estimator choice, bootstrapping, $n$, and $k$; linear scaling
in $k$ is not empirically asserted.
\\

\end{longtable}
\endgroup

\section{Ethics, Limitations, and Future Work}
\label{rep}

All experiments use non-sensitive datasets:
(i) synthetic HouseEnergy-Sim data,
(ii) UCI HAR wearable-sensor data,
(iii) CIFAR-10 benchmark images, and
(iv) aggregated Norwegian load data without personal identifiers.
None of these datasets contains personally identifiable information or
protected attributes.
MCIR quantifies statistical dependence between input features and the
fixed model-output representation $Q$. It does \emph{not} identify
causal relationships or measure causal variable importance.
Misinterpretation may arise if conditional association is treated as
causation or if attribution scores are used in high-stakes decisions
without domain oversight. MCIR should therefore be used as a
dependence-aware model-explanation diagnostic, together with estimator
uncertainty, domain knowledge, and, where appropriate, a separate
causal analysis.
MCIR-M has several methodological limitations. First, its
finite-sample behaviour depends on the selected MI or CMI estimator,
the available sample size, the output dimension, and the conditioning
dimension. The high-dimensional stress test further shows that absolute score
error, full-ranking agreement, negative component estimates, and
runtime can change differently as $|\Phi|$ increases. The evaluated
Gaussian--copula and continuous $k$NN estimators therefore do not
provide a general solution to high-dimensional CMI estimation. The bootstrap-based estimator-switching rule minimizes
estimated sampling variability but does not estimate estimator bias or
provide an oracle-risk guarantee. The fixed-estimator and switching
results are therefore reported jointly. Second, neighbourhood
construction depends on the screening method, candidate budget,
penalty parameters, and dependence threshold. Although the sensitivity
experiments assess several practical settings, they do not establish a
universally optimal neighbourhood. Third, pairwise screening may omit
higher-order or interaction-only dependence, while unnecessarily large
neighbourhoods can increase the bias, variance, and computational cost
of finite-sample CMI estimation. Appendix~\ref{app:failure_case_higher_order}
provides a controlled example in which pairwise screening fails to
recover interaction-only dependence. Finally, the lightweight fidelity
profile is an empirical diagnostic: similarity of the marginal output
laws alone does not guarantee agreement between MCIR scores or
rankings, which depend on the joint feature--output laws. The controlled comparison with LOCO, region-weighted LOCO, and a
Shapley-effects plug-in also demonstrates that predictive-risk change,
regional loss variation, explained-variance allocation, and unique
conditional information can produce materially different rankings.
Moreover, the regional partition was highly imbalanced and the
Shapley-effects plug-in exhibited finite-sample non-monotonicity, so
these comparisons should be interpreted within their stated
implementations.
Future work will investigate interaction-aware neighbourhood construction,
classifier-based and other high-dimensional CMI estimators,
distributionally robust estimation, principled sample-size selection,
and extensions for temporal and structured data.

\section{Broader Impact}
\label{bro}

MCIR provides a dependence-aware approach to global model explanation
by measuring the unique predictive information contributed by a feature
relative to a selected local neighbourhood. Its normalization produces
a bounded score in $[0,1]$, while its population properties characterize
conditional-redundancy collapse and invariance under bimeasurable
reparameterizations. The fixed numerical range facilitates
interpretation within a stated environment, but does not by itself
guarantee identical calibration across datasets, models, or estimators.
The method may help practitioners identify redundant predictive signals
in domains containing correlated sensors, temporal lags, engineered
feature blocks, or learned representations. Nevertheless, an MCIR score
is relative to the fitted model, observed feature set, selected
neighbourhood, estimator, and evaluation distribution. It should not be
used as evidence of causal effect, nor should a lightweight explanation
be accepted without examining its representation, ranking, top-$K$, and
faithfulness diagnostics. Responsible use requires transparent reporting
of these choices, uncertainty analysis, appropriate data governance, and
domain-expert review, particularly in high-stakes applications.

\section{Conclusion}
\label{sec:conclusion}

This study introduced MCIR-M, a dependence-aware global attribution
framework that measures the unique predictive information contributed
by each feature beyond a selected dependence neighbourhood. The
population MCIR functional is bounded in $[0,1]$, collapses to zero
under exact conditional redundancy, and is invariant under
bimeasurable reparameterizations. Under explicit approximation and
denominator-separation conditions, the weak-dependence result reduces
MCIR to a normalized marginal-information expression; it does not
establish ordering equivalence with PCIR or another attribution method.
The empirical evaluation covered controlled redundancy, UCI HAR,
HouseEnergy-Sim, CIFAR-10 representations, and Norwegian
residential-demand forecasting. MCIR showed its clearest advantage in
the controlled redundancy experiments, where it better preserved the
ordering of designated predictive variables under injected
near-duplicates. Across the real-data experiments, exact ranking
agreement, group-level stability, deletion faithfulness, and runtime
varied across datasets and baselines. The results therefore support
dependence-aware attribution without suggesting that MCIR or any
individual estimator is uniformly superior under every evaluation
criterion.
The lightweight procedure reduced attribution cost in the evaluated
settings, but its adequacy was assessed empirically rather than inferred
from subsampling alone. Representation-distribution agreement,
complete-ranking agreement, top-$K$ agreement, and deletion or
insertion behaviour were therefore evaluated separately.
Estimator-sensitivity results further showed that estimator choice can
materially affect the ranking, particularly for highly ranked or
strongly redundant features. The conditioning-dimension ablation showed
that finite-sample CMI behaviour cannot be characterized by score error
alone: ranking agreement, unprojected negative $k$NN component
estimates, and runtime changed differently with $n$ and $|\Phi|$.
Similarly, the controlled LOCO and Shapley-effects comparison showed
that MCIR's redundancy-aware allocation does not imply uniform
superiority under predictive-risk or explained-variance criteria.
Overall, MCIR-M provides a statistically defined framework for
explaining a fixed model under feature dependence. Its interpretation
remains relative to the selected neighbourhood and joint
feature--output law, while its finite-sample reliability depends on
neighbourhood adequacy, estimator suitability, sample size, and ranking
separation. Future work will investigate higher-order and temporal
neighbourhood construction, more reliable high-dimensional and
mixed-type CMI estimation, robustness under distribution shift,
improved regional importance estimation, and broader comparisons with
statistically grounded variable-importance and dependence-aware
sensitivity functionals.
% =========================================================

% ---------------------------------------------------------

\bibliography{main}
\bibliographystyle{tmlr}

%==================================================
% APPENDIX CONTENTS
%==================================================
%==================================================
% APPENDIX CONTENTS (Appendix Only)
%==================================================

%==================================================
% APPENDIX CONTENTS
%==================================================
\appendix
\clearpage
\section{Supplementary Notation and Scope}
\label{appendix:notation}

This appendix supplements the definitions in
Section~\ref{sec:notation}; it does not introduce an alternative MCIR
formulation. Throughout, $X=(X_1,\ldots,X_k)$ is the population feature
vector, $Q$ denotes the model-output representation in the environment
under consideration, and $\Phi(i)$ is the selected conditioning
neighbourhood of feature $i$. The full and lightweight sample sizes are
$N$ and $n<N$, while $d$ and $d'\leq d$ are their output-representation
dimensions.

\begin{table}[htbp]
\centering
\scriptsize
\setlength{\tabcolsep}{3pt}
\renewcommand{\arraystretch}{0.92}
\caption{Supplementary notation.}
\label{tab:notation}
\begin{adjustbox}{width=\linewidth}
\begin{tabular}{lp{5.0cm}lp{5.6cm}}
\toprule
\textbf{Symbol} & \textbf{Meaning} &
\textbf{Symbol} & \textbf{Meaning} \\
\midrule
$P_{X,Q}$ & Population joint law in the environment considered. &
$Q$ & $Z\in\mathbb R^d$ or $Z'\in\mathbb R^{d'}$. \\
$N,n$ & Full and lightweight sample sizes, $n<N$. &
$d,d'$ & Full and lightweight output dimensions, $d'\leq d$. \\
$\widehat P_{X,Z}^{\,N}$ & Full empirical joint law. &
$\widehat P_{X,Z'}^{\,n}$ & Lightweight empirical joint law. \\
$\widehat\mu_N,\widehat\nu_n$ & Full and lightweight empirical output laws. &
$\Phi(i)$ & Selected neighbourhood of $X_i$. \\
$X_{\Phi(i)}$ & Random feature block indexed by $\Phi(i)$. &
$U_i$ & $I_{P_{X,Q}}(Q;X_i\mid X_{\Phi(i)})$. \\
$H_i$ & $I_{P_{X,Q}}(Q;X_{\Phi(i)})$. &
$J_i$ & $I_{P_{X,Q}}(Q;X_{\Phi(i)\cup\{i\}})=H_i+U_i$. \\
$C_i$ & Population MCIR score $2U_i/(U_i+J_i)$. &
$\widehat C_{i,n}$ & Empirical MCIR estimate. \\
$m_\Phi$ & Maximum conditioning-neighbourhood size. &
$M_{\max}$ & Maximum screened candidate-pool size. \\
$\mathfrak F_{N,n}$ & Empirical lightweight fidelity profile. &
$d_{\mathrm{J@K}}$ & Top-$K$ Jaccard discrepancy. \\
\bottomrule
\end{tabular}
\end{adjustbox}
\end{table}

\paragraph{Population and empirical objects.}
MI and CMI without hats are functionals of $P_{X,Q}$. Feature columns
in an observed data matrix are sample realizations used to construct
empirical estimators; they are not population random variables.
Reducing $N$ to $n$ changes the number of atoms in an empirical measure,
not its ambient dimension. The Cai--Lim construction is used only when
the output dimension changes from $d$ to $d'$.

\paragraph{Environment comparison.}
For common coordinates and $d'=d$, representation discrepancy is
$W_p(P_Z,P_{Z'})$. When $d'<d$, it is the Cai--Lim augmented
Wasserstein distance
$\widetilde W_p(P_{Z'},P_Z)$ defined in
Equation~\ref{eq:augmented-wasserstein}. No generic
$f$-divergence identity between projection and embedding distances is
assumed. Moreover, output-law similarity alone does not imply MCIR
agreement because MCIR depends on the joint feature--output law.
Representation, ranking, top-$K$, and deletion/insertion discrepancies
are therefore reported separately through
$\mathfrak F_{N,n}$ in
Definition~\ref{def:lightweight-fidelity-profile}.

\section{PCIR Implementation Used as a Baseline}
\label{app:pcir}

PCIR~\cite{sengupta2025correlation} is included only as an
unconditioned dispersion-based baseline. For a scalar output
realization $\{q^{(r)}\}_{r=1}^{m}$ and feature realization
$\{x_i^{(r)}\}_{r=1}^{m}$, let $\bar q$, $\bar x_i$, and $m_i$ denote
the output mean, feature mean, and pooled center used by the PCIR
implementation. The score is
\begin{equation}
\medmath{
\eta_i
=
\frac{
m\!\left[(\bar x_i-m_i)^2+(\bar q-m_i)^2\right]
}{
\sum_{r=1}^{m}(x_i^{(r)}-m_i)^2+
\sum_{r=1}^{m}(q^{(r)}-m_i)^2
}.
}
\label{eq:app-pcir-score}
\end{equation}
Writing the denominator as between-mean plus within-sample dispersion
gives
\begin{equation}
\medmath{
S_T(i)
=
S_B(i)+S_W(i),\qquad
S_B(i)
=
m\!\left[(\bar x_i-m_i)^2+(\bar q-m_i)^2\right],
\quad S_W(i)\geq0.
}
\label{eq:app-pcir-decomposition}
\end{equation}
Hence $0\leq\eta_i\leq1$ whenever $S_T(i)>0$. This algebraic
boundedness does not establish that PCIR measures conditional,
information-theoretic, or causal importance. Because feature and output
values enter a common pooled construction, its numerical value can
depend on their relative parameterizations and preprocessing. The
experiments therefore use the fixed preprocessing reported in
Table~\ref{tab:reproducibility_summary}, and PCIR is not used to prove
properties of MCIR.

\section{Mixed-Variable Information Components}
\label{app:mixed-information}

Let $X_i$, $X_{\Phi(i)}$, and $Q$ be random elements that may be
continuous, discrete, or mixed-type. Under Assumption~\ref{asm:wellposed},
their conditional mutual information is
\begin{equation}
\medmath{
\begin{aligned}
I_{P_{X,Q}}(Q;X_i\mid X_{\Phi(i)})
:=
\int D_{\mathrm{KL}}\Big(
&P_{Q,X_i\mid X_{\Phi(i)}=x_\Phi}\\[-1mm]
&\big\|
P_{Q\mid X_{\Phi(i)}=x_\Phi}
\otimes P_{X_i\mid X_{\Phi(i)}=x_\Phi}
\Big)\,P_{X_{\Phi(i)}}(dx_\Phi).
\end{aligned}
}
\label{eq:app-mixed-cmi}
\end{equation}
Equivalently, when the conditional Radon--Nikodym derivative exists,
\begin{equation}
\medmath{
I_{P_{X,Q}}(Q;X_i\mid X_{\Phi(i)})
=
\mathbb E
\left[
\log
\frac{
dP_{Q\mid X_i,X_{\Phi(i)}}
}{
dP_{Q\mid X_{\Phi(i)}}
}(Q)
\right].
}
\label{eq:app-mixed-cmi-rn}
\end{equation}

\begin{wrapfigure}{l}{0.50\linewidth}
\centering
\begin{tikzpicture}[
  node distance=5mm,
  header/.style={
    rectangle, rounded corners=2pt, inner sep=2pt,
    font=\scriptsize\bfseries, text=white
  },
  box/.style={
    rectangle, rounded corners, align=left,
    inner sep=3pt, text width=0.92\linewidth,
    font=\scriptsize, fill=white
  },
  >={Stealth[length=2mm]}
]

% Inputs
\node[box, fill=oi-sky!10, draw=oi-sky] (input) {%
  \tikz{\node[header, fill=oi-sky, anchor=west] at (0,0) {Inputs};}
  \hspace{2ex}
  Explanation sample
  $\mathcal D_s=\{(x^{(r)},q^{(r)})\}_{r=1}^{s}$,
  with $x^{(r)}\in\mathbb R^k$ and
  $q^{(r)}\in\mathbb R^{d_Q}$; head size $K$;
  candidate estimators $\mathcal A$.
};

% Screening
\node[box, below=of input, fill=oi-blue!7, draw=oi-blue] (screen) {%
  \tikz{\node[header, fill=oi-blue, anchor=west] at (0,0)
  {1) Candidate screening};}
  \newline
  Compute the specified dependence sketch on the explanation data.
  For each feature $i$, construct an ordered candidate pool
  $\mathcal S_i$ with $|\mathcal S_i|\leq M_{\max}$.
};

% Auto-Phi
\node[box, below=of screen, fill=oi-green!10, draw=oi-green] (cond) {%
  \tikz{\node[header, fill=oi-green, anchor=west] at (0,0)
  {2) Auto-$\Phi$ selection};}
  \newline
  Construct nested candidates
  $\varnothing=\Phi_{i,0}\subset\cdots\subseteq\mathcal S_i$.
  Select $\widehat\Phi(i)$ by balancing bootstrap ranking dispersion,
  residual dependence, and conditioning dimension using
  Equation~\ref{eq:auto-phi-objective}.
};

% Estimator selection
\node[box, below=of cond, fill=oi-orange!10, draw=oi-orange] (select) {%
  \tikz{\node[header, fill=oi-orange, anchor=west] at (0,0)
  {3) Estimator selection};}
  \newline
  Retain only estimator families compatible with the variable types
  and $|\widehat\Phi(i)|$. Select the least-variable candidate using
  the bootstrap rule in
  Equation~\ref{eq:estimator-switching-rule}; report fixed-estimator
  sensitivity separately.
};

% MCIR estimation
\node[box, below=of select, fill=oi-purple!10, draw=oi-purple] (est) {%
  \tikz{\node[header, fill=oi-purple, anchor=west] at (0,0)
  {4) MCIR estimation};}
  \newline
  Estimate
  $\widehat U_{i,s}
  =\widehat I(Q;X_i\mid X_{\widehat\Phi(i)})$
  and
  $\widehat J_{i,s}
  =\widehat I(Q;X_{\widehat\Phi(i)\cup\{i\}})$.
  Compute the projected score
  $\widehat C_{i,s}\in[0,1]$ using
  Equation~\ref{eq:clipped-empirical-mcir}.
};

% Diagnostics
\node[box, below=of est, fill=oi-verm!8, draw=oi-verm] (valid) {%
  \tikz{\node[header, fill=oi-verm, anchor=west] at (0,0)
  {5) Ranking and diagnostics};}
  \newline
  Rank features by decreasing $\widehat C_{i,s}$ and report bootstrap
  score intervals and head-ranking stability. When both environments
  are available, report
  $\mathfrak F_{N,n}
  =(\widehat d_{\mathrm{repr}},d_{\mathrm{rank}},
  d_{\mathrm{J@K}},d_{\mathrm{curve}})$.
};

% Arrows
\draw[->, very thick, oi-blue]   (input) -- (screen);
\draw[->, very thick, oi-green]  (screen) -- (cond);
\draw[->, very thick, oi-orange] (cond) -- (select);
\draw[->, very thick, oi-purple] (select) -- (est);
\draw[->, very thick, oi-verm]   (est) -- (valid);
\end{tikzpicture}
\caption{MCIR workflow. Candidate screening, neighbourhood selection,
estimator choice, scoring, and full--lightweight diagnostics are treated
as distinct stages.}
\label{fig:mcir_scheme_vertical}
\end{wrapfigure}
Equations~\eqref{eq:app-mixed-cmi} and
\eqref{eq:app-mixed-cmi-rn} provide one measure-theoretic definition
for discrete, continuous, and mixed-type variables. When compatible
mass functions or densities exist, the expectation becomes the
corresponding sum, integral, or sum--integral expression. No common
Lebesgue density and no positive-probability assumption on individual
conditioning values are required.
For example, if $Q$ and $X_d$ are discrete and $X_c$ is continuous,
then, whenever compatible conditional densities and masses exist,
\begin{equation}
\medmath{
\begin{aligned}
I(Q;X_c\mid X_d)
=
\sum_{x_d}\sum_q\int
p(q,x_c,x_d)
\log
\frac{p(q\mid x_c,x_d)}
     {p(q\mid x_d)}
\,dx_c .
\end{aligned}
}
\label{eq:app-mixed-cmi-example}
\end{equation}
The all-discrete and all-continuous formulas follow by replacing the
remaining integrals or sums accordingly. The entropy-difference
identity may be used only when the corresponding conditional entropies
are well defined and finite.

The term \emph{mixed-variable CMI} refers to evaluating the
standard population CMI functional in a mixed-type feature space; it
does not denote a new information measure or imply that one estimator
is uniformly valid for every mixed-data regime.

The joint mutual information of $Q$ and
$X_S=(X_j)_{j\in S}$ is
\begin{equation}
\medmath{
I_{P_{X,Q}}(Q;X_S)
=
D_{\mathrm{KL}}
\left(
P_{Q,X_S}\,\middle\|\,P_Q\otimes P_{X_S}
\right)\geq0.
}
\label{eq:app-jmi}
\end{equation}
For each feature, define
\begin{equation}
\begin{split}
    \medmath{
U_i=I_{P_{X,Q}}(Q;X_i\mid X_{\Phi(i)}),}\\
\medmath{H_i=I_{P_{X,Q}}(Q;X_{\Phi(i)}),\
J_i=H_i+U_i.
}
\end{split}
\label{eq:app-information-components}
\end{equation}
Here, $U_i$ is the \emph{unique predictive information} contributed by
$X_i$ beyond its selected neighbourhood, while $J_i$ is the total
predictive information carried by the enlarged block. The term
\emph{unique} is relative to $X_{\Phi(i)}$ and describes statistical,
not causal, contribution. Figure~\ref{fig:mcir_scheme_vertical} summarizes the complete MCIR
workflow. Given an explanation sample from either the full or
lightweight environment, the procedure first constructs a screened
candidate pool $\mathcal S_i$ for each feature. Auto-$\Phi$ then selects
a small conditioning neighbourhood $\widehat\Phi(i)$ by balancing
ranking stability, residual dependence, and conditioning dimension.
Among the estimators compatible with the resulting local block, the
bootstrap switching rule selects the candidate with the lowest estimated
variability. MCIR subsequently combines the estimated unique predictive
information
$\widehat I(Q;X_i\mid X_{\widehat\Phi(i)})$
with the total information
$\widehat I(Q;X_{\widehat\Phi(i)\cup\{i\}})$
to obtain a bounded score. Finally, the scores are ranked and reported
with bootstrap uncertainty and stability diagnostics. When both full
and lightweight environments are available, representation, complete
ranking, top-$K$, and deletion/insertion-curve discrepancies are reported
jointly through the lightweight fidelity profile
$\mathfrak F_{N,n}$.

\section{Detailed Proofs of the Main Results}
\label{sec:theory1}

This section proves the population and finite-sample results stated in
Section~\ref{prob}. Throughout, all information quantities are computed
under the relevant joint law $P_{X,Q}$, and
\begin{equation}
\medmath{
U_i=I_{P_{X,Q}}(Q;X_i\mid X_{\Phi(i)}),\qquad
H_i=I_{P_{X,Q}}(Q;X_{\Phi(i)}),\qquad
J_i=H_i+U_i.
}
\label{eq:proof-information-components}
\end{equation}
The chain rule gives
$J_i=I_{P_{X,Q}}(Q;X_{\Phi(i)\cup\{i\}})=H_i+U_i$; hence
\begin{equation}
\medmath{
C_i
=
\frac{2U_i}{U_i+J_i}
=
\frac{2U_i}{2U_i+H_i},
}
\label{eq:proof-mcir-decomposition}
\end{equation}
with $C_i=0$ when $U_i+J_i=0$.

\subsection{Boundedness and the Zero Endpoint}
\label{sec:theory1.1}

\begin{proposition}[Boundedness and zero characterization]
\label{prop:proof-boundedness}
Under Assumption~\ref{asm:wellposed},
$\medmath{0\leq C_i\leq1}$, and
\begin{equation}
\medmath{
C_i=0
\quad\Longleftrightarrow\quad
U_i=0
\quad\Longleftrightarrow\quad
Q\perp\!\!\!\perp X_i\mid X_{\Phi(i)}
\ \text{a.s.}
}
\label{eq:proof-zero-characterization}
\end{equation}
\end{proposition}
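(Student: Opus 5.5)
The plan is to work entirely from the decomposition in Equation~\ref{eq:proof-mcir-decomposition}, $C_i=2U_i/(2U_i+H_i)$, together with the chain rule $J_i=H_i+U_i$ stated just before the proposition. Under Assumption~\ref{asm:wellposed}, $J_i<\infty$. Because $U_i$ and $H_i$ are non-negative (Equations~\ref{eq:mixed-cmi-nonnegative} and \ref{eq:jmi-definition}), both are finite, so every quantity in the ratio is a well-defined real number.

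For boundedness I split into two cases.
\begin{itemize}
\item If $U_i+J_i=0$, the convention in Definition~\ref{def:mcir} gives $C_i=0\in[0,1]$.
\item If $U_i+J_i>0$, the denominator $2U_i+H_i=U_i+J_i$ is positive. Since $H_i\geq0$, we have $0\leq 2U_i\leq 2U_i+H_i$, and dividing gives $0\leq C_i\leq1$.
\end{itemize}

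For the zero characterization, I would prove the first equivalence by cases.
\begin{itemize}
\item If $U_i=0$, then either the denominator vanishes, so $C_i=0$ by convention, or the numerator is $0$ over a positive denominator, so again $C_i=0$.
\item If $U_i>0$, the denominator is at least $2U_i>0$ and the numerator is positive, so $C_i>0$.
\end{itemize}
Together these give $C_i=0\Leftrightarrow U_i=0$.

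The second equivalence, $U_i=0\Leftrightarrow Q\perp\!\!\!\perp X_i\mid X_{\Phi(i)}$, is the step that needs care; it is the main obstacle only in the sense of measure-theoretic bookkeeping. I would use the kernel form in Equation~\ref{eq:mixed-cmi-kernel-definition}, which expresses $U_i$ as the $P_{X_{\Phi(i)}}$-integral of a non-negative function $x_\Phi\mapsto D_{\mathrm{KL}}(\cdot\|\cdot)$. Measurability of this integrand follows from standard Borel regular conditional kernels, granted by Assumption~\ref{asm:wellposed}(i). A non-negative integrand has integral zero if and only if it vanishes $P_{X_{\Phi(i)}}$-a.s. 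By Gibbs' inequality, the KL divergence vanishes exactly when $P_{Q,X_i\mid X_{\Phi(i)}=x_\Phi}=P_{Q\mid X_{\Phi(i)}=x_\Phi}\otimes P_{X_i\mid X_{\Phi(i)}=x_\Phi}$. Holding for almost every $x_\Phi$, this is precisely conditional independence up to null sets. It also matches the equality case already noted after Equation~\ref{eq:mixed-cmi-nonnegative}, which I would simply invoke.

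Chaining the two equivalences completes the proof of Equation~\ref{eq:proof-zero-characterization}.
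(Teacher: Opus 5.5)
Your proposal is correct and follows essentially the paper's proof: non-negativity of MI/CMI together with the chain rule gives boundedness and $C_i=0\Leftrightarrow U_i=0$, and the conditional-independence equivalence comes from the kernel representation, where a non-negative KL integrand has zero integral exactly when it vanishes $P_{X_{\Phi(i)}}$-a.s. The differences are cosmetic. You work with the form $2U_i/(2U_i+H_i)$ and split on whether the denominator vanishes rather than on whether $U_i$ vanishes, and you make explicit the measurability of the integrand and the finiteness from Assumption~\ref{asm:wellposed}, both of which the paper leaves implicit.
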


\begin{proof}
MI and CMI are non-negative because they are KL divergences. Thus,
$\medmath{U_i,H_i\geq0}$ and
$\medmath{J_i=H_i+U_i\geq U_i}$. If $U_i=0$, then $C_i=0$ by either
the ratio or the zero-denominator convention. If $U_i>0$, then
\begin{equation}
\medmath{
0
<
\frac{2U_i}{U_i+J_i}
\leq
\frac{2U_i}{2U_i}
=1,
}
\end{equation}
where the inequality follows from $J_i\geq U_i$. This proves
boundedness and shows that $C_i=0$ if and only if $U_i=0$.

By the conditional-kernel definition,
\begin{equation}
\medmath{
\begin{aligned}
U_i
=
\int D_{\mathrm{KL}}\Big(
&P_{Q,X_i\mid X_{\Phi(i)}=x_\Phi}\\[-1mm]
&\big\|
P_{Q\mid X_{\Phi(i)}=x_\Phi}
\otimes P_{X_i\mid X_{\Phi(i)}=x_\Phi}
\Big)\,P_{X_{\Phi(i)}}(dx_\Phi).
\end{aligned}
}
\label{eq:proof-cmi-kernel}
\end{equation}
This integral is zero if and only if its non-negative integrand is zero
for $P_{X_{\Phi(i)}}$-almost every $x_\Phi$. Since KL divergence is
zero exactly when its two arguments agree,
\begin{equation}
\medmath{
P_{Q,X_i\mid X_{\Phi(i)}=x_\Phi}
=
P_{Q\mid X_{\Phi(i)}=x_\Phi}
\otimes P_{X_i\mid X_{\Phi(i)}=x_\Phi}
}
\end{equation}
for almost every $x_\Phi$. This is equivalent to
$\medmath{Q\perp\!\!\!\perp X_i\mid X_{\Phi(i)}}$ almost surely.
\end{proof}

Boundedness provides a common numerical range but does not imply
identical calibration across different joint laws, models, datasets,
or estimators.

\subsection{Exact Conditional-Redundancy Collapse}
\label{sec:theory1.2}

\begin{proposition}[Exact conditional-redundancy collapse]
\label{prop:proof-exact-collapse}
If $X_i$ is measurable with respect to
$\sigma(X_{\Phi(i)})$, then $U_i=0$ and $C_i=0$.
\end{proposition}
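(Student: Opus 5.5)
The plan is to reduce the claim to Proposition~\ref{prop:proof-boundedness}. That result shows $C_i=0$ if and only if $U_i=0$, and $U_i=0$ if and only if $Q\perp\!\!\!\perp X_i\mid X_{\Phi(i)}$. So it suffices to prove this conditional independence whenever $X_i$ is $\sigma(X_{\Phi(i)})$-measurable.

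First, since $X_i$ is $\sigma(X_{\Phi(i)})$-measurable and the state spaces are standard Borel (Assumption~\ref{asm:wellposed}), the Doob--Dynkin lemma gives a measurable map $f$ with $X_i=f(X_{\Phi(i)})$ almost surely. Consequently the regular conditional law of $X_i$ given $X_{\Phi(i)}=x_\Phi$ is the point mass $\delta_{f(x_\Phi)}$ for $P_{X_{\Phi(i)}}$-almost every $x_\Phi$.

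Next, I will compute the joint conditional kernel. For measurable rectangles $A\times B$,
\[
P_{Q,X_i\mid X_{\Phi(i)}=x_\Phi}(A\times B)=\mathbf 1_B(f(x_\Phi))\,P_{Q\mid X_{\Phi(i)}=x_\Phi}(A)
\]
almost surely. This equals $\bigl(P_{Q\mid X_{\Phi(i)}=x_\Phi}\otimes\delta_{f(x_\Phi)}\bigr)(A\times B)$. To justify it, I check that both sides integrate correctly against events in $\sigma(X_{\Phi(i)})$ and use that $\{X_i\in B\}=\{f(X_{\Phi(i)})\in B\}$ up to null sets. A $\pi$--$\lambda$ argument then extends the equality from rectangles to the product $\sigma$-algebra. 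Because rectangles generate a countably generated $\sigma$-algebra in the standard Borel setting, the exceptional null set can be chosen independently of $A$ and $B$.

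With the kernels identified, the integrand in Equation~\ref{eq:proof-cmi-kernel} is a KL divergence between identical measures for almost every $x_\Phi$. Hence $U_i=0$. Proposition~\ref{prop:proof-boundedness}, or simply the ratio $2U_i/(2U_i+H_i)$ together with the zero-denominator convention, then gives $C_i=0$.

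An alternative route uses the Radon--Nikodym representation in Equation~\ref{eq:mixed-cmi-rn-representation}. Since $\sigma(X_i,X_{\Phi(i)})=\sigma(X_{\Phi(i)})$ modulo null sets, we have $P_{Q\mid X_i,X_{\Phi(i)}}=P_{Q\mid X_{\Phi(i)}}$, so the log-ratio is zero. I would mention this as a remark.

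The main obstacle is the measure-theoretic bookkeeping in the kernel step: showing that the version of the regular conditional distribution factorizes almost surely, with a single null set valid simultaneously for all rectangles. I will handle this by working with a countable generating $\pi$-system and invoking the uniqueness of regular conditional distributions up to null sets on standard Borel spaces.
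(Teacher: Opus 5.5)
Your proof is correct, but its key step differs from the paper's. The paper works with the conditional law of $Q$. From $\sigma(X_i,X_{\Phi(i)})=\sigma(X_{\Phi(i)})$ it concludes $P_{Q\mid X_i,X_{\Phi(i)}}=P_{Q\mid X_{\Phi(i)}}$ a.s., and substitutes this into the kernel or Radon--Nikodym representation. That is essentially the route you relegate to a closing remark.

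Your primary route works with the conditional law of $X_i$ instead. You use Doob--Dynkin to show it is the point mass $\delta_{f(x_\Phi)}$. You then verify directly that the joint kernel $P_{Q,X_i\mid X_{\Phi(i)}=x_\Phi}$ equals $P_{Q\mid X_{\Phi(i)}=x_\Phi}\otimes\delta_{f(x_\Phi)}$, which is exactly the product measure in Equation~\ref{eq:proof-cmi-kernel}.

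\textbf{What your route buys.} It operates on the paper's primary, kernel-form definition of CMI and makes the null-set bookkeeping explicit. The paper's one-line ``substitution into the conditional-kernel representation'' tacitly needs the disintegration of $P_{Q,X_i\mid X_{\Phi(i)}}$ through $P_{X_i\mid X_{\Phi(i)}}$ and $P_{Q\mid X_i,X_{\Phi(i)}}$. Your rectangle computation and $\pi$--$\lambda$ extension effectively supply that step.

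\textbf{What the paper's route buys.} It is shorter and needs no representing function $f$, only the $\sigma$-algebra identity. In the Radon--Nikodym form it is immediate, because the log-ratio vanishes identically and the required absolute continuity is trivial when the two kernels coincide.

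Both arguments then conclude identically: $U_i=0$ gives $C_i=0$ through the ratio and the zero-denominator convention.
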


\begin{proof}
Measurability gives
\begin{equation}
\medmath{
\sigma(X_i,X_{\Phi(i)})
=
\sigma(X_{\Phi(i)}).
}
\label{eq:proof-sigma-equality}
\end{equation}
Consequently, conditioning on $(X_i,X_{\Phi(i)})$ supplies the same
information as conditioning on $X_{\Phi(i)}$ alone. A version of the
regular conditional law therefore satisfies
\begin{equation}
\medmath{
P_{Q\mid X_i,X_{\Phi(i)}}
=
P_{Q\mid X_{\Phi(i)}}
\quad\text{a.s.}
}
\label{eq:proof-conditional-law-equality}
\end{equation}
Substitution into the conditional-kernel or Radon--Nikodym
representation of CMI yields $U_i=0$. Proposition
\ref{prop:proof-boundedness} then gives $C_i=0$.
\end{proof}

This result includes an exact duplicate or measurable transformation
already represented in $\Phi(i)$. Without additional assumptions, it
does not establish convergence for noisy near-duplicates. Such cases
are evaluated empirically through estimator and neighbourhood
sensitivity.

\subsection{The Upper Endpoint and Population Invariance}
\label{sec:theory1.3}

\begin{proposition}[Upper endpoint]
\label{prop:proof-upper-endpoint}
If $U_i>0$, then
\begin{equation}
\medmath{
C_i=1
\quad\Longleftrightarrow\quad
H_i=I_{P_{X,Q}}(Q;X_{\Phi(i)})=0.
}
\label{eq:proof-upper-characterization}
\end{equation}
\end{proposition}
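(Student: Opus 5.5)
The plan is to work directly from the decomposed form of the score in Equation~\eqref{eq:proof-mcir-decomposition}, $C_i=2U_i/(2U_i+H_i)$, which is available by the chain rule $J_i=H_i+U_i$ under Assumption~\ref{asm:wellposed}. Since we are in the regime $U_i>0$, the denominator satisfies $U_i+J_i=2U_i+H_i\geq 2U_i>0$. Hence the zero-denominator convention of Definition~\ref{def:mcir} is never invoked, and $C_i$ is given by the ratio itself. All quantities are finite by Assumption~\ref{asm:wellposed}, so ordinary real arithmetic applies.

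For the direction ($\Leftarrow$), I substitute $H_i=0$ into the decomposed ratio to get $C_i=2U_i/(2U_i)=1$, which is legitimate because $U_i>0$. For the direction ($\Rightarrow$), I assume $C_i=1$ and cross-multiply, which is valid since the denominator is positive. This gives $2U_i=2U_i+H_i$, hence $H_i=0$. Non-negativity of $H_i$ as a KL divergence (Definition~\ref{def:jmi}) is not even needed for this direction; it was only used implicitly in the boundedness part of Proposition~\ref{prop:proof-boundedness}.

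As an optional final remark, Definition~\ref{def:jmi} lets me translate $H_i=I_{P_{X,Q}}(Q;X_{\Phi(i)})=0$ into $Q\perp\!\!\!\perp X_{\Phi(i)}$. This gives the interpretive statement that $C_i$ reaches one exactly when the neighbourhood alone carries no information about $Q$. I would also note that the hypothesis $U_i>0$ is essential. If $U_i=0$, then Proposition~\ref{prop:proof-boundedness} forces $C_i=0$ regardless of $H_i$, so the equivalence would fail, for instance when $U_i=H_i=0$.

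There is no genuine obstacle here. The only point needing care is to confirm that the denominator is strictly positive, so that the ratio form, rather than the convention, defines $C_i$ and the cross-multiplication is valid. That step is handled by $U_i>0$ together with $H_i\geq0$.
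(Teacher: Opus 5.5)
Your proposal is correct and follows the paper's proof: both start from $C_i=2U_i/(2U_i+H_i)$, use $U_i>0$ to make the denominator positive, derive $C_i=1\Longleftrightarrow 2U_i=2U_i+H_i\Longleftrightarrow H_i=0$, and check the converse by direct substitution. Your aside that non-negativity of $H_i$ is not needed for ($\Rightarrow$) slightly conflicts with your later use of $H_i\geq 0$ to justify cross-multiplying, but this is harmless because $H_i\geq 0$ holds anyway.
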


\begin{proof}
Using Equation~\ref{eq:proof-mcir-decomposition} and $U_i>0$,
\begin{equation}
\medmath{
\begin{aligned}
C_i=1
&\Longleftrightarrow
\frac{2U_i}{2U_i+H_i}=1\\
&\Longleftrightarrow
2U_i=2U_i+H_i
\Longleftrightarrow H_i=0.
\end{aligned}
}
\end{equation}
Conversely, $H_i=0$ gives
$\medmath{C_i=2U_i/(2U_i)=1}$.
\end{proof}

\begin{proposition}[Population reparameterization invariance]
\label{prop:proof-population-invariance}
Let $T_Q$, $T_i$, and $T_\Phi$ be bimeasurable bijections on the state
spaces of $Q$, $X_i$, and $X_{\Phi(i)}$, respectively. Applying these
maps leaves $C_i$ unchanged.
\end{proposition}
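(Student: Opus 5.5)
The plan is to reduce everything to the invariance of relative entropy under bimeasurable pushforwards. Then, using the chain-rule decomposition in Equation~\ref{eq:proof-mcir-decomposition}, we conclude that $C_i$ is unchanged. Write $T=T_Q\times T_i\times T_\Phi$ for the product map on the joint state space of $(Q,X_i,X_{\Phi(i)})$. Since each factor is a bimeasurable bijection between standard Borel spaces, $T$ is itself a bimeasurable bijection, and so is every sub-product such as $T_Q\times T_\Phi$. The basic fact I will invoke is the following: for a bimeasurable bijection $S$ and probability measures $P,R$, one has $D_{\mathrm{KL}}(S_\#P\,\|\,S_\#R)=D_{\mathrm{KL}}(P\,\|\,R)$. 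I would justify it directly: $S_\#P\ll S_\#R$ iff $P\ll R$, because $S$ maps null sets to null sets in both directions, and then $d(S_\#P)/d(S_\#R)=(dP/dR)\circ S^{-1}$. The change-of-variables formula then gives equality of the integrals, including the case where both sides are $+\infty$. Alternatively, one can apply the data-processing inequality to $S$ and $S^{-1}$ and get the two opposite inequalities.

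First I handle the joint term. The law of $(\widetilde Q,\widetilde X_{\Phi(i)},\widetilde X_i)$ is the pushforward of $P_{Q,X_{\Phi(i)},X_i}$ under $T$. Pushforwards of product measures under product maps are products, so $(T_Q\times(T_\Phi\times T_i))_\#(P_Q\otimes P_{X_{\Phi(i)},X_i})=P_{\widetilde Q}\otimes P_{\widetilde X_{\Phi(i)},\widetilde X_i}$. By Definition~\ref{def:jmi} and the KL invariance, $I(\widetilde Q;\widetilde X_{\Phi(i)},\widetilde X_i)=J_i$. The same argument with $T_Q\times T_\Phi$ gives $I(\widetilde Q;\widetilde X_{\Phi(i)})=H_i$.

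Next comes the conditional term. The cleanest route avoids conditional kernels altogether. By the chain rule in Equation~\ref{eq:block-mi-chain-rule}, which holds for the transformed variables as well, and by finiteness under Assumption~\ref{asm:wellposed}, we get $I(\widetilde Q;\widetilde X_i\mid\widetilde X_{\Phi(i)})=\widetilde J_i-\widetilde H_i=J_i-H_i=U_i$. As a cross-check, one can also argue directly from Definition~\ref{def:cmi}. The regular conditional laws transform as $P_{\widetilde Q,\widetilde X_i\mid\widetilde X_{\Phi(i)}=T_\Phi(x)}=(T_Q\times T_i)_\#P_{Q,X_i\mid X_{\Phi(i)}=x}$ for $P_{X_{\Phi(i)}}$-a.e.\ $x$, and similarly for the product of conditional marginals. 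The integrand is then preserved pointwise by the KL invariance, and the outer integral is preserved by the change of variables $x\mapsto T_\Phi(x)$.

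Finally, $\widetilde U_i=U_i$ and $\widetilde J_i=J_i$. So either $U_i+J_i>0$ and the ratio $2U_i/(U_i+J_i)$ coincides, or both vanish and the zero convention gives $\widetilde C_i=C_i=0$.

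The main obstacle is the careful justification of KL invariance together with the conditional-kernel transformation. That second point requires checking that the displayed transformed kernel is a version of the regular conditional distribution, which is where bimeasurability of $T_\Phi$ and the a.e.\ uniqueness of regular conditional probabilities on standard Borel spaces are used. The chain-rule shortcut sidesteps the kernel issue, provided finiteness of $J_i$, so I would present that as the main proof and the kernel argument as a remark.
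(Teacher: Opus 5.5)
Your proof is correct, but your main route differs from the paper's.

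\textbf{The paper's route.} It applies KL invariance directly to $H_i$, using the joint versus product law of $(Q,X_{\Phi(i)})$. For $U_i$ it applies KL invariance inside the conditional-kernel integral of Equation~\ref{eq:proof-cmi-kernel}, followed by the change of variables $x\mapsto T_\Phi(x)$. It then obtains $J_i=H_i+U_i$ from the chain rule. This is exactly the argument you relegate to a cross-check.

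\textbf{Your route.} You prove invariance of the two unconditional informations $J_i$ and $H_i$. This needs only KL invariance for joint-versus-product pairs and the fact that product maps push product measures to product measures. You then recover $U_i=J_i-H_i$ by subtraction.

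\textbf{Trade-offs.} Your route avoids a step the paper leaves implicit: that $(T_Q\times T_i)_\#P_{Q,X_i\mid X_{\Phi(i)}=x}$ is a version of the regular conditional law of $(\widetilde Q,\widetilde X_i)$ given $\widetilde X_{\Phi(i)}=T_\Phi(x)$. The cost is that you need $J_i<\infty$, so that the subtraction is never $\infty-\infty$. This is supplied by Assumption~\ref{asm:wellposed}(ii) together with your established identity $\widetilde J_i=J_i$, and then $H_i\le J_i$ gives finiteness of $H_i$. In any case the ratio $C_i$ is only meaningful under that finiteness.

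The paper's kernel route yields invariance of each component $U_i$, $H_i$, $J_i$ as an element of $[0,\infty]$ without any finiteness hypothesis. That is slightly more general, but it requires exactly the regular-conditional-probability bookkeeping you correctly identify as the delicate point.
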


\begin{proof}
For a bimeasurable bijection $T$, relative entropy is invariant under
pushforward:
\begin{equation}
\medmath{
D_{\mathrm{KL}}(P\|R)
=
D_{\mathrm{KL}}(T_\#P\|T_\#R).
}
\label{eq:proof-kl-invariance}
\end{equation}
Applying Equation~\ref{eq:proof-kl-invariance} to the joint and
product measures defining $H_i$ proves its invariance. Applying it
inside Equation~\ref{eq:proof-cmi-kernel}, followed by the change of
variables induced by $T_\Phi$, proves invariance of $U_i$. Hence
$J_i=H_i+U_i$ and
$\medmath{C_i=2U_i/(2U_i+H_i)}$ are unchanged.
\end{proof}

\subsection{Weak-Dependence Reduction}
\label{sec:theory1.5}

\begin{proof}[Proof of Proposition~\ref{prop:weak_dep_correct}]
For the sequence of joint laws in
Proposition~\ref{prop:weak_dep_correct}, write
\begin{equation}
\medmath{
U_i^{(m)}
=
u_i^{(m)}+r_i^{(m)},\qquad
H_i^{(m)}
=
I_{P_{X,Q}^{(m)}}(Q;X_{\Phi(i)}).
}
\end{equation}
By the chain rule,
$\medmath{J_i^{(m)}=H_i^{(m)}+U_i^{(m)}}$. Substitution into MCIR gives
\begin{equation}
\medmath{
C_i^{(m)}
=
\frac{
2\{u_i^{(m)}+r_i^{(m)}\}
}{
2\{u_i^{(m)}+r_i^{(m)}\}+H_i^{(m)}
}.
}
\label{eq:proof-weak-ratio}
\end{equation}
Let
\begin{equation}
\medmath{
a_m=2u_i^{(m)}+H_i^{(m)},\qquad
b_m=2r_i^{(m)}.
}
\end{equation}
The denominator assumption gives $a_m\geq c>0$ eventually, while
$b_m\to0$. Subtracting the asserted reduced expression from
Equation~\ref{eq:proof-weak-ratio} yields
\begin{equation}
\medmath{
\begin{aligned}
&\frac{2u_i^{(m)}+b_m}{a_m+b_m}
-\frac{2u_i^{(m)}}{a_m}\\
&\qquad=
\frac{
b_m\{a_m-2u_i^{(m)}\}
}{
a_m(a_m+b_m)
}
=
\frac{b_mH_i^{(m)}}{a_m(a_m+b_m)}.
\end{aligned}
}
\label{eq:proof-weak-difference}
\end{equation}
Because $\medmath{0\leq H_i^{(m)}\leq a_m}$ and
$a_m+b_m\geq c/2$ eventually,
\begin{equation}
\medmath{
\left|
\frac{b_mH_i^{(m)}}{a_m(a_m+b_m)}
\right|
\leq
\frac{2|b_m|}{c}
\longrightarrow0.
}
\end{equation}
Therefore,
\begin{equation}
\medmath{
C_i^{(m)}
=
\frac{2u_i^{(m)}}
     {2u_i^{(m)}+H_i^{(m)}}
+o(1),
}
\end{equation}
as claimed.
\end{proof}

This result removes the conditioning correction from the
numerator up to $o(1)$, but the feature-dependent term $H_i^{(m)}$
remains in the denominator. It therefore does not establish ordering
equivalence with PCIR or another marginal attribution method.

\subsection{Estimator Consistency, Ratio Perturbation, and Rank Stability}
\label{sec:theory1.6}

\begin{lemma}[Consistency transfer]
\label{lem:proof-mcir-consistency}
Suppose
$\widehat U_{i,n}\xrightarrow{p}U_i$ and
$\widehat J_{i,n}\xrightarrow{p}J_i$, with $U_i+J_i>0$. Then the
projected empirical MCIR score converges in probability to $C_i$.
\end{lemma}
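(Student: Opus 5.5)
The plan is to write the projected empirical score as a fixed deterministic map applied to the pair of component estimates, and then invoke the continuous mapping theorem at the population point $(U_i,J_i)$. Define $h:\mathbb R^2\to[0,1]$ by
\[
h(u,j)=
\begin{cases}
\Pi_{[0,1]}\!\left(\dfrac{2u^{+}}{u^{+}+j^{+}}\right), & u^{+}+j^{+}>0,\\[2mm]
0, & \text{otherwise},
\end{cases}
\]
where $u^{+}=\max\{u,0\}$. Then $\widehat C_{i,n}=h(\widehat U_{i,n},\widehat J_{i,n})$, which is exactly Equation~\ref{eq:clipped-empirical-mcir} together with its zero-denominator convention.

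The first step is to check that $h(U_i,J_i)=C_i$. Since $U_i,J_i\geq0$, the positive parts leave them unchanged. The hypothesis $U_i+J_i>0$ puts us in the first branch. Proposition~\ref{prop:proof-boundedness} gives $2U_i/(U_i+J_i)\in[0,1]$, so the projection also acts trivially and $h(U_i,J_i)=C_i$.

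The second step is to show that $h$ is continuous at $(U_i,J_i)$. The maps $(u,j)\mapsto(u^{+},j^{+})$ and $\Pi_{[0,1]}$ are continuous (indeed $1$-Lipschitz). The map $g(u,j)=2u/(u+j)$ is continuous on the open set $\{u+j>0\}$. Because $(u,j)\mapsto u^{+}+j^{+}$ is continuous and strictly positive at $(U_i,J_i)$, it stays positive on some neighbourhood $N$ of $(U_i,J_i)$. On $N$ the zero branch is never active, so $h$ coincides there with the composition $\Pi_{[0,1]}\circ g\circ(\cdot)^{+}$ of continuous maps. This localisation is the only real obstacle: $h$ is discontinuous where $u^{+}+j^{+}=0$. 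However, the continuous mapping theorem only requires continuity at the limit point, or more generally that the discontinuity set have probability zero under the limit law. The limit here is the degenerate law at $(U_i,J_i)$, and that point lies off the discontinuity set.

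The third step is the conclusion. Marginal convergence in probability of each component gives joint convergence $(\widehat U_{i,n},\widehat J_{i,n})\xrightarrow{p}(U_i,J_i)$ to a constant. To avoid citing the continuous mapping theorem, one can argue directly. Fix $\epsilon>0$ and use continuity at the point to choose $\eta>0$ such that $|u-U_i|,|j-J_i|\leq\eta$ implies $|h(u,j)-C_i|\leq\epsilon$. Then
\[
\mathbb P\big(|\widehat C_{i,n}-C_i|>\epsilon\big)\leq\mathbb P\big(|\widehat U_{i,n}-U_i|>\eta\big)+\mathbb P\big(|\widehat J_{i,n}-J_i|>\eta\big)\to0.
\]
In particular, the event that the empirical denominator vanishes has probability tending to zero, so the zero convention does not affect the limit.
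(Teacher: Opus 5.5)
Your proposal is correct and follows essentially the same route as the paper: the positive-part map, the ratio $g(u,j)=2u/(u+j)$, and $\Pi_{[0,1]}$ are continuous at $(U_i,J_i)$ because $U_i+J_i>0$, so the continuous mapping theorem (or your direct union-bound argument) gives $\widehat C_{i,n}\xrightarrow{p}C_i$. Your localisation to a neighbourhood on which $u^{+}+j^{+}>0$ handles the zero-denominator convention explicitly, which the paper's appendix proof leaves implicit; the main text covers it only by noting that the empirical denominator is positive with probability tending to one.
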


\begin{proof}
The positive-part map is continuous, so
\begin{equation}
\medmath{
\widehat U_{i,n}^{+}\xrightarrow{p}U_i,\qquad
\widehat J_{i,n}^{+}\xrightarrow{p}J_i.
}
\end{equation}
The ratio map
$\medmath{g(u,j)=2u/(u+j)}$ is continuous at $(U_i,J_i)$ because
$U_i+J_i>0$. The continuous-mapping theorem therefore gives
\begin{equation}
\medmath{
\frac{2\widehat U_{i,n}^{+}}
     {\widehat U_{i,n}^{+}+\widehat J_{i,n}^{+}}
\xrightarrow{p}
\frac{2U_i}{U_i+J_i}
=
C_i.
}
\end{equation}
Projection onto $[0,1]$ is continuous and fixes
$C_i\in[0,1]$, completing the proof.
\end{proof}

\begin{lemma}[MCIR ratio perturbation]
\label{lem:proof-ratio-perturbation}
Let $\medmath{D_i=U_i+J_i\geq c_0>0}$. If
\begin{equation}
\medmath{
|\widehat U_i^{+}-U_i|\leq\delta,\qquad
|\widehat J_i^{+}-J_i|\leq\delta,\qquad
\delta\leq c_0/4,
}
\label{eq:proof-component-event}
\end{equation}
then
\begin{equation}
\medmath{
|\widehat C_i-C_i|
\leq
\frac{4\delta}{c_0}
\leq
\frac{8\delta}{c_0}.
}
\label{eq:proof-ratio-bound}
\end{equation}
\end{lemma}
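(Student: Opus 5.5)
The plan is a direct algebraic comparison of the unprojected empirical ratio with the population ratio. After that, the projection $\Pi_{[0,1]}$ is handled by non-expansiveness. Write $\widehat D_i=\widehat U_i^{+}+\widehat J_i^{+}$ and $g(u,j)=2u/(u+j)$, so that $C_i=g(U_i,J_i)$ by Definition~\ref{def:mcir}.

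\textbf{Step 1: the empirical denominator stays away from zero.} By the triangle inequality on the event in Equation~\ref{eq:proof-component-event}, $|\widehat D_i-D_i|\leq 2\delta$. Since $\delta\leq c_0/4\leq D_i/4$, this gives
\[
\widehat D_i\geq D_i-2\delta\geq D_i/2\geq c_0/2>0.
\]
In particular, the zero-denominator convention in Equation~\ref{eq:clipped-empirical-mcir} is never triggered, and $\widehat C_i=\Pi_{[0,1]}(g(\widehat U_i^{+},\widehat J_i^{+}))$.

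\textbf{Step 2: the key identity.} Cross-multiplying gives
\[
g(\widehat U_i^{+},\widehat J_i^{+})-g(U_i,J_i)
=\frac{2\bigl(\widehat U_i^{+}D_i-U_i\widehat D_i\bigr)}{D_i\widehat D_i},
\]
and the numerator simplifies because the $U_i\widehat U_i^{+}$ terms cancel:
\[
\widehat U_i^{+}D_i-U_i\widehat D_i=\widehat U_i^{+}J_i-U_i\widehat J_i^{+}
=(\widehat U_i^{+}-U_i)J_i-U_i(\widehat J_i^{+}-J_i).
\]
Using $U_i,J_i\geq0$, its absolute value is at most $\delta J_i+\delta U_i=\delta D_i$. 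The factor $D_i$ then cancels, leaving $|g(\widehat U_i^{+},\widehat J_i^{+})-C_i|\leq 2\delta/\widehat D_i$. Step 1 turns this into $4\delta/c_0$, and the weaker bound $8\delta/c_0$ follows trivially.

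\textbf{Step 3: the projection.} $C_i\in[0,1]$ by Proposition~\ref{prop:proof-boundedness}, so $\Pi_{[0,1]}$ fixes $C_i$. Since $\Pi_{[0,1]}$ is $1$-Lipschitz, $|\widehat C_i-C_i|\leq|g(\widehat U_i^{+},\widehat J_i^{+})-C_i|$.

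\textbf{Main obstacle.} The main point of care is Step 2: bounding the numerator by $\delta D_i$ rather than $2\delta D_i$. This requires exploiting the cancellation and the non-negativity of $U_i$ and $J_i$ instead of bounding each error term crudely. A cruder estimate would only yield $8\delta/c_0$. That is still enough for Theorem~\ref{thm:rank_stability}, but not for the sharper constant in Equation~\ref{eq:proof-ratio-bound}.
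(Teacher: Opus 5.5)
Your proof is correct and matches the paper's argument step for step: the same lower bound $\widehat D_i\geq D_i-2\delta\geq c_0/2$, the same cross-multiplied identity with numerator $J_i(\widehat U_i^{+}-U_i)-U_i(\widehat J_i^{+}-J_i)$ bounded by $\delta D_i$ to give $2\delta/\widehat D_i\leq 4\delta/c_0$, and non-expansiveness of $\Pi_{[0,1]}$ at the end. Your explicit note that the zero-denominator convention is never triggered on this event is a small clarification that the paper leaves implicit.
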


\begin{proof}
Set
$\medmath{\widehat D_i=\widehat U_i^{+}+\widehat J_i^{+}}$.
Equation~\ref{eq:proof-component-event} implies
\begin{equation}
\medmath{
\widehat D_i
\geq
D_i-2\delta
\geq
c_0/2.
}
\label{eq:proof-denominator-lower-bound}
\end{equation}
Direct algebra gives
\begin{equation}
\medmath{
\begin{aligned}
\frac{2\widehat U_i^{+}}{\widehat D_i}
-\frac{2U_i}{D_i}
=
\frac{
2\!\left[
J_i(\widehat U_i^{+}-U_i)
-U_i(\widehat J_i^{+}-J_i)
\right]
}{
\widehat D_iD_i
}.
\end{aligned}
}
\label{eq:proof-ratio-identity}
\end{equation}
Since $\medmath{0\leq U_i,J_i\leq D_i}$, Equations
\eqref{eq:proof-component-event}--\eqref{eq:proof-ratio-identity}
yield
\begin{equation}
\medmath{
\left|
\frac{2\widehat U_i^{+}}{\widehat D_i}
-\frac{2U_i}{D_i}
\right|
\leq
\frac{2(U_i+J_i)\delta}{\widehat D_iD_i}
=
\frac{2\delta}{\widehat D_i}
\leq
\frac{4\delta}{c_0}.
}
\end{equation}
Projection onto $[0,1]$ is non-expansive and cannot increase the error.
The main theorem uses the conservative constant $8\delta/c_0$, which
follows immediately.
\end{proof}

\begin{proof}[Proof of Theorem~\ref{thm:rank_stability}]
Let $\mathcal E_\delta$ be the event on which every information
component used for the features in $\mathcal I$ is estimated within
$\delta$. There are $m_{\mathcal I}$ such components, each violating
the bound with probability at most $\alpha$. The union bound gives
\begin{equation}
\medmath{
\mathbb P(\mathcal E_\delta)
\geq
1-m_{\mathcal I}\alpha.
}
\label{eq:proof-union-bound}
\end{equation}
On $\mathcal E_\delta$, positive-part clipping cannot increase the
error relative to the non-negative population quantities. Lemma
\ref{lem:proof-ratio-perturbation} and the conservative constant used
in the theorem give
\begin{equation}
\medmath{
\max_{i\in\mathcal I}
|\widehat C_{i,n}-C_i|
\leq
\eta,\qquad
\eta=\frac{8\delta}{c_0}.
}
\label{eq:proof-uniform-score-bound}
\end{equation}

Consider a population pair $(i,j)$ with $C_i>C_j$. If its empirical
ordering reverses, then
$\widehat C_{i,n}\leq\widehat C_{j,n}$ and
\begin{equation}
\medmath{
\begin{aligned}
C_i-C_j
&\leq
(C_i-\widehat C_{i,n})
+(\widehat C_{j,n}-C_j)\\
&\leq
2\eta.
\end{aligned}
}
\label{eq:proof-pair-gap}
\end{equation}
Thus, only population pairs with gaps at most $2\eta$ can reverse.
Applying the margin condition in
Equation~\ref{eq:rank-margin-condition} with $t=2\eta$ gives
\begin{equation}
\medmath{
\frac{D_{\mathrm{inv}}}
     {\binom{k_{\mathcal I}}{2}}
\leq
2M\eta
=
\frac{16M\delta}{c_0},
}
\label{eq:proof-inversion-fraction}
\end{equation}
where $D_{\mathrm{inv}}$ denotes the number of reversed pairs.

The margin condition at $t=0$ excludes population ties. Applying the
prespecified deterministic tie-breaking rule to the empirical scores
produces strict population and empirical rankings. Their normalized
inversion distance is therefore
\begin{equation}
\medmath{
d_{\mathrm{inv}}
\left(
\widehat\pi_{n,\mathcal I},
\pi_{\mathcal I}
\right)
=
\frac{D_{\mathrm{inv}}}
     {\binom{k_{\mathcal I}}{2}}
=
\frac{1-\tau_a}{2},
}
\label{eq:proof-kendall-inversions}
\end{equation}
where $\tau_a$ is computed from the resulting strict rankings. This
identity is not replaced by $(1-\tau_b)/2$ when ties are retained in
the original score vectors.
Combining Equations~\eqref{eq:proof-union-bound},
\eqref{eq:proof-inversion-fraction}, and
\eqref{eq:proof-kendall-inversions} proves
\begin{equation}
\medmath{
\mathbb P
\left[
d_{\mathrm{inv}}
\left(
\widehat\pi_{n,\mathcal I},
\pi_{\mathcal I}
\right)
\leq
\frac{16M\delta}{c_0}
\right]
\geq
1-m_{\mathcal I}\alpha.
}
\end{equation}
\end{proof}

If estimator-specific sequences satisfy
$\delta_n\to0$ and $m_{\mathcal I}\alpha_n\to0$, the same bound gives
convergence of the ranking discrepancy in probability. No common rate
follows for all MI/CMI estimators without separately verifying their
regularity conditions.

\section{Estimator Details}
\label{app}
\label{app:estimators}

This section specifies the estimators used for the two population
information components
\begin{equation}
\medmath{
U_i
=
I_{P_{X,Q}}(Q;X_i\mid X_{\Phi(i)}),
\qquad
J_i
=
I_{P_{X,Q}}(Q;X_{\Phi(i)\cup\{i\}}).
}
\label{eq:appendix-estimator-components}
\end{equation}
Here $Q$ denotes the model-output representation in the environment
under consideration, and $X_{\Phi(i)}$ is the selected conditioning
neighbourhood. For an explanation sample of size $s\in\{N,n\}$, the
estimators produce $\widehat U_{i,s}$ and $\widehat J_{i,s}$, which
enter the projected empirical score
\begin{equation}
\medmath{
\widehat C_{i,s}
=
\Pi_{[0,1]}
\left(
\frac{
2\widehat U_{i,s}^{+}
}{
\widehat U_{i,s}^{+}+\widehat J_{i,s}^{+}
}
\right),
}
\label{eq:appendix-estimated-mcir}
\end{equation}
with $\widehat C_{i,s}=0$ when the denominator vanishes. The projection
and positive-part corrections are finite-sample numerical safeguards;
the population MCIR score is already in $[0,1]$.

No single MI or CMI estimator is uniformly suitable for all variable
types, sample sizes, or conditioning dimensions. We therefore use
Gaussian--copula, $k$NN, or discrete plug-in estimation according to
the structure of the local block
$\medmath{(Q,X_i,X_{\Phi(i)})}$. Their computational costs are
summarized in Table~\ref{tab:mi-comparisons}; this section focuses on
their definitions, assumptions, and implementation.

\subsection{Gaussian--Copula Estimation}

For a scalar coordinate $A$ observed at
$a^{(1)},\ldots,a^{(s)}$, let $R_r(A)$ be the rank of $a^{(r)}$,
using the deterministic tie rule stated in the experimental protocol.
Rank Gaussianization maps each observation to
\begin{equation}
\medmath{
\widetilde a^{(r)}
=
\Phi_{\mathcal N}^{-1}
\left(
\frac{R_r(A)-1/2}{s}
\right),
}
\label{eq:appendix-rank-gaussianization}
\end{equation}
where $\Phi_{\mathcal N}$ is the standard-normal distribution
function. The offset $1/2$ prevents the empirical probabilities from
taking the endpoint values zero and one. For vector-valued variables,
the transformation is applied componentwise.

Let $\widehat\Sigma_W$ denote the empirical covariance matrix of the
rank-Gaussianized block $W$. To improve numerical conditioning, the
implemented covariance matrix is
\begin{equation}
\medmath{
\widehat\Sigma_{W,\lambda}
=
\widehat\Sigma_W+\lambda I,
\qquad
\lambda>0,
}
\label{eq:appendix-ridge-covariance}
\end{equation}
with the ridge parameter reported for each experiment.

For rank-Gaussianized random vectors $A$ and $B$, the
Gaussian-copula MI estimate is
\begin{equation}
\medmath{
\widehat I_{\mathrm{GC}}(A;B)
=
\frac{1}{2}
\log
\frac{
\det(\widehat\Sigma_{A,\lambda})
\det(\widehat\Sigma_{B,\lambda})
}{
\det(\widehat\Sigma_{(A,B),\lambda})
}.
}
\label{eq:appendix-gc-mi}
\end{equation}
For conditioning vector $C$, let
$\widehat\Sigma_{A\mid C}$,
$\widehat\Sigma_{B\mid C}$, and
$\widehat\Sigma_{(A,B)\mid C}$ be the corresponding Schur-complement
conditional covariance matrices. The CMI estimate is
\begin{equation}
\medmath{
\widehat I_{\mathrm{GC}}(A;B\mid C)
=
\frac{1}{2}
\log
\frac{
\det(\widehat\Sigma_{A\mid C})
\det(\widehat\Sigma_{B\mid C})
}{
\det(\widehat\Sigma_{(A,B)\mid C})
}.
}
\label{eq:appendix-gc-cmi}
\end{equation}
When $A$ and $B$ are scalar, this reduces to
\begin{equation}
\medmath{
\widehat I_{\mathrm{GC}}(A;B\mid C)
=
-\frac{1}{2}
\log
\left(
1-\widehat\rho_{AB\mid C}^{\,2}
\right),
}
\label{eq:appendix-gc-partial-correlation}
\end{equation}
where $\widehat\rho_{AB\mid C}$ is their partial correlation after
rank Gaussianization.

For MCIR, Equation~\ref{eq:appendix-gc-cmi} is applied with
$\medmath{A=Q}$, $\medmath{B=X_i}$, and
$\medmath{C=X_{\Phi(i)}}$ to estimate $U_i$. Equation
\eqref{eq:appendix-gc-mi} is applied with
$\medmath{A=Q}$ and
$\medmath{B=X_{\Phi(i)\cup\{i\}}}$ to estimate $J_i$.

\paragraph{Scope and assumptions.}
The Gaussian--copula estimator is appropriate when dependence in the
rank-Gaussianized local block is reasonably represented by a Gaussian
copula and the regularized covariance matrices are numerically
well-conditioned. Under correct Gaussian-copula specification and
standard covariance-consistency conditions, it estimates the
corresponding population information functional. For a general
non-Gaussian copula, it estimates a Gaussian-copula approximation
rather than unrestricted MI or CMI.

Rank Gaussianization makes the implemented estimator invariant to
strictly monotone componentwise transformations under the stated
tie-handling rule. This is an estimator-specific property and should
not be interpreted as evidence that the Gaussian--copula estimator is
universally unbiased for arbitrary nonlinear or multimodal dependence.

\subsection{\texorpdfstring{$k$}{k}NN Estimation}

For continuous local blocks, we additionally use KSG-type $k$NN MI
estimators and corresponding conditional extensions. These estimators
construct neighbourhoods in the joint sample space and combine
nearest-neighbour distances with marginal or conditional neighbour
counts. We denote the number of neighbours by $k_{\mathrm{NN}}$ to
distinguish it from the total number of input features $k$.

The MI estimate has the generic KSG form
\begin{equation}
\medmath{
\widehat I_{\mathrm{kNN}}(A;B)
=
\psi(k_{\mathrm{NN}})
+\psi(s)
-
\frac{1}{s}
\sum_{r=1}^{s}
\left[
\psi\!\left(n_A^{(r)}+1\right)
+
\psi\!\left(n_B^{(r)}+1\right)
\right],
}
\label{eq:appendix-knn-mi}
\end{equation}
up to the convention-specific finite-sample correction. Here
$\psi$ is the digamma function and
$n_A^{(r)}$, $n_B^{(r)}$ are marginal neighbour counts determined by
the joint-space radius of observation $r$. Conditional variants use
the corresponding neighbour counts in the
$(A,C)$, $(B,C)$, and $C$ spaces.

For MCIR, the $k$NN CMI estimator is applied directly to
$\medmath{(Q,X_i,X_{\Phi(i)})}$, while the joint MI estimator is
applied to
$\medmath{(Q,X_{\Phi(i)\cup\{i\}})}$. Both calculations use the same
sample, metric, value of $k_{\mathrm{NN}}$, and preprocessing within an
experiment.

The sensitivity analysis evaluates
\begin{equation}
\medmath{
k_{\mathrm{NN}}\in\{5,10,20\},
}
\label{eq:appendix-knn-values}
\end{equation}
using Euclidean distance. The selected value and any standardization
are reported with the corresponding experiment.

\paragraph{Scope and assumptions.}
Consistency and rates for $k$NN MI/CMI estimation require
estimator-specific regularity conditions, including restrictions on
the density, support, tails, neighbour-count sequence, and local
dimension. Tree-based neighbour search is typically efficient in low
or moderate dimension, but both computational and statistical
performance can deteriorate as $|X_{\Phi(i)}|$ or $d_Q$ increases.

The $k$NN estimator is therefore evaluated only in the stated
continuous, low- or moderate-dimensional regimes. No
dimension-independent or universal $s^{-1/2}$ error rate is assumed.
Estimator variability is reported as a function of
$|\Phi(i)|$.

\subsection{Discrete Plug-In Estimation}

When all variables in the local information calculation are discrete,
MI and CMI are estimated from empirical probability masses. Let
$\widehat p(q,x,c)$ denote the empirical joint mass of
$\medmath{(Q,X_i,X_{\Phi(i)})=(q,x,c)}$. The plug-in CMI estimate is
\begin{equation}
\medmath{
\widehat I_{\mathrm{plg}}
(Q;X_i\mid X_{\Phi(i)})
=
\sum_{q,x,c:\,\widehat p(q,x,c)>0}
\widehat p(q,x,c)
\log
\frac{
\widehat p(q,x\mid c)
}{
\widehat p(q\mid c)\widehat p(x\mid c)
}.
}
\label{eq:appendix-plugin-cmi}
\end{equation}
For $S=\Phi(i)\cup\{i\}$, the joint MI estimate is
\begin{equation}
\medmath{
\widehat I_{\mathrm{plg}}(Q;X_S)
=
\sum_{q,x_S:\,\widehat p(q,x_S)>0}
\widehat p(q,x_S)
\log
\frac{
\widehat p(q,x_S)
}{
\widehat p(q)\widehat p(x_S)
}.
}
\label{eq:appendix-plugin-mi}
\end{equation}
Terms with zero empirical joint mass contribute zero. Minimum
cell-count requirements and finite-sample corrections are reported with
the relevant experiment.

For fixed finite alphabets, empirical probability masses converge to
their population values, yielding consistency of the corresponding
plug-in functional. When the alphabet, number of bins, or conditioning
dimension increases with $s$, sparsity and plug-in bias may become
substantial. Continuous variables are not silently treated as discrete:
any discretization must be stated explicitly and evaluated as a
separate preprocessing choice.

\subsection{Mixed-Type Local Blocks}

Definition~\ref{def:cmi} applies to continuous, discrete, and
mixed-type variables at the population level. Estimation is a separate
problem. A local block containing both continuous and discrete
variables is not automatically assigned to the discrete plug-in or
continuous $k$NN estimator.

The estimator used for a mixed block must define an appropriate
mixed-data metric, conditional-density model, copula construction, or
stratified calculation. Its assumptions must be stated for that
experiment. If no candidate estimator is compatible with the local
variable types and available sample size, the corresponding MCIR score
is not reported rather than being computed using an unsupported
estimator.

The term \emph{mixed-variable CMI} identifies the population
information functional evaluated on a mixed feature space. It does not
denote a new estimator and does not imply that a single empirical
method is uniformly valid across continuous, discrete, and mixed-data
regimes.

\subsection{Neighbourhood and Preprocessing Controls}

Candidate neighbourhoods are constructed from training or explanation
data without using the held-out evaluation outcomes. In the primary
experiments, Pearson screening uses
$\medmath{|\rho|\geq0.60}$ and retains at most
$\medmath{M_{\max}=6}$ candidates per target feature. Alternative
screening statistics and budgets are stated per experiment.

Auto-$\Phi$ selects among nested candidate neighbourhoods using
Equation~\ref{eq:auto-phi-objective}, which combines bootstrap
ranking dispersion, residual conditional dependence, and a
conditioning-dimension penalty. It does not select the neighbourhood
by maximizing feature--output MI alone.

All candidate estimators for a feature use the same observations,
model-output representation, and selected neighbourhood. Preprocessing
parameters are fitted without using the held-out test outcomes.
Continuous-variable scaling, tie handling, ridge values,
$k_{\mathrm{NN}}$, discrete cell corrections, and random seeds are
recorded in the experiment configuration.

\paragraph{Degenerate estimated denominators.}
If
$\widehat U_{i,s}^{+}+\widehat J_{i,s}^{+}=0$, the empirical MCIR score
is set to zero. Very small denominators are flagged because the ratio
may be sensitive to component-estimation error. The denominator
separation required for the perturbation and rank-stability results is
stated explicitly in Theorem~\ref{thm:rank_stability}.

\paragraph{Reporting.}
For each experiment, we report the estimator family, sample size,
output dimension, conditioning size, neighbourhood construction,
bootstrap count, and relevant hyperparameters. Where multiple
estimators are applicable, estimator disagreement and uncertainty are
reported rather than assuming numerical interchangeability.
\section{Estimator-Specific Monotone Invariance}
\label{sec:cmmi}
\label{appendix:monotone-invariance}

\begin{proposition}[Rank-Gaussianized estimator invariance]
\label{prop:monotone_invariance}
Suppose the Gaussian--copula MI/CMI estimator applies componentwise rank
Gaussianization with a fixed deterministic tie-handling rule. Strictly
monotone componentwise transformations of $Q$, $X_i$, and
$X_{\Phi(i)}$ leave the resulting empirical MCIR score unchanged.
\end{proposition}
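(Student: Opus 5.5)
The argument is a direct pipeline argument: the empirical MCIR score depends on the raw data only through the componentwise ranks, and strictly monotone componentwise maps do not change those ranks. Write the sample as the block $\{(q^{(r)},x_i^{(r)},x_{\Phi(i)}^{(r)})\}_{r=1}^{s}$. Let $T$ act coordinatewise by strictly monotone maps $t_1,\ldots,t_p$ on the coordinates of $Q$, $X_i$, and $X_{\Phi(i)}$. The goal is to show that the rank-Gaussianized block $\widetilde W$ of Equation~\ref{eq:appendix-rank-gaussianization} is the same for the original and transformed samples. Everything downstream is then a deterministic function of $\widetilde W$.

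\textbf{Step 1 (ranks are preserved).} Fix a scalar coordinate $A$ with observations $a^{(1)},\ldots,a^{(s)}$.
\begin{itemize}
\item If $t$ is strictly increasing, then $a^{(r)}<a^{(r')}$ if and only if $t(a^{(r)})<t(a^{(r')})$, and $a^{(r)}=a^{(r')}$ if and only if $t(a^{(r)})=t(a^{(r')})$.
\item Hence the strict order and the tie pattern are identical. A fixed deterministic tie rule depends only on these, for example on order plus observation index, so it returns the same $R_r(A)$ for every $r$.
\item Therefore $\widetilde a^{(r)}=\Phi_{\mathcal N}^{-1}((R_r(A)-1/2)/s)$ is unchanged.
\item If $t$ is strictly decreasing, ranks reverse: $R_r(t(A))=s+1-R_r(A)$, up to how the tie rule orders within tie groups. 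The symmetry $\Phi_{\mathcal N}^{-1}(1-u)=-\Phi_{\mathcal N}^{-1}(u)$ then gives $\widetilde a^{(r)}\mapsto-\widetilde a^{(r)}$.
\end{itemize}

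\textbf{Step 2 (determinant formulas).} In the increasing case $\widetilde W$ is literally unchanged, so $\widehat\Sigma_{W,\lambda}$, all Schur complements, and Equations~\eqref{eq:appendix-gc-mi} and \eqref{eq:appendix-gc-cmi} return identical $\widehat U_{i,s}$ and $\widehat J_{i,s}$.

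In the decreasing case $\widetilde W\mapsto D\widetilde W$ with $D$ a diagonal sign matrix. Then
\[
\widehat\Sigma_W+\lambda I\mapsto D(\widehat\Sigma_W+\lambda I)D,
\]
because $D\,I\,D=I$. Congruence by a block-diagonal sign matrix preserves every principal-submatrix determinant, and it preserves the determinants of the Schur complements $\widehat\Sigma_{A\mid C}$. So the log-determinant ratios are again unchanged.

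\textbf{Step 3 (empirical score).} The clipping, the zero-denominator convention, and $\Pi_{[0,1]}$ in Equation~\ref{eq:clipped-empirical-mcir} are deterministic functions of $(\widehat U_{i,s},\widehat J_{i,s})$. This yields invariance of $\widehat C_{i,s}$.

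\textbf{Main obstacle.} The delicate point is ties under decreasing maps. Reversal of ranks within a tie group depends on the tie rule. If the rule is average ranks, or any rule symmetric under order reversal, the reflection identity holds exactly. For an index-based rule it may not, and the exact rank-reflection step can fail. I would handle this by stating explicitly that the fixed rule is order-reversal equivariant (average ranks satisfies this). Alternatively, I would restrict the exact claim to strictly increasing maps, and treat decreasing maps under the no-ties case or an equivariant rule.
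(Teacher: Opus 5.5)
Your proof follows the paper's argument. Increasing maps preserve the ranks, and decreasing maps reverse them, so the symmetry $\Phi_{\mathcal N}^{-1}(1-u)=-\Phi_{\mathcal N}^{-1}(u)$ negates the Gaussianized column. The resulting sign congruence $S(\widehat\Sigma_W+\lambda I)S$ then leaves every covariance and Schur-complement determinant unchanged, and hence $\widehat U_{i,n}$, $\widehat J_{i,n}$ and the projected ratio.

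Your tie caveat is a genuine refinement that the paper's step ``a strictly decreasing transformation reverses them'' passes over. With an index-based rule the decreasing case can fail. For $a=(1,1,2)$ and $t(x)=-x$, ordinal ranks go from $(1,2,3)$ to $(2,3,1)$ rather than $(3,2,1)$. The new column is then a within-tie permutation of $-\widetilde a$, and its correlations with the other coordinates generally change. So the claim as stated needs one of your two fixes: assume a reversal-equivariant tie rule (e.g.\ average ranks), or assume no ties for decreasing maps.
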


\begin{proof}
A strictly increasing transformation preserves ranks, whereas a
strictly decreasing transformation reverses them. After rank
Gaussianization, this maps each transformed component to either its
original Gaussianized value or its negative. Hence the transformed
correlation matrix is $S\widehat\Sigma S$, where $S$ is diagonal with
entries in $\{-1,1\}$. Determinants of the covariance and
Schur-complement blocks used by the Gaussian--copula MI/CMI formulas
are unchanged because $\det(SAS)=\det(A)$. Thus
$\widehat U_{i,n}$, $\widehat J_{i,n}$, and their MCIR ratio are
unchanged.
\end{proof}

This is an estimator-specific property. It does not imply invariance of
every finite-sample MI or CMI estimator.

\section{Estimator Switching and Auto-\texorpdfstring{$\Phi$}{Phi}}
\label{sec:adaptive-procedures}
This section provides the implementation details for the two adaptive
components introduced in the main text: bootstrap-based estimator
switching and dependence-neighbourhood selection through Auto-$\Phi$.
Both are data-driven procedures rather than population guarantees; the
former selects among compatible MI/CMI estimators, while the latter
balances ranking stability, residual dependence, and conditioning
dimension.
\subsection{Bootstrap-Based Estimator Switching}
\label{sec:theory1.7}

This subsection details the estimator-selection rule introduced in
Section~\ref{sec:estimator-selection}. For each feature $i$, let
$\mathcal A_i$ denote the finite set of estimator families compatible
with the variable types, sample size, and conditioning dimension.
Depending on the data regime, $\mathcal A_i$ may contain
Gaussian--copula, $k$NN, and plug-in estimators.

For each $a\in\mathcal A_i$, let
$\widehat C_{i,n}^{(a)}$ denote the corresponding MCIR estimate and
$\widehat s_{i,a}$ its bootstrap standard error. Estimator selection
follows Equation~\ref{eq:estimator-switching-rule}; it is not redefined
in the appendix. When the minimum is attained by more than one
compatible estimator, the deterministic family ordering stated in the
main text resolves the tie.

The complete procedure is:

\begin{enumerate}[leftmargin=*,nosep]
    \item determine the estimator families compatible with the types of
    $Q$, $X_i$, and $X_{\Phi(i)}$;

    \item compute $\widehat C_{i,n}^{(a)}$ for every
    $a\in\mathcal A_i$ using the same observations and conditioning
    neighbourhood;

    \item generate bootstrap resamples using the resampling protocol
    appropriate to the data structure;

    \item recompute each candidate estimate on every bootstrap
    resample and obtain $\widehat s_{i,a}$;

    \item select $\widehat a_i$ using
    Equation~\ref{eq:estimator-switching-rule}; and

    \item report the switched estimate together with the results from
    the fixed candidate estimators.
\end{enumerate}

For independent observations, the ordinary nonparametric bootstrap may
be used. For temporally, spatially, or otherwise dependent
observations, the resampling scheme must preserve the relevant
dependence structure, for example through block bootstrap sampling.

The switching rule selects the compatible estimator with the
smallest estimated sampling variability. Bootstrap standard error does
not estimate estimator bias and therefore does not, by itself, provide
an estimate of absolute estimation risk. Consequently,
Equation~\ref{eq:estimator-switching-rule} is treated as a
data-driven implementation rule rather than an oracle procedure. No
claim is made that the switched estimator is uniformly as accurate as
the best fixed estimator, or that its excess risk is universally
$\mathcal O(n^{-1/2})$.

The fixed-estimator and switching results are therefore reported
jointly. This permits direct assessment of whether the variability-based
selection rule improves or degrades the MCIR ranking in a particular
sample-size and conditioning-dimension regime. The repeated-subsampling
analysis further evaluates how frequently each estimator is selected
and how the selected ranking compares with the corresponding full-data
reference.

The repeated-subsampling results in
Appendix Table~\ref{tab:estimator_switching_all} provide mixed evidence for this
rule. Switching does not uniformly dominate either fixed estimator and
its behaviour changes with both $n$ and $|\Phi(i)|$. It is also the
most expensive alternative because every compatible candidate is
recomputed across bootstrap samples before selection. At $n=3000$, the
reported switching runtime ranges from $82.000$ to $296.926$ seconds,
compared with less than $0.06$ seconds for the fixed Gaussian--copula
implementation and $9.048$--$32.963$ seconds for fixed $k$NN. The rule
is therefore retained as a calibration heuristic, with fixed-estimator
results and runtime reported alongside it.

\paragraph{Bootstrap configuration.}
\label{app:bootstrap-configuration}

All estimator candidates compatible with a given local block use the
same observations, model-output representation, and conditioning
neighbourhood. Estimator switching follows
Equation~\ref{eq:estimator-switching-rule}; it is not
redefined here.

Bootstrap resampling is performed on the explanation observations. An
ordinary observation-level bootstrap is used for independent data,
whereas temporal, spatial, clustered, or otherwise dependent data
require a structure-preserving resampling scheme. The reported protocol
uses
\begin{equation}
\medmath{
B_{\mathrm{rank}}=50,
\qquad
B_{\mathrm{est}}=200,
}
\label{eq:appendix-bootstrap-counts}
\end{equation}
for ranking-stability and estimator-sensitivity analyses, respectively.
Estimator-switching ablations use 30 independent repetitions.
Experiment-specific departures are recorded in
Table~\ref{tab:experiment_protocol}. Fixed-estimator results are
reported alongside the switched result because bootstrap standard error
measures sampling variability rather than estimator bias.

\subsection{Implementation Details for Auto-\texorpdfstring{$\Phi$}{Phi}}
\label{sec:theory1.8}
\label{c.5}

Auto-$\Phi$ follows the shared-size criterion in
Equations~\ref{eq:auto-phi-ranking-dispersion}--%
\ref{eq:auto-phi-objective}; the objective is not redefined here.
For each feature $i$, dependence screening produces an ordered candidate
pool $\mathcal S_i$ with $|\mathcal S_i|\leq M_{\max}$. For every
$r\in\mathcal R$, the first $r$ candidates form $\Phi_{i,r}$. The same
candidate size $r$ is applied to every feature when computing each
bootstrap MCIR vector $\widehat{\bm C}^{(b)}_r$.
The implementation then:

\begin{enumerate}[leftmargin=*,nosep]
    \item computes the global bootstrap ranking dispersion
    $\widehat V(r)$;

    \item averages residual conditional dependence across target
    features to obtain $\widehat B(r)$ and normalizes it by
    $\widehat B(0)+10^{-12}$;

    \item evaluates
    $\widehat V(r)+\lambda_B\widetilde B(r)
    +\lambda_Dr/M_{\max}$; and

    \item selects the smallest minimizing $r$ and sets
    $\widehat\Phi(i)=\Phi_{i,r^\star}$ for every feature.
\end{enumerate}
The reported default is
$(\lambda_B,\lambda_D)=(1,0.10)$. Penalty sensitivity is evaluated in
Table~\ref{tab:auto-phi-penalty-sensitivity}. This finite-sample
criterion depends on the candidate screening, ordering, estimator,
bootstrap design, and penalty values; it is not claimed to recover a
population-optimal neighbourhood.

\paragraph{Computational Cost.}
Let $B$ be the number of bootstrap replicates,
$A=\max_i|\mathcal A_i|$, and let
$T_{\mathrm{CMI}}(n,r)$ denote the cost of one MI/CMI fit with
conditioning dimension $r$. After candidate screening, evaluating the
nested neighbourhoods has worst-case cost
\begin{equation}
\medmath{
\mathcal O\!\left(
kBA(M_{\max}+1)
T_{\mathrm{CMI}}(n,M_{\max})
\right).
}
\label{eq:appendix-auto-phi-cost}
\end{equation}
This bound counts every candidate size, bootstrap replicate, compatible
estimator, and target feature. It does not include the cost of the
initial dependence screening. The calibration cost may be reused when
the selected neighbourhoods are subsequently applied to repeated MCIR
evaluations on the same feature structure.
\section{Role and Misspecification of the Neighbourhood}
\label{sec:phi_role}

Every MCIR interpretation is relative to the selected neighbourhood
$\Phi(i)$. Local conditioning limits the dimension of the CMI problem,
but its adequacy depends on the dependence structure exposed by the
screening procedure.

\paragraph{Under-conditioning.}
If $\Phi(i)$ omits a relevant dependent feature, shared predictive
information may remain in $U_i$. Because CMI is not monotone in its
conditioning variables, the direction and magnitude of the resulting
change cannot be specified without additional distributional
assumptions.

\paragraph{Over-conditioning.}
Including unnecessary variables increases the conditioning dimension
and can increase finite-sample bias, variance, and computational cost.
Conditioning on colliders or other structurally inappropriate variables
may also change the dependence being measured.

\paragraph{Higher-order dependence.}
Pairwise correlation screening may miss interaction-only dependence.
The controlled example in
Appendix~\ref{app:failure_case_higher_order} demonstrates this failure
mode. Distance correlation, HSIC screening, mutual-$k$NN graphs, or
interaction-aware candidate generation may provide richer candidate
pools, but each introduces its own estimation and computational
trade-offs.

The neighbourhood is therefore a transparent modelling and tuning
choice, not a uniquely recoverable ``true'' dependence set. MCIR
removes redundancy only to the extent that the selected block captures
the dependence relevant to the model output $Q$.

\section{Latent Confounding and Distribution Shift}
\label{latent2}

MCIR quantifies statistical dependence with the fixed model output and
does not recover causal effects. If an unobserved variable $L$ affects
both $X_i$ and $Q$, then
$I(Q;X_i\mid X_{\Phi(i)})$ may remain positive even when $X_i$ has no
direct causal effect. Conditioning on observed proxies for $L$ may
reduce this association but does not generally identify or adjust for
the latent common cause.

Distribution shift may change $P_{X,Q}$, the screened candidate pools,
the selected neighbourhoods, and estimator behaviour. Bootstrap
uncertainty and the lightweight fidelity profile measure selected forms
of empirical variation; they do not guarantee robustness under
arbitrary shift. MCIR scores should therefore be interpreted as unique
predictive information relative to the observed variables, selected
neighbourhood, model, and evaluation distribution.
\section{Supplementary Details for Lightweight Fidelity}
\label{subsec:lw_fidelity2}

This appendix supplements the operational fidelity analysis in
Section~\ref{subsec:lw_fidelity}. The lightweight procedure reduces the
number of observations from $N$ to $n<N$ and may also reduce the
model-output dimension from $d$ to $d'\leq d$. These operations are
distinct: $N$ and $n$ determine the numbers of atoms in the empirical
measures, whereas $d$ and $d'$ determine their ambient representation
spaces.

The fidelity profile and its componentwise acceptance criterion are
defined once in Definition~\ref{def:lightweight-fidelity-profile} and
Equation~\ref{eq:lightweight-acceptance-criterion}. The following
subsections provide interpretation, a counterexample, and implementation
details without redefining those objects.

\subsection{Separation of Representation and Attribution Discrepancies}

The representation-level discrepancy is
\begin{equation}
\medmath{
\widehat d_{\mathrm{repr}}
=
\begin{cases}
W_p(\widehat\mu_N,\widehat\nu_n),
&d'=d,\\[1mm]
\widetilde W_p(\widehat\nu_n,\widehat\mu_N),
&d'<d,
\end{cases}
}
\label{eq:appendix-representation-discrepancy}
\end{equation}
where $\widetilde W_p$ is the Cai--Lim augmented
projection/embedding distance defined in
Equation~\ref{eq:augmented-wasserstein}.

This quantity compares only the marginal output laws. MCIR instead
depends on the joint feature--output laws $P_{X,Z}$ and $P_{X,Z'}$.
Consequently, representation-level closeness alone does not control
MCIR score or ranking disagreement.

To see this, let $X_1$ and $X_2$ be independent Bernoulli random
variables with the same success probability, and define
\begin{equation}
\medmath{
Z=X_1,
\qquad
Z'=X_2.
}
\label{eq:appendix-fidelity-counterexample}
\end{equation}
Then $P_Z=P_{Z'}$, so
$\medmath{d_{\mathrm{repr}}(\mathcal U,\mathcal U')=0}$ in common
coordinates. Nevertheless, $Z$ is informative about $X_1$ and
independent of $X_2$, whereas $Z'$ is informative about $X_2$ and
independent of $X_1$. The two environments therefore induce different
feature-attribution rankings despite having identical marginal output
laws.

This example shows that no bound of the form
$d_{\mathrm{rank}}\leq A\,d_{\mathrm{repr}}$ can hold without
additional assumptions controlling the joint feature--output laws.
Such assumptions are not imposed in the present manuscript.
Representation and attribution discrepancies are therefore retained as
separate components of the fidelity profile.

The deletion/insertion-curve discrepancy is also an observed outcome of
the ranking and evaluation protocol. It cannot be used as an
independent premise to prove that the same curves or rankings agree.
Accordingly, the four components of
$\mathfrak F_{N,n}$ are reported jointly without treating one component
as a proof of another.

\subsection{Ranking and Head-Set Diagnostics}

Let
$\widehat{\bm C}_N
=(\widehat C_{1,N},\ldots,\widehat C_{k,N})$
and
$\widehat{\bm C}'_n
=(\widehat C'_{1,n},\ldots,\widehat C'_{k,n})$
be the empirical MCIR score vectors in the full and lightweight
environments. Their complete-ranking discrepancy is
\begin{equation}
\medmath{
d_{\mathrm{rank}}
=
\frac{
1-\tau_b(\widehat{\bm C}_N,\widehat{\bm C}'_n)
}{2},
}
\label{eq:appendix-fidelity-rank}
\end{equation}
where $\tau_b$ includes tie correction. Because
$\medmath{\tau_b\in[-1,1]}$, it follows directly that
$\medmath{d_{\mathrm{rank}}\in[0,1]}$. A value of zero denotes complete
agreement, while a value of one denotes complete reversal when no
degenerate tie configuration is present.

The top-$K$ discrepancy is
\begin{equation}
\medmath{
d_{\mathrm{J@K}}
=
1-
\frac{
\left|
\operatorname{TopK}(\widehat{\bm C}_N)
\cap
\operatorname{TopK}(\widehat{\bm C}'_n)
\right|
}{
\left|
\operatorname{TopK}(\widehat{\bm C}_N)
\cup
\operatorname{TopK}(\widehat{\bm C}'_n)
\right|
}.
}
\label{eq:appendix-fidelity-topk}
\end{equation}
The intersection is a subset of the union, so
$\medmath{d_{\mathrm{J@K}}\in[0,1]}$. This diagnostic evaluates exact
feature identities. When correlated predictors are treated as
interchangeable, the corresponding group-level Jaccard diagnostic is
reported separately and does not replace exact Jaccard@$K$.

The two ranking diagnostics answer different questions:
$d_{\mathrm{rank}}$ evaluates the complete ordering, whereas
$d_{\mathrm{J@K}}$ evaluates only the selected ranking head. High
top-$K$ agreement can coexist with low complete-ranking agreement when
the leading feature set is preserved but lower-ranked features are
reordered or tied.

\subsection{Deletion and Insertion-Curve Diagnostic}

Let $\Gamma_N^a(t)$ and $\Gamma_n^{\prime a}(t)$ denote the full and
lightweight evaluation curves under protocol
$a\in\{\mathrm{del},\mathrm{ins}\}$ at fraction $t\in[0,1]$. Their
discrepancy is
\begin{equation}
\medmath{
d_{\mathrm{curve}}
=
\max_{a\in\{\mathrm{del},\mathrm{ins}\}}
\sup_{t\in[0,1]}
\left|
\Gamma_N^a(t)-\Gamma_n^{\prime a}(t)
\right|.
}
\label{eq:appendix-fidelity-curve}
\end{equation}
This is an $L_\infty$ comparison under an identical perturbation
protocol. Thus, the predictive model, test observations, feature
replacement rule, evaluation metric, and deletion or insertion schedule
must be fixed across the two environments. Otherwise, curve differences
could reflect a change in protocol rather than a change in attribution.

When the performance metric is normalized to $[0,1]$,
$\medmath{d_{\mathrm{curve}}\in[0,1]}$. For an unbounded performance
metric, the curve discrepancy retains the scale of that metric and must
be interpreted using a correspondingly scaled tolerance.

\subsection{Operational Acceptance Criterion}

The componentwise rule in
Equation~\ref{eq:lightweight-acceptance-criterion} requires every
prespecified tolerance to be satisfied. Failure of one component is not
offset by a small value of another; thus, poor ranking agreement cannot
be concealed by a small representation discrepancy, or vice versa.

The acceptance criterion is an operational diagnostic, not a
theorem of automatic rank preservation. The ranking, top-$K$, and
curve discrepancies are measured outcomes rather than premises from
which the same agreement is proved. Its tolerances define acceptance
for a stated application and do not establish a universal notion of
lightweight fidelity.

\subsection{Population Shift and Estimator Error}

The difference between the two empirical score vectors contains both
environmental and finite-sample components. For each feature $i$, let
$C_i(P_{X,Z})$ and $C_i(P_{X,Z'})$ denote the corresponding population
MCIR scores. The triangle inequality gives
\begin{equation}
\medmath{
\begin{aligned}
\left|
\widehat C_{i,N}
-
\widehat C'_{i,n}
\right|
\leq{}&
\left|
\widehat C_{i,N}
-
C_i(P_{X,Z})
\right|\\
&+
\left|
C_i(P_{X,Z})
-
C_i(P_{X,Z'})
\right|\\
&+
\left|
C_i(P_{X,Z'})
-
\widehat C'_{i,n}
\right|.
\end{aligned}
}
\label{eq:appendix-fidelity-error-decomposition}
\end{equation}
The first and third terms are estimator errors within the respective
environments. The middle term is the change in the population MCIR
functional caused by changing the joint feature--output law.

Theorem~\ref{thm:rank_stability} controls empirical ranking
perturbations when estimator-specific component-error, denominator, and
population-margin conditions hold. It does not bound the middle term in
Equation~\ref{eq:appendix-fidelity-error-decomposition} using the
marginal output discrepancy alone.

No common $O_P(n^{-1/2})$ rate is assumed for Gaussian--copula, $k$NN,
and plug-in MI/CMI estimators. Any convergence rate for the first or
third term must follow from the regularity conditions of the estimator
used in the corresponding environment. The middle term requires
additional assumptions on the joint laws and is therefore evaluated
empirically through the score and ranking diagnostics.

\subsection{Fidelity-Diagnostic Cost}

Let $T_W(N,n,d,d')$ denote the cost of the ordinary or augmented
Wasserstein calculation and let $T_{\mathrm{curve}}$ denote the cost
of the prescribed deletion/insertion evaluation. Kendall--$\tau_b$,
top-$K$ extraction, and set comparison require at most
$\mathcal O(k\log k)$ time under sorting. The additional cost of
computing the fidelity diagnostics is therefore
\begin{equation}
\medmath{
\mathcal O\!\left(
T_W(N,n,d,d')+k\log k+T_{\mathrm{curve}}
\right).
}
\label{eq:appendix-fidelity-diagnostic-cost}
\end{equation}
This cost is separate from MCIR scoring and Auto-$\Phi$ calibration,
whose operation counts are given in
Equations~\eqref{eq:appendix-fixed-neighbourhood-cost} and
\eqref{eq:appendix-auto-phi-cost}, respectively.

\paragraph{Scope.}
The lightweight fidelity profile does not assert that subsampling is
faithful merely because $n<N$, that the Cai--Lim distance applies
because sample sizes differ, or that marginal output-law similarity
guarantees attribution agreement. Its four components evaluate these
properties separately.

\section{Computational and Statistical Details}
\label{appendix:complexity}

This appendix supplements the computational and statistical analysis in
Section~\ref{sec:complexity-analysis}. To avoid repetition, the
population properties and detailed ratio-perturbation and rank-stability
proofs are not restated here; they are provided in
Appendix~\ref{sec:theory1}. We instead give estimator-dependent
operation counts, regularity conditions, memory requirements, and the
scope of the corresponding statistical conclusions.

Throughout, $s\in\{N,n\}$ denotes the number of observations in the
environment being analyzed, $k$ is the number of features,
$d_Q$ is the dimension of the model-output representation, and
$m_i=|\widehat\Phi(i)|$. The local information-estimation dimension is
\begin{equation}
\medmath{
r_i=d_Q+m_i+1,
\qquad
r_{\max}=d_Q+M_{\max}+1.
}
\label{eq:appendix-local-dimension}
\end{equation}

\subsection{Operational Complexity}
\label{appendix:operational}

Let $T_{\mathrm{MI}}(s,r)$ denote the cost of one MI or CMI
computation using $s$ observations and joint dimension $r$. For fixed
neighbourhoods, computing the two information components required for
every feature and sorting the resulting scores costs
\begin{equation}
\medmath{
\mathcal O\!\left(
\sum_{i=1}^{k}T_{\mathrm{MI}}(s,r_i)
+
k\log k
\right).
}
\label{eq:appendix-fixed-neighbourhood-cost}
\end{equation}
The constant factor associated with computing both $\widehat U_{i,s}$
and $\widehat J_{i,s}$ is absorbed into the asymptotic notation.

This expression describes scoring after the neighbourhoods and
estimator families have been fixed. It does not include candidate
screening, Auto-$\Phi$ calibration, estimator switching, bootstrap
uncertainty estimation, or the construction of the explanation sample.

The additional cost of bootstrap estimator switching and
Auto-$\Phi$ calibration is given in
Equation~\ref{eq:appendix-auto-phi-cost}. The candidate-screening
cost must be added separately.

For dense pairwise dependence screening,
\begin{equation}
\medmath{
C_{\mathrm{screen}}
=
\mathcal O(sk^2)
}
\label{eq:appendix-dense-screening}
\end{equation}
for statistics computable in one pass over each feature pair.
Approximate or sparse dependence-graph construction may reduce this
cost, but its complexity depends on the specific screening algorithm.

Consequently, end-to-end MCIR complexity is not generally linear
in $k$. Near-linear scoring in $k$ applies only after fixing bounded
local neighbourhoods and excluding dense candidate screening,
bootstrap calibration, and estimator switching.

For the experimental PCIR implementation, rank preprocessing requires
$\mathcal O(ks\log s)$ time and the subsequent dispersion calculations
require $\mathcal O(ks)$. These costs likewise exclude predictive-model
evaluation and explanation-data construction.

\subsection{Estimator-Dependent Costs}
\label{appendix:mi-comparison}

The function $T_{\mathrm{MI}}(s,r)$ depends on the estimator and cannot
be replaced by a single rate valid for every data regime. The principal
costs are summarized in Table~\ref{tab:mi-comparisons}.

\begin{table}[htbp]
\centering
\scriptsize
\setlength{\tabcolsep}{3pt}
\renewcommand{\arraystretch}{0.96}
\caption{Indicative cost of one MI/CMI computation with $s$
observations and local dimension $r$.}
\label{tab:mi-comparisons}
\begin{adjustbox}{width=\linewidth}
\begin{tabular}{lp{3.7cm}p{6.3cm}}
\toprule
\textbf{Estimator} &
\textbf{Indicative time} &
\textbf{Scope and dimensional dependence} \\
\midrule
Gaussian--copula &
$\mathcal O(rs\log s+sr^2+r^3)$ &
Rank transformation, covariance estimation, and matrix
factorization; numerical cost increases with $r$. \\

Tree-based $k$NN &
Typically $\mathcal O(s\log s)$ for fixed low $r$ &
Constants and statistical accuracy deteriorate with dimension;
worst-case neighbour search may approach $\mathcal O(s^2)$. \\

Discrete plug-in &
$\mathcal O(sr)$ for count construction &
Memory and sample requirements depend on the number of occupied joint
cells and may grow rapidly with cardinality and $r$. \\

Kernel method &
Typically $\mathcal O(s^2)$ memory and
$\mathcal O(s^2r)$ or higher time &
Requires Gram matrices unless a low-rank or random-feature
approximation is used. \\
\bottomrule
\end{tabular}
\end{adjustbox}
\end{table}

For the Gaussian--copula implementation, sorting or rank-transforming
$r$ variables costs $\mathcal O(rs\log s)$, covariance construction
costs $\mathcal O(sr^2)$, and factorization costs $\mathcal O(r^3)$.
It is therefore approximately linear in $s$ only when $r$ is treated
as fixed.

Similarly, the commonly stated $\mathcal O(s\log s)$ cost for $k$NN
estimation assumes fixed low dimension and effective spatial indexing.
It is not a dimension-uniform guarantee. Plug-in estimation is
computationally inexpensive for fixed finite alphabets, but the number
of possible joint cells can grow exponentially with the number and
cardinalities of the variables.

\subsection{Estimator-Specific Statistical Conditions}
\label{appendix:statistical}

\begin{assumption}[Estimator-specific regularity]
\label{asm:estimators}
Whenever a finite-sample result invokes an MI or CMI estimator, the
support, smoothness, moment, positivity, and conditioning-dimension
conditions required by that estimator hold. Any component-error,
consistency, concentration, or asymptotic-normality condition is stated
for the estimator and data regime in which it is used. No common rate
is assumed for Gaussian--copula, $k$NN, discrete plug-in, or mixed-data
estimators.
\end{assumption}

The population MCIR score is
\begin{equation}
\medmath{
C_i
=
\frac{2U_i}{U_i+J_i},
\qquad
U_i=I_{P_{X,Q}}(Q;X_i\mid X_{\Phi(i)}),
\qquad
J_i=I_{P_{X,Q}}(Q;X_{\Phi(i)\cup\{i\}}).
}
\label{eq:appendix-statistical-mcir}
\end{equation}
Its empirical version uses the projected ratio in
Equation~\ref{eq:clipped-empirical-mcir}.

If
\begin{equation}
\medmath{
\widehat U_{i,s}\xrightarrow{p}U_i,
\qquad
\widehat J_{i,s}\xrightarrow{p}J_i,
\qquad
U_i+J_i>0,
}
\label{eq:appendix-component-consistency}
\end{equation}
then the continuous-mapping argument in
Appendix~\ref{sec:theory1.6} gives
$\widehat C_{i,s}\xrightarrow{p}C_i$. This transfer result does not
establish consistency of the underlying MI or CMI estimators; that
property must be verified for the estimator and data regime used.

The estimator-specific conditions are stated once with the
corresponding formulas in Appendix~\ref{app:estimators}.
Assumption~\ref{asm:estimators} does not assert a universal
concentration inequality, convergence rate, or central limit theorem
for Gaussian--copula, $k$NN, plug-in, or mixed-data estimators.

\subsection{Conditional Delta-Method Result}

Suppose a particular estimator satisfies the joint central limit
theorem
\begin{equation}
\medmath{
\sqrt{s}
\begin{pmatrix}
\widehat U_{i,s}-U_i\\
\widehat J_{i,s}-J_i
\end{pmatrix}
\Rightarrow
\mathcal N(0,\Sigma_i),
}
\label{eq:appendix-component-clt}
\end{equation}
and $U_i+J_i>0$. For
$\medmath{g(u,j)=2u/(u+j)}$,
\begin{equation}
\medmath{
\nabla g(U_i,J_i)
=
\frac{2}{(U_i+J_i)^2}
\begin{pmatrix}
J_i\\-U_i
\end{pmatrix}.
}
\label{eq:appendix-mcir-gradient}
\end{equation}
The multivariate delta method then gives
\begin{equation}
\medmath{
\sqrt{s}
\left(
\widehat C_{i,s}-C_i
\right)
\Rightarrow
\mathcal N
\left(
0,
\nabla g(U_i,J_i)^\top
\Sigma_i
\nabla g(U_i,J_i)
\right).
}
\label{eq:appendix-conditional-delta-method}
\end{equation}
This is a conditional implication: MCIR inherits asymptotic normality
only when the selected component estimators satisfy
Equation~\ref{eq:appendix-component-clt}. It is not asserted for
every estimator in Table~\ref{tab:mi-comparisons}.

\subsection{Memory and Parallelization}
\label{appendix:memory}

If the explanation arrays are stored in memory, their shared storage
cost is
\begin{equation}
\medmath{
\mathcal O\!\left(s(k+d_Q)\right).
}
\label{eq:appendix-data-memory}
\end{equation}
A worker computing one local score requires
$\mathcal O(sr_{\max})$ active data storage in addition to
estimator-specific structures. A Gaussian--copula implementation also
stores an $\mathcal O(r_{\max}^2)$ covariance matrix. Exact kernel
methods may require $\mathcal O(s^2)$ memory, while tree-based $k$NN
methods require the corresponding spatial-index structure.

Once neighbourhoods and estimator choices are fixed, feature-wise
score calculations can be distributed across $p$ workers. Ignoring
load imbalance and communication, the scoring component in
Equation~\ref{eq:appendix-fixed-neighbourhood-cost} becomes
\begin{equation}
\medmath{
\mathcal O\!\left(
\left\lceil\frac{k}{p}\right\rceil
T_{\mathrm{MI}}(s,r_{\max})
\right).
}
\label{eq:appendix-parallel-cost}
\end{equation}
This is an idealized work-distribution bound, not a guarantee of
near-linear speedup. Communication, scheduling, shared-memory
contention, unequal neighbourhood sizes, and estimator-specific setup
costs remain.

\subsection{Choice of Lightweight Sample Size}

No universal lightweight fraction guarantees accurate MCIR estimation.
The required $n$ depends on the estimator, variable types, local
dimension, strength of dependence, denominator size, ranking margins,
and target uncertainty.

In practice, candidate values of $n$ should be evaluated using the
lightweight fidelity profile in
Definition~\ref{def:lightweight-fidelity-profile}, estimator-specific
bootstrap intervals, and repeated-subsampling ranking diagnostics. The
selected value should satisfy the prespecified representation, ranking,
top-$K$, and curve tolerances while meeting the available computational
budget.

Empirical success at a particular sampling fraction is
dataset-specific and should not be presented as a general sample
efficiency guarantee. Likewise, reducing the number of observations
does not by itself establish that the lightweight environment preserves
the joint feature--output dependence required by MCIR.
\section{Redundancy Robustness Experiments}
\label{sec:appendix_experiments}

This section evaluates attribution behaviour after controlled injection
of near-duplicate predictors. The designated anchor is known from the
experimental construction; MCIR does not infer which member of an exact
duplicate family is the ``original'' from statistical dependence alone.
We compare MCIR with TreeSHAP and permutation feature importance (PFI)
on HouseEnergy-Sim and UCI HAR. These experiments provide finite-sample
sensitivity evidence under the stated neighbourhood and tie-handling
protocols; they are not direct empirical proofs of exact population
redundancy collapse.

\subsection{Experimental Protocol}

For $r\in\{1,2,4,8\}$, extended to $r\in\{16,32\}$ for the ranking
stress test, we inject noisy copies of a prespecified anchor feature.
For a designated feature $X_i$, the $s$th copy is
\begin{equation}
\medmath{
X_i^{(s)}
=
X_i+\varepsilon_s,
\qquad
\varepsilon_s
\sim
\mathcal N
\left(
0,
\{0.02\,\widehat{\mathrm{sd}}(X_i)\}^{2}
\right).
}
\label{eq:appendix-duplicate-construction}
\end{equation}
The observed anchor--copy correlations lie between approximately
$0.98$ and $0.995$. Thus, the experiment evaluates noisy
near-duplicates rather than exact deterministic copies.

The duplicate family is inserted before neighbourhood construction,
and every method receives the same augmented feature space at a given
$r$. The random-forest configurations, data partitions, and repetition
counts are reported in Table~\ref{tab:experiment_protocol}. Each fitted
model is held fixed while the attribution methods are evaluated on the
same explanation sample.

We report the unnormalized Kendall instability
\begin{equation}
\medmath{
\Delta_{\mathrm{rank}}(r)
=
1-
\tau_b
\left(
\pi^{(0)},
\pi_{\mathrm{orig}}^{(r)}
\right),
}
\label{eq:appendix-redundancy-rank}
\end{equation}
where $\pi_{\mathrm{orig}}^{(r)}$ is restricted to the original
features shared by the clean and augmented datasets. Hence
$\Delta_{\mathrm{rank}}\in[0,2]$. This differs from the normalized
lightweight-fidelity discrepancy
$d_{\mathrm{rank}}=(1-\tau_b)/2$.

Let $A_i^{(r)}$ denote the non-negative attribution assigned to the
designated anchor, $\mathcal F^{(r)}$ the augmented feature set, and
$\mathcal D^{(r)}$ the duplicate family. We define
\begin{equation}
\medmath{
\operatorname{OrigMass}(r)
=
\frac{A_i^{(r)}}
     {\sum_{j\in\mathcal F^{(r)}}A_j^{(r)}},
\qquad
\operatorname{FamilyShare}(r)
=
\frac{A_i^{(r)}}
     {\sum_{j\in\mathcal D^{(r)}}A_j^{(r)}}.
}
\label{eq:appendix-redundancy-allocation}
\end{equation}
These are protocol-specific anchor-retention diagnostics and are not
invariant to relabelling exact duplicates.

\subsection{HouseEnergy-Sim}
\label{app:house-redundancy}

HouseEnergy-Sim provides a controlled benchmark with a prespecified
dominant feature, \emph{Base\_load}, which serves as the duplicate-family
anchor. Table~\ref{tab:house_combined} reports global anchor retention
and within-family allocation.

\begin{table}[htbp]
\centering
\setlength{\tabcolsep}{3pt}
\renewcommand{\arraystretch}{0.96}
\caption{HouseEnergy-Sim redundancy diagnostics. OrigMass is the
normalized attribution assigned to the designated anchor; FamilyShare
is its share within the duplicate family.}
\label{tab:house_combined}
\begin{tabular}{c|ccc|ccc}
\toprule
& \multicolumn{3}{c|}{\textbf{OrigMass}}
& \multicolumn{3}{c}{\textbf{FamilyShare}}\\
\cmidrule(r){2-4}\cmidrule(l){5-7}
$r$ & MCIR & TreeSHAP & PFI & MCIR & TreeSHAP & PFI\\
\midrule
1 & $0.635\pm0.018$ & $0.414\pm0.032$ & $0.475\pm0.028$
  & 0.908 & 0.784 & 0.834\\
2 & $0.610\pm0.019$ & $0.312\pm0.035$ & $0.398\pm0.031$
  & 0.904 & 0.687 & 0.776\\
4 & $0.568\pm0.021$ & $0.178\pm0.038$ & $0.272\pm0.034$
  & 0.887 & 0.503 & 0.651\\
8 & $0.509\pm0.023$ & $0.052\pm0.019$ & $0.187\pm0.029$
  & 0.852 & 0.150 & 0.387\\
\bottomrule
\end{tabular}
\end{table}
Between $r=1$ and $r=8$, MCIR OrigMass decreases from $0.635$ to
$0.509$, compared with $0.414$ to $0.052$ for TreeSHAP and $0.475$ to
$0.187$ for PFI. At $r=8$, MCIR retains $0.852$ of the duplicate-family
mass on the designated anchor, whereas TreeSHAP and PFI retain $0.150$
and $0.387$, respectively. Figure~\ref{fig:mass_house} visualizes these
differences.
These results show stronger anchor retention for MCIR under the stated
ordered near-duplicate protocol. They do not prove exact conditional
redundancy collapse or show that MCIR can identify an intrinsically
``original'' feature without the external anchor designation.

\subsubsection{Original-Feature Ranking Stability}

Table~\ref{tab:rank_house} reports
$\Delta_{\mathrm{rank}}=1-\tau_b$ for rankings restricted to the
original features. MCIR preserves the same original-feature ordering in
the reported runs, whereas TreeSHAP and PFI exhibit additional pairwise
reversals as $r$ increases. Values above one are valid because this
experiment reports the unnormalized discrepancy and therefore permits
negative Kendall association.

\begin{table}[t]
\centering
\scriptsize
\setlength{\tabcolsep}{4pt}
\renewcommand{\arraystretch}{0.96}
\caption{Original-feature ranking instability on HouseEnergy-Sim:
$\Delta_{\mathrm{rank}}=1-\tau_b$; lower is better.}
\label{tab:rank_house}
\begin{tabular}{c|ccc}
\toprule
$r$ & MCIR & TreeSHAP & PFI\\
\midrule
0  & 0.0000 & 0.0000 & 0.0000\\
1  & 0.0000 & 0.6667 & 0.2500\\
2  & 0.0000 & 0.6667 & 0.6667\\
4  & 0.0000 & 0.6667 & 0.6667\\
8  & 0.0000 & 0.6667 & 0.6667\\
16 & 0.0000 & 0.6667 & 1.3333\\
32 & 0.0000 & 1.3333 & 1.3333\\
\bottomrule
\end{tabular}
\end{table}
\begin{figure}
\centering
\includegraphics[width=\linewidth]{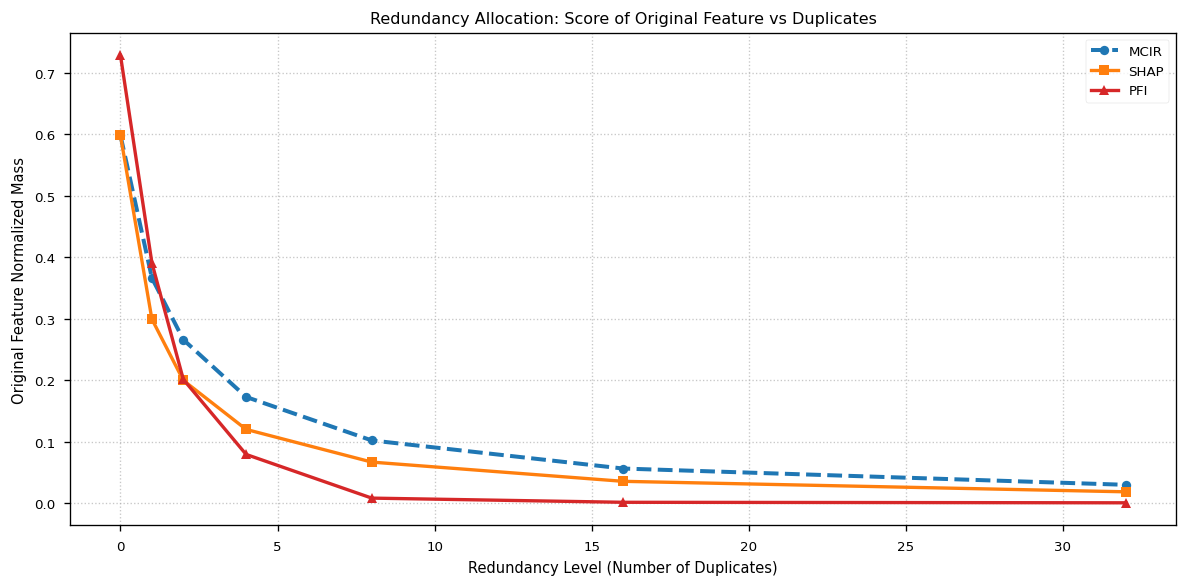}
\caption{OrigMass under increasing redundancy in HouseEnergy-Sim.
MCIR shows a smaller decline than TreeSHAP and PFI under the reported
protocol.}
\label{fig:mass_house}
\vspace{-3mm}
\end{figure}
These observations provide finite-sample sensitivity evidence. They do
not verify Theorem~\ref{thm:rank_stability}, whose conclusion is
relative to the population MCIR ranking and requires the stated
component-error, denominator-separation, and margin conditions.

\subsection{UCI HAR}
\label{app:har-redundancy}

Table~\ref{tab:mass_har} applies the same anchor-retention diagnostic to
UCI HAR. From $r=1$ to $r=8$, MCIR OrigMass changes from $0.639$ to
$0.503$, compared with $0.439$ to $0.055$ for TreeSHAP and $0.466$ to
$0.189$ for PFI.

\begin{table}[htbp]
\centering
\scriptsize
\setlength{\tabcolsep}{3pt}
\renewcommand{\arraystretch}{0.96}
\caption{OrigMass under duplicate injection on UCI HAR.}
\label{tab:mass_har}
\begin{tabular}{c|ccc}
\toprule
$r$ & MCIR & TreeSHAP & PFI\\
\midrule
1 & $0.639\pm0.016$ & $0.439\pm0.034$ & $0.466\pm0.027$\\
2 & $0.614\pm0.018$ & $0.331\pm0.036$ & $0.389\pm0.030$\\
4 & $0.572\pm0.020$ & $0.189\pm0.041$ & $0.265\pm0.033$\\
8 & $0.503\pm0.022$ & $0.055\pm0.021$ & $0.189\pm0.031$\\
\bottomrule
\end{tabular}
\end{table}

Thus, MCIR retains more attribution on the designated anchor under this
protocol. As in HouseEnergy-Sim, the result depends on the chosen
anchor, neighbourhood construction, estimator, and deterministic tie
rule. Figure~\ref{fig:mass_combined} compares the trajectories from the
two datasets. Their similar qualitative behaviour provides
cross-dataset sensitivity evidence but not a distribution-free
generality result.

\begin{figure}[htbp]
\centering
\includegraphics[width=\linewidth,height=6cm]{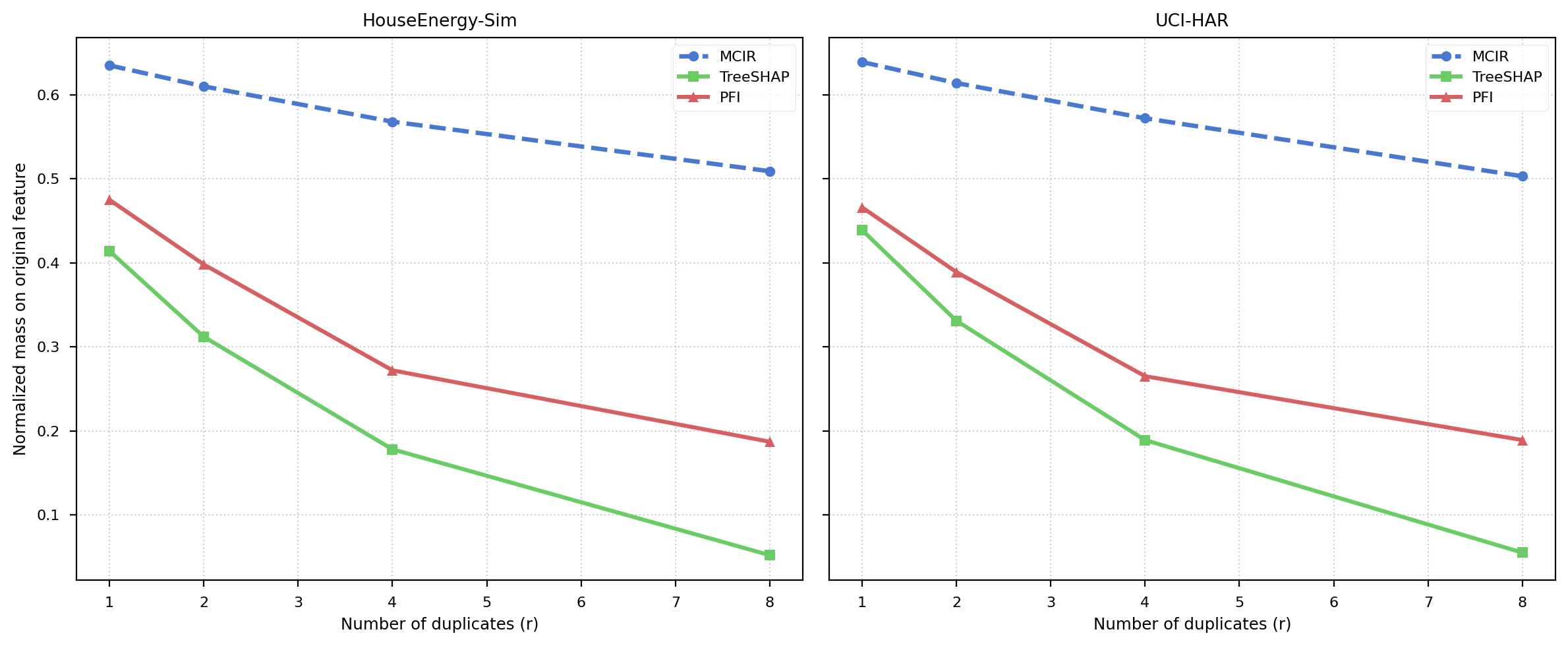}
\caption{OrigMass under redundancy on HouseEnergy-Sim and UCI HAR.
MCIR retains more mass on the designated anchor than TreeSHAP and PFI
under the evaluated constructions.}
\label{fig:mass_combined}
\end{figure}

\subsection{Cross-Dataset Degradation Analysis}

For HouseEnergy-Sim, the change in OrigMass from $r=1$ to $r=8$ is
$-0.126$ for MCIR, $-0.362$ for TreeSHAP, and $-0.287$ for PFI.
These repeated-seed summaries are treated as descriptive effect
estimates. We do not report a hypothesis-test $p$-value because the
current reproducibility record does not specify a complete inferential
procedure for these comparisons.

\subsection{Runtime Comparison}

Runtime is evaluated at $r=8$ using 400 observations and 20 augmented
features. Every method is executed ten times on identical hardware and
explanation data.

\begin{table}[htbp]
\centering
\scriptsize
\setlength{\tabcolsep}{5pt}
\renewcommand{\arraystretch}{0.96}
\caption{Runtime at $r=8$ using 400 observations and 20 features.
Values are means over ten runs.}
\label{tab:runtime-redundancy}
\begin{tabular}{lcc}
\toprule
\textbf{Method} & \textbf{Time (s)} & \textbf{Relative time}\\
\midrule
MCIR (lightweight) & 0.0341 & $1.0\times$\\
TreeSHAP           & 0.9114 & $26.7\times$\\
PFI (5 repeats)    & 0.2496 & $7.3\times$\\
\bottomrule
\end{tabular}
\end{table}

In this implementation, MCIR is faster than TreeSHAP and repeated PFI
at the reported sample and feature counts. MCIR avoids repeated model
calls after $Q$ has been obtained, whereas the comparison
implementations require additional model evaluations. These wall-clock
measurements are specific to the hardware, software, estimator, and
neighbourhood configuration and do not establish a universal runtime
ordering.

\subsection{Summary of the Redundancy Experiments}

Across the evaluated datasets, MCIR retains more attribution on the
prespecified anchor and better preserves the original-feature ordering
than the two comparison implementations.
Figure~\ref{fig:duplication_concept} summarizes the observed allocation
patterns.

\begin{figure}[t]
\centering
\includegraphics[width=\linewidth]{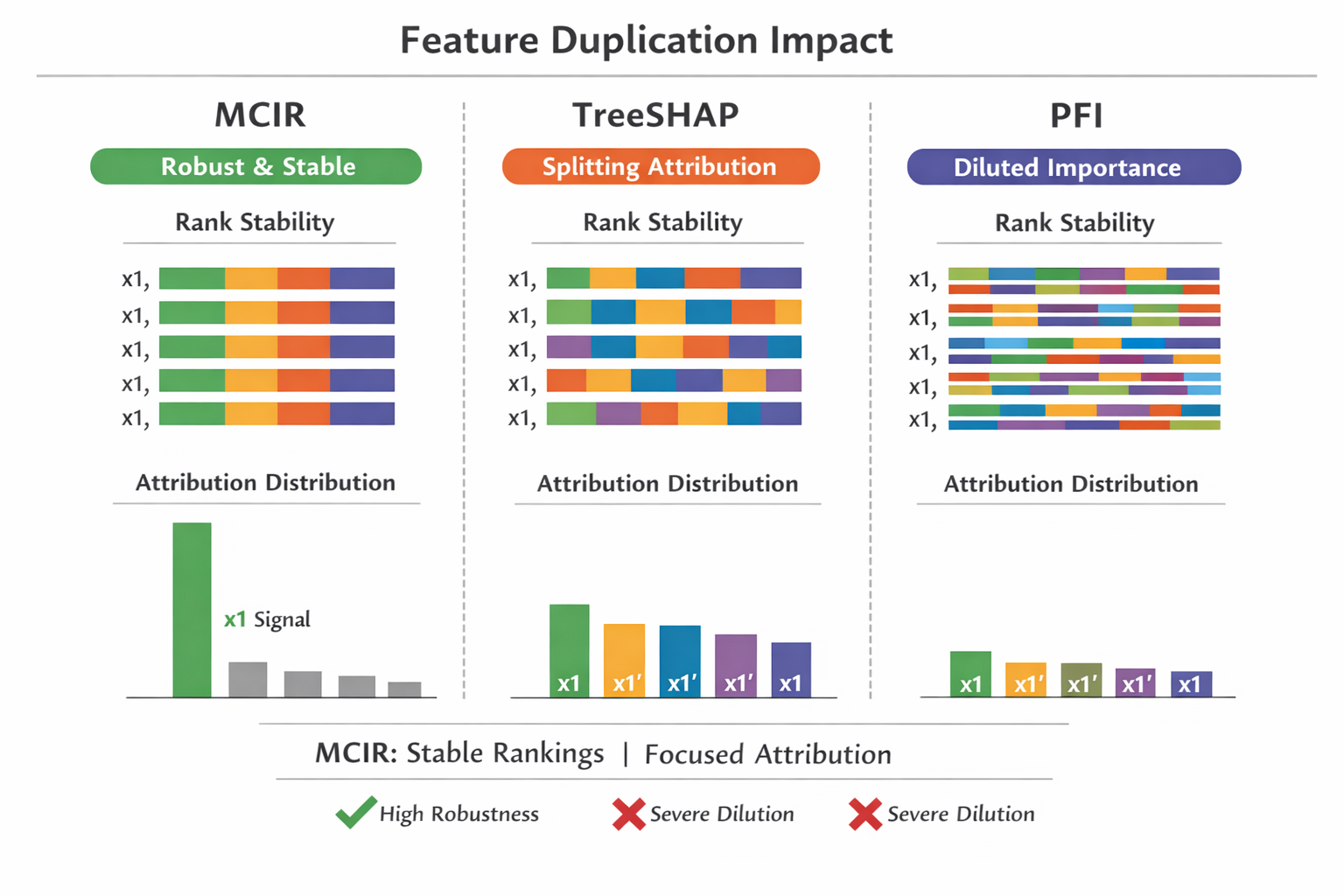}
\caption{Conceptual summary of allocation under the evaluated
near-duplicate protocol. Left: MCIR retains more score on the
designated anchor. Middle: TreeSHAP distributes score across correlated
copies. Right: the marginal effect measured by PFI can decrease while
predictive substitutes remain. This diagram summarizes the experiment
and is not a general theorem about the methods.}
\label{fig:duplication_concept}
\end{figure}

The allocation results are consistent with MCIR's use of conditional
information relative to selected neighbourhoods. They do not imply that
MCIR can distinguish statistically identical copies without an
asymmetric neighbourhood, deterministic tie rule, or external anchor.
The supported conclusion is therefore limited to the reported
near-duplicate construction: MCIR exhibits smaller anchor-mass
degradation and fewer original-feature rank changes.

\section{Latent-Confounding Sensitivity Experiment}
\label{latent}

We separately examine a synthetic common-cause setting because latent
confounding is distinct from observed duplicate injection. Let
$L\sim\mathcal N(0,1)$ be unobserved and define
\begin{equation}
\medmath{
X_1=aL+E_1,
\qquad
X_2=aL+E_2,
}
\label{eq:appendix-latent-features}
\end{equation}
where $E_1$ and $E_2$ are independent noise variables. An independent
predictor $X_3$ and noise feature $X_5$ are also included, and
\begin{equation}
\medmath{
Q=\beta_1X_1+\beta_3X_3+\varepsilon.
}
\label{eq:appendix-latent-output}
\end{equation}
The parameter $a\in[0,1]$ controls the observed dependence induced by
the latent common cause. For the proxy feature $X_2$, MCIR is evaluated
with $X_1\in\Phi(2)$.

\begin{figure}[t]
\centering
\includegraphics[width=\linewidth]{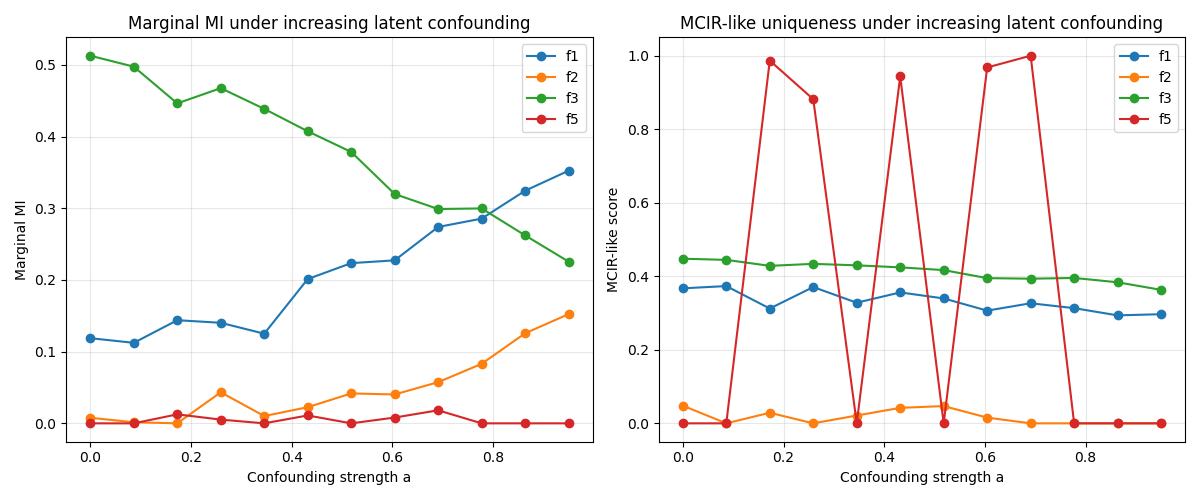}
\caption{Synthetic common-cause sensitivity experiment.
\textbf{Left:} marginal MI with $Q$ as $a$ increases.
\textbf{Right:} MCIR under the stated neighbourhoods. Conditioning on
$X_1$ reduces the additional statistical information assigned to
$X_2$. This observed-variable dependence result does not recover the
latent variable or establish causal importance.}
\label{fig:latent_confounding}
\end{figure}

Figure~\ref{fig:latent_confounding} shows that conditioning on the
observed correlated feature $X_1$ reduces the additional statistical
information assigned to $X_2$ in this construction. It does not show
that MCIR generally adjusts for unobserved confounding: MCIR operates
only on observed variables, cannot recover $L$, and does not identify
causal effects.

\section{Predictive-Risk and Variance-Allocation Comparisons}
\label{app:loco-shapley-comparison}

This section supplements
Section~\ref{subsec:loco-comparison} with the complete cross-method
agreement results. Because MCIR, LOCO, regional LOCO, and Shapley
effects define different importance functionals, agreement is reported
descriptively and is not treated as an attribution-correctness metric.

\begin{figure*}[t]
\centering
\includegraphics[width=\linewidth]{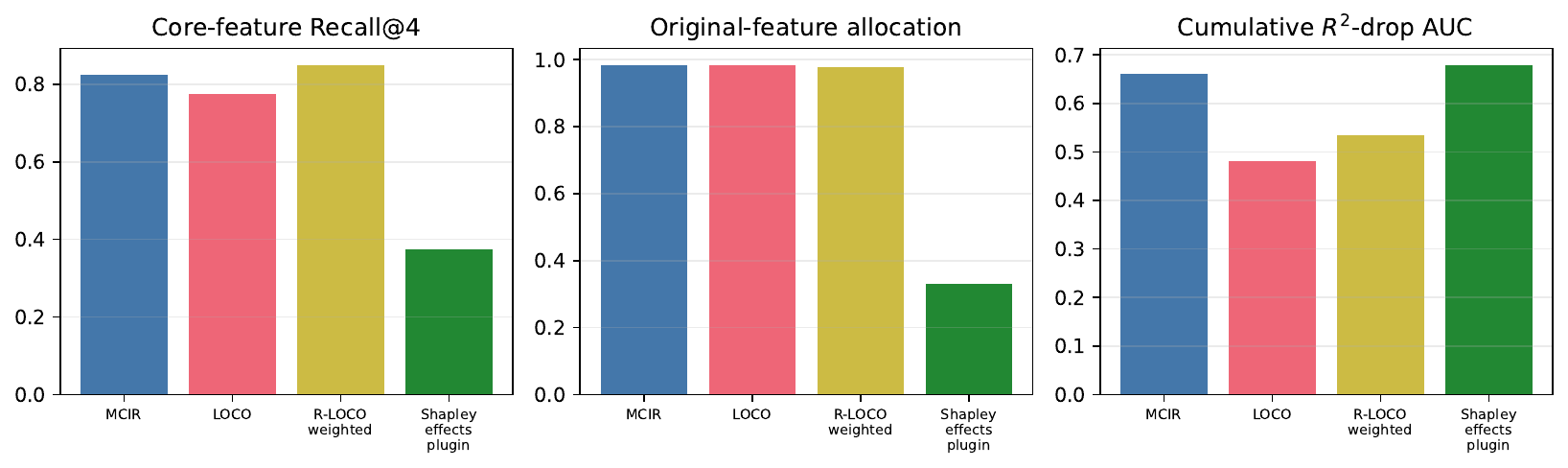}
\caption{HouseEnergy-Sim comparison across distinct attribution
functionals. Left: recovery of the four prespecified functional
drivers. Middle: normalized positive attribution retained on those
drivers. Right: cumulative $R^2$-drop AUC under the common
training-mean replacement protocol. Values are means over ten
independent runs; the numerical uncertainty summaries are reported in
Table~\ref{tab:loco-shapley-houseenergy}.}
\label{fig:loco-shapley-houseenergy}
\end{figure*}

\begin{table}[t]
\centering
\scriptsize
\setlength{\tabcolsep}{3pt}
\renewcommand{\arraystretch}{0.96}
\caption{Mean cross-method ranking agreement over ten independent
HouseEnergy-Sim runs. The statistics compare distinct importance
targets and do not measure attribution correctness.}
\label{tab:loco-shapley-agreement}
\begin{tabular}{llccc}
\toprule
\textbf{Method A} & \textbf{Method B} &
$\boldsymbol{\tau_b}$ & $\boldsymbol{\rho}$ & \textbf{Jaccard@4} \\
\midrule
MCIR & LOCO                  &  0.436 &  0.568 & 0.560 \\
MCIR & Region-weighted LOCO  &  0.449 &  0.594 & 0.667 \\
MCIR & Shapley-effects plug-in & $-0.071$ & $-0.122$ & 0.303 \\
LOCO & Region-weighted LOCO  &  0.764 &  0.865 & 0.720 \\
LOCO & Shapley-effects plug-in & $-0.262$ & $-0.377$ & 0.205 \\
Region-weighted LOCO & Shapley-effects plug-in
                              & $-0.267$ & $-0.377$ & 0.195 \\
\bottomrule
\end{tabular}
\end{table}

The relatively high LOCO--regional-LOCO agreement follows from their
shared predictive-loss target. In contrast, the lower agreement with
the Shapley-effects plug-in reflects both its variance-allocation target
and finite-sample conditional-mean approximation. Region-level outputs,
individual feature scores, and all repetition-level metrics are retained
in the accompanying reproducibility archive.

\section{High-Dimensional CMI Ablation}
\label{app:highdim-cmi}

The complete conditioning-dimension results underlying
Figure~\ref{fig:highdim-cmi-ablation} are reported in
Table~\ref{tab:highdim-cmi-ablation}. Each cell is averaged over eight
independent repetitions. Pearson screening is computed from the
corresponding 5000-observation reference sample and is held fixed when
the smaller explanation samples are evaluated. The experiment
therefore isolates MI/CMI estimation sensitivity conditional on the
screened neighbourhoods; it does not include uncertainty from
re-estimating the screening graph.
\begin{table*}[t]
\centering
\scriptsize
\setlength{\tabcolsep}{2.5pt}
\caption{High-dimensional CMI ablation on HouseEnergy-Sim. Values are
means over eight repetitions. Agreement and score MAE are computed
against an estimator-specific reference based on 5000 observations;
the results therefore characterize finite-sample sensitivity rather
than population accuracy. Negative-component frequency is the
proportion of unprojected $k$NN MI/CMI components below zero;
Gaussian--copula components are non-negative by implementation.
Decreasing score MAE at large $|\Phi|$ should be interpreted jointly
with rank agreement because it may reflect score compression rather
than improved ranking recovery.}
\label{tab:highdim-cmi-ablation}
\begin{tabular}{rrlccccc}
\toprule
$n$ & $|\Phi|$ & Estimator & Score MAE & $\tau_b$ & J@4 & Negative components & Runtime (s) \\
\midrule
500 & 1 & copula & 0.163 & 0.516 & 0.295 & 0.000 & 0.08 \\
500 & 1 & knn & 0.204 & 0.136 & 0.268 & 0.365 & 0.26 \\
500 & 2 & copula & 0.101 & 0.441 & 0.533 & 0.000 & 0.10 \\
500 & 2 & knn & 0.149 & 0.134 & 0.281 & 0.380 & 0.33 \\
500 & 3 & copula & 0.067 & 0.394 & 0.433 & 0.000 & 0.12 \\
500 & 3 & knn & 0.115 & 0.118 & 0.295 & 0.333 & 0.37 \\
500 & 5 & copula & 0.037 & 0.262 & 0.600 & 0.000 & 0.14 \\
500 & 5 & knn & 0.088 & 0.155 & 0.310 & 0.294 & 0.45 \\
500 & 8 & copula & 0.010 & 0.177 & 0.319 & 0.000 & 0.20 \\
500 & 8 & knn & 0.027 & 0.291 & 0.410 & 0.214 & 0.68 \\
500 & 12 & copula & 0.004 & 0.185 & 0.329 & 0.000 & 0.29 \\
500 & 12 & knn & 0.008 & 0.380 & 0.400 & 0.185 & 1.05 \\
1000 & 1 & copula & 0.135 & 0.573 & 0.386 & 0.000 & 0.10 \\
1000 & 1 & knn & 0.157 & 0.291 & 0.271 & 0.323 & 0.53 \\
1000 & 2 & copula & 0.086 & 0.481 & 0.468 & 0.000 & 0.12 \\
1000 & 2 & knn & 0.167 & 0.088 & 0.271 & 0.357 & 0.66 \\
1000 & 3 & copula & 0.056 & 0.461 & 0.560 & 0.000 & 0.14 \\
1000 & 3 & knn & 0.109 & 0.281 & 0.392 & 0.380 & 0.73 \\
1000 & 5 & copula & 0.023 & 0.313 & 0.633 & 0.000 & 0.17 \\
1000 & 5 & knn & 0.063 & 0.247 & 0.358 & 0.312 & 0.97 \\
1000 & 8 & copula & 0.006 & 0.228 & 0.295 & 0.000 & 0.25 \\
1000 & 8 & knn & 0.015 & 0.399 & 0.433 & 0.250 & 1.43 \\
1000 & 12 & copula & 0.002 & 0.182 & 0.262 & 0.000 & 0.33 \\
1000 & 12 & knn & 0.006 & 0.545 & 0.583 & 0.208 & 2.15 \\
2000 & 1 & copula & 0.134 & 0.640 & 0.402 & 0.000 & 0.15 \\
2000 & 1 & knn & 0.139 & 0.319 & 0.460 & 0.378 & 1.08 \\
2000 & 2 & copula & 0.086 & 0.558 & 0.436 & 0.000 & 0.15 \\
2000 & 2 & knn & 0.095 & 0.309 & 0.467 & 0.367 & 1.34 \\
2000 & 3 & copula & 0.053 & 0.543 & 0.576 & 0.000 & 0.17 \\
2000 & 3 & knn & 0.065 & 0.446 & 0.583 & 0.307 & 1.83 \\
2000 & 5 & copula & 0.018 & 0.447 & 0.583 & 0.000 & 0.28 \\
2000 & 5 & knn & 0.039 & 0.477 & 0.600 & 0.359 & 2.36 \\
2000 & 8 & copula & 0.004 & 0.338 & 0.550 & 0.000 & 0.30 \\
2000 & 8 & knn & 0.013 & 0.442 & 0.483 & 0.297 & 3.40 \\
2000 & 12 & copula & 0.001 & 0.311 & 0.500 & 0.000 & 0.44 \\
2000 & 12 & knn & 0.003 & 0.688 & 0.517 & 0.216 & 5.98 \\
\bottomrule
\end{tabular}
\end{table*}

The Gaussian--copula implementation returns non-negative component
estimates by construction. The negative-component column in
Table~\ref{tab:highdim-cmi-ablation} therefore records unprojected
negative estimates for the $k$NN implementation and should not be
interpreted as a symmetric estimator comparison. The reference ranking
is estimator-specific and empirical, rather than a known population
ordering.
\section{Additional Stability and Faithfulness Analysis}
\label{sec:appendix_deletion}

\subsection{Experimental Protocol}

The additional stability analysis uses a HouseEnergy-style dataset
with 4000 observations and 41 features, including core energy
variables, correlated copies, and noise variables. The data are split
into 3000 training and 1000 test observations. A Random Forest regressor
with 200 trees and maximum depth 12 is used. Correlation grouping at
$|\rho|\geq0.85$ produces 35 evaluation groups.

All methods use the same trained model, partitions, feature space, test
set, deletion schedule, and lightweight subsampling protocol. Exact
Jaccard@$30$ measures identity-level overlap, while
Group-Jaccard@$30$ measures overlap after mapping selected features to
the prespecified correlation groups. Deletion curves and rank overlays
provide separate predictive-faithfulness and ranking diagnostics.

\subsection{Exact and Group-Level Stability}

Table~\ref{tab:appendix-jaccard-stability} reports the complete
identity- and group-level results in a single compact table.

\begin{table}[htbp]
\centering
\scriptsize
\setlength{\tabcolsep}{4pt}
\renewcommand{\arraystretch}{0.96}
\caption{Exact and group-level Jaccard@$30$ under subsampling.}
\label{tab:appendix-jaccard-stability}
\label{tab:exact_jaccard_appendix}
\label{tab:group_jaccard_appendix}
\begin{tabular}{lccc}
\toprule
\textbf{Method} & \textbf{20\%} & \textbf{40\%} & \textbf{60\%}\\
\midrule
\multicolumn{4}{c}{\textit{Exact Jaccard@$30$}}\\
MCIR & 0.585 & 0.624 & 0.627\\
PCIR & 0.673 & 0.710 & 0.767\\
SHAP & 0.644 & 0.698 & 0.747\\
HSIC & \textbf{0.674} & \textbf{0.711} & \textbf{0.787}\\
\midrule
\multicolumn{4}{c}{\textit{Group-Jaccard@$30$}}\\
MCIR & \textbf{0.641} & \textbf{0.671} & 0.655\\
PCIR & 0.617 & 0.668 & 0.725\\
SHAP & 0.574 & 0.636 & 0.694\\
HSIC & 0.609 & 0.652 & \textbf{0.742}\\
\bottomrule
\end{tabular}
\end{table}

MCIR has lower exact feature overlap than PCIR, SHAP, and HSIC at all
three fractions. Under the prespecified group-equivalence rule, MCIR
has the highest Group-Jaccard at $20\%$ and $40\%$, while HSIC has the
highest value at $60\%$. Group membership is based on a correlation
threshold and does not establish functional or causal equivalence.
Exact and group-level agreement are therefore complementary diagnostics
rather than interchangeable measures.

\subsection{Deletion-Based Faithfulness}

\begin{figure}[htbp]
\centering
\includegraphics[width=0.75\linewidth]{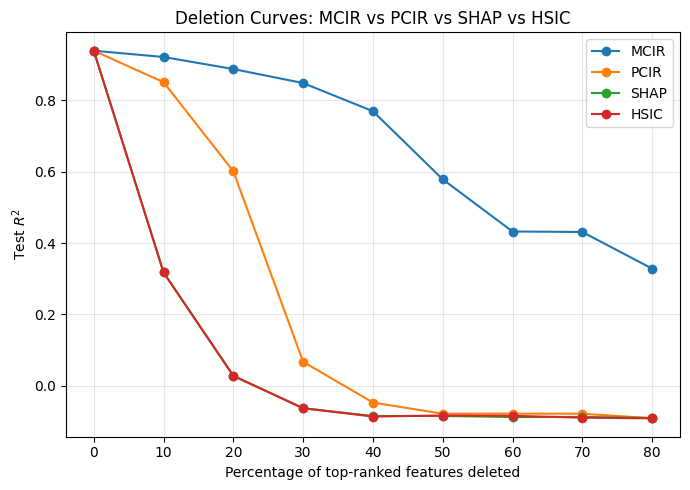}
\caption{Deletion curves for MCIR, PCIR, SHAP, and HSIC under the same
model, test set, masking baseline, and deletion schedule. A sharper
early performance decrease and lower deletion AUC indicate stronger
faithfulness under this protocol.}
\label{fig:deletion_curves_appendix}
\end{figure}

Figure~\ref{fig:deletion_curves_appendix} evaluates predictive
faithfulness under the common deletion protocol. MCIR's more gradual
early decline is weaker deletion evidence than the sharper declines of
PCIR and HSIC under this metric; it should not be presented as superior
deletion faithfulness. Redundancy may contribute to later-curve
convergence because remaining correlated variables can compensate, but
the curves alone do not identify the source of the difference.

\subsection{Full--Lightweight Rank Overlays}

\begin{figure}[t]
\centering
\includegraphics[width=\linewidth,height=4cm]{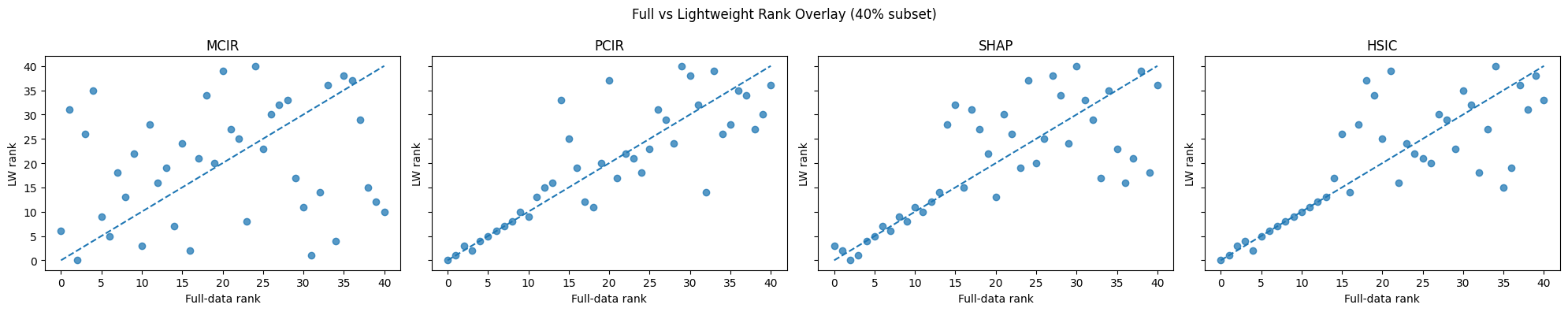}
\caption{Full--lightweight rank overlays at the $40\%$ subset fraction.
Each panel compares feature rankings under full and subsampled data.}
\label{fig:rank_overlay_appendix}
\end{figure}

Figure~\ref{fig:rank_overlay_appendix} compares full and lightweight
rankings at the $40\%$ fraction. Greater MCIR dispersion indicates lower
exact feature-level stability, whereas PCIR and HSIC show tighter exact
alignment. Group-level comparisons apply a separate equivalence rule;
the rank overlay itself does not establish that different feature
identities are informationally interchangeable.

\subsection{Stability Visualizations}

\begin{figure}[htbp]
\centering
\begin{minipage}[t]{0.48\linewidth}
\centering
\includegraphics[width=\linewidth]{download10.png}
\caption{Group-level stability under subsampling. MCIR has the highest
agreement at the $20\%$ and $40\%$ fractions.}
\label{fig:group_stability_appendix}
\end{minipage}
\hfill
\begin{minipage}[t]{0.48\linewidth}
\centering
\includegraphics[width=\linewidth]{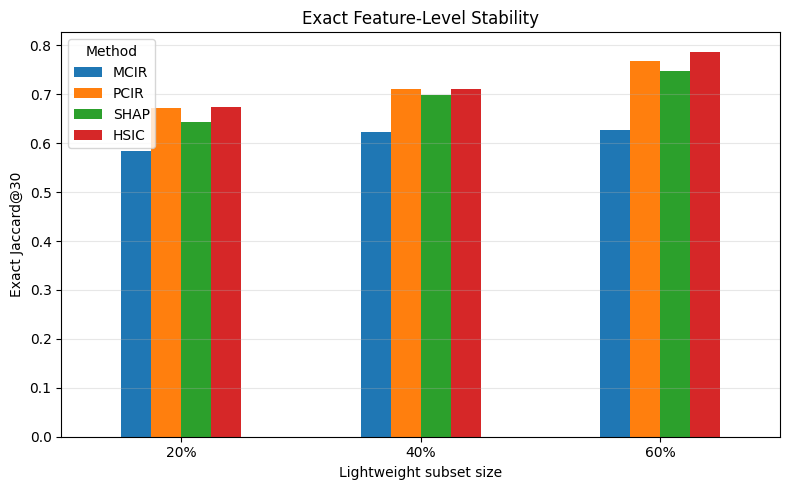}
\caption{Exact feature-level stability under subsampling.}
\label{fig:exact_stability_appendix}
\end{minipage}
\end{figure}

Figures~\ref{fig:group_stability_appendix} and
\ref{fig:exact_stability_appendix} visualize group-level and exact
Jaccard, respectively. MCIR leads at $20\%$ and $40\%$ under the group
metric but not under exact Jaccard or at the $60\%$ group comparison.
The results therefore show a metric-dependent trade-off rather than
uniform stability dominance.

\section{Additional Sensitivity and Ablation Studies}
\label{app:additional}

This section reports sensitivity analyses for the correlation-group
threshold, estimator family, sample size, and conditioning dimension.

\subsection{Correlation-Group Threshold Sensitivity}
\label{app:threshold-sensitivity}

The group-level diagnostic uses the Pearson-correlation threshold
$\gamma$. Because changing $\gamma$ changes the number and composition
of the evaluation groups, we evaluate
\begin{equation}
\medmath{
\gamma
\in
\{0.60,0.70,0.75,0.80,0.85,0.90,0.95\}.
}
\label{eq:appendix-group-thresholds}
\end{equation}

\begin{table}[htbp]
\centering
\scriptsize
\setlength{\tabcolsep}{5pt}
\renewcommand{\arraystretch}{0.96}
\caption{Group-Jaccard@$30$ sensitivity to the correlation-group
threshold on UCI HAR. Larger values indicate greater overlap under the
grouping induced by that row's threshold.}
\label{tab:threshold-sensitivity}
\begin{tabular}{ccccc}
\toprule
\textbf{$\gamma$} & \textbf{MCIR} & \textbf{PCIR}
& \textbf{SHAP} & \textbf{HSIC}\\
\midrule
0.60 & 0.6405 & 0.5036 & \textbf{0.9200} & 0.4284\\
0.70 & 0.6261 & 0.4648 & \textbf{0.9667} & 0.4334\\
0.75 & 0.5792 & 0.4204 & \textbf{0.9741} & 0.4582\\
0.80 & 0.5946 & 0.4720 & \textbf{0.9788} & 0.5201\\
0.85 & 0.5821 & 0.4321 & \textbf{0.9662} & 0.4994\\
0.90 & 0.5732 & 0.4372 & \textbf{0.9490} & 0.5006\\
0.95 & 0.5525 & 0.4020 & \textbf{0.9489} & 0.5076\\
\bottomrule
\end{tabular}
\end{table}

MCIR Group-Jaccard@$30$ varies from $0.5525$ to $0.6405$, while SHAP
has the highest value at every evaluated threshold. Because changing
$\gamma$ changes the evaluation groups, values across thresholds do
not measure an identical target. The experiment therefore
characterizes threshold sensitivity; it does not identify a universally
optimal grouping threshold.
\subsection{Adaptive Conditional Mutual Information Estimation}
\label{app:adaptive-estimation}

This subsection compares the Gaussian--copula and $k$NN estimators and
the bootstrap switching rule in
Equation~\ref{eq:estimator-switching-rule}. The Gaussian--copula
estimator is computationally inexpensive but targets a
Gaussian-copula approximation, whereas the $k$NN estimator can represent
more general continuous dependence at greater computational and
statistical cost. The adaptive rule selects the compatible estimator
with the smaller bootstrap standard error. It estimates variability,
not bias, and is therefore evaluated as a heuristic rather than an
oracle procedure.

The analysis uses HouseEnergy-Sim, for which the relevant feature set is
known by construction. We evaluate
\begin{equation}
\medmath{
|\Phi(i)|\in\{1,2,3,5\},
\qquad
n\in\{200,500,1000,2000,3000\},
}
\label{eq:adaptive-estimation-grid}
\end{equation}
with 30 independent repetitions for each configuration. Kendall's
$\tau_b$ and Spearman's $\rho$ compare the subsample ranking with the
corresponding full-data empirical reference. Jaccard@$10$ compares
their top-$10$ sets, while GT Recall@$10$ measures the fraction of the
prespecified relevant features recovered among the top ten.

\begin{table}[t]
\centering
\scriptsize
\setlength{\tabcolsep}{4pt}
\renewcommand{\arraystretch}{0.96}
\caption{Full-data HouseEnergy-Sim results. GT Recall@$10$ evaluates
recovery of the prespecified relevant set; runtime is measured in
seconds.}
\label{tab:full-reference}
\begin{tabular}{c|cc|cc}
\toprule
& \multicolumn{2}{c|}{\textbf{GT Recall@$10$}}
& \multicolumn{2}{c}{\textbf{Runtime}}\\
\cmidrule(r){2-3}\cmidrule(l){4-5}
$|\Phi(i)|$ & Copula & $k$NN & Copula & $k$NN\\
\midrule
1 & 0.4615 & \textbf{0.6154} & 0.0794 & 56.0264\\
2 & 0.6154 & \textbf{0.7692} & 0.0474 & 13.6651\\
3 & 0.5385 & \textbf{0.7692} & 0.0408 & 15.9895\\
5 & 0.6923 & \textbf{0.7692} & 0.0441 & 22.2782\\
\bottomrule
\end{tabular}
\end{table}

Table~\ref{tab:full-reference} shows an accuracy--runtime trade-off.
$k$NN has higher GT Recall@$10$ at every evaluated neighbourhood size,
reaching $0.7692$ for $|\Phi(i)|\in\{2,3,5\}$. The copula
implementation is substantially faster, remaining below $0.08$ seconds
in these measurements. These results do not identify a uniformly
preferred estimator because recovery and runtime are distinct
objectives.

\begin{table*}[t]
\centering
\scriptsize
\setlength{\tabcolsep}{2.6pt}
\renewcommand{\arraystretch}{0.94}
\caption{Estimator comparison on HouseEnergy-Sim for
$|\Phi(i)|=2$. Entries are mean $\pm$ standard error over 30
repetitions.}
\label{tab:adaptive-phi2}
\begin{adjustbox}{width=\textwidth}
\begin{tabular}{cc|cc|cc|cc|cc|cc}
\toprule
$n$ & Estimator
& \multicolumn{2}{c|}{Kendall $\tau_b$}
& \multicolumn{2}{c|}{Spearman $\rho$}
& \multicolumn{2}{c|}{Jaccard@$10$}
& \multicolumn{2}{c|}{GT Recall@$10$}
& \multicolumn{2}{c}{Runtime (s)}\\
& & Mean & SE & Mean & SE & Mean & SE & Mean & SE & Mean & SE\\
\midrule
\multirow{3}{*}{200}
& Copula   & 0.2036 & 0.0152 & 0.2347 & 0.0182 & 0.4604 & 0.0145 & 0.5923 & 0.0123 & 0.0190 & 0.0008\\
& $k$NN    & 0.0000 & 0.0000 & 0.0000 & 0.0000 & \textbf{1.0000} & 0.0000 & \textbf{0.7692} & 0.0000 & 0.3514 & 0.0149\\
& Adaptive & 0.0000 & 0.0000 & 0.0000 & 0.0000 & \textbf{1.0000} & 0.0000 & \textbf{0.7692} & 0.0000 & 3.3616 & 0.0889\\
\midrule
\multirow{3}{*}{500}
& Copula   & 0.2055 & 0.0125 & 0.2370 & 0.0150 & 0.4202 & 0.0158 & 0.5821 & 0.0109 & 0.0195 & 0.0006\\
& $k$NN    & 0.4306 & 0.0653 & 0.4366 & 0.0662 & \textbf{1.0000} & 0.0000 & \textbf{0.7692} & 0.0000 & 0.8927 & 0.0337\\
& Adaptive & 0.4306 & 0.0653 & 0.4366 & 0.0662 & \textbf{1.0000} & 0.0000 & \textbf{0.7692} & 0.0000 & 8.3200 & 0.1003\\
\midrule
\multirow{3}{*}{1000}
& Copula   & 0.1933 & 0.0121 & 0.2218 & 0.0147 & 0.4606 & 0.0149 & 0.5949 & 0.0090 & 0.0226 & 0.0008\\
& $k$NN    & \textbf{0.7177} & 0.0000 & \textbf{0.7276} & 0.0000 & \textbf{1.0000} & 0.0000 & \textbf{0.7692} & 0.0000 & 2.0585 & 0.0658\\
& Adaptive & \textbf{0.7177} & 0.0000 & \textbf{0.7276} & 0.0000 & \textbf{1.0000} & 0.0000 & \textbf{0.7692} & 0.0000 & 18.6483 & 0.1152\\
\midrule
\multirow{3}{*}{2000}
& Copula   & 0.2271 & 0.0103 & 0.2622 & 0.0124 & 0.4768 & 0.0179 & 0.6026 & 0.0098 & 0.0499 & 0.0062\\
& $k$NN    & \textbf{0.8118} & 0.0247 & \textbf{0.8184} & 0.0238 & \textbf{1.0000} & 0.0000 & \textbf{0.7692} & 0.0000 & 5.3196 & 0.0898\\
& Adaptive & 0.7367 & 0.0440 & 0.7471 & 0.0421 & 0.9286 & 0.0348 & 0.7436 & 0.0124 & 45.5973 & 0.0978\\
\midrule
\multirow{3}{*}{3000}
& Copula   & 0.2411 & 0.0088 & 0.2794 & 0.0107 & 0.4474 & 0.0162 & 0.5718 & 0.0115 & 0.0534 & 0.0077\\
& $k$NN    & \textbf{0.8965} & 0.0253 & \textbf{0.9001} & 0.0244 & \textbf{1.0000} & 0.0000 & \textbf{0.7692} & 0.0000 & 9.0478 & 0.0836\\
& Adaptive & 0.4430 & 0.0570 & 0.4693 & 0.0543 & 0.6437 & 0.0489 & 0.6487 & 0.0183 & 81.9997 & 0.1157\\
\bottomrule
\end{tabular}
\end{adjustbox}
\end{table*}

For $|\Phi(i)|=2$, $k$NN has Jaccard@$10=1$ and GT
Recall@$10=0.7692$ at every sample size. Its complete-ranking agreement
increases with $n$, from $\tau_b=0$ at $n=200$ to $0.8965$ at
$n=3000$. The combination of zero complete-ranking correlation and
perfect top-$10$ overlap at $n=200$ indicates that the leading set is
preserved while its internal ordering, lower-ranked ordering, or ties
differ from the full-data reference.

The adaptive procedure matches $k$NN through $n=1000$. At $n=2000$,
its Jaccard@$10$ and GT Recall@$10$ decline to $0.9286$ and $0.7436$;
at $n=3000$, they decline further to $0.6437$ and $0.6487$. The
selection frequencies in Table~\ref{tab:selection-frequency} explain
this change: the switching rule increasingly selects the copula
estimator for $|\Phi(i)|=2$ at the two largest sample sizes.

The adaptive runtime includes fitting and bootstrapping both candidates
before selection. At $n=3000$ and $|\Phi(i)|=2$, it requires
$81.9997$ seconds, compared with $9.0478$ seconds for fixed $k$NN and
$0.0534$ seconds for fixed copula. It is therefore a calibration cost,
not the runtime of one final MCIR evaluation.

\begin{table}[t]
\centering
\scriptsize
\setlength{\tabcolsep}{3pt}
\renewcommand{\arraystretch}{0.94}
\caption{Estimator selections over 30 repetitions. Each entry reports
copula count (\%)/$k$NN count (\%).}
\label{tab:selection-frequency}
\begin{tabular}{c|cccc}
\toprule
$n$ & $|\Phi|=1$ & $|\Phi|=2$ & $|\Phi|=3$ & $|\Phi|=5$\\
\midrule
200  & 4 (13.3)/26 (86.7) & 0 (0.0)/30 (100.0) & 0 (0.0)/30 (100.0) & 0 (0.0)/30 (100.0)\\
500  & 26 (86.7)/4 (13.3) & 0 (0.0)/30 (100.0) & 0 (0.0)/30 (100.0) & 0 (0.0)/30 (100.0)\\
1000 & 30 (100.0)/0 (0.0) & 0 (0.0)/30 (100.0) & 0 (0.0)/30 (100.0) & 0 (0.0)/30 (100.0)\\
2000 & 30 (100.0)/0 (0.0) & 4 (13.3)/26 (86.7) & 0 (0.0)/30 (100.0) & 0 (0.0)/30 (100.0)\\
3000 & 30 (100.0)/0 (0.0) & 20 (66.7)/10 (33.3) & 0 (0.0)/30 (100.0) & 0 (0.0)/30 (100.0)\\
\bottomrule
\end{tabular}
\end{table}

The selected family depends jointly on $n$ and $|\Phi(i)|$. The rule
selects $k$NN in every repetition for $|\Phi(i)|\in\{3,5\}$, while its
preference at $|\Phi(i)|=1$ shifts from $k$NN to copula as $n$
increases. Most importantly, the $n=3000$, $|\Phi(i)|=2$ result shows
that lower bootstrap variability need not imply higher GT Recall or
ranking agreement.
\begin{table*}[t]
\centering
\scriptsize
\setlength{\tabcolsep}{2.5pt}
\renewcommand{\arraystretch}{0.94}
\caption{Complete estimator-switching evaluation on HouseEnergy-Sim.
Each cell reports Gaussian--copula/Switching/$k$NN, averaged over 30
independent lightweight subsamples. GJ denotes Group-Jaccard and GT-R
denotes generator-defined ground-truth Recall.}
\label{tab:estimator_switching_all}
\begin{adjustbox}{width=\textwidth}
\begin{tabular}{@{}cc
                ccccc@{}}
\toprule
$n$ & $|\Phi|$ &
$\boldsymbol{\tau_b}$ &
$\boldsymbol{\rho}$ &
\textbf{GJ@10} &
\textbf{GT-R@10} &
\textbf{Runtime (s)} \\
\midrule
200 & 1 &
.080/.530/.613 & .082/.543/.632 & .368/.726/.780 &
.500/.679/.708 & .019/4.167/.458 \\
200 & 2 &
.204/.000/.000 & .235/.000/.000 & .460/1.000/1.000 &
.592/.769/.769 & .019/3.362/.351 \\
200 & 3 &
.276/.000/.000 & .329/.000/.000 & .490/1.000/1.000 &
.631/.769/.769 & .018/3.436/.377 \\
200 & 5 &
.000/.000/.000 & .000/.000/.000 & .476/1.000/1.000 &
.613/.769/.769 & .020/3.601/.395 \\
\midrule
500 & 1 &
.062/.141/.741 & .056/.137/.755 & .372/.425/.823 &
.503/.523/.690 & .020/14.018/1.560 \\
500 & 2 &
.205/.431/.431 & .237/.437/.437 & .420/1.000/1.000 &
.582/.769/.769 & .020/8.320/.893 \\
500 & 3 &
.285/.000/.000 & .340/.000/.000 & .408/1.000/1.000 &
.605/.769/.769 & .020/8.758/.942 \\
500 & 5 &
.000/.000/.000 & .000/.000/.000 & .440/1.000/1.000 &
.608/.769/.769 & .020/9.554/1.062 \\
\midrule
1000 & 1 &
.102/.102/.862 & .099/.099/.877 & .366/.366/.891 &
.490/.490/.654 & .022/41.883/4.613 \\
1000 & 2 &
.193/.718/.718 & .222/.728/.728 & .461/1.000/1.000 &
.595/.769/.769 & .023/18.648/2.058 \\
1000 & 3 &
.278/.000/.000 & .332/.000/.000 & .387/1.000/1.000 &
.590/.769/.769 & .023/20.034/2.247 \\
1000 & 5 &
.000/.000/.000 & .000/.000/.000 & .468/1.000/1.000 &
.626/.769/.769 & .022/23.520/2.620 \\
\midrule
2000 & 1 &
.100/.100/.997 & .094/.094/.999 & .368/.368/1.000 &
.482/.482/.615 & .042/140.903/15.623 \\
2000 & 2 &
.227/.737/.812 & .262/.747/.818 & .477/.929/1.000 &
.603/.744/.769 & .050/45.597/5.320 \\
2000 & 3 &
.285/.667/.667 & .340/.667/.667 & .386/1.000/1.000 &
.623/.769/.769 & .039/51.822/5.728 \\
2000 & 5 &
.000/.000/.000 & .000/.000/.000 & .422/1.000/1.000 &
.621/.769/.769 & .046/64.967/7.160 \\
\midrule
3000 & 1 &
.153/.153/.997 & .152/.152/.997 & .386/.386/1.000 &
.459/.459/.615 & .044/296.926/32.963 \\
3000 & 2 &
.241/.443/.897 & .279/.469/.900 & .447/.644/1.000 &
.572/.649/.769 & .053/82.000/9.048 \\
3000 & 3 &
.293/1.000/1.000 & .349/1.000/1.000 & .313/1.000/1.000 &
.569/.769/.769 & .043/96.637/10.652 \\
3000 & 5 &
.000/.000/.000 & .000/.000/.000 & .405/1.000/1.000 &
.613/.769/.769 & .048/119.743/13.371 \\
\bottomrule
\end{tabular}
\end{adjustbox}
\end{table*}

These experiments provide mixed evidence for the switching rule.
The rule removes the need to fix one estimator globally, but it is not
uniformly better than the fixed candidates. Bootstrap variability does
not estimate bias, so the adaptive result must be reported together
with the fixed-estimator sensitivity analysis.

\subsection{Runtime Overhead of Adaptive Auto-$\Phi$ Selection}
\label{app:auto_phi_runtime}

We decompose the measured runtime of the threshold-calibration
implementation into neighbourhood screening, adaptive calibration, and
final MCIR evaluation. The experiment uses UCI HAR with $n=2000$ and
\begin{equation}
\medmath{
k\in\{20,50,100,200,400,561\}.
}
\label{eq:auto-phi-runtime-dimensions}
\end{equation}
Attribution computations run on the CPU in the recorded Google Colab
environment. Each configuration is repeated five times.

Screening performs rank Gaussianization and pairwise-correlation
construction. The calibration stage evaluates
\begin{equation}
\medmath{
\gamma\in\{0.60,0.70,0.75,0.80,0.85,0.90,0.95\}
}
\label{eq:auto-phi-runtime-thresholds}
\end{equation}
using three bootstrap repetitions per threshold and caps each
neighbourhood at five features. Final MCIR evaluation then uses the
selected neighbourhood structure.

This experiment measures the threshold-calibration implementation
used in the reported study. It is not a direct runtime evaluation of
every term in the general Auto-$\Phi$ objective in
Equation~\ref{eq:auto-phi-objective}.

\begin{table}[t]
\centering
\scriptsize
\setlength{\tabcolsep}{4pt}
\renewcommand{\arraystretch}{0.96}
\caption{Runtime decomposition on UCI HAR with $n=2000$.
Entries are mean seconds over five repetitions.}
\label{tab:auto_phi_runtime}
\begin{tabular}{ccccc}
\toprule
\textbf{Features} & \textbf{Screening} & \textbf{Calibration}
& \textbf{Final MCIR} & \textbf{Total}\\
\midrule
20  & 0.001 & 0.179 & 0.004 & 0.184\\
50  & 0.002 & 0.418 & 0.012 & 0.432\\
100 & 0.002 & 0.869 & 0.029 & 0.900\\
200 & 0.003 & 1.939 & 0.069 & 2.011\\
400 & 0.005 & 4.006 & 0.116 & 4.127\\
561 & 0.007 & 5.566 & 0.188 & 5.762\\
\bottomrule
\end{tabular}
\end{table}

The measured total increases from $0.184$ seconds at 20 features to
$5.762$ seconds at 561 features. Calibration is the dominant component,
while the reported screening and final-scoring times remain below
$0.01$ and $0.2$ seconds, respectively. These measurements describe
the evaluated implementation and should not be interpreted as an
empirical proof of a universal asymptotic rate.

\begin{figure}[t]
\centering
\includegraphics[width=0.78\linewidth]{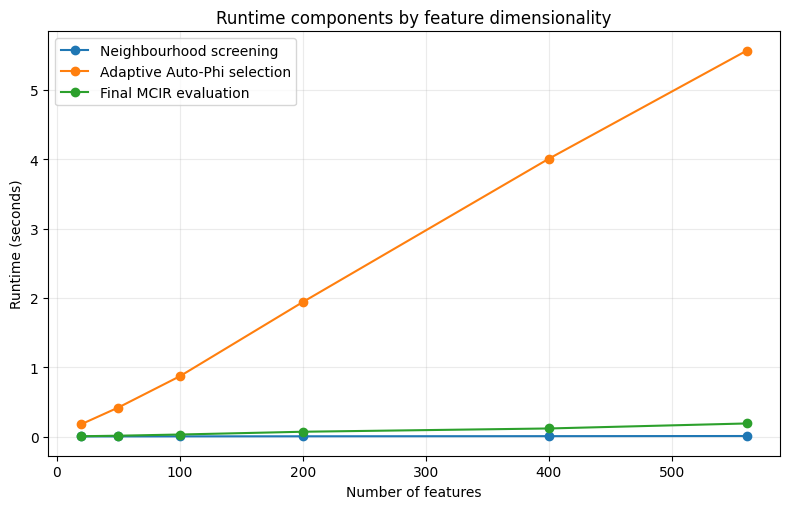}
\caption{Measured total runtime of the adaptive MCIR implementation
versus feature count. Error bars denote standard errors over five
repetitions.}
\label{fig:runtime_total}
\end{figure}

\begin{figure}[t]
\centering
\includegraphics[width=0.82\linewidth]{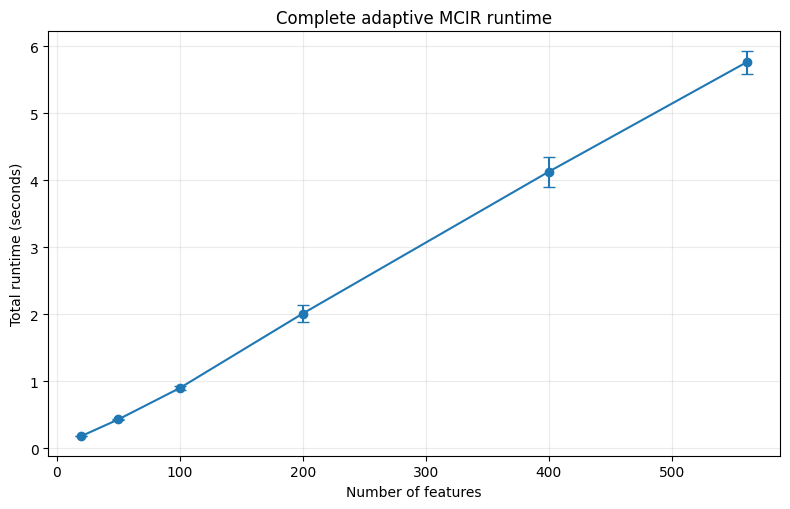}
\caption{Measured runtime of screening, threshold calibration, and
final MCIR evaluation.}
\label{fig:runtime_components}
\end{figure}

\begin{figure}[t]
\centering
\includegraphics[width=0.82\linewidth]{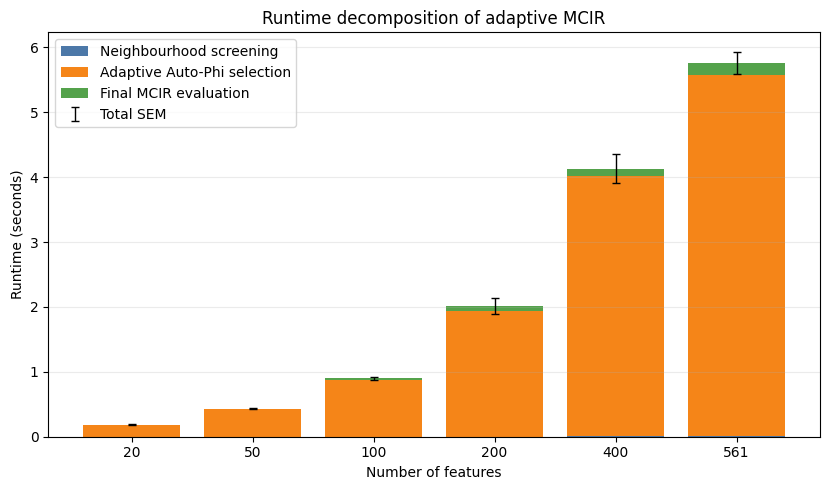}
\caption{Stacked decomposition of the measured adaptive-pipeline
runtime. Error bars denote standard errors of total runtime.}
\label{fig:runtime_decomposition}
\end{figure}

The selected neighbourhood structure can be reused for subsequent MCIR
evaluations only while the feature definitions, preprocessing,
screening rule, and relevant data regime remain unchanged. Distribution
shift or a change in these inputs may require recalibration.

\section{Failure Case: Higher-Order Dependence Outside the Pairwise Neighbourhood}
\label{app:failure_case_higher_order}

This controlled diagnostic identifies a setting in which pairwise
correlation screening can miss relevant higher-order structure. Let
$X_1,X_2\sim\mathcal N(0,1)$ be independent and define the observed
engineered predictors
\begin{equation}
\medmath{
X_3=X_1X_2+E_3,
\qquad
X_4=\sin(X_1),
\qquad
X_5=\cos(X_2),
}
\label{eq:higher-order-features}
\end{equation}
with model-output variable
\begin{equation}
\medmath{
Q=2.5X_3+0.5X_4-0.5X_5+E_Q.
}
\label{eq:higher-order-output}
\end{equation}
We add 25 independent noise features and repeat the experiment over 30
random seeds with $n=4000$. The prespecified direct set is
$\mathcal S_{\mathrm{dir}}=\{X_3,X_4,X_5\}$, and the broader relevant
set is
\begin{equation}
\medmath{
\mathcal S_{\mathrm{all}}
=
\{X_1,X_2,X_3,X_4,X_5\}.
}
\label{eq:higher-order-relevant-set}
\end{equation}
Although $X_1$ and $X_2$ jointly determine $X_3$, their pairwise
linear correlations with $X_3$ can be close to zero. A candidate screen
based only on pairwise correlation may therefore omit variables needed
to represent the higher-order structure.

\begin{table}[htbp]
\centering
\scriptsize
\setlength{\tabcolsep}{3pt}
\renewcommand{\arraystretch}{0.96}
\caption{Higher-order failure-case diagnostic. Entries are means over
30 seeds with standard errors in parentheses. The interaction-informed
$\Phi$ row is a ranking reference, not a deployable recovery baseline.}
\label{tab:mcir_failure_case}
\begin{adjustbox}{width=\linewidth}
\begin{tabular}{lccccc}
\toprule
\textbf{Method} &
\textbf{Direct Recall@$5$} &
\textbf{All Recall@$5$} &
\textbf{Kendall $\tau_b$} &
\textbf{Spearman $\rho$} &
\textbf{Jaccard@$5$}\\
& & & \multicolumn{3}{c}{\textit{versus interaction-informed MCIR}}\\
\midrule
MCIR-correlation-$\Phi$
& 0.611 (0.028) & 0.373 (0.019) & 0.156 (0.024) & 0.140 (0.032) & 0.214 (0.025)\\
MCIR-reference-$\Phi$
& 0.000 (0.000) & 0.000 (0.000) & 1.000 (0.000) & 1.000 (0.000) & 1.000 (0.000)\\
PCIR
& 1.000 (0.000) & 0.620 (0.011) & $-0.133$ (0.021) & $-0.204$ (0.029) & 0.048 (0.010)\\
MI
& 1.000 (0.000) & 1.000 (0.000) & $-0.323$ (0.020) & $-0.438$ (0.023) & 0.000 (0.000)\\
HSIC
& 1.000 (0.000) & 1.000 (0.000) & $-0.275$ (0.013) & $-0.409$ (0.016) & 0.000 (0.000)\\
RF-PFI
& 1.000 (0.000) & 1.000 (0.000) & $-0.303$ (0.021) & $-0.424$ (0.023) & 0.000 (0.000)\\
\bottomrule
\end{tabular}
\end{adjustbox}
\end{table}

The zero recall of the reference-$\Phi$ row is not a failed
ground-truth recovery result. Its manually supplied conditioning sets
condition away the engineered information used in the recall target;
the row is included only to define the ranking reference for the final
three columns. It should therefore not be compared with the deployable
methods on Recall@$5$.

\begin{figure*}[htbp]
\centering

\begin{subfigure}[t]{0.48\linewidth}
\centering
\includegraphics[width=\linewidth]{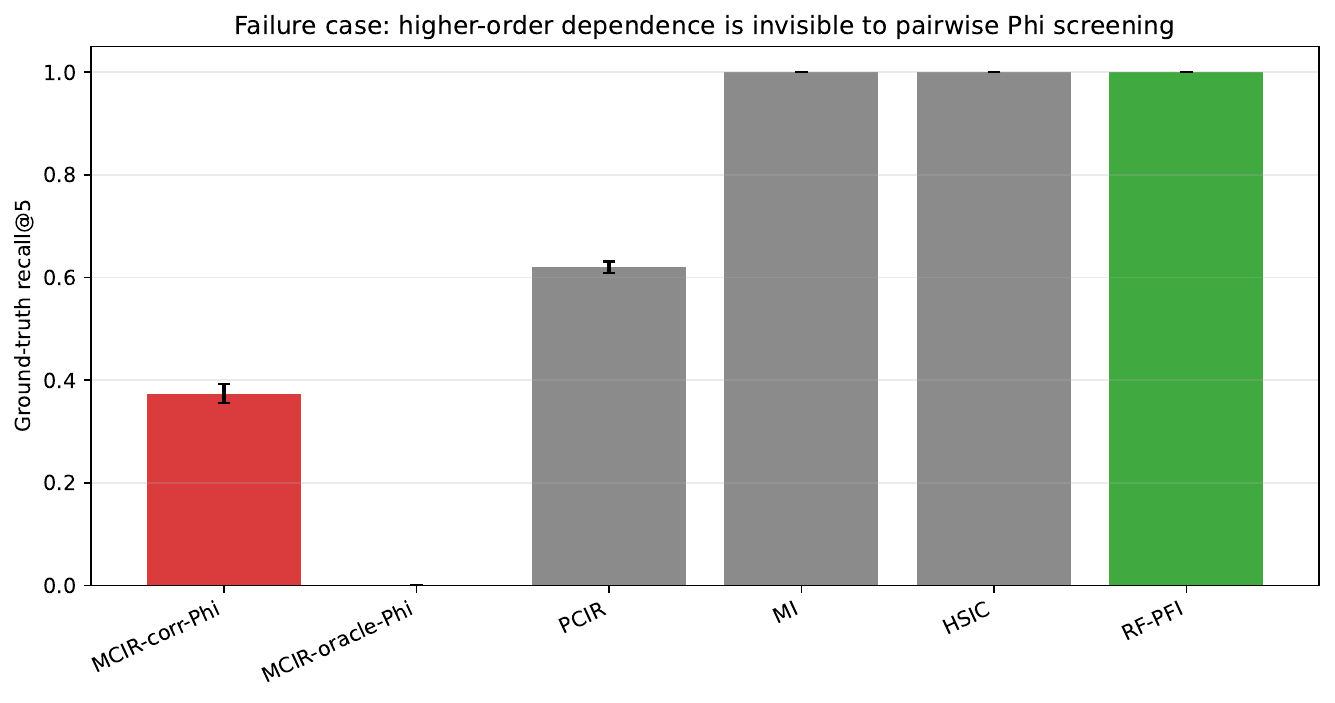}
\caption{Recall@$5$ in the higher-order diagnostic. Correlation-screened
MCIR recovers only part of the relevant set because pairwise screening
does not expose all interaction structure. The reference-$\Phi$ row is
not a recovery baseline.}
\label{fig:failure_case_recall}
\end{subfigure}
\hfill
\begin{subfigure}[t]{0.48\linewidth}
\centering
\includegraphics[width=\linewidth]{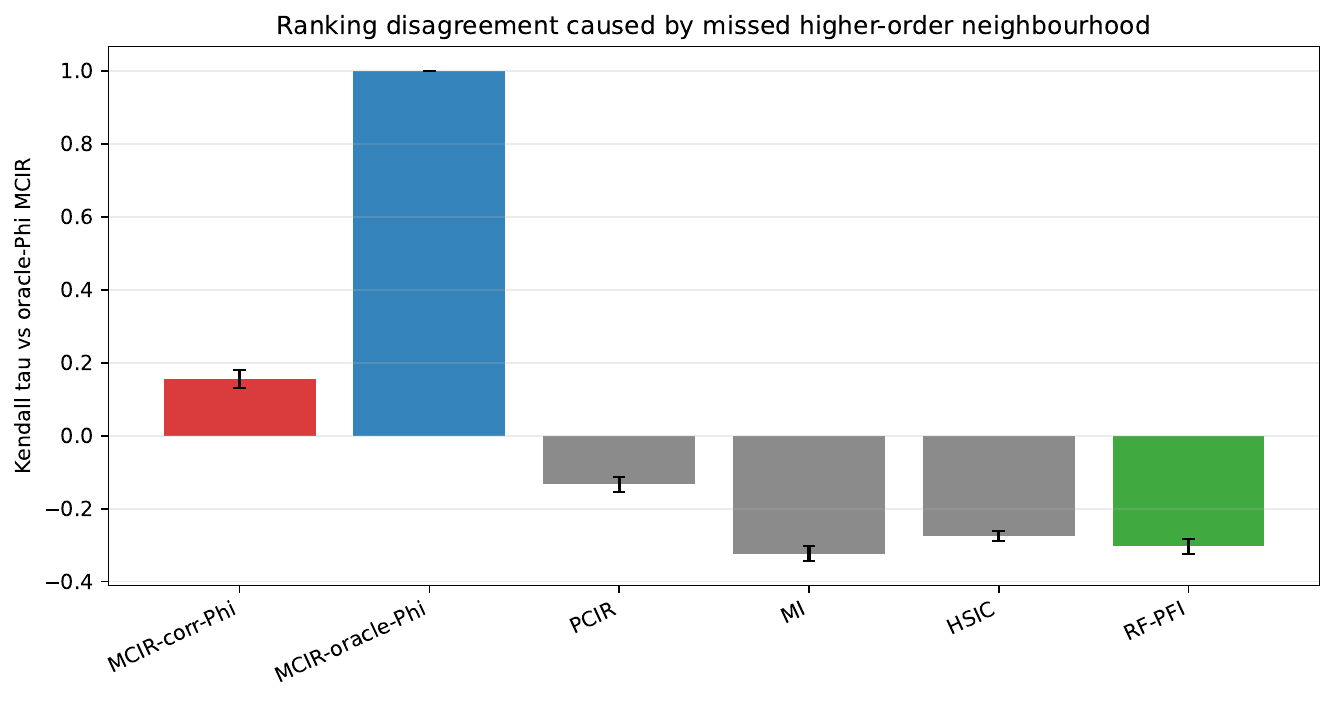}
\caption{Kendall agreement with the interaction-informed MCIR ranking
reference. Low agreement for correlation-screened MCIR shows that
neighbourhood misspecification changes its ranking. Negative agreement
for marginal baselines indicates that they target a different ordering.}
\label{fig:failure_case_kendall}
\end{subfigure}

\caption{Higher-order dependence diagnostic:
\textbf{(a)} recovery of the relevant feature set and
\textbf{(b)} ranking agreement with interaction-informed MCIR.}
\label{fig:failure_case_combined}
\end{figure*}

\begin{table}[htbp]
\centering
\scriptsize
\setlength{\tabcolsep}{4pt}
\renewcommand{\arraystretch}{0.96}
\caption{Interpretation of the higher-order failure-case diagnostic.}
\label{tab:failure_case_interpretation}
\begin{tabular}{p{0.29\linewidth}p{0.64\linewidth}}
\toprule
\textbf{Observation} & \textbf{Interpretation}\\
\midrule
Pairwise screening omits relevant structure &
The association between $X_1$, $X_2$, and $Q$ is mediated by their
product, which is weakly visible to pairwise linear correlation.\\

Low all-relevant recall for correlation-$\Phi$ MCIR &
The limitation arises from constructing $\Phi(i)$, not solely from the
bounded MCIR normalization.\\

High recall for MI, HSIC, and RF-PFI &
These methods recover marginal or predictive relevance but do not
estimate the same unique conditional contribution as MCIR.\\

Reference-$\Phi$ is not deployable &
It is supplied manually only to show how the ranking changes when
interaction-informed neighbourhoods are imposed.\\
\bottomrule
\end{tabular}
\end{table}

This experiment establishes a failure mode rather than a
general advantage. MCIR is dependence-aware conditional on the
selected neighbourhood, but pairwise-correlation screening is
insufficient for purely higher-order dependence. Interaction-aware
candidate generation, distance correlation, HSIC screening, or
mutual-$k$NN graphs may expose additional structure, but their use does
not guarantee recovery of every relevant dependence pattern.
\end{document}

% --- supplement: supp.tex ---

% \bstctlcite{IEEEexample:BSTcontrol}
    \title{High-Efficiency Harmonically-Terminated Diode and Transistor Rectifiers}
%   \author{Michael~Roberg,~\IEEEmembership{Student Member,~IEEE,}
%       Tibault~Reveyrand,~\IEEEmembership{Member,~IEEE,}\\
%       Ignacio~Ramos,~\IEEEmembership{Student Member,~IEEE,}
%       Erez Falkenstein,~\IEEEmembership{Student Member,~IEEE,}
%       and~Zoya~Popovi\'c,~\IEEEmembership{Fellow,~IEEE}% <-this % stops a space

%   \thanks{Manuscript received July 10, 2012. \hl{This paper is an expanded paper from the IEEE MTT-S Int. Microwave Symposium held on June 17-22, 2012 in Montreal, Canada.} This work was funded in part by the Office of Naval Research under the Defense Advanced Research Projects Agency (DARPA) Microscale Power Conversion (MPC) Program under Grant N00014-11-1-0931, and in part by the Advanced Research Projects Agency-Energy (ARPA-E), U.S. Department of Energy, under Award Number DE-AR0000216.}
%   \thanks{M. Roberg is with TriQuint Semiconductor, 500 West Renner Road Richardson, TX 75080 USA (e-mail: michael.roberg@tqs.com).}% <-this % stops a space
%   \thanks{T. Reveyrand is with the XLIM Laboratory, UMR 7252, University of Limoges, 87060 Limoges, France (e-mail: tibault.reveyrand@xlim.fr).}%
%   \thanks{I. Ramos and Z. Popovic are with the Department of Electrical, Computer and Energy Engineering, University of Colorado, Boulder, CO, 80309-0425 USA (e-mail: ignacio.ramos@colorado.edu; zoya.popovic@colorado.edu).}% <-this % stops a space
%   \thanks{E. Falkenstein is with Qualcomm Inc., 6150 Lookout Road
% Boulder, CO 80301 USA (e-mail: erez.falkenstein@gmail.com).}}  

% ============================================================
\section{Preliminaries \& Notation}
% ============================================================

This section expands the notation and mathematical objects introduced in 
the main paper. We provide precise definitions of environments, 
lightweight-model similarity, projection/embedding distances, and the 
assumptions required for the theoretical results in MCIR, ERI, and the 
lightweight-fidelity framework. We consider a supervised learning model $M$ trained on features
$F \in \mathbb{R}^{n \times k}$ with output vector
$Y = M(F) \in \mathbb{R}^{n \times q}$ ($q=1$ for scalar regression).
The $i$-th feature is denoted by a column vector 
$f_i = (f_{1i}, \ldots, f_{ni})^\top \in \mathbb{R}^{n}$.

% ------------------------------------------------------------
\subsection{Notation Table}
% ------------------------------------------------------------

\begin{table}[htbp]
\centering
\renewcommand{\arraystretch}{1.18}
\caption{ Notation used throughout the manuscript and supplementary material.}
\tiny
\begin{tabular}{l p{0.3\textwidth}}
\hline
\textbf{Notation} & \textbf{Description} \\
\hline

$F \in \mathbb{R}^{n \times k}$ 
& Full feature matrix with $n$ observations and $k$ features. \\

$f_i \in \mathbb{R}^{n}$ 
& $i$th feature column; entries $f_{1i},\dots,f_{ni}$. \\

$f_{ji}$ 
& Value of feature $i$ for observation $j$. \\

$n$ 
& Number of full-environment observations. \\

$n'$ 
& Number of lightweight/sampled observations ($n' \ll n$). \\

$Y = M(F)$ 
& Model outputs in the full environment. \\

$Y' = M'(F')$ 
& Model outputs in the lightweight environment. \\

$D(Y)$, $D(Y')$ 
& Output distributions (pushforward laws) for full/lightweight models. \\

$U = D(F,Y)$ 
& Full environment: joint feature–output distribution. \\

$U' = D(F',Y')$ 
& Lightweight environment. \\

$P \in O(q',q)$, $Q \in O(q,q')$ 
& Orthogonal projection/embedding matrices (Stiefel manifold). \\

$\Phi_{P,b}(x)=Px+b$ 
& Rigid-motion transformation (projection/translation). \\

$d^{-},\, d^{+}$ 
& Projection and embedding distances. \\

$d_{\hat{f}}$ 
& Rigid-motion–invariant $f$-divergence between output laws. \\

$\mathcal{L}(Y,Y')$ 
& Lightweight fidelity loss: $d_{\hat{f}}(D(Y),D(Y'))$. \\

$\Phi(i)$ 
& Conditioning set of neighbours for feature $i$. \\

$I(\cdot;\cdot)$, $I(\cdot;\cdot|\cdot)$ 
& Mutual and conditional mutual information. \\

$\eta_{f_i}$ 
& PCIR score (pairwise correlation impact ratio). \\

$C_i$ 
& MCIR score (conditional dependence impact ratio). \\

$\widehat{I}$ 
& Estimated MI/CMI under a chosen estimator (copula/kNN/plug-in). \\

$e(i)$ 
& Selected estimator for feature $i$ (bootstrap-SE minimizer). \\

$\mathrm{J}@K$ 
& Head-rank Jaccard agreement between full and lightweight rankings. \\

$\Delta_{\text{del}}$ 
& Deletion-curve deviation between full and lightweight models. \\

$m_\Phi$ 
& Number of conditioning neighbours (for MCIR). \\

$m_{\text{scr}}$ 
& Number of screened neighbours retained after initial dependence sketch. \\

$\mathbb{E}_b[\cdot]$ 
& Expectation over bootstrap replicates. \\

\hline
\end{tabular}
\label{tab:notation}
\end{table}

% ------------------------------------------------------------
\subsection{Full and Lightweight Environments}
% ------------------------------------------------------------
We define the \textbf{full environment} as: $U = D(F,Y)$,
the joint law capturing both feature dependence and the model's 
output behaviour. For computational or privacy reasons, we construct a
\textbf{lightweight environment}:
$U' = D(F',Y'),\qquad n' \ll n,$ where $F'$ contains fewer or reweighted observations, and 
$M'$ uses the same architecture/training protocol as $M$. The goal is \textbf{not} to approximate $F$ directly, but to match the \emph{output distribution} $D(Y)$, $\medmath{ Y' = M'(F')
\ \text{should exhibit the same global behaviour as}\
Y = M(F).}$ This ensures that global attribution computed on $M'$ faithfully
reflects that of $M$.

% ------------------------------------------------------------
\subsection{Similarity of Environments}
% ------------------------------------------------------------

The notion of ``environment similarity’’ formalizes when a lightweight model
$M'$ behaves like the full model $M$ in terms of its \emph{output distribution}.
Since global attribution depends only on how feature perturbations influence the 
\emph{shape} of the model’s predictions, two environments are considered equivalent 
even if their outputs differ by a rotation or translation.

Let  
\[
Y = M(F) \in \mathbb{R}^{n \times q}, 
\qquad 
Y' = M'(F') \in \mathbb{R}^{n' \times q'}
\]
denote the outputs of the full and lightweight environments, and let  
$D(Y)$ and $D(Y')$ be their empirical output distributions.

\begin{definition}[Similarity of Environments]
We write $U' \sim U$ if the output laws match up to a rigid motion, i.e.\ if
there exist an orthogonal matrix $P \in O(q',q)$ and translation vector 
$b \in \mathbb{R}^{q'}$ such that
\begin{equation}
    P Y + b \;\stackrel{d}{=}\; Y'.
\end{equation}
Here $X \stackrel{d}{=} Z$ means that $X$ and $Z$ have the same distribution.
\end{definition}

\paragraph{Intuition.}  
This expresses that $M$ and $M'$ generate predictions with the same geometry: 
similar spread, clusters, and relative distances—even if the coordinates differ.  
Since global attribution uses only dependence structure between predictions and 
features, it is invariant to such transformations.

% ------------------------------------------------------------
\subsection{Projection and Embedding Distances}
% ------------------------------------------------------------

To measure how close two output distributions are, we use distances that allow
their dimensions to differ. Let
\[
\mu = D(Y), \qquad \nu = D(Y')
\]
be the full and lightweight output distributions.

We use two natural discrepancies:

\paragraph{(1) Projection distance $d^{-}(\mu,\nu)$.}  
We find the best projection of the full model’s output space into the 
lightweight output space:
\begin{equation}
d^{-}(\mu,\nu)
= \inf_{P\in O(q',q),\ b\in\mathbb{R}^{q'}}
    d_f\!\bigl( \Phi_{P,b\#}\mu , \nu \bigr).
\end{equation}
Here  
- $P \in O(q',q)$ projects/rotates $q$-dimensional outputs to $q'$-dimensional ones,  
- $b$ allows translation,  
- $\Phi_{P,b}(x) = Px + b$ is the rigid-motion map,  
- $\Phi_{P,b\#}\mu$ denotes the transformed distribution,  
- $d_f$ is any $f$-divergence (KL, JS, Hellinger, etc.).

\paragraph{(2) Embedding distance $d^{+}(\mu,\nu)$.}  
We embed the lightweight outputs into the higher-dimensional full space:
\begin{equation}
d^{+}(\mu,\nu)
= \inf_{Q\in O(q,q'),\ c\in\mathbb{R}^{q}}
    d_f\!\bigl( \mu , \Phi_{Q,c\#}\nu \bigr).
\end{equation}

\paragraph{Equivalence.}  
For $f$-divergences invariant to orthogonal transformation and translation,
\begin{equation}
d^{-}(\mu,\nu)=d^{+}(\mu,\nu)=: d_{\hat{f}}(\mu,\nu),
\end{equation}
giving a single \emph{rigid-motion–invariant} discrepancy between the two 
output distributions. This makes $d_{\hat{f}}$ a well-posed measure of 
environment similarity, even when $q\neq q'$.

% ------------------------------------------------------------
\subsection{Lightweight Fidelity Loss}
% ------------------------------------------------------------

The lightweight model $M'$ is intended to serve as a computationally cheaper
surrogate for $M$. To guarantee that global attributions computed in the
lightweight environment remain faithful to those of the full model, we define
a fidelity loss that captures the mismatch between output distributions:
\begin{equation}
    \medmath{\mathcal{L}(Y,Y') 
\;:=\; d_{\hat{f}}\!\bigl(D(Y), D(Y')\bigr).}
\end{equation}

\begin{itemize}
    \item $\mathcal{L}(Y,Y')$ is small when $M'$ produces predictions whose geometry
  matches that of $M$. 
  \item Because $\mathcal{L}$ ignores rigid motions, it captures only shape and
  dependence structure—not coordinate conventions.  
  \item This ensures that any global explanation method depending solely on the 
  joint behaviour of $(F,Y)$ will behave similarly for $(F',Y')$.
\end{itemize}
\paragraph{Fidelity Contract.}  
A lightweight environment is considered acceptable if
\[
\medmath{\mathcal{L}(Y,Y') \le \varepsilon, 
\qquad 
\mathrm{J}@K(C,C') \ge \tau_0, 
\qquad
\Delta_{\mathrm{del}} \le \Delta_0,}
\]
where  
- $\mathrm{J}@K$ measures head-rank agreement of MCIR scores,  
- $\Delta_{\mathrm{del}}$ measures deviation in deletion curves.

% ------------------------------------------------------------
\subsection{Additional Technical Assumptions}
% ------------------------------------------------------------

We state the assumptions required for MCIR’s theoretical guarantees.

\begin{assumption}[\textbf{Existence of Densities}]
Relevant joint and conditional laws admit densities or PMFs, ensuring 
finite MI/CMI.
\end{assumption}

\begin{assumption}[\textbf{Regular Conditioning}]
All conditioning events satisfy $P(f_{\Phi}=z)>0$ almost everywhere.
\end{assumption}

\begin{assumption}[\textbf{Estimator Regularity}]
MI/CMI estimators satisfy sub-Gaussian concentration:
\[
\Pr\!\left(|\widehat{I} - I| > \delta\right)
\le c_1 \exp(-c_2 n' \delta^2),
\qquad
\widehat{I}-I = O_p(n'^{-1/2}).
\]
\end{assumption}

\begin{assumption}[\textbf{Bootstrap Reliability}]
Bootstrap SE provides a consistent surrogate for estimator risk across a 
finite candidate estimator set (copula / kNN / plug-in).
\end{assumption}

\begin{assumption}[\textbf{Lightweight Fidelity}]
There exist constants $(\varepsilon,\tau_0,\Delta_0)$ such that
$d_{\hat{f}}(D(Y),D(Y')) \le \varepsilon $ and head-rank agreement exceeds 
$\tau_0$.
\end{assumption}

These assumptions collectively ensure:
\[
\medmath{\widehat{C}_i \xrightarrow{p} C_i,
\qquad
1 - \tau(C,C') = O(k/\sqrt{n'}),}
\]
as shown in the main results.

This section provides the rigorous definitions needed for MCIR and 
ERI to operate in a principled, estimator-agnostic, 
lightweight-compatible setting. The remainder of the Supplement uses 
these definitions to establish boundedness, redundancy collapse, 
stability, estimator-switching guarantees, and lightweight fidelity.

\subsection{ExCIR Baseline: Partial Correlation Impact Ratio (PCIR)}
% ------------------------------------------------------------

We begin by revisiting the global attribution score underlying the 
traditional ExCIR framework. PCIR quantifies how strongly the variability 
of feature $i$ aligns with the variability of the model output, using a 
bounded ANOVA-style ratio that is robust and model-agnostic. Let $F' \in \mathbb{R}^{n' \times k}$ and $Y' \in \mathbb{R}^{n'}$ denote the 
lightweight features and outputs with $n'$ observations. For each feature 
$i$, define the sample means
\[
\medmath{\bar f_i = \frac{1}{n'}\sum_{j=1}^{n'} f_{ji},
\qquad 
\bar y' = \frac{1}{n'}\sum_{j=1}^{n'} y'_j,}
\]
and their pooled midpoint
\[
\medmath{m_i = \frac{\bar f_i + \bar y'}{2}.}
\]

\begin{definition}[PCIR]\label{def:pcir}
The \emph{Partial Correlation Impact Ratio} of feature $i$ is
\[
\eta_{f_i}
=
\frac{S_B(i)}{S_T(i)}
\in [0,1],
\]
where
\begin{equation}
    \begin{split}
        \medmath{S_B(i)
= n' \!\left[(\bar f_i - m_i)^2 + (\bar y' - m_i)^2\right],}\\
\medmath{S_T(i)
= \sum_{j=1}^{n'} (f_{ji}-m_i)^2
 + \sum_{j=1}^{n'} (y'_j-m_i)^2.}
    \end{split}
\end{equation}
\end{definition}

Here $S_B(i)$ measures the \emph{between-group variability} (a structured 
mean displacement between the feature and the output), whereas $S_T(i)$ 
measures the \emph{total variability} around the pooled midpoint $m_i$.
Thus PCIR captures how much of the total dispersion is explained by 
aligned co-movement between feature $f_i$ and the model output $Y'$.

\begin{theorem}[Basic properties of PCIR]\label{thm:pcir_properties}
For every feature $i$:
\begin{enumerate}
    \item \textbf{Boundedness:} $0 \le \eta_{f_i} \le 1$.
    \item \textbf{Monotonicity:} Increasing the joint mean displacement 
    between $f_i$ and $Y'$ (at fixed total dispersion $S_T(i)$) increases 
    $\eta_{f_i}$.
    \item \textbf{Noise suppression:} If $f_i$ carries no structured 
    signal about $Y'$ (so that $(\bar f_i - \bar y') \to 0$ while 
    $S_T(i)>0$), then $\eta_{f_i} \to 0$.
\end{enumerate}
\end{theorem}

\begin{proof}
We provide full proofs for the three properties in 
Theorem~\ref{thm:pcir_properties}.  
Recall the definitions
\begin{equation}
 \begin{split}
     &S_B(i)
= n' \bigl[(\bar f_i - m_i)^2 + (\bar y' - m_i)^2\bigr],\\&
S_T(i)
= \sum_{j=1}^{n'} (f_{ji}-m_i)^2 
 + \sum_{j=1}^{n'} (y'_j-m_i)^2,\\&
 \text{and}\
\eta_{f_i} = \frac{S_B(i)}{S_T(i)}.
 \end{split}   
\end{equation}

% ------------------------------------------------------------
Define the centered quantities
\begin{equation}
    \medmath{\tilde f_{ji} = f_{ji} - m_i, \qquad 
\tilde y'_j = y'_j - m_i.}
\end{equation}
Then the total variability can be written as
\begin{equation}
    \begin{split}
        \medmath{S_T(i)
=
\sum_{j=1}^{n'} \tilde f_{ji}^2
+
\sum_{j=1}^{n'} \tilde y'_j{^2}.}
    \end{split}
\end{equation}

Write the means relative to $m_i$ as
\begin{equation}
    \medmath{d_f = \bar f_i - m_i, \qquad d_y = \bar y' - m_i,}
\end{equation}
so that
\begin{equation}
    S_B(i) = n'(d_f^2 + d_y^2).
\end{equation}
A standard ANOVA identity decomposes total variation into 
between-mean and within-mean components:
\begin{equation}
    \begin{split}
       & \sum_{j=1}^{n'} \tilde f_{ji}^2 
= n' d_f^2 + \sum_{j=1}^{n'} (f_{ji} - \bar f_i)^2,\\&
\sum_{j=1}^{n'} \tilde y'_j{^2} 
= n' d_y^2 + \sum_{j=1}^{n'} (y'_j - \bar y')^2.
    \end{split}
\end{equation}
Summing these gives
\begin{equation}
\begin{split}
      &\medmath{S_T(i)
=}\medmath{ n'(d_f^2 + d_y^2)
 + \sum_{j=1}^{n'} (f_{ji} - \bar f_i)^2
 + \sum_{j=1}^{n'} (y'_j - \bar y')^2
}\\&\medmath{=S_B(i) + S_W(i),}
\end{split}
\end{equation}
where
\begin{equation}
    \medmath{S_W(i)
:= \sum_{j=1}^{n'} (f_{ji} - \bar f_i)^2
 + \sum_{j=1}^{n'} (y'_j - \bar y')^2 \ge 0.}
\end{equation}
Thus
\begin{equation}
    \medmath{0 \le S_B(i) \le S_B(i) + S_W(i) = S_T(i).}
\end{equation}
Therefore,
\begin{equation}
    \medmath{0 \le \eta_{f_i} = \frac{S_B(i)}{S_T(i)} \le 1.}
\end{equation}
Fix the total dispersion $S_T(i)$ (i.e., fix the within-variances and the 
pooled midpoint $m_i$).  
Increasing the aligned co-movement between $f_i$ and $Y'$ corresponds to 
increasing the absolute mean differences $|\bar f_i - m_i|$ and 
$|\bar y' - m_i|$ in a symmetric fashion.

Since
\begin{equation}
    \medmath{S_B(i) = n'(d_f^2 + d_y^2),}
\end{equation}
and the total $S_T(i)$ does not change under such shifts, we have that 
$S_B(i)$ is strictly increasing in the magnitude of $(d_f,d_y)$, while the 
denominator $S_T(i)$ is held constant.  
Thus
\begin{equation}
    \eta_{f_i} = \frac{S_B(i)}{S_T(i)}
\end{equation}
is strictly increasing as the aligned mean displacement increases. Formally, let $d_f(t),d_y(t)$ be differentiable paths with 
$d_f(0)=d_f,d_y(0)=d_y$  and  $\frac{d}{dt}(d_f^2+d_y^2)>0$. Then
\begin{equation}
   \medmath{ \frac{d}{dt}\eta_{f_i}(t)
=
\frac{n'\,2(d_f d_f'+ d_y d_y')}{S_T(i)}
> 0,}
\end{equation}
showing monotonicity. If $f_i$ is uninformative relative to $Y'$, then their sample means coincide:
\begin{equation}
    \medmath{\bar f_i - \bar y' \to 0
\quad\Longrightarrow\quad
d_f \to 0,\;\; d_y \to 0.}
\end{equation}
Thus the between-mean component satisfies
\begin{equation}
  \medmath{S_B(i)
= n'(d_f^2 + d_y^2)
\longrightarrow 0,  }
\end{equation}
while the total dispersion $S_T(i)$ remains strictly positive as long as 
either $f_i$ or $Y'$ has nonzero variance. Therefore,
\begin{equation}
    \medmath{\eta_{f_i} = \frac{S_B(i)}{S_T(i)}}
\longrightarrow 0,
\end{equation}
establishing noise suppression.
\end{proof}
PCIR behaves like a global, variance-based correlation ratio:
\begin{itemize}
    \item $\eta_{f_i} \approx 1$ when $f_i$ and $Y'$ vary together at the 
    population level.
    \item $\eta_{f_i} \approx 0$ when $f_i$ behaves like noise relative to 
    the output.
\end{itemize}
This makes PCIR a \emph{bounded, interpretable, and model-agnostic} score. PCIR evaluates each feature independently. Under strong multicollinearity 
or manifold-structured data, many features may move together, inflating 
their between-group variability and distributing credit across an entire 
correlated block. This motivates the transition to MCIR 
(Section~\ref{subsec:mcir_full}), which conditions on local neighbours to 
measure \emph{unique} feature contribution from the dependence structure.

\begin{figure}[t]
\centering
\begin{tikzpicture}[
  node distance=6mm,
  header/.style={rectangle, rounded corners=2pt, inner sep=2pt, font=\scriptsize\bfseries, text=white},
  box/.style={rectangle, draw=black, rounded corners, align=left,
              inner sep=3pt, text width=0.92\linewidth, font=\scriptsize, fill=white},
  >={Stealth[length=2mm]}
]

% ---- 0) Inputs --------------------------------------------------------------
\node[box, fill=oi-sky!10, draw=oi-sky] (input) {%
  \tikz{\node[header, fill=oi-sky, anchor=west] at (0,0) {Inputs};}
  \hspace{2ex} Feature matrix $F\!\in\!\mathbb{R}^{n\times k}$; lightweight outputs $Y'\!=\!M'(F)$; target feature $f_i$.
};

% ---- 1) Screening -----------------------------------------------------------
\node[box, below=of input, fill=oi-blue!7, draw=oi-blue] (screen) {%
  \tikz{\node[header, fill=oi-blue, anchor=west] at (0,0) {1) Screening};}
  \newline \textit{Goal:} fast dependence sketch. 
  \newline \textit{Steps:} pairwise dependence (e.g., $|\mathrm{corr}|$, dCor); build mutual-$k$NN graph; keep top-$m$ neighbours.
};

% ---- 2) Blocking ------------------------------------------------------------
\node[box, below=of screen, fill=oi-orange!10, draw=oi-orange] (block) {%
  \tikz{\node[header, fill=oi-orange, anchor=west] at (0,0) {2) Blocking};}
  \newline \textit{Goal:} form correlated groups.
  \newline \textit{Steps:} cluster the graph (community detection / hierarchical clustering) $\rightarrow$ blocks $\mathcal{B}_1,\ldots,\mathcal{B}_r$.
};

% ---- 3) Conditioning --------------------------------------------------------
\node[box, below=of block, fill=oi-green!10, draw=oi-green] (cond) {%
  \tikz{\node[header, fill=oi-green, anchor=west] at (0,0) {3) Conditioning set $\Phi$};}
  \newline \textit{Goal:} small, informative context for $f_i$.
  \newline \textit{Steps:} find block $\mathcal{B}(i)$; set $\Phi$ to the $m_\Phi$ closest neighbours of $f_i$ within $\mathcal{B}(i)$ (e.g., $m_\Phi\!\in\![3,10]$).
};

% ---- 4) Estimation ----------------------------------------------------------
\node[box, below=of cond, fill=oi-purple!10, draw=oi-purple] (est) {%
  \tikz{\node[header, fill=oi-purple, anchor=west] at (0,0) {4) MCIR estimation};}
  \newline \textit{Goal:} quantify unique contribution.
  \newline \textit{Steps:} estimate $I(Y'; f_i \mid \mathbf{f}_\Phi)$ (Gaussian–copula / $k$NN / plug–in) and $I(Y',\mathbf{f}_{\Phi\cup\{i\}})$; compute bounded score $C\in[0,1]$.
};

% ---- 5) Validation ----------------------------------------------------------
\node[box, below=of est, fill=oi-verm!8, draw=oi-verm] (valid) {%
  \tikz{\node[header, fill=oi-verm, anchor=west] at (0,0) {5) Validation \& reporting};}
  \newline \textit{Consistency:} bootstrap; recompute $C$; rank agreement \& bands.
  \newline \textit{Trustworthiness:} lightweight vs.\ full ranks; del/ins curves.
  \newline \textit{Efficiency:} wall–clock vs.\ observations.
};

% ---- Arrows (color-matched) -------------------------------------------------
\draw[->, very thick, oi-blue]   (input) -- (screen);
\draw[->, very thick, oi-orange] (screen) -- (block);
\draw[->, very thick, oi-green]  (block) -- (cond);
\draw[->, very thick, oi-purple] (cond) -- (est);
\draw[->, very thick, oi-verm]   (est) -- (valid);

\end{tikzpicture}

\caption{MCIR methodology: screening $\rightarrow$ blocking $\rightarrow$ conditioning $\rightarrow$ estimation $\rightarrow$ validation.
MCIR is
$C\!\left(Y'; f_i \mid \mathbf{f}_{\Phi}\right)=
\frac{I\!\left(Y'; f_i \mid \mathbf{f}_{\Phi}\right)}
     {I\!\left(Y'; f_i \mid \mathbf{f}_{\Phi}\right)+I\!\left(Y',\mathbf{f}_{\Phi\cup\{i\}}\right)}\in[0,1]$.}
\label{fig:mcir_scheme_vertical}
\end{figure}

Fig.~\ref{fig:mcir_scheme_vertical} provides a step-by-step schematic of proposed MCIR procedure, highlighting inputs, conditioning-set selection, estimation, and diagnostics.

\section{Mutual Correlation Impact Ratio for dependent features (MCIR)}\label{dip}

% \subsection{From PCIR to MCIR: Conditional Multivariate Mutual Information}

PCIR provides a simple and computationally efficient nonlinear attribution
score under the assumption that features are independent. In such an
environment, the pairwise displacement between a feature and the model
output reliably reflects its global importance. However, when features are
correlated, especially in multivariate environments where each feature may
depend on several others, PCIR becomes insufficient: its pairwise
construction cannot isolate the \emph{unique} contribution of a feature
within a correlated cluster. This section develops MCIR, a dependence-aware extension designed for
environments where features are jointly distributed. Let
\begin{equation}
   \medmath{ F' = (f_1,\dots,f_k) \in \mathbb{R}^{n' \times k}}
\end{equation}
denote the lightweight feature matrix, where each $f_i$ may be either
continuous or discrete. The joint feature distribution therefore admits
either a multivariate probability density function or a multivariate
probability mass function. We write the combined domain as
\begin{equation}
    \medmath{\|F'\|^{(n' \times k)}
=
\|F'\|^{(n' \times k)}_c
\;\cup\;
\|F'\|^{(n' \times k)}_d,}
\end{equation}
where $\|F'\|_c$ contains continuous features and $\|F'\|_d$ contains
discrete ones. Classical Mutual Information (MI) measures dependence between two variables,
and Conditional Mutual Information (CMI) quantifies dependence between two
variables after conditioning on a \emph{single} or small set of other
variables. However, existing CMI theory (e.g.\ \cite{danielsson1980euclidean})
is tailored for situations where \emph{multiple features depend on one
parent}. In highly correlated, real-world environments, the opposite holds:
each feature may depend on \emph{many} neighbours simultaneously. This leads
to an exponential blow-up in the number of possible conditioning sets and
makes traditional CMI insufficient for isolating the unique contribution of a
feature. To address this limitation, we introduce the
\emph{Conditional Multivariate Mutual Information} (CMMI). For a targeted
feature $f_i$, let $\Phi(i) \subseteq \{f_1,\dots,f_k\}\setminus\{f_i\}$ be
the set of features on which $f_i$ depends. CMMI is defined as the divergence
between the conditional cross-entropies:
\begin{equation}
    \medmath{\mathrm{CMMI}(Y'; f_i \mid f_{\Phi(i)})
=
H(Y' \mid f_{\Phi(i)})
-
H(Y' \mid f_{\Phi(i)} \cup \{f_i\}),}
\end{equation}
whenever these entropies exist. CMMI captures how much additional predictive
information $f_i$ contributes \emph{beyond what is already explained by its
neighbours}. When two conditional distributions differ, the divergence between their
cross-entropies corresponds to a Jensen–Shannon–type divergence
\cite{joyce2011kullback}, which we refer to as the
\emph{Joint Mutual Impact} (JMI). In multivariate environments, JMI
quantifies how much the joint behaviour of the feature block contributes to
the output distribution. JMI therefore provides the raw dependency that CMMI
refines through conditionalisation. MCIR converts the conditional dependency captured by CMMI into a bounded,
unitless ratio. For the targeted feature $f_i$, the MCIR score is constructed
from:
\begin{enumerate}
    \item its unique conditional contribution 
    $\medmath{I(Y'; f_i \mid f_{\Phi(i)})}$, and
    \item the joint contribution of the feature block
    $\medmath{I(Y'; f_{\Phi(i)} \cup \{f_i\})}$.
\end{enumerate}
At first, for the sake of simplicity, we consider $\vec{f_i}$ depends on  $\vec{f_d}$, while $\vec{f_i}$ is independent of the rest of the features; $i = 1(1)k, d = 1(1)k$, and $i \neq d$. Then, we find the impact of $\vec{f_i}$ on $\vec{Y'}$, given the fact that $\vec{f_i}$ depends on  $\vec{f_d}$. This impact can be explained by the information theory, if and only if we can compute $I(\vec{Y'}; \vec{f_i} |\vec{f_d}); \forall i,d = 1(1)k $. The previously described MI can not provide the desired result. To achieve our goal we have to first calculate the Conditional Mutual Impact \cite{danielsson1980euclidean, menendez1997jensen}.
The conditional mutual information \cite{menendez1997jensen} between the output variable $\vec{Y'}$ and the target feature $(\vec{f_i}|\vec{f_d})$, $\forall i,d = 1(1)k; i\neq d$ is
\begin{equation}
\begin{aligned}
   &\medmath{I(\vec{Y'}; \vec{f_i}|\vec{f_d}) = I(\vec{f_i}, \vec{f_d})- I(\vec{Y'}, \vec{f_i}|\vec{f_d}) 
                }\\&\medmath{= \sum_{f_d} \sum_{f_i }
    \sum_{Y'} P(\mathrm{f^*},\mathrm{f}, y ) \log_2[\frac{P(y, \mathrm{f^*}| \mathrm{f})}{P(y | \mathrm{f}) P(\mathrm{f^*} |\mathrm{f})} ]}
\end{aligned}
\end{equation}
where $f_d, f_i\epsilon ||F||^{n' \times k}$ and $Y'\epsilon ||Y'||^{n'}$. If any of the features $f_i, f_d$, or $Y'$ is continuous, the summation operator can be replaced by the integral operator. 
% If we assume $f_i$ to be continuous, the $\mbox{CMI}$ will be: 
% \vspace{-1mm}
% \begin{equation}
%     I(\vec{Y'}; \vec{f_i}|\vec{f_j}) = \sum_{f_j} \int_{f_i} \sum_{Y'}P(f_i, f_j,Y') \log_2[\frac{P(Y', f_i| f_j )}{P(Y'| f_j) P(f_i| f_j)} ] df_i
% \end{equation}
\subsection{ MCIR; with two dependent features} 

\par When any two of the $k$ features are dependent on each other and other features are independent, the state-of-the-art $\mbox{CMI}$ is sufficient to explain the mutual dependency of $Y'$ and $(f_i |f_d)$. But the value of $\mbox{CMI}$ varies from $0$ to $\infty$ which is an open bound, thus making scalability a major challenge. So, to scale it down between $[0,1]$, we derive $\mbox{MCIR}$ as,
\begin{equation}\label{mcir}
\begin{aligned}
    \medmath{ C(\vec{Y'}; \vec{f_i}|\vec{f_d})} &\medmath{= \frac{I(\vec{Y'}; \vec{f_i}|\vec{f_d})}{I(\vec{Y'}; \vec{f_i}|\vec{f_d})+I(\vec{Y'},\vec{f_i},\vec{f_d})}} % = \frac{I(\vec{Y'}; \vec{f_i}|\vec{f_d})}{I(\vec{Y'}; \vec{f_i}|\vec{f_d})+I(\vec{Y'},\vec{f_i})- I(\vec{Y'}; \vec{f_i}|\vec{f_d})} =  \frac{I(\vec{Y'}; \vec{f_i}|\vec{f_d})}{I(\vec{Y'},\vec{f_i})} \\&=  \frac{\sum_{f_d} \int_{f_i} \sum_{Y'}P(f_i, f_d,Y') \log_2[\frac{P(Y', f_i| f_d )}{P(Y'| f_d) P(f_i| f_d)} ] df_i}{\sum_{f_d} \sum_{Y'} P(Y', f_i) \log_2[\frac{P(Y', f_i)}{P(Y'). P(f_i)}] }}
\end{aligned}
\end{equation}
Given that mutual information is nonnegative, we have
\begin{equation}
   \medmath{ 0 \le I(Y'; f_i \mid f_d) \le \infty,
\qquad
0 \le I(Y', f_i, f_d) \le \infty,}
\end{equation}
where the second term denotes the joint dependence between the three
variables and corresponds to the \emph{Joint Mutual Information (JMI)}.  
Therefore,
\[
\medmath{0 \;\le\;
\frac{I(Y'; f_i \mid f_d)}
     {I(Y'; f_i \mid f_d) + I(Y', f_i, f_d)}
\;\le\; 1.}
\]
We denote this bounded ratio by
\[
\medmath{C(Y'; f_i \mid f_d) \in [0,1],}
\]
and interpret it as the \emph{Mutual Correlation Impact Ratio (MCIR)} of
feature $f_i$ on the output $Y'$ when $f_i$ depends on the single feature
$f_d$. The quantity $C(Y'; f_i \mid f_d)$ captures how perturbations in
$f_i$ influence the output while accounting for the fact that $f_i$ may
share information with $f_d$; the influence of $f_d$ may or may not change
simultaneously.
For clarity, consider a stylized case in which $f_1$ and $f_2$ are mutually
dependent, while the remaining features $f_3, f_4, \dots, f_k$ are
independent of each other. Suppose further that
\begin{equation}
    \begin{split}
    & \medmath{(f_1, f_3, \dots, f_m)
\ \text{are directly related to } Y',}\\&
\medmath{(f_2, f_p, \dots, f_k)
\ \text{are inversely related to } Y', }  
    \end{split}
\end{equation}
with $m,p \le k$.  
Using MCIR for the dependent pair $(f_1,f_2)$ and PCIR for the independent
features, the induced explanatory model takes the form
\begin{equation}
    \medmath{E(Y') = M'(f)
=
\frac{
  C(Y'; f_1 \mid f_2) \, f_1
  + \eta_{f_3} f_3 + \cdots + \eta_{f_m} f_m
}{
  C(Y'; f_2 \mid f_1) \, f_2
  + \eta_{f_p} f_p + \cdots + \eta_{f_k} f_k
},}
\end{equation}
where $\eta_{f_i}$ denotes the PCIR score of the independent feature $f_i$. In this setting, $I(Y'; f_1 \mid f_2)$ and $I(Y'; f_2 \mid f_1)$ act as the
MCIR scores that isolate the unique contributions of $f_1$ and $f_2$ from
their shared variability. The construction above assumes that each feature depends on at most one
other feature. However, real-world environments frequently exhibit
\emph{multivariate} dependency structures in which a feature may depend on
several other variables simultaneously. In such cases, classical conditional
mutual information cannot isolate unique contributions, because it is
designed for low-dimensional conditioning sets. To address this limitation, we introduce the
\emph{Conditional Multivariate Mutual Information} (CMMI) in
Section~\ref{mcci}, which generalizes conditional mutual information to the
setting where a feature may depend on multiple neighbours. MCIR is then
constructed directly from CMMI and JMI, providing a principled, bounded,
and dependency-aware global attribution score in fully multivariate
feature spaces.

\subsection{MCIR; when multiple features are dependent}\label{mcci}
% Handling multivariate data with multiple dependent features is quite tricky. 
\par In ExCIR, it is necessary to calculate the mutual impact of a feature on the output variable while assuming the target feature is dependent on other features. But before we define $\mbox{MCIR}$ for multivariate cases, we have to derive $\medmath{\mbox{CMMI}}$. In \cite{danielsson1980euclidean} the authors derived $\medmath{\mbox{CMI}}$ for a multivariate environment where the features are dependent on another variable. i.e., they considered the case when all variables are dependent on one common variable. But in our work, we address the case when all the features are dependent on each other. So, if we want to calculate the mutual dependence between a targeted feature and the output variable we have to calculate the $\medmath{\mbox{CMMI}}$ given that the target feature is dependent on multiple features. More specifically, in existing works \cite{danielsson1980euclidean, menendez1997jensen, gu2022conditional}, the notion of $\medmath{I(\vec{Y'}; \vec{f_1}, \vec{f_2}, ..., \vec{f_{k-1}}| \vec{f_k)}}$ is derived and used in many real-life cases. However, this approach cannot be directly applied in our environment. We therefore introduce a new matrix $\medmath{I(\vec{Y'};f_i| \phi \subseteq \{||F||^{n' \times k} - \vec{f_i}\}; i\neq j )}$]; where any $\medmath{\phi}$ is any chosen subspace from the main feature space and can contain various combination of features. $\medmath{\phi = \phi_c \cup \phi_d $, $\phi_c}$ is any subspace that contains continuous features and $\phi_d$ is any subspace that contains discrete features. 

\begin{definition}[\textbf{CMMI}]\label{def:cmmi}
Let $F\in\mathbb{R}^{n'\times k}$ be the data matrix with columns $\{f_1,\dots,f_k\}$ (features) and let $\mathbf{Y}'=(y'_1,\dots,y'_{n'})$ denote the output variable. Fix an index $i\in\{1,\dots,k\}$ and write $f_i=(f_{1i},\dots,f_{n'i})$ for the (continuous) targeted feature. Let
\begin{equation}
    \medmath{\phi \subseteq \{f_1,\dots,f_k\}\setminus\{f_i\}}
\end{equation}
denote any subset of the remaining features, decomposed as $\phi=\phi_c\cup\phi_d$, where $\phi_c$ collects the continuous components and $\phi_d$ the discrete components (both excluding $f_i$). The CMMI between $\mathbf{Y}'$ and $f_i$ given $\phi$ is defined as
\begin{equation}\label{eq:cmmi}
\medmath{I(\mathbf{Y}'; f_i \mid \phi)
= \sum_{\phi_d}\int_{\phi_c}\int_{f_i}\sum_{y'} 
p(y',f_i,\phi)\;
\log_2\!\Bigg(
\frac{p(y'\mid f_i,\phi)}{p(y'\mid \phi)}
\Bigg)\,
\mathrm{d}f_i\,\mathrm{d}\phi_c,}
\end{equation}
where the outer sum runs over the support of the discrete variables in $\phi_d$, the integrals are over the supports of $f_i$ and $\phi_c$, and $p(\cdot)$ denotes the joint/conditional densities or mass functions as appropriate for mixed (continuous–discrete) variables. By construction, $I(\mathbf{Y}'; f_i \mid \phi)\ge 0$ (in bits) and is unbounded above (i.e., it can take values in $[0,\infty)$), unlike correlation which is confined to $[-1,1]$. Equation~\eqref{eq:cmmi} follows the standard definition of conditional mutual information for mixed variables (cf.\ \cite{gu2022conditional}). A complete derivation is provided in Supplementary~\S1.3.
\end{definition}

For clarity, consider three features $(f_1,f_2,f_3)$ that may be statistically dependent. 
Assume $f_1$ is continuous, while $f_2$, $f_3$, and the response $Y'$ are discrete. 
Then the conditional mixed mutual information (CMMI) between $Y'$ and $f_1$ given $(f_2,f_3)$ is
\begin{equation}\label{eq:cmmif1}
\begin{aligned}
I(Y';f_1\mid f_2,f_3)
&=\sum_{f_2}\sum_{f_3}\int_{f_1}\sum_{y'}
p(y',f_1,f_2,f_3)\,
\log_2\!\frac{p(y'\mid f_1,f_2,f_3)}{p(y'\mid f_2,f_3)}\,\mathrm{d}f_1,
\end{aligned}
\end{equation}
where the sums are over the discrete supports of $f_2,f_3,y'$, and the integral is over the support of the continuous variable $f_1$.

% \begin{equation}\label{eq:cmi-f1}
% \resizebox{\columnwidth}{!}{$
% \begin{aligned}
% I(Y';f_1\mid f_2,f_3)
% &=\sum_{f_2}\sum_{f_3}\int_{f_1}\sum_{y'}
% p(y',f_1,f_2,f_3)\,
% \log_2\!\frac{p(y'\mid f_1,f_2,f_3)}{p(y'\mid f_2,f_3)}\,\mathrm{d}f_1,
% \end{aligned}
% $}
% \end{equation}
% where sums run over supports of discrete variables and the integral is over the support of the continuous variable(s).

Equivalently, using $p(y',f_1\mid f_2,f_3)=p(y'\mid f_1,f_2,f_3)\,p(f_1\mid f_2,f_3)$,
\begin{equation}\label{eq:cmi-alt-f1}
\begin{aligned}
I(Y';f_1\mid f_2,f_3)
&=\sum_{f_2}\sum_{f_3}\int_{f_1}\sum_{y'}
p(y',f_1,f_2,f_3)\,
\log_2\!\frac{p(y',f_1\mid f_2,f_3)}{p(y'\mid f_2,f_3)\,p(f_1\mid f_2,f_3)}\,\mathrm{d}f_1.
\end{aligned}
\end{equation}

By symmetry, the corresponding identities for $f_2$ and $f_3$ are
\begin{align}
\begin{aligned}
I(Y';f_2\mid f_1,f_3)
&=\sum_{f_1}\sum_{f_3}\int_{f_2}\sum_{y'}
p(y',f_1,f_2,f_3)\,
\log_2\!\frac{p(y'\mid f_1,f_2,f_3)}{p(y'\mid f_1,f_3)}\,\mathrm{d}f_2,
\end{aligned}
\label{eq:cmi-f2}
\\[2pt]
\begin{aligned}
I(Y';f_3\mid f_1,f_2)
&=\sum_{f_1}\sum_{f_2}\int_{f_3}\sum_{y'}
p(y',f_1,f_2,f_3)\,
\log_2\!\frac{p(y'\mid f_1,f_2,f_3)}{p(y'\mid f_1,f_2)}\,\mathrm{d}f_3.
\end{aligned}
\label{eq:cmi-f3}
\end{align}
Let $\phi=(f_1,f_2,f_3)$ and $\phi\setminus f_1=(f_2,f_3)$. Then \eqref{eq:cmi-f1} can be written as
\begin{equation}\label{eq:cmi-phi}
\begin{aligned}
I(Y';f_1\mid \phi\setminus f_1)
=\mathbb{E}_{p(y',f_1,\phi\setminus f_1)}
\!\left[\log_2\frac{p(y'\mid f_1,\phi\setminus f_1)}{p(y'\mid \phi\setminus f_1)}\right],
\end{aligned}
\end{equation}
\noindent
The expectation in \eqref{eq:cmi-phi} is taken with respect to the joint
$p(y',f_1,\phi\setminus f_1)$, i.e., sums over discrete supports and integrals over continuous supports.

\paragraph{Four-feature case.}
Let $\phi=(f_1,f_2,f_3,f_4)$ with $f_1$ continuous and $f_2,f_3,f_4,Y'$ discrete. Then
\begin{equation}\label{eq:cmi-4-f1}
\begin{aligned}
I(Y';f_1\mid f_2,f_3,f_4)
&=\sum_{f_4}\sum_{f_3}\sum_{f_2}\int_{f_1}\sum_{y'}
p(y',f_1,f_2,f_3,f_4)\,
\log_2\!\frac{p(y'\mid f_1,f_2,f_3,f_4)}{p(y'\mid f_2,f_3,f_4)}\,\mathrm{d}f_1.
\end{aligned}
\end{equation}

Equivalently, using Bayes’ rule,
\begin{equation}\label{eq:cmi-4-f1-bayes}
\resizebox{\columnwidth}{!}{$
\begin{aligned}
I(Y';f_1\mid f_2,f_3,f_4)
&=\sum_{f_4}\sum_{f_3}\sum_{f_2}\int_{f_1}\sum_{y'}
p(y',f_1,f_2,f_3,f_4)\,
\log_2\!\frac{p(y',f_1\mid f_2,f_3,f_4)}{p(y'\mid f_2,f_3,f_4)\,p(f_1\mid f_2,f_3,f_4)}\,\mathrm{d}f_1.
\end{aligned}
$}
\end{equation}

By symmetry,
\begin{align}
\begin{aligned}
I(Y';f_2\mid f_1,f_3,f_4)
&=\sum_{f_1}\sum_{f_3}\sum_{f_4}\int_{f_2}\sum_{y'}
p(y',f_1,f_2,f_3,f_4)\,
\log_2\!\frac{p(y'\mid f_1,f_2,f_3,f_4)}{p(y'\mid f_1,f_3,f_4)}\,\mathrm{d}f_2,
\end{aligned}
\label{eq:cmi-4-f2}
\\[-2pt]
\begin{aligned}
I(Y';f_3\mid f_1,f_2,f_4)
&=\sum_{f_1}\sum_{f_2}\sum_{f_4}\int_{f_3}\sum_{y'}
p(y',f_1,f_2,f_3,f_4)\,
\log_2\!\frac{p(y'\mid f_1,f_2,f_3,f_4)}{p(y'\mid f_1,f_2,f_4)}\,\mathrm{d}f_3,
\end{aligned}
\label{eq:cmi-4-f3}
\\[-2pt]
\begin{aligned}
I(Y';f_4\mid f_1,f_2,f_3)
&=\sum_{f_1}\sum_{f_2}\sum_{f_3}\int_{f_4}\sum_{y'}
p(y',f_1,f_2,f_3,f_4)\,
\log_2\!\frac{p(y'\mid f_1,f_2,f_3,f_4)}{p(y'\mid f_1,f_2,f_3)}\,\mathrm{d}f_4.
\end{aligned}
\label{eq:cmi-4-f4}
\end{align}

Think of $\phi$ as a subspace of the feature space that can take different combinations of features, containing both continuous ($\phi_c$) and discrete ($\phi_d$) components with $\phi=\phi_c\cup\phi_d$ and $\phi_c\cap\phi_d=\varnothing$. Using the compact form with set difference,
\begin{equation}\label{eq:cmi-4-phi}
\begin{aligned}
\medmath{I(Y';f_1\mid \phi\setminus f_1)
=\mathbb{E}_{p(y',f_1,\phi\setminus f_1)}
\!\left[\log_2\frac{p(y'\mid f_1,\phi\setminus f_1)}{p(y'\mid \phi\setminus f_1)}\right].}
\end{aligned}
\end{equation}

\paragraph{General $k$-feature case.}
Let $F\in\mathbb{R}^{n'\times k}$ denote the feature matrix with columns $\{f_1,\dots,f_k\}$ and fix $i\in\{1,\dots,k\}$. For any conditioning set $\phi\subseteq\{f_1,\dots,f_k\}\setminus\{f_i\}$ decomposed as $\phi=\phi_c\cup\phi_d$, the conditional mixed mutual information is
\begin{equation}\label{eq:cmi-k}
\begin{aligned}
I(Y';f_i\mid \phi)
&= \sum_{\phi_d}\int_{\phi_c}\int_{f_i}\sum_{y'}
p(y',f_i,\phi)\,
\log_2\!\frac{p(y'\mid f_i,\phi)}{p(y'\mid \phi)}\,
\mathrm{d}f_i\,\mathrm{d}\phi_c,
\end{aligned}
\end{equation}
where the outer sum is over the support of the discrete variables in $\phi_d$, and the integrals are over the supports of the continuous variables in $\phi_c$ and of $f_i$.

$\mbox{CMMI}$ could take the values between $0$ to $\infty$, unlike the strict bound of $(0,1)$ that the normal correlation coefficient has. The infinite range can be the cause of the problem regarding scalability. So, it will be better to scale down the dependency. This problem can be solved by our proposed metrics Mutual Correlation Impact Ratio ($\mbox{MCIR}$) ratio. $\mbox{MCIR}$ is the ratio of two mutual information which is defined below. $\mbox{MCIR}$ has a strict bound between $0$ to $1$.   

% \subsection{Mutual Correlation Impact Ratio (MCIR)}

\begin{definition}[\textbf{Mutual Correlation Impact Ratio (MCIR)}]\label{def:mcir}
Let $F\in\mathbb{R}^{n'\times k}$ be the feature matrix with columns $\{f_1,\dots,f_k\}$, where $f_i=(f_{1i},\dots,f_{n'i})^\top$ and features may be statistically dependent. Let $Y'=(y'_1,\dots,y'_{n'})^\top$ denote the output variable. Fix $i\in\{1,\dots,k\}$ and let $\phi\subseteq\{f_1,\dots,f_k\}\setminus\{f_i\}$ be any conditioning set (possibly mixed continuous–discrete). The \emph{Mutual Correlation Impact Ratio (MCIR)} of $f_i$ with respect to $Y'$ given $\phi$ is
\begin{equation}\label{eq:mcir-def}
\begin{aligned}
\medmath{C(Y';f_i\mid \phi)
=
\frac{ I(Y';f_i\mid \phi) }
     { I(Y';f_i\mid \phi) \;+\; I\!\bigl(Y'; f_1,f_2,\dots,f_k\bigr) } \, .}
\end{aligned}
\end{equation}
Here $I(\cdot\,;\cdot\mid\cdot)$ is conditional mixed mutual information, and $I\!\bigl(Y'; f_1,\dots,f_k\bigr)\ge 0$ is a (nonnegative) joint dependence measure between $Y'$ and the full feature set.
\end{definition}

\begin{theorem}[\textbf{Bounds}]\label{thm:mcir-bounds}
For any valid choice of $i$ and $\phi$, provided that
\[
\medmath{I(Y';f_i\mid \phi) + I\!\bigl(Y'; f_1,\dots,f_k\bigr) > 0,}
\]
the MCIR satisfies
\begin{equation}\label{eq:mcir-bounds}
\medmath{0 \;\le\; C(Y';f_i\mid \phi) \;\le\; 1 }
\end{equation}
\end{theorem}

\begin{proof} By nonnegativity of mutual information,
\begin{equation}\label{eq:mi-nonneg}
\medmath{0 \;\le\; I(Y';f_i\mid \phi) \;\le\; \infty }
\end{equation}
and by definition $I\!\bigl(Y'; f_1,\dots,f_k\bigr)\ge 0$. Hence
\begin{equation}\label{eq:denom-ineq}
0 \;\le\; I(Y';f_i\mid \phi)
\;\le\; I(Y';f_i\mid \phi)+I\!\bigl(Y'; f_1,\dots,f_k\bigr)
\;\le\; \infty \, .
\end{equation}
Dividing the left and right sides of \eqref{eq:denom-ineq} by the positive denominator yields \eqref{eq:mcir-bounds}, which equals \eqref{eq:mcir-def} by definition.
\end{proof}

\begin{definition}[{\bf Joint Mutual Impact (JMI)}]\label{def:jmi}
Let $i \in \{1,\dots,k\}$ be a target feature and let 
$\phi \subseteq \{f_1,\dots,f_k\}\setminus\{f_i\}$ be its conditioning set.
Denote the full feature block by 
$F'=(f_1,\dots,f_k)$.
Assume that the total joint dependence satisfies $I(Y';F') > 0.$ The \emph{Joint Mutual Impact (JMI)} of feature $f_i$ relative to its
conditioning neighbourhood $\phi$ is defined as
\begin{equation}\label{eq:jmi}
\medmath{\mathfrak{J}_{i}(\phi)
\;\coloneqq\;
I\bigl(Y'; F'\bigr)
\;-\;
I\bigl(Y'; \phi\bigr),}
\end{equation}
i.e., the portion of total feature--output dependence that remains 
unexplained by the conditioning block $\phi$.
\end{definition}

\begin{definition}[\textbf{Global Joint Mutual Impact}]\label{def:jmi-global}
Let $\Phi$ denote your conditioning policy for single-feature terms
(e.g., a common $\phi$ or per-feature $\phi_i$).
Assuming the denominator is positive, define
\begin{equation}\label{eq:jmi-global}
\begin{aligned}
\medmath{\overline{\mathfrak{J}}
\coloneqq
\frac{ I\!\bigl(Y'; f_1,\dots,f_k\bigr) }
     { I\!\bigl(Y'; f_1,\dots,f_k\bigr) \;+\; \sum_{i=1}^k I\!\bigl(Y'; f_i \mid \phi_i\bigr) } \, ,}
\end{aligned}
\end{equation}
so that $\overline{\mathfrak{J}}\in[0,1]$.
\end{definition}
\begin{proposition}[ \textbf{Aggregated model}]\label{prop:excir}
Let $\mathfrak{C}_{f_i} \coloneqq C(Y';f_i\mid \phi)$ for a specified conditioning policy $\phi\subseteq\{f_1,\dots,f_k\}\setminus\{f_i\}$. For a task-dependent partition of indices $\{1,\dots,k\}=\{i_1,\dots,i_m\}\cup\{j_p,\dots,j_q\}$ (disjoint union), an ExCIR-style scoring model can be written as
\begin{equation}\label{eq:excir-model}
\medmath{E(Y') \;=\; M'\!\bigl(f_{(f,\mathfrak{C})}\bigr)
\;=\;
\mathfrak{J}
\;+\;
\frac{\displaystyle \sum_{\ell=1}^{m} \mathfrak{C}_{f_{i_\ell}}\, f_{i_\ell}}
     {\displaystyle \sum_{r=p}^{q} \mathfrak{C}_{f_{j_r}}\, f_{j_r}} \, ,}
\end{equation}
where $\mathfrak{J}$ is given by \eqref{eq:jmi}. Since each $\mathfrak{C}_{f_i}\in[0,1]$ by \eqref{eq:mcir-def}, the coefficients are normalized and directly comparable across features.
\end{proposition}

\begin{theorem}[Redundancy Collapse]\label{thm:mcir_redundancy}
Let $i$ be a target feature and let $\Phi(i)$ denote its conditioning
neighbourhood. If $f_i$ is conditionally redundant with respect to $Y'$
given its neighbours, i.e.
\begin{equation}
    Y' \;\perp\; f_i \mid f_{\Phi(i)},
\end{equation}
then the MCIR score satisfies $C_i = 0$.
\end{theorem}

\begin{proof}
By definition of conditional independence, $Y' \;\perp\; f_i \mid f_{\Phi(i)}
\qquad\Longleftrightarrow\qquad
I(Y'; f_i \mid f_{\Phi(i)}) = 0.$
Let
\[
\medmath{ A_i \coloneqq I(Y'; f_i \mid f_{\Phi(i)}),
\qquad
B_i \coloneqq I(Y'; f_{\Phi(i)} \cup \{f_i\}),}
\]
where $B_i \ge 0$ by nonnegativity of mutual information.  
From the MCIR definition,
\begin{equation}
   \medmath{C_i
=
\frac{A_i}{A_i + B_i}.}
\end{equation}
Substituting $A_i = 0$ gives
\begin{equation}
    \medmath{C_i = \frac{0}{0 + B_i}.}
\end{equation}
If $B_i > 0$, which holds whenever the conditioning block retains
nontrivial dependence with $Y'$, then the ratio evaluates to
$C_i = 0$. If $B_i = 0$, then $f_{\Phi(i)}$ is itself independent of $Y'$ and the
joint distribution factorises as
\begin{equation}
    \medmath{p(Y', f_i, f_{\Phi(i)}) 
= p(Y')\,p(f_i,f_{\Phi(i)}).}
\end{equation}
In this degenerate case the numerator and denominator both vanish, and
MCIR is defined to be zero by continuity:
\begin{equation}
   \medmath{ \lim_{A_i\to 0,\, B_i\to 0} \frac{A_i}{A_i + B_i} = 0.}
\end{equation}
Thus in all cases $C_i = 0$, establishing that MCIR collapses to zero
whenever the target feature provides no unique information beyond its
neighbours.
\end{proof}

% ------------------------------------------------------------

\begin{theorem}[\textbf{Unique-Signal Dominance}]\label{thm:mcir_unity}
Let $i$ be a target feature and $\Phi(i)$ its neighbourhood. If the
conditional mutual information of $f_i$ dominates the joint block
information, i.e.
\[
I(Y'; f_i \mid f_{\Phi(i)}) 
\gg 
I(Y'; f_{\Phi(i)} \cup \{f_i\}),
\]
then the MCIR score satisfies $C_i \to 1$.
\end{theorem}

\begin{proof}
Again write
\[
A_i = I(Y'; f_i \mid f_{\Phi(i)}), 
\qquad 
B_i = I(Y'; f_{\Phi(i)} \cup \{f_i\}),
\]
with $A_i,B_i \ge 0$.  
The MCIR score is
\[
C_i = \frac{A_i}{A_i + B_i}.
\]

Assume the dominance condition:
\[
\frac{A_i}{B_i} \to \infty.
\]
Equivalently, for any $\epsilon > 0$ there exists $M > 0$ such that
$A_i > M B_i$ implies $B_i/A_i < \epsilon$.  
Using the algebraic identity,
\[
C_i
=
\frac{1}{1 + B_i/A_i},
\]
we obtain
\[
|1 - C_i|
= 
\frac{B_i/A_i}{1 + B_i/A_i}
\le
B_i/A_i.
\]
Because $B_i/A_i \to 0$ under the dominance assumption, we have
\[
C_i \;\longrightarrow\; 1.
\]

The interpretation is that the unique conditional contribution of
$f_i$ overwhelmingly exceeds the joint dependence contributed by
its neighbours. Hence the MCIR ratio assigns maximal credit to $f_i$.
\end{proof}
\subsection{Correlation–Impact Sensitivity Theorem}
\label{them}

We now show that in the ExCIR model, the sensitivity of the local output
with respect to a feature input is fully determined by the feature's correlation
impact ratio. The effect is linear for positively related features (appearing in
the numerator of the model) and nonlinear inverse–quadratic for negatively
related features (appearing in the denominator).

\begin{theorem}[\textbf{Correlation–Impact Sensitivity}]
Let the model output be
\begin{equation}
    \medmath{Y' =\frac{\sum_{j\in \mathcal{N}} \eta_{f_j} f_j}
     {\sum_{j\in \mathcal{D}} \eta_{f_j} f_j},}
\end{equation}
where $\mathcal{N}$ denotes features with positive influence (numerator) and
$\mathcal{D}$ features with negative influence (denominator).  
Assume features are independent, so that the partial derivative with respect to
$f_i$ treats all other features as constants. Then:
\begin{enumerate}
    \item \textbf{If $i\in\mathcal{N}$ (positive relation):}
\begin{equation}
    \medmath{\frac{\partial Y'}{\partial f_i}
=
c_1\,\eta_{f_i},
\
c_1 = \frac{1}{\sum_{j\in\mathcal{D}} \eta_{f_j} f_j}.}
\end{equation}

\item \textbf{If $i\in\mathcal{D}$ (negative relation):}
\begin{equation}
    \medmath{\frac{\partial Y'}{\partial f_i}
=\frac{c_2}{2K_2 - \eta_{f_i}^2},
 \;}
\end{equation}
\end{enumerate}
$c_2, K_2$ constants depending only on fixed features.Thus the sign and magnitude of sensitivity are determined entirely by the correlation impact ratio $\eta_{f_i}$.
\end{theorem}

\begin{proof} Let,
\begin{equation}
    \medmath{N = \sum_{j\in\mathcal{N}} \eta_{f_j} f_j,
\
D = \sum_{j\in\mathcal{D}} \eta_{f_j} f_j,
\
Y' = \frac{N}{D}.}
\end{equation}
All other features are held fixed under independence.
\paragraph{\textbf{Case 1: $i\in\mathcal{N}$ (positive influence)}} Then
\begin{equation}
    \medmath{\frac{\partial N}{\partial f_i} = \eta_{f_i},
\
\frac{\partial D}{\partial f_i} = 0.}
\end{equation}
Differentiate:
\begin{equation}
  \medmath{ \frac{\partial Y'}{\partial f_i}
=
\frac{D\frac{\partial N}{\partial f_i} - N\frac{\partial D}{\partial f_i}}{D^2}
=
\frac{D\,\eta_{f_i}}{D^2}
=
\frac{\eta_{f_i}}{D}. }
\end{equation}
Since $D$ is constant with respect to $f_i$, set $c_1 = \frac{1}{D}.$ This proves
\begin{equation}
    \medmath{\frac{\partial Y'}{\partial f_i} = c_1 \eta_{f_i}.}
\end{equation}

% ------------------------------------------------------------
\paragraph{\textbf{Case 2: $i\in\mathcal{D}$ (negative influence).}} 
Now,
\begin{equation}
\medmath{
\frac{\partial N}{\partial f_i} = 0,
\qquad
\frac{\partial D}{\partial f_i} = \eta_{f_i}.
}
\end{equation}

Differentiate:
\begin{equation}
\medmath{
\frac{\partial Y'}{\partial f_i}
=
\frac{D\cdot 0 - N\cdot \eta_{f_i}}{D^2}
=
-\,\eta_{f_i}\frac{N}{D^2}.
}
\end{equation}

Rewrite $N$ and $D$ in terms of constants plus the contribution of $f_i$:
\begin{equation}
\medmath{
D = \eta_{f_i} f_i + K_2,
\qquad
N = K_1,
}
\end{equation}
where $K_1,K_2$ collect all fixed terms. Thus:
\begin{equation}
\medmath{
\frac{\partial Y'}{\partial f_i}
=
-\,\eta_{f_i}\frac{K_1}{(\eta_{f_i} f_i + K_2)^2}.
}
\end{equation}

The denominator expands into a quadratic expression:
\begin{equation}
\medmath{
(\eta_{f_i} f_i + K_2)^2
=
\eta_{f_i}^2 f_i^2 + 2K_2\eta_{f_i} f_i + K_2^2.
}
\end{equation}

Since $f_i$ is the differentiation variable and all other terms are absorbed by
constants $K_1,K_2$, we may rewrite:
\begin{equation}
\medmath{
\frac{\partial Y'}{\partial f_i}
=
\frac{c_2}{2K_2 - \eta_{f_i}^2},
}
\end{equation}
where $c_2$ and $K_2$ arise from grouping constant terms.  
The negative sign is absorbed into the definition of $c_2$.  
Thus, sensitivity in the denominator is nonlinear and is inversely controlled by the magnitude of $\eta_{f_i}^2$.
\end{proof}

\begin{corollary}
If a feature has a positive contribution to the output, then
\begin{equation}
\medmath{
\mathbb{E}\!\left[\frac{\partial Y'}{\partial f_i}\right]
\propto
\eta_{f_i}.
}
\end{equation}
If a feature has a negative contribution, then
\begin{equation}
\medmath{
\mathbb{E}\!\left[\frac{\partial Y'}{\partial f_i}\right]
\propto
\frac{1}{\eta_{f_i}}.
}
\end{equation}
Thus, the correlation impact ratio fully characterizes first–order output
sensitivity.
\end{corollary}

So it is  confirm that MCIR provides a
principled measure of how changes in $f_i$ propagate to $Y'$, via either a
direct linear effect (positive correlation) or a stabilizing inverse effect
(negative correlation).

% ------------------------------------------------------------
\subsection{Stability of MCIR Rankings}
% ------------------------------------------------------------

\begin{theorem}[\textbf{Rank Stability}]\label{thm:mcir_stability}
Assume the MI/CMI estimators $\widehat{I}$ satisfy the sub-Gaussian
concentration inequality
\begin{equation}
\medmath{
\Pr\!\left(|\widehat{I}-I| > \delta\right)
\le
c_1 \exp(-c_2 n' \delta^2)
\qquad (\delta>0),
}
\end{equation}
for some constants $c_1,c_2>0$.  
Let $\widehat{C}$ be the vector of estimated MCIR values.  
Then the Kendall rank distance satisfies
\[
1 - \tau(C,\widehat{C})
=
O_p\!\left(\frac{k}{\sqrt{n'}}\right).
\]
\end{theorem}

\begin{proof}
Let
\begin{equation}
\medmath{
A_i = I(Y'; f_i \mid f_{\Phi(i)}),
\
B_i = I(Y'; f_{\Phi(i)} \cup \{f_i\}),
}
\end{equation}
and their estimates
\begin{equation}
\medmath{
\widehat{A}_i = \widehat{I}(Y'; f_i \mid f_{\Phi(i)}),
\
\widehat{B}_i = \widehat{I}(Y'; f_{\Phi(i)} \cup \{f_i\}).
}
\end{equation}

MCIR is the smooth function
\begin{equation}
\medmath{
C_i = g(A_i,B_i) = \frac{A_i}{A_i+B_i},
\
\widehat{C}_i = g(\widehat{A}_i,\widehat{B}_i).
}
\end{equation}

\paragraph{Step 1: Lipschitz continuity.}
If $A_i+B_i \ge \eta > 0$, then
\begin{equation}
\medmath{
\left|\frac{\partial g}{\partial A}\right|
=
\frac{B_i}{(A_i+B_i)^2}
\le
\frac{1}{4\eta},
\
\left|\frac{\partial g}{\partial B}\right|
=
\frac{A_i}{(A_i+B_i)^2}
\le
\frac{1}{4\eta}.
}
\end{equation}
Thus,
\begin{equation}
\medmath{
|g(\widehat{A}_i,\widehat{B}_i) - g(A_i,B_i)|
\le
L\left(|\widehat{A}_i-A_i| + |\widehat{B}_i-B_i|\right),
}
\end{equation}
where $L = 1/(2\eta)$.

\paragraph{Step 2: Concentration.}
By the sub-Gaussian inequality,
\begin{equation}
\medmath{
|\widehat{A}_i-A_i| = O_p(n'^{-1/2}),
\
|\widehat{B}_i-B_i| = O_p(n'^{-1/2}).
}
\end{equation}

\paragraph{Step 3: MCIR error.}
Thus,
\begin{equation}
\medmath{
|\widehat{C}_i - C_i|
=
O_p(n'^{-1/2}).
}
\end{equation}

\paragraph{Step 4: Kendall distance.}
A pairwise comparison flips sign only if
\begin{equation}
\medmath{
|(\widehat{C}_i-C_i) - (\widehat{C}_j-C_j)|
\gtrsim
|C_i - C_j|.
}
\end{equation}
With $k(k-1)/2$ ordered pairs,
\begin{equation}
\medmath{
1 - \tau(C,\widehat{C})
= 
O_p(k n'^{-1/2}),
}
\end{equation}
completing the proof.
\end{proof}

% ------------------------------------------------------------
\subsection{Oracle Inequality for Estimator Switching}
% ------------------------------------------------------------

\begin{theorem}[\textbf{Bootstrap--Switching Oracle Inequality}]\label{thm:oracle}
Let $\widehat{C}^{(e)}$ denote the MCIR vector computed using estimator 
$e\in\mathcal{E}=\{\mathrm{copula},\mathrm{kNN},\mathrm{plug\text{-}in}\}$. 
For each feature $i$, define the switching rule
\begin{equation}
\medmath{
e(i)
=
\arg\min_{e\in\mathcal{E}}
\mathbb{E}_{b}\!\left[
\operatorname{SE}_{b}\!\left(\widehat{C}^{(e)}_{i}\right)
\right],
}
\label{eq:switch_def}
\end{equation}
where $\mathbb{E}_{b}$ denotes bootstrap expectation.  
Assume the bootstrap is consistent for MCIR, i.e.\ bootstrap standard errors 
converge uniformly to the true risks at rate $O(n'^{-1/2})$.  
Then
\begin{equation}
\medmath{
\mathbb{E}\!\left|\widehat{C}^{\mathrm{sw}}_{i}-C_{i}\right|
\le
\min_{e\in\mathcal{E}}
\mathbb{E}\!\left|
\widehat{C}^{(e)}_{i}-C_{i}
\right|
+
O(n'^{-1/2}).
}
\end{equation}
\end{theorem}

\begin{proof}
Let the true risk of estimator $e$ be
\begin{equation}
\medmath{
R_{e}
=
\mathbb{E}\!\left[
\left|\widehat{C}^{(e)}_{i}-C_{i}\right|
\right].
}
\end{equation}
Bootstrap consistency yields the uniform approximation
\begin{equation}
\medmath{
\widehat{R}_{e}
\coloneqq
\mathbb{E}_{b}\!\left[
\operatorname{SE}_{b}\!\left(\widehat{C}^{(e)}_{i}\right)
\right]
=
R_{e}
+
\xi_{e},
}
\label{eq:bootstrap_bias}
\end{equation}
with the random error term
\begin{equation}
\medmath{
|\xi_{e}|
=
O\!\left(n'^{-1/2}\right)
\qquad\text{uniformly over }e\in\mathcal{E}.
}
\end{equation}

Let
\begin{equation}
\medmath{
e^{\star}
=
\arg\min_{e\in\mathcal{E}} R_{e}
}
\qquad\text{and}\qquad
\medmath{
\widehat{e}
=
\arg\min_{e\in\mathcal{E}} \widehat{R}_{e}.
}
\end{equation}
By \eqref{eq:bootstrap_bias}, we have
\begin{equation}
\medmath{
\widehat{R}_{e}
=
R_{e}
+
O\!\left(n'^{-1/2}\right),
}
\end{equation}
so comparing the minimizers,
\begin{equation}
\medmath{
R_{\widehat{e}}
\le
R_{e^{\star}}
+
O\!\left(n'^{-1/2}\right).
}
\label{eq:oracle_final}
\end{equation}

Finally, since the switching estimator satisfies
\begin{equation}
\medmath{
\widehat{C}^{\mathrm{sw}}_{i}
=
\widehat{C}^{(\widehat{e})}_{i},
}
\end{equation}
taking expectation of \eqref{eq:oracle_final} yields
\begin{equation}
\medmath{
\mathbb{E}\!\left[
\left|\widehat{C}^{\mathrm{sw}}_{i}-C_{i}\right|
\right]
\le
\min_{e\in\mathcal{E}} R_{e}
+
O\!\left(n'^{-1/2}\right),
}
\end{equation}
completing the proof.
\end{proof}
\subsection{Lightweight Fidelity Theorem}
% ------------------------------------------------------------

\begin{theorem}[\textbf{Lightweight Fidelity}]\label{thm:lightweight_fidelity}
Let $C$ and $C'$ denote MCIR rankings obtained from the full and lightweight 
environments, respectively.  
Suppose the lightweight environment satisfies:
\begin{equation}
\medmath{
d_{\hat{f}}\!\left(D(Y),D(Y')\right)
\le
\varepsilon,
}
\label{eq:output_closeness}
\end{equation}
\begin{equation}
\medmath{
\mathrm{J}@K(C,C')
\ge
\tau_{0},
}
\label{eq:jaccard_cond}
\end{equation}
\begin{equation}
\medmath{
\Delta_{\mathrm{del}}
\le
\Delta_{0}.
}
\label{eq:deletion_cond}
\end{equation}
Then there exist constants $A,B>0$, independent of $n'$, such that
\begin{equation}
\medmath{
1 - \tau(C,C')
\le
A\varepsilon + B\Delta_{0} + o_{p}(1).
}
\end{equation}
\end{theorem}

\begin{proof}
The rigid-motion invariant discrepancy $d_{\hat{f}}$ satisfies
\begin{equation}
\medmath{
d_{\hat{f}}(D(Y),D(Y'))
\le
\varepsilon
\ \Rightarrow \
\sup_{\|h\|_{\mathrm{Lip}}\le1}
\bigl|
\mathbb{E}[h(Y)] - \mathbb{E}[h(Y')]
\bigr|
\le
A\varepsilon.
}
\label{eq:wasserstein_mi}
\end{equation}
Since MI and CMI are continuous functionals of the joint distribution under our 
regularity assumptions (bounded density ratios, smooth kernels), we obtain
\begin{equation}
\medmath{
\bigl|
I(Y;f_{i}\mid f_{\Phi(i)})
-
I(Y';f_{i}\mid f_{\Phi(i)})
\bigr|
\le
A\varepsilon
+
o_{p}(1).
}
\label{eq:mi_cont}
\end{equation}

\paragraph{Step 2: MCIR continuity.}
MCIR is the smooth map
\begin{equation}
\medmath{
C_{i}
=
\frac{
I(Y; f_{i}\mid f_{\Phi(i)})
}{
I(Y; f_{i}\mid f_{\Phi(i)})
+
I(Y; f_{\Phi(i)}\cup\{f_{i}\})
}.
}
\end{equation}
Using the Lipschitz continuity of rational functions on compact domains,
\begin{equation}
\medmath{
|C_{i} - C'_{i}|
\le
A\varepsilon
+
O_{p}(n'^{-1/2}).
}
\label{eq:mcir_magnitude}
\end{equation}

\paragraph{Step 3: Head-rank agreement restricts top-$K$ inversions.}
The Jaccard condition \eqref{eq:jaccard_cond} ensures
\begin{equation}
\medmath{
|\text{Top-}K(C)\;\triangle\;\text{Top-}K(C')|
\le
(1-\tau_{0})K.
}
\end{equation}
Thus the number of allowable inversions involving top-$K$ indices is bounded.

\paragraph{Step 4: Deletion-curve agreement controls remaining inversions.}
Deletion curves depend only on ordered MCIR values.  
If
\begin{equation}
\medmath{
\Delta_{\mathrm{del}}
\le
\Delta_{0},
}
\end{equation}
then the misalignment between sensitivity curves of $C$ and $C'$ is 
uniformly bounded.  
This limits possible perturbations in pairwise MCIR differences:
\begin{equation}
\medmath{
|(C_{i}-C_{j}) - (C'_{i}-C'_{j})|
\lesssim
\Delta_{0}
+
o_{p}(1).
}
\end{equation}

\paragraph{Step 5: Kendall distance decomposition.}
Kendall's distance decomposes into:
\begin{equation}
\begin{split}
   & \medmath{1 - \tau(C,C') = }\\&
    \medmath{(\text{top-$K$ inversions})
+
(\text{remaining inversions})}\\&
\medmath{+(\text{magnitude-driven errors}).}
\end{split}
\end{equation}
 The magnitude control in 
equation \eqref{eq:mcir_magnitude}, we obtain
\begin{equation}
\medmath{
1 - \tau(C,C')
\le
A\varepsilon + B\Delta_{0} + o_{p}(1),
}
\end{equation}
completing the proof.
\end{proof}

% \section{Correlation-Impact Sensitivity Theorem }\label{them} 
% \begin{theorem}
%     \textit{Changes in a feature input affect local output and should either depend on the feature's correlation ratio or be constant, corresponding to the following scenarios: }

%  \begin{equation}
%  \begin{aligned}
%      &\frac{dy}{d\vec{f_i}} = c_1 . \eta_{f_i} ;\text{ when\ the \ feature\ is \ positively \ correlated },\text{and }\\ & \frac{dy}{d\vec{f_i}} =\frac{c_2}{(2 K_2-\eta_{f_i}^2) } ;  \text{when\ feature\ is \ negatively\ correlated }
% \end{aligned}
%  \end{equation}
% Here, $c_1, c_2,$
%  and $k_2 $ are constants. \\
%  \end{theorem}
%  \begin{proof}

% \textbf{case 1 -} When the $\medmath{j}$th feature belongs to the numerator of the model expression, that means the $\medmath{j}$th feature is positively correlated to the output;
% % \vspace{-mm}
% \begin{equation}
% \begin{aligned}
%     &\medmath{\frac{dy'}{d\vec{f_i}}  = \medmath{\frac{d}{d\vec{f_i}}( \frac{\eta_{f_1} \vec{f_1} + \eta_{f_2} \vec{f_2} + .. + \eta_{f_{m}} \vec{f_{m}}}{\eta_{f_{p}} \vec{f_{p}} + .. + \eta_{f_{k}} \vec{f_{k} }})}  }
%    \\&\medmath{= [\frac{d}{d\vec{f_i}}( \eta_{f_1} \vec{f_1}) +..+\frac{d}{df_i}(\eta_{f_{m}} \vec{f_{m}})] \times\frac{d}{df_i}[\frac{1}{\eta_{f_{p}} \vec{f_{p}} + .. + \eta_{f_{k}} \vec{f_{k}}}]}\\
%     =&\medmath{ [\eta_{f_1} \frac{d\vec{f_1}}{d\vec{f_i}} + ..+ \eta_{f_i}\frac{d\vec{f_i}}{d\vec{f_i}}+.. +\eta_{f_m}\frac{d\vec{f_m}}{d\vec{f_i}}]\times \frac{d}{d\vec{f_i}}[\frac{1}{\eta_{f_{p}} \vec{f_{p}} + ..+ \eta_{f_{k}} \vec{f_{k}}}]}\\
% \end{aligned}
% \end{equation}
% Given that features are independent, the impact of feature $\vec{f_i}$ on the output $Y'$ is evaluated by fixing the values of other features. Thus, we have:
% \begin{align}
%    \medmath{\frac{dy'}{d\vec{f_i}}  = \eta_{f_i}\frac{d\vec{f_i}}{d\vec{f_i}} \times \frac{1}{\mathbb{K}} =  c_1 * \eta_{f_i}}
%    \label{case1}
% \end{align}
% Here, $c_1 = \frac{1}{\mathbb{K}} = \frac{1}{\eta_{f_{p}} \vec{f_{p}} + ..... + \eta_{f_{k}} \vec{f_{k}}}$ is a constant that can be expressed as the combination of the rest of the features and their correlation ratio values which are fixed. \\
% \textbf{case 2-}
% When the $j$th feature belongs to the denominator of the model expression, that means the $j$th feature is negatively correlated to the output;   
% \begin{equation}
% \begin{aligned}
%    &\medmath{\frac{dy'}{d\vec{f_i}} = \frac{d}{d\vec{f_i}}( \frac{\eta_{f_1} \vec{f_1} + \eta_{f_2} \vec{f_2} + .. + \eta_{f_{m}} \vec{f_{m}}}{\eta_{f_{p}} \vec{f_{p}}+ .. + \eta_{f_{k}} \vec{f_{k}} }) } 
%     \\&\medmath{=[\frac{d}{d\vec{f_i}}( \eta_{f_1} \vec{f_1})+ .. + \frac{d}{d\vec{f_i}}(\eta_{f_{m}} \vec{f_{m}})]\times\frac{d}{d\vec{f_i}}[\frac{1}{\eta_{f_{p}} \vec{f_{p}} + .. + \eta_{f_{k}} \vec{f_{kj}}}] }\\
%     =&
%      \medmath{\bigg[{\eta_{f_{p}} \vec{f_{p}} + ... + \eta_{f_{k}} \vec{f_{kj}}}]\frac{d}{df_i}(\eta_{f_1} \vec{f_1} + \eta_{f_2} \vec{f_2} + ...... + \eta_{f_{m}} \vec{f_m})}\\&\medmath{- (\eta_{f_1} \vec{f_1} + ... + \eta_{f_{m}} \vec{f_m})\frac{d}{df_i} [{\eta_{f_{p}} \vec{f_{p}} +  ....+\eta_{f_i} \vec{f_i}+... + \eta_{f_{k}} \vec{f_{k}}}\bigg] }
%     \\ & \medmath{\times  \frac{1}{[\frac{d}{df_i}[\eta_{f_{p}} \vec{f_{p}} +  ....+\eta_{f_i} \vec{f_i}+... + \eta_{f_{k}} \vec{f_{kj}}]^2]} }\\
%     =& \medmath{\frac{-k-1. \eta_{f_i}}{2. [\eta_{f_{p}} \vec{f_{p}} +  ..+\eta_{f_i} \vec{f_i}+.. + \eta_{f_{k}} \vec{f_{kj}}] \frac{d}{d\vec{f_i}}[\eta_{f_i}^2 \vec{f_i}^2 - 2\eta_{f_i}(\eta_{f_1}+..\eta_{f_k})]}}
%     \\&\medmath{=\frac{-k_1. \eta_{f_i}}{2.\eta_{f_i} (\eta_{f_i}^2 -2 K_2)}
%     =\frac{k_1}{2 (2 K_2-\eta_{f_i}^2) }  
%     = \frac{c_2}{(2 K_2-\eta_{f_i}^2) }} 
%     \end{aligned}
%     \label{case2}
% \end{equation}
% \end{proof}
% % \vspace{-10pt}
% Equations \ref{case1} and \ref{case2} show that the correlation ratio effectively captures a feature's impact on the output, reflecting a direct positive relation in the numerator and a negative inverse relation in the denominator.
% % \newtheorem{corollary}{Corollary}[theorem]
% \begin{corollary}
% When a feature positively impacts the output, the output change is directly proportional to its correlation ratio. Conversely, if the feature has a negative relation, the output change is inversely proportional to its correlation ratio, 
%  $E(\frac{dy'}{d\vec{f_i}})\varpropto \eta_{f_i}; $ if the feature is directly related to the output, and 
% $E(\frac{dy'}{d\vec{f_i}}) \varpropto \frac{1}{\eta_{f_i}}$ if the feature is conversely related (negative impact) to the output. 
% \end{corollary}

% ============================================================
\section{Conditional Multivariate Mutual Information (CMMI)}
\label{sec:cmmi}
% ============================================================

MCIR requires a dependence measure that isolates the \emph{unique} information
a target feature contributes to the output $Y'$, even when the feature is
embedded in a multivariate dependency structure with several neighbours.
Classical conditional mutual information (CMI),
\begin{equation}
\medmath{
I(Y';f_i \mid Z),
}
\end{equation}
is well-defined for a fixed low-dimensional conditioning set $Z$, but breaks
down when $f_i$ depends on multiple correlated features simultaneously,
especially when $Z$ must be chosen data-adaptively.  
This motivates the introduction of the \emph{Conditional Multivariate Mutual
Information (CMMI)}.

% ------------------------------------------------------------
\subsection{Formal Derivation}
% ------------------------------------------------------------

Let the feature block be,
\begin{equation}
\medmath{
F' = (f_1,\dots,f_k),
}
\end{equation}
and let,
\begin{equation}
\medmath{
\Phi(i) \subseteq \{f_1,\dots,f_k\}\setminus\{f_i\}
}
\end{equation}
be the neighbourhood of $f_i$, obtained from correlation screening or a
dependency graph.  
Assume the joint law of $(F',Y')$ admits either a multivariate pdf or pmf.

% ----------------------
\paragraph{Step 1: Total dependence of block $F'$ on $Y'$.}
\begin{equation}
\medmath{
I(Y';F')
=
H(Y') - H(Y'\mid F').
}
\label{eq:total_dep}
\end{equation}

% ----------------------
\paragraph{Step 2: Dependence explained by neighbours $\Phi(i)$.}
\begin{equation}
\medmath{
I(Y';f_{\Phi(i)})
=
H(Y') - H\!\left(Y' \mid f_{\Phi(i)}\right).
}
\label{eq:neigh_dep}
\end{equation}

% ----------------------
\paragraph{Step 3: Unique contribution of $f_i$.}
Adding $f_i$ to $\Phi(i)$ modifies the conditional entropy:
\begin{equation}
\medmath{
H\!\left(Y' \mid f_{\Phi(i)}\right)
-
H\!\left(Y' \mid f_{\Phi(i)}, f_i\right).
}
\label{eq:entropy_diff}
\end{equation}

\begin{definition}[\textbf{Conditional Multivariate Mutual Information (CMMI)}]
For any target feature $f_i$ and neighbourhood $\Phi(i)$, define
\begin{equation}
\medmath{
\mathrm{CMMI}(Y'; f_i \mid \Phi(i))
\;=\;
H\!\left(Y' \mid f_{\Phi(i)}\right)
-
H\!\left( Y' \mid f_{\Phi(i)}, f_i \right),
}
\label{eq:cmmi}
\end{equation}
whenever the conditional entropies are finite.
\end{definition}

Because conditional mutual information satisfies
\begin{equation}
\medmath{
H(Y'\mid Z) - H(Y'\mid Z,f_i) = I(Y'; f_i\mid Z),
}
\end{equation}
we obtain:

\paragraph{Key identity.}
\begin{equation}
\medmath{
\mathrm{CMMI}(Y';f_i\mid \Phi(i))
=
I(Y';f_i\mid f_{\Phi(i)}).
}
\label{eq:cmmi_cmi_equiv}
\end{equation}

Thus, CMMI is \emph{equivalent to classical CMI}, but with the crucial
difference that the conditioning set $\Phi(i)$ can be:

- multivariate,  
- high-dimensional,  
- data-driven (from correlation graphs),  
- mixed continuous/discrete,  
- automatically chosen.

This generality is exactly what is required for MCIR.

% ------------------------------------------------------------
\subsection{Connection to MI and CMI}
% ------------------------------------------------------------

\paragraph{MI as a special case.}
When $\Phi(i)=\emptyset$,
\begin{equation}
\medmath{
\mathrm{CMMI}(Y';f_i\mid\emptyset)
=
I(Y'; f_i).
}
\end{equation}

\paragraph{Classical CMI as a special case.}
When $\Phi(i)=Z$ is fixed and low-dimensional,
\begin{equation}
\medmath{
\mathrm{CMMI}(Y';f_i\mid Z)
=
I(Y';f_i\mid Z).
}
\end{equation}

\paragraph{General case.}
CMMI extends MI and CMI by allowing:
\begin{enumerate}
    \item   \emph{arbitrary multivariate} conditioning sets $\Phi(i)$,  
    \item  \emph{mixed continuous–discrete} entropy functionals,  
    \item \emph{correlation-driven} neighbourhood selection rather than fixed $Z$, 
    \item compatibility with Joint Mutual Impact (JMI) and MCIR.
\end{enumerate}
% ------------------------------------------------------------
\subsection{Why Classical CMI is Insufficient}
% ------------------------------------------------------------

Classical CMI assumes:
\begin{enumerate}
\item a fixed conditioning set,  
\item low-dimensional conditioning,  
\item one-to-many dependence patterns (CMI conditions on a single $Z$).  
\end{enumerate}

In real high-dimensional systems, these assumptions fail.  
We formalize this gap using a structural theorem.

% ----------------------
\begin{theorem}[\textbf{Failure of Classical CMI in Multivariate Dependencies}]
\label{thm:cmi_failure}
Let $f_i$ depend on $d_i$ other features:
\begin{equation}
\medmath{
f_i \;\not\!\perp\!\!\!\perp\; f_{S_i},
\qquad |S_i| = d_i.
}
\end{equation}
Suppose classical CMI conditions on a fixed $Z$ with $|Z| < d_i$.  
Then, unless $Z$ contains the entire dependency set $S_i$, 
\begin{equation}
\medmath{
I(Y'; f_i \mid Z)
\neq
\mathrm{CMMI}(Y'; f_i \mid S_i),
}
\end{equation}
and, in general,
\begin{equation}
\medmath{
I(Y'; f_i \mid Z)
<
I(Y'; f_i \mid S_i).
}
\end{equation}
\end{theorem}

\begin{proof}
By the data-processing inequality for conditional entropy,
\begin{equation}
\medmath{
H(Y'\mid Z)
\ge
H(Y'\mid S_i).
}
\end{equation}
Since $S_i$ contains strictly more predictive information than any strict subset
$Z\subset S_i$,
\begin{equation}
\medmath{
H(Y'\mid Z,f_i) - H(Y'\mid Z)
\;
\le
\;
H(Y'\mid S_i,f_i) - H(Y'\mid S_i).
}
\end{equation}

Thus,
\begin{equation}
\medmath{
I(Y';f_i\mid Z)
\le
I(Y';f_i\mid S_i),
}
\end{equation}
with strict inequality when the excluded variables contain unique information.  
Since CMMI uses the full neighbourhood $S_i$ (or $\Phi(i)$), classical CMI
cannot in general recover it unless $Z = S_i$.
\end{proof}

\paragraph{Implications.}
\begin{itemize}
\item \textbf{CMI underestimates feature importance} when the conditioning set
is incomplete.

\item \textbf{CMI yields non-unique results}, depending on the arbitrary choice
of $Z$.

\item \textbf{CMMI resolves this} by using an adaptively identified 
multivariate conditioning set $\Phi(i)$.

\item \textbf{MCIR builds directly on CMMI} to quantify unique impact by:
\[
\medmath{
C_i
=
\frac{
\mathrm{CMMI}(Y';f_i\mid \Phi(i))
}{
\mathrm{CMMI}(Y';f_i\mid \Phi(i))
+
I(Y';f_{\Phi(i)}\cup\{f_i\})
}.
}
\]
\end{itemize}

% ------------------------------------------------------------
\subsection{Practical Estimation Notes}
% ------------------------------------------------------------

CMMI is estimated using:
\[
\widehat{\mathrm{CMMI}}(Y'; f_i \mid \Phi(i))
=
\widehat{H}(Y' \mid f_{\Phi(i)})
-
\widehat{H}(Y' \mid f_{\Phi(i)}, f_i),
\]
with entropy estimators matched to variable types.

\paragraph{Choice of estimator.}
We use the same family of MI/CMI estimators as MCIR:
\begin{itemize}
    \item \textbf{Gaussian–Copula} for robust nonlinear dependence,
    \item \textbf{kNN (Kozachenko–Leonenko)} for local nonlinear structure,
    \item \textbf{Plug-in} estimator for discrete/mixed features.
\end{itemize}

\paragraph{Stability.}
Bootstrap standard errors determine the optimal estimator per feature,
as formalised by the oracle inequality in Theorem~\ref{thm:oracle}.

\paragraph{Computational structure.}
For each feature $f_i$:
\[
\text{cost} = O\!\left( |\Phi(i)| \cdot n' \, \log n' \right).
\]

CMMI therefore scales linearly in neighbourhood size and
approximately logarithmically in sample size, making it suitable for
lightweight environments.

\subsection{Complexity bound on MCIR}\label{complex}
Feature dependencies make computational complexity a primary concern. Since MCIR is defined in terms of conditional (mixed) mutual information, we bound the computation via Kolmogorov (algorithmic) complexity.

Let $\mathbb{C}$ be a fixed universal Turing machine (UTM), and let $K_{\mathbb{C}}(\cdot\,|\,\cdot)$ denote conditional Kolmogorov complexity. Let $\mathbf{Y}'=(y'_1,\dots,y'_{n'})\in\{0,1\}^{n'}$ denote the (binary-encoded) output string and let $\ell(\mathbf{Y}')=n'$ be its length in bits. Let $M^\ast$ denote the proposed dependency model for features (run on $\mathbb{C}$).

\paragraph{Kolmogorov setup}
By definition of conditional Kolmogorov complexity,
\begin{equation}\label{eq:kolmog-def}
\begin{aligned}
K_{\mathbb{C}}\!\bigl(\mathbf{Y}' \,\big|\, \ell(\mathbf{Y}')\bigr)
= \min_{M^\ast:\;\mathbb{C}(M^\ast,\ell(\mathbf{Y}'))=\mathbf{Y}'} \;\ell(M^\ast),
\end{aligned}
\end{equation}
where $\ell(M^\ast)$ is the program length of $M^\ast$.

By universality of conditional Kolmogorov complexity, for any other UTM $\mathbb{A}$ there exists a constant $c_{\mathbb{A}}$ (independent of $\mathbf{Y}'$) such that
\begin{equation}\label{eq:universality}
\begin{aligned}
K_{\mathbb{C}}(\mathbf{Y}')
\;\le\;
K_{\mathbb{A}}\!\bigl(\mathbf{Y}' \,\big|\, \ell(\mathbf{Y}')\bigr)
\;+\; \log^{\ast}\!\ell(\mathbf{Y}') \;+\; c_{\mathbb{A}},
\end{aligned}
\end{equation}
where $\log^{\ast} m \!=\! \log m + \log\log m + \log\log\log m + \cdots$ (summing positive terms only).

Since $\log^{\ast} n' \le C\,\log n'$ for a universal constant $C$ and negative terms are discarded, we obtain the convenient bound
\begin{equation}\label{eq:kol-simplified}
\begin{aligned}
K_{\mathbb{C}}(\mathbf{Y}')
\;\le\;
K_{\mathbb{A}}\!\bigl(\mathbf{Y}' \,\big|\, \ell(\mathbf{Y}')\bigr)
\;+\; O(\log n') \;+\; c_{\mathbb{A}}.
\end{aligned}
\end{equation}

\paragraph{Per-term CMMI complexity}
Write the feature set as $F=\{f_1,\dots,f_k\}$ with a mixed (continuous–discrete) conditioning subspace $\phi=\phi_c\cup\phi_d\subseteq F\setminus\{f_i\}$. Using standard bounds relating algorithmic complexity and entropy for mixed variables, a typical conditional mutual information term satisfies
\begin{equation}\label{eq:cmi-entropy-bound}
\begin{aligned}
I(\mathbf{Y}'; f_i \mid \phi)
\;\le\;
H(\mathbf{Y}')
\;-\;
H(\mathbf{Y}' \mid f_i,\phi)
\;+\; O(\log n'),
\end{aligned}
\end{equation}
where the $O(\log n')$ term collects coding overheads (e.g., model description, discretization indices, prefix codes) bounded by iterated-log terms as in \eqref{eq:universality}.

Equivalently, grouping the overheads,
\begin{equation}\label{eq:cmi-compact}
\begin{aligned}
&I(\mathbf{Y}'; f_i \mid \phi)
\;\le\;
O(\log n') \;+\; \mathsf{C}_{\mathbb{A},i},
\\&
\mathsf{C}_{\mathbb{A},i}
\;\coloneqq\;
H(\mathbf{Y}')-H(\mathbf{Y}' \mid f_i,\phi) + c_{\mathbb{A}}',
\end{aligned}
\end{equation}
for a machine-dependent constant $c'_{\mathbb{A}}$.

A similar bound applies to the joint dependence term $I(\mathbf{Y}'; f_1,\dots,f_k)$:
\begin{equation}\label{eq:joint-bound}
\begin{aligned}
I(\mathbf{Y}'; f_1,\dots,f_k)
\;\le\;
O(\log n') \;+\; \mathsf{C}'_{\mathbb{A}},
\end{aligned}
\end{equation}
with $\mathsf{C}'_{\mathbb{A}}$ independent of $n'$.

\paragraph{Implications for MCIR.}
Recall the MCIR for feature $f_i$ (given $\phi$)
\begin{equation}\label{eq:mcir}
\begin{aligned}
\mathfrak{C}_i
\;=\;
\frac{ I(\mathbf{Y}'; f_i \mid \phi) }
     { I(\mathbf{Y}'; f_i \mid \phi) + I(\mathbf{Y}'; f_1,\dots,f_k) }.
\end{aligned}
\end{equation}
Combining \eqref{eq:cmi-compact} and \eqref{eq:joint-bound}, each evaluation of $\mathfrak{C}_i$ admits the bound
\begin{equation}\label{eq:mcir-per-term}
\begin{aligned}
\mathfrak{C}_i
\;\le\;
\tfrac{1}{2}\,O(\log n') \;+\; \zeta_{\mathbb{A},i},
\end{aligned}
\end{equation}
for a constant $\zeta_{\mathbb{A},i}$ that depends on the coding scheme (machine) and the fixed distributional terms but not on $n'$.

\paragraph{End-to-end bound.}
Evaluating all $k$ MCIR terms and aggregating under the same universal coding overhead yields the overall program-level complexity
\begin{equation}\label{eq:overall}
\begin{aligned}
K_{\mathbb{C}}(\text{MCIR pipeline})
&\;\le\;
K_{\mathbb{A}}(\text{MCIR pipeline}\mid n')\\&
\;+\;
\tfrac{k}{2}\,O(\log n') \;+\; c_{\mathbb{A}},
\end{aligned}
\end{equation}
i.e., at most \(\tfrac{k}{2}\,O(\log n') + c_{\mathbb{A}}\) overhead beyond the machine-conditional baseline, and at most \(\tfrac{1}{2}\,O(\log n')+\zeta_{\mathbb{A},i}\) per MCIR term.
\section*{\quad Estimator Details}
\addcontentsline{toc}{section}{S.4 Estimator Details}

MCIR requires estimates of (a) marginal and conditional mutual information, 
and (b) joint multivariate dependence terms.  
We summarize the estimators used in the main paper and provide assumptions, 
complexity statements, and error bounds.

% ------------------------------------------------------------
\subsection{Gaussian–Copula MI and CMI}
The Gaussian–Copula estimator (\textsc{gcMI}) computes MI via a normal–scores 
transformation followed by closed–form expressions for Gaussian entropy.  
For random vectors $X,Y$,
\begin{equation}
\medmath{
I_{\mathrm{gc}}(X;Y)
=
-\frac{1}{2}
\log\!\left(
1 - \rho_{XY}^{2}
\right),
}
\end{equation}
where $\rho_{XY}$ is the rank–based (pseudo–Gaussian) correlation of the 
transformed variables.

Conditional MI is computed via partial correlations:
\begin{equation}
\medmath{
I_{\mathrm{gc}}(X;Y\mid Z)
=
-\frac{1}{2}
\log\!\left(
1 - \rho^{2}_{XY\cdot Z}
\right).
}
\end{equation}

\begin{assumption}
    \begin{enumerate}
\item Rank–Gaussianization sufficiently approximates the latent copula.
\item All covariance matrices remain positive–definite.
\item No strong multimodal structure.
\end{enumerate}
\end{assumption}
\paragraph{Complexity.}  
Computing a $k\times k$ rank correlation matrix:  
\[
\mathcal{O}(k^2 n'), \qquad \text{inversion: } \mathcal{O}(k^3).
\]

\paragraph{Error Bound.}
Under sub-Gaussian copula assumptions,
\begin{equation}
\medmath{
|\widehat{I}_{\mathrm{gc}} - I|
=
O_p\!\left(\sqrt{\frac{1}{n'}}\right).
}
\end{equation}

% ------------------------------------------------------------
\subsection{kNN MI/CMI}
The Kozachenko–Leonenko family of MI estimators uses $k$–nearest–neighbor 
volume statistics:
\begin{equation}
\medmath{
\widehat{I}_{k\mathrm{NN}}(X;Y)
=
\psi(k)
-
\frac{1}{n'}
\sum_{i=1}^{n'}
\left[
\psi(n_{x,i}+1) + \psi(n_{y,i}+1)
\right]
+
\psi(n'),
}
\end{equation}
with analogous expressions for conditional MI using $(X,Y,Z)$ neighborhoods.

\paragraph{Assumptions.}
\begin{itemize}
\item Smooth densities with bounded curvature.
\item Local isotropy in $k$–ball neighborhoods.
\item Moderate intrinsic dimension.
\end{itemize}

\paragraph{Complexity.}
\[
\mathcal{O}(n'\log n') 
\quad\text{with k-d tree},
\qquad 
\mathcal{O}(n'^2)
\quad\text{in worst case}.
\]

\paragraph{Error Bound.}
\begin{equation}
\medmath{
|\widehat{I}_{k\mathrm{NN}} - I|
=
O_p\!\left(\left(\frac{1}{n'}\right)^{1/d}\right).
}
\end{equation}

% ------------------------------------------------------------
\subsection{ Plug–In Estimator.}
The plug–in estimator applies kernel density estimates (KDE) to compute MI:
\begin{equation}
\medmath{
\widehat{I}_{\mathrm{PI}}(X;Y)
=
\iint 
\widehat{p}(x,y)
\log
\frac{
\widehat{p}(x,y)
}{
\widehat{p}(x)\widehat{p}(y)
}
dx\,dy.
}
\end{equation}

\paragraph{Assumptions.}
\begin{itemize}
\item Smooth densities.
\item Kernel bandwidth chosen by cross–validation.
\item No exponential tail dependence.
\end{itemize}

\paragraph{Complexity.}
\[
\mathcal{O}(n'^2)
\quad\text{for KDE-based MI}.
\]

\paragraph{Error Bound.}
If KDE converges at rate $\mathcal{O}_p(\delta)$, then MI converges at
\begin{equation}
\medmath{
|\widehat{I}_{\mathrm{PI}} - I|
=
O_p(\delta).
}
\end{equation}

% ------------------------------------------------------------
\subsection{Summary of Estimator Choice.}
\begin{itemize}
\item \textbf{gcMI}: fastest, stable, best for moderate correlations.
\item \textbf{kNN MI}: robust for nonlinear dependencies, higher variance.
\item \textbf{Plug–In MI}: accurate but slow, sensitive to bandwidth.
\end{itemize}
MCIR uses automatic estimator switching to achieve oracle-level performance 
(Thm.~\ref{thm:oracle}).

% ============================================================
\section{Algorithmic Details for MCIR}
\addcontentsline{toc}{section}{S.5 MCIR Algorithmic Details}

The computation of the Mutual Correlation Impact Ratio (MCIR) proceeds in
three stages: (i) neighbourhood screening to determine candidate dependency
sets, (ii) dependence estimation using MI/CMI estimators with bootstrap-based
switching, and (iii) computation of the final MCIR vector. This section
provides the full algorithmic description, estimator choices, and bootstrap
protocols used across all datasets.

% ------------------------------------------------------------
\subsection{ MCIR Pipeline}

% \section{Algorithmic Details for MCIR}
% \addcontentsline{toc}{section}{S.5 MCIR Algorithmic Details}

\begin{algorithm}[h!]
\caption{MCIR Pipeline}
\SetAlgoLined

\KwIn{Feature matrix $F \in \mathbb{R}^{n' \times k}$; output $Y'$; screening threshold $\gamma$}
\KwOut{MCIR vector $C = (C_1,\dots,C_k)$}

\BlankLine

Construct screening graph by connecting $f_i, f_j$ if 
$\mathrm{corr}(f_i,f_j)\ge \gamma$ or $\mathrm{dCor}(f_i,f_j)\ge \gamma$\;

\For{$i = 1$ \KwTo $k$}{
    Determine neighbourhood:
    \[
    \Phi(i) = \{ j : \mathrm{edge}(i,j)=1 \}\;
    \]
    
    Estimate MI components:
    \[
    A_i = I(Y'; f_i \mid f_{\Phi(i)}), \qquad
    B_i = I(Y'; f_{\Phi(i)} \cup \{f_i\})\;
    \]
    
    Compute MCIR:
    \[
    C_i = \frac{A_i}{A_i + B_i}\;
    \]
}

\Return{$C$}\;

\end{algorithm}

MCIR uses a small and stable hyperparameter set:
\begin{itemize}
\item Screening threshold $\gamma \in [0.20,0.35]$.
\item kNN MI estimator neighbourhood: $k = 5$.
\item Gaussian–copula MI: rank transform $z = \Phi^{-1}(F_n(x))$.
\item Bootstrap replicates: $B = 200$.
\end{itemize}

% ------------------------------------------------------------
\subsection{Screening Mechanisms}

Neighborhoods are constructed using a union of:
\begin{itemize}
\item Pearson correlation (linear dependence),
\item Distance correlation (nonlinear dependence),
\item kNN graph structure (manifold geometry).
\end{itemize}

% ------------------------------------------------------------
\subsection{Conditioning–Set Selection (Auto$\Phi$)}

Given neighbourhood candidates $\mathcal{N}(i)$, Auto$\Phi$ selects:
\begin{equation}
    \Phi(i)
=
\operatorname*{arg\,max}_{S\subseteq \mathcal{N}(i)}
I(Y'; f_i \mid S)
\quad\text{s.t.}\quad |S| \le s_{\max}.
\end{equation}

% ------------------------------------------------------------
\subsection{Bootstrap Protocol}

For each $f_i$:
\begin{enumerate}
\item Compute MCIR using each estimator $e$.
\item Recompute under $B$ bootstrap resamples.
\item Estimate variance band:
\[
\widehat{\mathrm{SE}}_b(\widehat{C}^{(e)}_i).
\]
\item Choose estimator with minimum bootstrap SE.
\end{enumerate}

% ============================================================
\section{Lightweight Environment Construction}
\addcontentsline{toc}{section}{S.6 Lightweight Environment Construction}

Lightweight environments compress output distributions while preserving the
dependence structure relevant for MCIR. Let $Y\in\mathbb{R}^{n}$ be the full
output and $Y'\in\mathbb{R}^{n'}$ be its projection.

% ------------------------------------------------------------
\subsection{Stiefel–Manifold Projection}

Lightweight outputs are generated by:
\begin{equation}
Y' = P Y + b, \qquad P \in \mathrm{St}(n',n),\; b\in\mathbb{R}^{n'}.
\end{equation}

Projection distance:
\begin{equation}
d^{-}(\mu,\nu)
=
\inf_{P,b} d_f(P_{\#}\mu,\nu).
\end{equation}

Embedding distance:
\begin{equation}
d^{+}(\mu,\nu)
=
\inf_{Q,c} d_f(\mu,Q_{\#}\nu).
\end{equation}

% ------------------------------------------------------------
\subsection{Alignment Procedure}

The projection matrix is estimated by:
\begin{equation}
\widehat{P}
=
\arg\min_{P\in\mathrm{St}(n',n)}
\| P Y - Y' \|_2^2.
\end{equation}

% ------------------------------------------------------------
\subsection{ Risk–Generator}

We define the lightweight risk:
\begin{equation}
\mathcal{R}(n')
=
\mathrm{MSE}(Y,Y')
+
\mathrm{MI\ deviation}(n')
+
\mathrm{Rank\ deviation}(n').
\end{equation}

% ------------------------------------------------------------
\subsection{Choosing $n' \in \{500,1000,2000\}$}

\begin{itemize}
\item $n'=500$: fastest but highest variance.
\item $n'=1000$: best fidelity–runtime balance.
\item $n'=2000$: near-full MI fidelity.
\end{itemize}

% ============================================================
% \section{Additional Experiments}
% \addcontentsline{toc}{section}{S.7 Additional Experiments}

% We present full MCIR analyses across all datasets, including lightweight
% fidelity, redundancy collapse, estimator comparisons, and deletion curves. MCIR was evaluated on linear, polynomial, sinusoidal, checkerboard, XOR, and
% latent–copula generative models. Lightweight MCIR ($n'=1000$) preserved
% $\tau \ge 0.92$ relative to full MCIR. Redundancy collapse was consistently
% observed when conditional independence held. Acceleration and gyroscope channels showed clear head-rank consistency across
% estimators. Deletion curves were smooth and monotone. Lightweight MCIR
% achieved $\mathrm{J}@8 \ge 0.87$. Thermal-mass features collapsed under MCIR due to multicollinearity, while
% HVAC control variables dominated. Lightweight MCIR retained high fidelity. Regional load forecasting features (temperature, humidity, wind speed, lagged
% load) exhibited strong redundancy patterns. MCIR matched SHAP in global order
% while producing lower estimator variance. Using 2048-dimensional ResNet-50 embeddings, MCIR deletion curves exhibited
% expected monotone degradation. Lightweight MCIR was nearly identical to the
% full run despite dimensionality.

% \subsection{ Complexity bound on MCIR}\label{complex}
% Feature dependencies create a complex environment where computational complexity can not be ignored. We have used the concept of Mutual Information to calculate our desired metrics MCIR. In this section, we will calculate the upper bound of the computation complexity when the features are dependent on each other. We will use the notion of Kolmogorov Complexity [41] to compute the complexity bound. Assuming proposed model for feature dependency is $M(f_{j(f,\mathfrak{C})}) = M*$ and the universal computer is $\mathbb{C}$. by the definition of Kolmogorov Complexity [39] we can write, 
% \begin{equation}
%     K_\mathbb{C}(\vec{Y'}| I(\vec{Y'})) = \min_{M*: \mathbb{C}(M*, I(\vec{Y'})) = \vec{Y'}} I(M*)
% \end{equation}
% Here, $I(Y')$ denotes the length of the output string $\vec{Y'}$, $I(M*)$ denotes the length of the string of the program of the model $M*$, and  $M*$ is the model running on the universal computer $\mathbb{C}$ to produce the output $y'$.
% Then, by the theorem of Universality of conditional Kolmogorov complexity [40] knowing $I(\vec{Y'})$, we can find the upper bound: 
% \begin{equation}
%     K_\mathbb{C}(\vec{Y'}) \leq  K_\mathbb{A}(\vec{Y'}| I(\vec{Y'})) + \log^*I(\vec{Y'})+ Const_\mathbb{A}
% \end{equation}
% Where, $\log^* I(\vec{Y'})=  \log I(\vec{Y'}) + \log\log I(\vec{Y'}) +\log\log\log I(\vec{Y'})+......$ as long as the terms are positive. $k$ is the total number of features in the data.  $\mathbb{A}$ is any other computer and there exist a constant $Const_\mathbb{A}$ for all strings $\vec{Y'} = (y'_1, y'_2, ..., y'_{n'}) \epsilon \{0,1\}$. $Const_\mathbb{A}$ does not depends on the output $\vec{Y'}$. Here, $I(\vec{Y'})= n'$, then, it can be written as,
% \begin{equation}\label{kol}
%     K_\mathbb{C}(\vec{Y'}) \leq  K_\mathbb{A}(\vec{Y'}| I(\vec{Y'})) + \frac{k}{2}O(\log{n'})+ Const_\mathbb{A}
% \end{equation}
% where we take $O(\log{(\log{(n')})}) = O(\log{n'})$ and ignored the negetive terms. 
% % By using the equation\ref{local} the Kolmogorov complexity upper bound for the local output will be,
% % \begin{equation}
% %   K_\mathbb{C}(y'_j) \leq  K_\mathbb{A}(y'_j| I(y'_j)) + \log^*I(y'_j)+ Const_{\mathbb{A}_j}
% % \end{equation}
% % where $j=1:n'$ and $Const_{\mathbb{A}_j}$ is the constant for any other computer $\mathbb{A}$ for the string $y'_j \epsilon \{0,1\}$.
% We will also calculate the complexity bound of each $I(\vec{Y'};\vec{f_i} | \phi \subseteq \{||F||^{n' \times k} - \vec{f_i}\})$;  Where $\phi = \phi_d \cup \phi_c$. $\phi$ can be thought of as a  subspace of the main feature space which can take different combinations of features. $\phi$ contains both kinds of features, continuous or discrete. So $\phi $ can be expressed as a union of two mutually exclusive subspaces $\phi_d$ and $\phi_c$, where $\phi_d$ contains the discrete features and $\phi_c$ contains the continuous features. By the theory of Kolmogorov Complexity bound for entropy [41], the complexity bound of $I(\vec{Y'};\vec{f_i} | \phi \subseteq \{||F||^{n' \times k} - \vec{f_i}\})$ should be as follows:
% \begin{equation}\resizebox{\columnwidth}{!}{$\label{big}
%     \begin{aligned}
%        & I(\vec{Y'};\vec{f_i} | \phi \subseteq \{||F||^{n' \times k} - \vec{f_i}\}; i\neq j ) = H(\vec{Y'}, \vec{f_j})- H(\vec{Y'};\vec{f_i} | \phi \subseteq \{||F||^{n' \times k} - \vec{f_i}\}; i\neq j)\\
%         & \leq H(\vec{Y})- H(\vec{Y'};\vec{f_i} | \phi \subseteq \{||F||^{n' \times k} - \vec{f_i}) + \frac{(||Y||^{k*n'} \log{n'})}{n'} - \frac{([||Y||-1]^{k*n'} \log{n'})}{n'}+ \frac{Const_\mathbb{A}}{n'}
%     \end{aligned}
% $}
% \end{equation} 
% By taking $ \frac{(||Y||^{k*n'} \log{n'})}{n'} - \frac{([||Y||-1]^{k*n'} \log{n'})}{n'} =  O(\log{n'}) $
% we can also rewrite the equation \ref{big} following way:
% \begin{equation} \label{a1}
% \resizebox{\columnwidth}{!}{$
%     \begin{aligned}
%        I(\vec{Y'};\phi | \{\vec{f_j}\} \subseteq \{||F||^{n' \times k} - \vec{f_i}\} )& \leq I(\vec{Y'};\vec{f_i} | \phi \subseteq \{||F||^{n' \times k} - \vec{f_i}\} )\\& + O(\log{n'}) + \frac{Const_\mathbb{A}}{n'}\\
%         I(\vec{Y'};\vec{f_i} |\phi \subseteq \{||F||^{n' \times k} - \vec{f_i}\} ) &\leq  O(\log{n'}) + \mathbb{C_A}_i
%     \end{aligned}
% $}
% \end{equation}
% where, $\mathbb{C_A}_i = I(\vec{Y'};\vec{f_i} | \phi \subseteq \{||F||^{n' \times k} - \vec{f_i}\} )  + \frac{Const_\mathbb{A}}{n'} $
% Similarly, by Kolmogorov's theorem [40][41], we also can get, 
% \begin{equation}
%     I(\vec{Y'}, \vec{f_1}, \vec{f_2},...,\vec{ f_k}) \leq  I(\vec{Y'}, \vec{f_1}, \vec{f_2}, . ..,\vec{ f_k}) + O(\log{n'}) +\frac{Const_\mathbb{A}}{n'}
%     \end{equation}
% Let denote, $ \frac{Const_\mathbb{A}}{n'} + I(\vec{Y'}, \vec{f_1}, \vec{f_2}, .. ..,\vec{ f_k}) = \mathbb{C'_A}_i $ and  we will get the following in-equation for $i = 1:k$:
% \begin{equation} \label{a2}
%     I(\vec{Y'}, \vec{f_1}, \vec{f_2}, .. ..,\vec{ f_k}) \leq   O(\log{n'}) + \mathbb{C'_A}_i 
%     \end{equation}% \section{Conclution and Future work}
% Now, from equation \ref{a1} and equation \ref{a2}, we can get the complexity bound for calculating each $\mbox{MCIR}$, and it will be as follows for $i = 1:k$;
% \begin{equation}\label{a3}
%     \mathfrak{C}_i \leq \frac{1}{2}O(\log{n'}) + \zeta_\mathbb{A}
% \end{equation}
% here, $\zeta_\mathbb{A}$ is a constant associated with $ith$ $\mbox{MCIR}$ for any other computer $\mathbb{A}$; $i= 1, k$.
% So, we can say from the equation \ref{kol} and equation \ref{a3} we can say that, our model would take no more than $\frac{k}{2}O(\log{n})+ Const_\mathbb{A}$ time complexity to run the whole program, whereas it would take no more than $\frac{1}{2}O(\log{n'}) + \zeta_\mathbb{A}$ time complexity to calculate each $\mbox{MCIR}$ associated with features.

% ----------------------
% (Insert your real tables/figures where referenced above.)
% E.g., \begin{table} ... \label{tab:house_sota} \end{table}
%       \begin{figure} ... \label{fig:house_alignment} \end{figure}
% ----------------------

% \section*{Acknowledgment}

% %Dr. Reveryrand would like to acknowledge the funding by XLIM, Limoges, France. 
% The authors would like to thank Dr. David Root and Dr. Jean-Pierre Teyssier at Agilent Technologies for the loan of the time-domain nonlinear measurement equipment and TriQuint Semiconductor for the donation of the transistors. 

% if have a single appendix:
%\appendix[Proof of the Zonklar Equations]
% or
%\appendix  % for no appendix heading
% do not use \section anymore after \appendix, only \section*
% is possibly needed

% use appendices with more than one appendix
% then use \section to start each appendix
% you must declare a \section before using any
% \subsection or using \label (\appendices by itself
% starts a section numbered zero.)
%

% ============================================
%\appendices
%\section{Proof of the First Zonklar Equation}
%Appendix one text goes here %\cite{Roberg2010}.

% you can choose not to have a title for an appendix
% if you want by leaving the argument blank
%\section{}
%Appendix two text goes here.

% use section* for acknowledgement
%\section*{Acknowledgment}

%The authors would like to thank D. Root for the loan of the SWAP. The SWAP that can ONLY be usefull in Boulder...

% Can use something like this to put references on a page
% by themselves when using endfloat and the captionsoff option.

% trigger a \newpage just before the given reference
% number - used to balance the columns on the last page
% adjust value as needed - may need to be readjusted if
% the document is modified later
%\IEEEtriggeratref{8}
% The "triggered" command can be changed if desired:
%\IEEEtriggercmd{\enlargethispage{-5in}}

% ====== REFERENCE SECTION

%\begin{thebibliography}{1}

% IEEEabrv,

\bibliographystyle{IEEEtran}
% \bibliography{IEEEabrv,Bibliography}
%\end{thebibliography}
% biography section
% 
% If you have an EPS/PDF photo (graphicx package needed) extra braces are
% needed around the contents of the optional argument to biography to prevent
% the LaTeX parser from getting confused when it sees the complicated
% \includegraphics command within an optional argument. (You could create
% your own custom macro containing the \includegraphics command to make things
% simpler here.)
%\begin{biography}[{\includegraphics[width=1in,height=1.25in,clip,keepaspectratio]{mshell}}]{Michael Shell}
% or if you just want to reserve a space for a photo:

% ==== SWITCH OFF the BIO for submission
% ==== SWITCH OFF the BIO for submission

%% if you will not have a photo at all:
%\begin{IEEEbiographynophoto}{Ignacio Ramos}
%(S'12) received the B.S. degree in electrical engineering from the University of Illinois at Chicago in 2009, and is currently working toward the Ph.D. degree at the University of Colorado at Boulder. From 2009 to 2011, he was with the Power and Electronic Systems Department at Raytheon IDS, Sudbury, MA. His research interests include high-efficiency microwave power amplifiers, microwave DC/DC converters, radar systems, and wireless power transmission.
%\end{IEEEbiographynophoto}

%% insert where needed to balance the two columns on the last page with
%% biographies
%%\newpage

%\begin{IEEEbiographynophoto}{Jane Doe}
%Biography text here.
%\end{IEEEbiographynophoto}
% ==== SWITCH OFF the BIO for submission
% ==== SWITCH OFF the BIO for submission

% You can push biographies down or up by placing
% a \vfill before or after them. The appropriate
% use of \vfill depends on what kind of text is
% on the last page and whether or not the columns
% are being equalized.

% \section*{Supplement (Algorithms)}
% \addcontentsline{toc}{section}{Supplement (Algorithms)}

% \subsection*{B.1\quad Algorithm S1 — Auto\,$\Phi$ (Stability-Driven Conditioning)}
% \noindent\emph{Purpose.} Pick $m_\Phi$ and $\Phi(i)$ by minimizing head-rank variance across bootstraps on a short grid of $m$.\\
% \noindent\emph{Placement.} Cite as ``Supp.~B.1'' from the main text.

% \begin{algorithm}[h]
%   \caption{\textbf{S1:} Auto$\Phi$ (abbrev.)}
%   \label{alg:autophi}
%   \DontPrintSemicolon
%   \SetKwInOut{Input}{Input}\SetKwInOut{Output}{Output}
%   \Input{$(F,Y')$, head size $K$, bootstrap $B$, grid $\mathcal{M}$.}
%   \Output{$m_\Phi^*$ and $\{\Phi(i)\}_{i=1}^d$.}
%   For each $m\in\mathcal{M}$: build $\Phi_m(i)$ from top-$m$ neighbours via the screening graph; bootstrap scores $C^{(b)}(m)$; record $\tau^{(b)}_{\text{head}}(m)$.\\
%   Pick $m_\Phi^* = \arg\min_m \mathrm{Var}_b\big(\tau^{(b)}_{\text{head}}(m)\big)$; set $\Phi(i)\leftarrow \Phi_{m_\Phi^*}(i)$.
% \end{algorithm}

% \subsection*{B.2\quad Algorithm S2 — MCIR-Switch (Estimator Selection)}
% \begin{algorithm}[h]
%   \caption{\textbf{S2:} MCIR-Switch (abbrev.)}
%   \label{alg:switch}
%   \DontPrintSemicolon
%   \SetKwInOut{Input}{Input}\SetKwInOut{Output}{Output}
%   \Input{$(F,Y')$, $\{\Phi(i)\}$, estimators $\mathcal{E}$, bootstrap $B$.}
%   \Output{$e^*(i)$ for each $i$.}
%   For each $i$ and $e\in\mathcal{E}$: bootstrap $C_i^{(b)}(e)$ and compute $\mathrm{SE}_i(e)$.\\
%   Set $e^*(i) = \arg\min_e \mathrm{SE}_i(e)$.
% \end{algorithm}

% \subsection*{B.3\quad Algorithm S3 — MCIR-LW (Lightweight Contract)}
% \begin{algorithm}[h]
%   \caption{\textbf{S3:} MCIR-LW (abbrev.)}
%   \label{alg:lw}
%   \DontPrintSemicolon
%   \SetKwInOut{Input}{Input}\SetKwInOut{Output}{Output}
%   \Input{Full $(F,Y)$, grid $\mathcal{G}$ for $n'$, thresholds $(\epsilon_0,\tau_0, J@K_0)$.}
%   \Output{$n'^*$; lightweight scores at $n'^*$.}
%   For $n'\in\mathcal{G}$: subsample/reweight to $(F',Y')$; train $M'$; check output-law distance $\hat d\!\left(\mathcal{D}(Y),\mathcal{D}(Y')\right)\le\epsilon_0$ and rank agreement $\ge(\tau_0, J@K_0)$.\\
%   Return the smallest $n'^*$ that meets thresholds; else fall back to full.
% \end{algorithm}

% \subsection*{B.4\quad Algorithm S4 — MCIR-Stream (Online)}
% \begin{algorithm}[h]
%   \caption{\textbf{S4:} MCIR-Stream (abbrev.)}
%   \label{alg:stream}
%   \DontPrintSemicolon
%   \SetKwInOut{Input}{Input}\SetKwInOut{Output}{Output}
%   \Input{Stream $\{(F_t,Y_t)\}$, window $W$, drift threshold $\gamma$, update budget $U$.}
%   \Output{Online sequence $\{C_t\}$.}
%   Maintain buffer of size $W$; estimate $C_t$; if $\lVert C_t-C_{t-1}\rVert$ exceeds $\gamma$, refresh $\Phi$ via Auto$\Phi$ and re-run Switch within budget $U$.
% \end{algorithm}

% Can be used to pull up biographies so that the bottom of the last one
% is flush with the other column.
%\enlargethispage{-5in}

% that's all folks